\documentclass{article} 

\usepackage[utf8]{inputenc} 
\usepackage[T1]{fontenc}    
\usepackage{fullpage}
\usepackage{amsmath,amsfonts,amsthm,commath,amssymb,dsfont,mathtools}
\usepackage[normalem]{ulem}
\usepackage{grffile}
\usepackage{xcolor}
\usepackage{amssymb}
\usepackage{times}

\usepackage{bbm}
\usepackage{physics}
\usepackage{makecell}
\usepackage{multirow}
\usepackage{minitoc}
\usepackage{circledsteps}
\usepackage{enumitem}
\usepackage{tikz}
\usetikzlibrary{positioning, arrows.meta}
\usepackage{subfigure}
\usepackage{graphicx,subcaption}
\usepackage{float}
\usepackage{footnote}  
\usepackage[colorlinks, citecolor=blue]{hyperref} 
\usepackage{footnotebackref}
\usepackage{placeins}

\makesavenoteenv{tabular}

\usepackage[style=authoryear,maxcitenames=2,maxbibnames=99,backend=biber,sorting=nyt,natbib=true,backref=true,uniquelist=false,dashed=false]{biblatex}
\makeatletter
\newcommand\footnoteref[1]{\protected@xdef\@thefnmark{\ref{#1}}\@footnotemark}
\makeatother

\newtheorem{theorem}{Theorem}
\newtheorem{lemma}{Lemma}
\newtheorem{remark}{Remark}
\newtheorem{corollary}{Corollary}

\newcommand{\ba}{\boldsymbol{a}}
\newcommand{\A}{\boldsymbol{A}}

\newcommand{\e}{\boldsymbol{e}}
\newcommand{\x}{\boldsymbol{x}}
\newcommand{\y}{\boldsymbol{y}}
\newcommand{\bv}{\boldsymbol{v}}

\newcommand{\w}{\boldsymbol{w}}

\newcommand{\I}{\boldsymbol{I}}
\newcommand{\D}{\boldsymbol{D}}

\newcommand{\X}{\boldsymbol{X}}

\newcommand{\V}{\boldsymbol{V}}
\newcommand{\z}{\boldsymbol{z}}

\newcommand{\bu}{\boldsymbol{u}}
\newcommand{\blambda}{\boldsymbol{\lambda}}
\newcommand{\bLambda}{\boldsymbol{\Lambda}}
\newcommand{\bdelta}{\boldsymbol{\delta}}

\newcommand{\bTheta}{\boldsymbol{\Theta}}
\newcommand{\bbeta}{\boldsymbol{\beta}}
\newcommand{\bmu}{\boldsymbol{\mu}}
\newcommand{\bepsilon}{\boldsymbol{\epsilon}}
\newcommand{\zero}{\boldsymbol{0}}

\newcommand{\bSigma}{\boldsymbol{\Sigma}}

\newcommand{\balpha}{\boldsymbol{\alpha}}

\newcommand{\R}{\mathbb{R}}

\newcommand{\E}{\mathbb{E}}

\newcommand{\Risk}{\mathcal{R}}

\newcommand{\uargmin}[1]{\underset{#1}{\arg\min\,}}

\newcommand{\cond}[4]{\left\{ \begin{matrix}
      #1 &\; #2 \\
      #3 &\; #4
    \end{matrix}\right.\,}
\newcommand{\condds}[6]{\left\{ \begin{matrix}
      #1 &\; #2 \\[10pt]
      #3 &\; #4 \\[10pt]
      #5 &\; #6
    \end{matrix}\right.\,}
\newcommand{\conddd}[8]{\left\{ \begin{matrix}
      #1 &\; #2 \\
      #3 &\; #4 \\
      #5 &\; #6 \\
      #7 &\; #8
    \end{matrix}\right.\,}
\newcommand{\pts}[1]{\left(#1\right)}
\newcommand{\mts}[1]{\left[#1\right]}

\newcommand{\nnorm}[2][]{\left\|#2\right\|_{#1}}

\newcommand{\expf}[2][]{\exp^{#1}\left(#2\right)}

\newcommand{\XX}{\X\X^\top}

\newcommand{\XXi}{\left(\X\X^\top\right)^{-1}}

\newcommand{\XXX}{\X^\top\left(\X\X^\top\right)^{-1}}

\newcommand{\myquad}[1][1]{\hspace*{#1em}\ignorespaces}
\newtheorem{assumption}{Assumption}
\begin{document}
\setcounter{assumption}{-1}

\doparttoc 
\faketableofcontents 

\begin{center}

    {\bf{\LARGE{How Does the ReLU Activation Affect the Implicit Bias of Gradient Descent on High-dimensional Neural Network Regression?}}}
    
    \vspace*{.2in}

    \renewcommand{\thefootnote}{\fnsymbol{footnote}}
    {\large{
    \begin{tabular}{cccc}
    Kuo-Wei Lai$^{1}$\footnote{\label{foot:foot1}Equal contribution; co-first author.} & Guanghui Wang$^2$\footnoteref{foot:foot1} & Molei Tao$^{3}$ & Vidya Muthukumar$^{1,4}$
    \end{tabular}}}
    
    \vspace*{.2in}
    
    \renewcommand{\thefootnote}{\arabic{footnote}}
    \setcounter{footnote}{0}
    \begin{tabular}{c}
        $^1$School of Electrical and Computer Engineering, Georgia Institute of Technology\\
        $^2$College of Computing, Georgia Institute of Technology\\
        $^3$School of Mathematics, Georgia Institute of Technology\\
        $^4$H. Milton Stewart School of Industrial and Systems Engineering, Georgia Institute of Technology
        \\ \texttt{\{klai36, gwang369, mtao, vmuthukumar8\}@gatech.edu}
    \end{tabular}

\end{center}

\begin{abstract}%
Overparameterized ML models, including neural networks, typically induce underdetermined training objectives with multiple global minima.
The \emph{implicit bias} refers to the limiting global minimum that is attained by a common optimization algorithm, such as gradient descent (GD).
In this paper, we characterize the implicit bias of GD for training a shallow ReLU model with the squared loss on high-dimensional random features. 
Prior work~\citep{vardi2021implicit} showed that the implicit bias does not exist in the worst-case, or corresponds exactly to the minimum-$\ell_2$-norm interpolating solution under \emph{exactly} orthogonal data~\citep{boursier2022gradient}.
Our work interpolates between these two extremes and shows that, for sufficiently high-dimensional random data, the implicit bias approximates the minimum-$\ell_2$-norm solution with high probability with a gap on the order $\Theta\left(\sqrt{n/\|\blambda\|_1}\right)$, where $n$ is the number of training examples and $\blambda$ denotes the spectrum of the data covariance matrix.
Our results are obtained through a novel primal-dual analysis that carefully tracks the evolution of predictions, data-span coefficients, as well as their interactions, and show that the ReLU activation pattern quickly stabilizes with high probability over random data.

\end{abstract}


\section{Introduction}
In many modern machine learning problems, 
the training objectives are typically \emph{underdetermined}, which implies that they may admit (potentially infinitely) many global minima. Despite this, a large body of empirical results \citep{neyshabur2014search,zhang2021understanding} show that optimization algorithms such as gradient descent frequently converge to solutions that generalize well, even in the absence of any explicit regularization. This phenomenon is commonly referred to as the \emph{implicit bias} introduced by gradient descent \citep{ji2018risk,soudry2018implicit}, and understanding the nature of this benign bias has become a central topic of recent research.

The study of the implicit bias of gradient descent originally emerged in the context of linear models. 
For linear classification with separable data, the seminal work of~\citet{soudry2018implicit,ji2018risk} shows that, when minimizing exponentially-tailed losses, gradient descent converges in direction to the max-margin solution that minimizes $\ell_2$-norm. 
For linear regression with the squared loss, gradient descent is known to converge to the zero-loss (interpolating) solution with the minimum-$\ell_2$-norm~\citep{engl1996regularization}. 
Building on these foundational results, several finer characterizations were derived for linear models, including sharper convergence analyses \citep{ji2018risk,nacson2019convergence,ji2021characterizing}, general classes of first-order methods~\citep{gunasekar2018characterizing,sun2022mirror,wang2025faster}, and a deeper understanding of the implicit bias on high-dimensional data~\citep{hsu2021proliferation,lai2025general}. 

Understanding the implicit bias in nonlinear models such as neural networks remains a significant challenge, primarily due to the induced non-convexity of the optimization objective. 
In this work, we focus on regression with a one-hidden-layer ReLU neural network and the squared loss, which represents one of the most fundamental and natural extensions beyond linearity. Remarkably,~\citet{vardi2021implicit} showed that establishing the implicit bias of ReLU models is known to be hard in the worst case, even when the model consists only of a single neuron and assuming global convergence. (On the other hand, the implicit bias of a single neuron with a strictly monotonic activation function (e.g., leaky ReLU) does follow the minimum-$\ell_2$-norm solution.)
They do this through a stylized counterexample of $3$ data points with $3$-dimensional features, raising the natural question of whether the implicit bias becomes characterizable under typical data ensembles.
At the other extreme,~\citet{boursier2022gradient} showed that the implicit bias of gradient flow for \emph{exactly} orthogonal features is exactly the minimum-$\ell_2$-norm solution.
However, an exact orthogonality assumption is restrictive and rarely holds in practice. 
Notably, high-dimensional random features are \emph{near}-orthogonal, raising the question of whether the implicit bias can be characterized in this more realistic but also more challenging case.

\paragraph{Our contributions:}  In this paper, we establish new insights into the implicit bias induced by gradient descent for ReLU networks trained with the squared loss on sufficiently high-dimensional data. Our main contributions are summarized as follows. First, we completely characterize the expression for the implicit bias of gradient descent dynamics on ReLU models with 1 or 2 neurons for high-dimensional data under sufficient conditions (Theorems~\ref{thm:single_relu_gd_high_dim_implicit_bias} and~\ref{thm:multiple_relu_gd_high_dim_implicit_bias}). Second, we quantify the relationship between the implicit bias of gradient descent and the global minimum that achieves the minimum-$\ell_2$-norm. More specifically, we establish both upper and lower bounds on the distance between the gradient descent limit and the minimum-$\ell_2$-norm solution, showing that it scales as $\Theta(\sqrt{{n}/{\nnorm[1]{\blambda}}})$ where $n$ is the number of training examples and $\blambda$ denotes the spectrum of the data covariance matrix (Theorems~\ref{thm:single_relu_approx_to_w_star} and~\ref{thm:multiple_relu_approx_to_w_star}). Consequently, the solutions are very close, but not identical, for high-dimensional features. Interestingly, a similar phenomenon was also shown to occur with exponentially-tailed losses~\citep{frei2023benign,frei2023implicit} for classification. 

\paragraph{Our techniques in a nutshell:} Our main results are obtained through a novel primal-dual formulation of the gradient descent dynamics under the squared loss with ReLU networks,  which is inspired by mirror descent (first studied by~\citet{ji2021characterizing} for linear models). Instead of directly tracking the weight vector in the original parameter space like previous work, we introduce primal variables representing the predictions on training examples, and dual variables capturing the coefficients in the data span. This representation is particularly well-suited for analyzing ReLU networks because the sign of each primal variable directly determines whether the corresponding example is active, and hence whether its dual variable receives a gradient update. Our analysis reveals that understanding the gradient dynamics hinges on tracking (i) the positivity of the primal variables and (ii) the interactions between training examples. We introduce new tools to carefully control the evolution of positive primal variables and sufficiently negative dual variables (Lemmas~\ref{lem:pos_stay_pos} and~\ref{lem:neg_stay_neg}, which may be of independent interest). Underlying the proofs of our approximation results to the minimum-$\ell_2$-norm solutions are novel characterizations of the latter as minimum-$\ell_2$-norm \emph{linear} interpolations of a (possibly data-dependent) subset of training examples. This data-dependent subset selection is a fundamental difference between the implicit bias of linear models and ReLU models.

\subsection{Related Work}
We now briefly discuss the most closely related work and highlight key differences of our approach. We contextualize our results within the most closely related prior studies on implicit bias of regression models in Table~\ref{tab:comparison}.  \citet{boursier2022gradient} study the dynamics of gradient flow on two-layer ReLU networks under an \emph{exact} orthogonality assumption on the data. Exact orthogonality removes interactions between examples and significantly simplifies the activation patterns induced by the ReLU nonlinearity. As a result, their analysis primarily focuses on how the second-layer weights evolve to fit all examples, leading to a multi-phase gradient flow dynamic. Under these assumptions, they show that gradient flow converges to the minimum-$\ell_2$-norm solution (their Theorem 1). In contrast, our work focuses on understanding how interactions between examples, captured through the Gram matrix, shape the active and inactive patterns in ReLU models under more realistic, controllable high-dimensional settings. 
Interestingly, we show that in high dimensions, the implicit bias is no longer exactly the minimum-$\ell_2$-norm solution but remains close to it (Theorems~\ref{thm:single_relu_approx_to_w_star} and~\ref{thm:multiple_relu_approx_to_w_star}). In comparison, \citet{vardi2021implicit} provide only a multiplicative upper bound on the magnitude of the norm of implicit bias in the worst-case data setting, showing that it is at most \emph{twice} that of the minimum-norm solution. \citet{dana2025convergence} also analyze the high-dimensional regime and establish global convergence by showing that each example can be fitted by at least one neuron with high probability and all active examples stay active (their Theorem 1).
However, their analysis does not address the behavior of inactive examples suppressed by the ReLU nonlinearity and does not shed light on the implicit bias. 
As a result, their work provides only a partial view of the gradient dynamics. In contrast, we introduce a novel primal-dual framework that allows us to simultaneously track both active and inactive examples (Lemmas~\ref{lem:pos_stay_pos} and~\ref{lem:neg_stay_neg}). This framework enables a full characterization of the gradient dynamics and, consequently, the implicit bias in high dimensions.
We use some of the observations of~\citet{dana2025convergence} as a starting point for our primal-dual characterizations. More generally, most existing analyses~\citep{vardi2021implicit,boursier2022gradient,dana2025convergence} rely on gradient flow and continuous-time ODE techniques, which assume infinitesimal step sizes. In contrast, our analysis directly studies gradient descent with finite (though still small) step sizes. This distinction is both theoretically and practically important, as gradient descent is the algorithm used in practice. Our primal-dual approach provides a new framework for analyzing discrete-time optimization dynamics in ReLU networks and opens a complementary direction to existing studies based on gradient flow. 

\citet{frei2023benign,frei2023implicit} consider classification in a similar one-hidden-layer setup with the leaky ReLU activation, and also exploit simplified KKT conditions under nearly orthogonal data. However, due to the different training objectives underlying classification and regression, the resulting analyses are fundamentally different. 
\begin{table}[t]

    \centering
    \resizebox{1\textwidth}{!}{\LARGE\begin{tabular}{|c | c | c | c|} 
    \hline
 & Orthogonal data & High dimensional data & Worst-case data \\
    \hline
    \multirow{3.5}{*}{\begin{tabular}{c}ReLU models ($k=1$)\\$h (\x) \coloneqq \sum_{k=1}^m s_k\sigma(\w_k^\top\x)$\end{tabular}} & \multirow{2}{*}{\begin{tabular}{c}Implicit bias characterization\\ \citep{boursier2022gradient} \\ $\w^{(\infty)} = \underset{\w\in\{\w:\X\w=\y\}}{\arg\min} \nnorm[2]{\w - \w^{(0)}}$\end{tabular}} & \begin{tabular}{c}Global convergence only\\ \citep{dana2025convergence} \end{tabular} & \multirow{3.5}{*}{\begin{tabular}{c}No implicit bias in general\\ \citep{vardi2021implicit} \\ $\nnorm[2]{\w^{(\infty)} - \w^{(0)}} \leq 2\cdot \nnorm[2]{\w^\star - \w^{(0)}}$ \end{tabular}} \\ \cline{3-3}
    & & \begin{tabular}{c} {\color{red}This work} \\ {\color{red}$\nnorm[2]{\w^{(\infty)} - \w^\star}\asymp\sqrt{\frac{n}{\nnorm[1]{\blambda}}}$}\end{tabular} &\\
    \hline\begin{tabular}{c}Linear models\\$h (\x) \coloneqq \w^\top\x$\end{tabular} & \multicolumn{2}{ c |}{\begin{tabular}{c}Implicit bias coincides with \\ maximum $\ell_2$ margin SVM \citep{hsu2021proliferation}\end{tabular}} & \begin{tabular}{c}$\w^{(\infty)} = \underset{\w\in\{\w:\X\w=\y\}}{\arg\min} \nnorm[2]{\w - \w^{(0)}}$\\ \citep{engl1996regularization}\end{tabular} \\
    \hline
    \end{tabular}}
    \renewcommand{\arraystretch}{1}
    \caption{Our results contextualized with related literature.
    }
    \label{tab:comparison}
\vspace{0cm}  
\end{table}

\paragraph{Notation:} 
We use lowercase boldface letters (e.g.~$\x$) to denote vectors, lowercase letters (e.g.~$y$) to denote scalars, and uppercase boldface letters (e.g.~$\X$) to denote matrices. We use $\nnorm[p]{\cdot}$ to denote the $\ell_p$-norm of a vector for $p \in [1,\infty)$ and $\nnorm[2]{\cdot}$ to additionally denote the operator norm of a matrix. For a vector $\x\in\R^d$, we use $x_i$ to denote its $i$-th component. We use $[n]$ to denote the set $\{1,\ldots,n\}$. For a matrix $\X\in\R^{n\times d}$, a vector $\y\in\R^n$, and any index set $S \subseteq [n]$, we use $\X_S \in \R^{|S| \times d}$ to denote the submatrix of $\X$ consisting of the rows indexed by $S$, and $\y_S \in \R^{|S|}$ denotes the corresponding subvector. We use $\x \preceq \zero$ (respectively $\x \succeq \zero$) to denote that every entry of vector $\x$ is less than or equal to (respectively greater than or equal to) zero.
We use $C,c$ to denote universal constants that appear in upper and lower bounds, respectively, that may change from line to line.
We also use the notation $C_{(\cdot)}$ to denote universal constants with a specific meaning that \emph{do not} change from line to line. We specifically choose $C_0\gtrsim C_{\alpha}^2$ and $C_{\alpha}\gtrsim \max\{C_g^2, C_yC_g\}$ in our analysis.

\section{Problem Setup}\label{sec:setup}

We consider a regression problem with feature vector $\x \in \mathcal{X} \subset \R^d$ and label $y \in \mathcal{Y} \subset \R$.
We consider random feature vectors drawn according to a distribution $\mathcal{D}$ with zero mean, i.e., $\E[\x] = \zero$, and covariance matrix $\bSigma =\E[\x\x^\top]$. Let $\bSigma = \V\bLambda\V^\top \in \R^{d \times d}$ be the eigendecomposition of the covariance matrix, where $\V\in\R^{d\times d}$ is the matrix of eigenvectors and $\bLambda\in\R^{d\times d}$ is a diagonal matrix whose entries are the eigenvalues of $\bSigma$, arranged in descending order. We make the mild assumption that the feature vector admits the representation $\x = \V \bLambda^{\frac{1}{2}}\z$ where $\z\in \R^d$ has independent, mean-zero, $\sigma_z^2$-subgaussian components. By the definition of a $\sigma_z^2$-subgaussian random variable, each coordinate $z_j$ satisfies
$
    \E\bigl[\exp(\bu^\top\z)\bigr] \leq {\exp}(\sigma_z^2\nnorm[2]{\bu}^2/2)
$
for any $\bu\in\R^d$.
For simplicity, we set $\sigma_z = 1$ throughout the remainder of the analysis. The labels $y$ are only required to be bounded (see Assumption~\ref{asm:label_bounds}) and may be chosen arbitrarily. In particular, $y$ does not need to satisfy any particular relationship with respect to $\x$.

We observe a dataset $\{\x_i, y_i\}_{i=1}^n$ where the features $\{\x_i\}_{i=1}^n$ are drawn i.i.d. from the distribution $\mathcal{D}$. We denote the data matrix by $\X \in \R^{n \times d}$ and the label vector by $\y \in \R^n$. Since we operate in a high-dimensional regime ($d > n$), we make the mild assumption that $\X$ has full row rank, i.e., $\mathrm{rank}(\X)=n$, which is automatically satisfied under the assumptions of all lemmas and theorems in this paper.\footnote{The full-rank assumption holds either almost surely or with high probability under random and high-dimensional features; see, e.g.~\citet{hsu2021proliferation}.}
\begin{assumption}[Full-rank Data Matrix]\label{asm:row_rank}
    The data matrix $\X \in \R^{n \times d}$ satisfies $\mathrm{rank}(\X) = n$.
\end{assumption}

For ease of subsequent notation, we consider without loss of generality the samples with positive labels to appear in the upper block of the data matrix $\X$, while those with negative labels appear in the lower block. Let $n_+$ denote the number of positive labels and $n_- = n - n_+$ denote the number of negative labels.
Accordingly, we write $\X = \begin{bmatrix} \X_+^\top & \X_-^\top\end{bmatrix}^\top$ where $\X_+ \in \R^{n_+ \times d}$ contains the features corresponding to positive labels and $\X_- \in \R^{n_- \times d}$ contains the features corresponding to negative labels. We similarly partition the label vector as $\y = \begin{bmatrix} \y_+^\top & \y_-^\top\end{bmatrix}^\top$.

Next, we introduce our key assumptions on the features and labels. First, we assume that the magnitudes of the labels are bounded away from zero and infinity.
\begin{assumption}[Bounded  Labels]\label{asm:label_bounds}
    For all $i \in [n]$, the labels satisfy $y_{\min} \leq |y_i| \leq y_{\max}$ for some $y_{\min}, y_{\max} \in \R_+$.
\end{assumption}
This assumption ensures that all labels are non-degenerate and have comparable scales, which will be important for controlling the dynamics of gradient-based optimization.

We next impose a high-dimensional assumption on the data features. To characterize the effective dimensionality of the feature distribution, we define two notions of effective dimension based on the spectrum $\blambda\coloneqq\mts{\lambda_1,\cdots,\lambda_d}^\top$ of the feature covariance matrix $\bSigma$, given by $d_2 \coloneqq \frac{\nnorm[1]{\blambda}^2}{\nnorm[2]{\blambda}^2}, d_{\infty} \coloneqq \frac{\nnorm[1]{\blambda}}{\nnorm[\infty]{\blambda}}$.
Note that when the covariance is isotropic, i.e., $\lambda_1=\lambda_2=\cdots=\lambda_d=1$, we have $d_2 =d_{\infty} = d$, i.e., these reduce to the original data dimension. Our high-dimensional assumption requires that these effective dimensions dominate problem-dependent quantities involving the sample size $n$ and the range of label magnitudes $[y_{\min},y_{\max}]$. Similar conditions have also appeared in related global convergence analysis under the squared loss \citep{dana2025convergence} and implicit bias analyses under the logistic/exponentially-tailed losses~\citep{frei2023benign}.
\begin{assumption}[High-dimensional Features]\label{asm:high_dim_data}
    The data features satisfy $d_2 \geq C_0^2\frac{n^2 y_{\max}^2}{y_{\min}^2}$ and $d_{\infty} \geq C_0\frac{n^{1.5} y_{\max}}{y_{\min}}$ for a sufficiently large constant $C_0 > 1$.
\end{assumption}
This assumption places the problem in a sufficiently high-dimensional regime, ensuring strong concentration properties of the empirical Gram matrix and enabling precise control of the gradient dynamics analyzed in the following sections. We note that our techniques would yield similar guarantees on the implicit bias for deterministic feature vectors that satisfy a near-orthogonality condition adapted from~\cite{frei2023benign,frei2023implicit} to real-valued labels.
\begin{assumption}[Deterministic Nearly-orthogonal Features]\label{asm:deterministic_data}  For a sufficiently large constant\\ $C_0 > 1$, the data features satisfy $\min_{i \in [n]} \nnorm[2]{\x_i}^2 \geq C_0 n\frac{y_{\max}}{y_{\min}}\max_{i\neq j}|\x_i^\top\x_j|$.
\end{assumption}
This condition ensures that the desired concentration properties of the empirical Gram matrix $\XX$ hold for our analysis.
%
\paragraph{General ReLU Models and Empirical Risk Minimization.} We denote by $h_{\bTheta} : \mathcal{X} \rightarrow \mathcal{Y}$ the general ReLU model used for the regression task in this work, defined as $h_{\bTheta} (\x) \coloneqq \sum_{k=1}^m s_k\sigma(\w_k^\top\x)$, where $\bTheta$ denotes the collection of model parameters $\{\w_k\}_{k=1}^m$ together with fixed signs $\{s_k\}_{k=1}^m$. Here, $s_k\in\{-1, +1\}$ denotes the sign of the $k$-th ReLU neuron, $\sigma(z) \coloneqq \max\{z, 0\}$ is the ReLU activation function, $\w_k\in\R^d$ is its weight vector, and $m\geq 1$ is the number of neurons. To learn the regression model, we minimize the empirical risk under the squared loss, defined as
\begin{align}\label{eq:empirical_risk}
    \Risk(\bTheta) = \frac{1}{2} \sum_{i=1}^n \bigl(h_{\bTheta}(\x_i) - y_i\bigr)^2 = \frac{1}{2} \nnorm[2]{h_{\bTheta}(\X) - \y}^2,
\end{align}
where we define the vector-valued extension $h_{\bTheta}$ as $h_{\bTheta}(\X)\coloneqq [h_{\bTheta}(\x_1),\cdots,h_{\bTheta}(\x_n)]^\top\in \R^n$.
We employ the gradient descent algorithm to minimize~\eqref{eq:empirical_risk}.
To make the dynamics more tractable, we only update the neuron weights $\{\w_k\}_{k=1}^m$ and fix the signs of the neurons $\{s_k\}_{k=1}^m$\footnote{This is a reasonable approximation for the dynamics when both layers are trained via the well-known \emph{balancedness} condition, but balancedness is typically formally shown only under gradient flow (see, e.g.~\citealt[Theorem 2.1]{du2018algorithmic}).}. When there are $m > 1$ neurons, we will initialize at least one neuron for a positive sign and one neuron for a negative sign to ensure that the neural network can fit arbitrary labels.


\paragraph{Gradient Descent and Primal-dual Representation.}
For the ReLU model $h_{\bTheta}$, the gradient of the empirical risk in~\eqref{eq:empirical_risk} with respect to $\w_k$ is given by
\begin{align*}
    \nabla_{\w_k} \Risk(\bTheta)
    =\sum_{i=1}^n \bigl(h_{\bTheta}(\x_i) - y_i\bigr) s_k \cdot \mathbbm{1}_{\w_k^\top \x_i > 0} \cdot \x_i
    = s_k\X^\top \D(\X\w_k)\bigl(h_{\bTheta}(\X) - \y\bigr),
\end{align*}
where $\D(\z) : \R^n\rightarrow \R^{n\times n}$ denotes the diagonal matrix with entries $D_{ii} \coloneqq \mathbbm{1}_{z_i > 0}$. Accordingly, the gradient descent update for $\w_k$ takes the form
\begin{align}
    \w_k^{(t+1)}
    &= \w_k^{(t)} - \eta \nabla_{\w_k} \Risk(\bTheta^{(t)}) = \w_k^{(t)} - \eta s_k\X^\top \D(\X\w_k^{(t)})\bigl(h_{\bTheta^{(t)}}(\X) - \y\bigr).\label{eq:gd_update_wk}
\end{align}

To analyze these updates more transparently, we introduce a primal-dual representation used in mirror descent~\citep{ji2021characterizing}. For all $k\in[m]$, we define the primal variable $\bbeta_k\in\R^n$ and the dual variable $\balpha_k\in\R^n$ as
\begin{align}
    \bbeta_k \coloneqq \X \w_k, \myquad[2] \balpha_k \coloneqq \XXi\X\w_k, \quad\text{and note that}\quad \bbeta_k = \XX \balpha_k. \label{eq:primal_dual_def}
\end{align}
This representation restricts attention to the components of $\w_k$ that lie in the span of the data matrix $\X$.\footnote{In general, $\w_k$ may contain components orthogonal to $\mathrm{span}(\{\x_i\}_{i=1}^n)$, i.e., $\w_k = \X^\top\balpha_k +\sum_{j=n+1}^d \tilde{\alpha}_{k,j}\tilde{\x}_j$ where $\tilde{\alpha}_{k,j}\in\R$ and we define the vector $\tilde{\x}_j\perp\x_i$ for all $i\in[n]$ such that $\{\x_i\}_{i=1}^n \cup \{\tilde{\x}_j\}_{j=n+1}^d$ forms a complete basis of $\R^d$. However, since the gradient updates act only within $\mathrm{span}(\{\x_i\}_{i=1}^n)$, the orthogonal components remain unchanged throughout training. Our results can be easily extended by adding back in this orthogonal component.} For ease of notation, we further define $\bbeta_{k,+}\coloneqq \X_+\w_k$ and decompose the dual variable as $\balpha_k \coloneqq \begin{bmatrix} \balpha_{k,+}^\top & \balpha_{k,-}^\top \end{bmatrix}^\top$, consistent with the partition on labels $\y = \begin{bmatrix} \y_+^\top & \y_-^\top\end{bmatrix}^\top$. Under this parameterization, the gradient descent update~\eqref{eq:gd_update_wk} can be expressed in primal-dual form as
\begin{subequations}\label{eq:primal_dual_update}
\begin{alignat}{2}
    &\text{(Primal) }\myquad[6] &&\bbeta_k^{(t+1)} = \bbeta_k^{(t)} - \eta s_k\X \X^\top \D(\bbeta_k^{(t)})(h_{\bTheta^{(t)}}(\X) - \y),\myquad[6]\label{eq:primal_update} \\
    &\text{(Dual) }\myquad[6] &&\balpha_k^{(t+1)} = \balpha_k^{(t)} - \eta s_k\D(\bbeta_k^{(t)})(h_{\bTheta^{(t)}}(\X) - \y)\label{eq:dual_update}.\myquad[6]
\end{alignat}
\end{subequations}
This primal-dual formulation plays a central role in our analysis. In particular, the sign of each primal coordinate $\beta_{k,i}^{(t)}$ determines whether the corresponding dual variable $\alpha_{k,i}^{(t+1)}$ is updated through the diagonal matrix $\D(\bbeta_k^{(t)})$. Consequently, understanding the positivity pattern of $\bbeta_k^{(t)}$ and the resulting dynamics of $\balpha_k^{(t)}$ is key to characterizing the behavior and implicit bias of gradient descent.

\paragraph{Minimum-\texorpdfstring{$\ell_2$}{l2}-norm Solution.}\label{sec:minimum_l2_sol}
It is well known that, for linear regression with zero initialization, i.e., $h(\x)\coloneqq \w^\top\x$ with $\w^{(0)} = \zero$, gradient descent converges to the minimum-$\ell_2$-norm interpolation, which is given by $\w_{\text{linear-MNI}} = \arg\min_{\w}\frac{1}{2}\nnorm[2]{\w}^2$, $\text{s.t. } \w^\top\x_i = y_i, \text{ for all } i\in [n]$.
This solution admits the closed-form expression $\w_{\text{linear-MNI}} = \X^\top(\X\X^\top)^{-1}\y$.
Motivated by this classical result, we consider the minimum-$\ell_2$-norm solution for the general ReLU regression problem that we study, defined as:
%
\begin{align}\label{eq:w_star_def}
    &\{\w_k^\star\}_{k=1}^m = {} \uargmin{\{\w_k\}_{k=1}^m}\frac{1}{2}\sum_{k=1}^m \nnorm[2]{\w_k}^2\\
    \quad\quad \text{s.t. } &\sum_{k=1}^m s_k\sigma(\w_k^\top \x_i) = y_i, \text{ for all } i\in[n].\nonumber
\end{align}
%
Let $\bTheta_g$ denote the set of global minimizers of the empirical risk~\eqref{eq:empirical_risk}.
Note that for $m = 1$, the empirical risk can often not be driven to zero; labels that are opposite in sign to the sign of the neuron $s_1$ cannot be fit.
For networks with $m > 1$ neurons, we will consider at least one neuron to be positively signed and negatively signed respectively, ensuring that the global minimizers will achieve zero empirical risk and interpolate the training data (i.e. $h_{\bTheta}(\x_i) = y_i, \; \forall i\in[n]$).

\section{Implicit Bias of Single ReLU Models (\texorpdfstring{$m=1$}{m=1}) Under Gradient Descent}\label{sec:single_relu}

We begin by analyzing the case of the single positive ReLU neuron model ($m = 1$).
Specifically, we consider $h_{\bTheta}(\x) \coloneqq s_1 \sigma(\w^\top \x)$ where $\w\in\R^d$ is the only model parameter.
We will also assume that $s_1 = +1$ as will become clear through this section, the single neuron is only capable of fitting positive labels in this case.
A symmetric version of our results in this section will hold in the opposite case where $s_1 = -1$, with all instances of positive labels replaced by negative labels.
We omit this case for brevity.

\subsection{Gradient Descent Updates and Convergence}

For the single ReLU model ($m=1$), the gradient descent update in~\eqref{eq:gd_update_wk} simplifies to
\begin{align}\label{eq:single_relu_gd_update}
\w^{(t+1)} = \w^{(t)} - \eta \nabla_{\w} \Risk(\w^{(t)}) &= \w^{(t)} - \eta \X^\top \D(\X\w^{(t)})\bigl(\sigma(\X\w^{(t)}) - \y\bigr)\nonumber\\
    &= \w^{(t)} - \eta\X^\top \D(\X\w^{(t)})(\X\w^{(t)} - \y),
\end{align}
where we write the vector-valued extension of the ReLU as $\sigma(\z)\coloneqq [\sigma(z_1),\cdots,\sigma(z_n)]^\top\in \R^n$, and the second equality follows from the fact that the diagonal matrix $\D(\X\w^{(t)})$ enforces the ReLU activation pattern. Specifically, since $\D(\X\w^{(t)})$ contains indicators of positive pre-activations, the explicit nonlinearity $\sigma(\cdot)$ can be removed once it is applied. 

Compared to linear regression, the key difference in the gradient update for a single ReLU model is the presence of the diagonal matrix $\D(\X\w^{(t)})$. This matrix effectively selects a subset of examples --- those with positive pre-activations --- to contribute to each gradient update. As a result, the optimization trajectory becomes both data-dependent and time-varying, with the active set of samples evolving during training.


\subsubsection{Sufficient Conditions for Gradient Descent Convergence}
According to Equation~\eqref{eq:single_relu_gd_update}, convergence of gradient descent occurs when $\nabla_{\w}\Risk(\w^{(t)})=\zero$. This condition implies that, for every $i \in [n]$, either $\x_i^\top \w^{(t)} \leq 0$ or $\x_i^\top \w^{(t)} = y_i$. In other words, at convergence, each training example is either inactive due to the ReLU nonlinearity or is fit exactly. These criteria can define either a global or local minimum depending on the activation pattern.


In general, the optimization trajectory and loss landscape induced by gradient descent, even on a single ReLU model, are difficult to characterize, primarily due to this data-dependent activation pattern. However, suppose there exists an iteration $t_0 \geq 0$ such that, for all $t \geq t_0$, the set of active examples $S \coloneqq \{ i\in [n] : \x_i^\top\w^{(t_0)} > 0\}$, remains unchanged. In this ``final phase'', we expect the gradient descent dynamics of the single ReLU model to reduce to those of linear regression restricted to the active subset of samples.
We formalize this observation in the following lemma, which is proved in Appendix~\ref{app:proof_single_relu_convergence_mni_in_general}.
\begin{lemma}
\label{lem:final_phase}
    Suppose there exists $t_0 \ge 0$ such that $\D(\X\w^{(t_0)})=\D(\X\w^{(t)})$ for all $t \geq t_0$. Define the subset of examples $S \coloneqq \{ i\in [n] : \x_i^\top\w^{(t_0)} > 0\}$. Then, for all $t \geq t_0$, the gradient descent dynamics of the single ReLU model are equivalent to gradient descent applied to a linear model initialized at $\w^{(t_0)}$ and trained only on the subset $S$.
\end{lemma}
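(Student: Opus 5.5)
The plan is to compare the two recursions directly and show they coincide step by step by induction on $t \ge t_0$. Let $\D_S \coloneqq \D(\X\w^{(t_0)})$. This is the diagonal $0/1$ matrix whose nonzero diagonal entries are exactly the indices in $S$. By hypothesis, $\D(\X\w^{(t)}) = \D_S$ for every $t \ge t_0$.

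Substituting this into \eqref{eq:single_relu_gd_update}, each step for $t\ge t_0$ reads
\begin{align*}
\w^{(t+1)} = \w^{(t)} - \eta \X^\top \D_S (\X\w^{(t)} - \y).
\end{align*}
Two algebraic identities finish the reduction. First, since $\D_S$ is diagonal with $0/1$ entries, $\D_S = \D_S^\top \D_S$. Second, by definition of the row-submatrix, $\D_S \X$ agrees with $\X_S$ after deleting the zero rows; concretely $\X^\top \D_S \bu = \X_S^\top \bu_S$ for any $\bu\in\R^n$. Applying this with $\bu = \X\w^{(t)} - \y$ gives
\begin{align*}
\w^{(t+1)} = \w^{(t)} - \eta \X_S^\top (\X_S \w^{(t)} - \y_S).
\end{align*}
This is precisely gradient descent with step size $\eta$ on the linear least-squares objective $\tfrac12\nnorm[2]{\X_S\w - \y_S}^2$, whose gradient is $\X_S^\top(\X_S\w-\y_S)$.

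The induction is then immediate. Let $\tilde\w^{(t)}$ denote the linear-model GD iterates on $S$, started at $\tilde\w^{(t_0)} = \w^{(t_0)}$. If $\tilde\w^{(t)} = \w^{(t)}$, the two displayed updates agree, so $\tilde\w^{(t+1)} = \w^{(t+1)}$. One subtlety to note: the second equality in \eqref{eq:single_relu_gd_update} already replaced $\sigma(\X\w^{(t)})$ by $\X\w^{(t)}$ inside the active coordinates. I will remark that this is valid because $\D(\X\w^{(t)})\sigma(\X\w^{(t)}) = \D(\X\w^{(t)})\X\w^{(t)}$ coordinatewise.

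There is no real obstacle here; the content is entirely in the stabilization hypothesis, which the lemma assumes. The only care needed is bookkeeping of the submatrix notation $\X_S,\y_S$ versus the masked full matrices. If desired, I would also note the consequence that drives later use of this lemma. For $\eta < 2/\nnorm[2]{\X_S}^2$, with $\X_S$ full row rank by Assumption~\ref{asm:row_rank}, the iterates converge to the interpolator of $(\X_S,\y_S)$ closest to $\w^{(t_0)}$ in $\ell_2$. The reason is that updates stay in the row span of $\X_S$.
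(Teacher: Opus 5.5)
Your proposal is correct and matches the paper's proof. Freezing $\D(\X\w^{(t)})$ at the indicator of $S$ collapses the ReLU update to $\w^{(t+1)} = \w^{(t)} - \eta\X_S^\top(\X_S\w^{(t)} - \y_S)$, which is exactly the gradient step for $\tfrac12\|\X_S\w-\y_S\|_2^2$; your explicit induction from the common start $\w^{(t_0)}$ just formalizes the paper's side-by-side comparison of the two updates, and your closing convergence remark is the content the paper proves separately in Lemma~\ref{lem:conv_step_size} under the stricter step size $\eta \le 1/\mu_1(\XX)$.
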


As a direct consequence, convergence of the single ReLU model in the final phase follows from classical convergence guarantees for linear regression. In particular, it is straightforward to show that if the step size $\eta \leq \frac{1}{\mu_1(\XX)}$, then gradient descent converges in the final phase under our conditions on the training data. We state and prove this result for completeness in Appendix~\ref{app:proof_single_relu_convergence_mni_in_general}.
\begin{lemma}\label{lem:conv_step_size}
    Suppose there exists $t_0 \geq 0$ such that $\D(\X\w^{(t_0)})=\D(\X\w^{(t)})$ for all $t \geq t_0$, if the step size satisfies $\eta \leq \frac{1}{\mu_1(\XX)}$, gradient descent applied to the single ReLU model converges to $\w^{(\infty)} =\uargmin{\w\in\{\w:\X_S\w =\y_S\}} \nnorm[2]{\w - \w^{(t_0)}}$, where $S \coloneqq \{ i\in [n] : \x_i^\top\w^{(t_0)} > 0\}$.
\end{lemma}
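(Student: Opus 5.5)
By Lemma~\ref{lem:final_phase}, for all $t \ge t_0$ the iterates coincide with gradient descent on the linear least-squares objective $\frac12\nnorm[2]{\X_S\w-\y_S}^2$ started from $\w^{(t_0)}$. The plan is therefore to analyze this linear recursion directly:
\begin{align*}
\w^{(t+1)} = \w^{(t)} - \eta\X_S^\top(\X_S\w^{(t)}-\y_S).
\end{align*}
We split $\w^{(t)}$ into a component in $\mathrm{span}(\{\x_i\}_{i\in S})$ and a component orthogonal to it. Every update lies in the row space of $\X_S$, so the orthogonal component is frozen at its value at $t_0$.

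For the row-space component, we track the residual $\bm{r}^{(t)} \coloneqq \X_S\w^{(t)}-\y_S$. It satisfies
\begin{align*}
\bm{r}^{(t+1)} = (\I - \eta\X_S\X_S^\top)\bm{r}^{(t)}.
\end{align*}
By Assumption~\ref{asm:row_rank}, $\X_S$ has full row rank, so $\X_S\X_S^\top$ is positive definite. Cauchy interlacing for principal submatrices gives $\mu_1(\X_S\X_S^\top)\le \mu_1(\XX)$. Hence the step size $\eta\le 1/\mu_1(\XX)$ puts every eigenvalue of $\I-\eta\X_S\X_S^\top$ in $[0,1)$, and $\bm{r}^{(t)}\to\zero$ geometrically.

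Next we write $\w^{(t)} = \w^{(t_0)} + \X_S^\top\bm{c}^{(t)}$, where $\bm{c}^{(t)} = -\eta\sum_{\tau=t_0}^{t-1}\bm{r}^{(\tau)}$. The geometric decay of $\bm{r}^{(\tau)}$ makes this series converge, so $\w^{(\infty)} = \w^{(t_0)}+\X_S^\top\bm{c}^{(\infty)}$ exists and satisfies $\X_S\w^{(\infty)}=\y_S$. The limit is thus an interpolant of $(\X_S,\y_S)$ whose displacement from $\w^{(t_0)}$ lies in the row space of $\X_S$.

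Finally we identify the limit. Any $\w$ with $\X_S\w=\y_S$ can be written as $\w=\w^{(\infty)}+\bv$ with $\X_S\bv=\zero$. Since $\w^{(\infty)}-\w^{(t_0)}$ lies in the row space of $\X_S$ and $\bv$ is orthogonal to it,
\begin{align*}
\nnorm[2]{\w-\w^{(t_0)}}^2=\nnorm[2]{\w^{(\infty)}-\w^{(t_0)}}^2+\nnorm[2]{\bv}^2.
\end{align*}
This shows $\w^{(\infty)}$ is the unique minimizer of $\nnorm[2]{\w-\w^{(t_0)}}$ over $\{\w:\X_S\w=\y_S\}$. Explicitly, $\w^{(\infty)} = \w^{(t_0)}-\X_S^\top(\X_S\X_S^\top)^{-1}(\X_S\w^{(t_0)}-\y_S)$.

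None of these steps is difficult. The only points to check are that the hypothesis of Lemma~\ref{lem:final_phase} holds for all $t\ge t_0$, which is assumed here, and that the step-size bound transfers to the submatrix. The eigenvalue interlacing argument above handles the latter, and it is the step I would verify most carefully.
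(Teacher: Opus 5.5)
Your proof is correct, but it takes a different route from the paper. After invoking Lemma~\ref{lem:final_phase}, the paper treats the final-phase risk $\frac12\nnorm[2]{\X_S\w-\y_S}^2+\frac12\nnorm[2]{\y_{S^{\mathsf{c}}}}^2$ as an $L$-smooth convex quadratic with $L=\mu_1(\X_S\X_S^\top)\le\mu_1(\XX)$. It applies the standard descent lemma to get a monotonically non-increasing risk. It then cites \citet[Section~2.1]{gunasekar2018characterizing} for the fact that gradient descent on least squares converges to the Euclidean projection of the initialization onto $\{\w:\X_S\w=\y_S\}$.

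You instead prove everything directly, in three steps:
\begin{enumerate}
\item The residual recursion $\boldsymbol{r}^{(t+1)}=(\I-\eta\X_S\X_S^\top)\boldsymbol{r}^{(t)}$, together with positive definiteness of $\X_S\X_S^\top$ (from Assumption~\ref{asm:row_rank}) and interlacing, gives geometric decay of the residual.
\item Summability of the residuals gives convergence of the iterates, with the displacement $\w^{(\infty)}-\w^{(t_0)}$ lying in the row space of $\X_S$.
\item The Pythagorean identity shows the limit is the projection.
\end{enumerate}

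Your version buys three things. It is self-contained and gives a linear rate. It produces the explicit limit $\w^{(t_0)}-\X_S^\top(\X_S\X_S^\top)^{-1}(\X_S\w^{(t_0)}-\y_S)$. It also establishes convergence of the iterates and identifies the limit, which the paper outsources to the citation, since monotone decrease of the risk alone gives neither. As a side benefit, your contraction argument works for any $0<\eta<2/\mu_1(\XX)$. The paper's route is shorter and stays within the generic smoothness framework used elsewhere in its analysis (e.g., Part (c) of Lemma~\ref{lem:primal_label_same_sign}), where only monotonicity of the loss is needed.
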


\subsection{Minimum-\texorpdfstring{$\ell_2$}{l2}-norm Solution of Single ReLU Models}
In Section~\ref{sec:setup}, we discussed the minimum-$\ell_2$-norm solution for linear regression (called $\w_{\text{linear-MNI}}$). In contrast, due to the presence of the ReLU activation, single ReLU models can only produce nonnegative outputs. As a result, such models can minimize the empirical risk only by exactly fitting all samples with positive labels and outputting zero on samples with nonpositive labels.
It is natural to consider the minimum-$\ell_2$-norm solution for the single ReLU model subject to these constraints.
Interestingly, this can be written as a convex optimization problem (despite the empirical risk itself being nonconvex) in which the constraints associated with nonpositive labels are written as linear inequalities, as below:
\begin{align}\label{eq:single_relu_minimum_norm_sol}
    \w^\star = &\uargmin{\w}\frac{1}{2}\nnorm[2]{\w}^2\\ \text{ s.t. } \w^\top\x_i &= y_i, \text{ for all } y_i > 0,\nonumber\\
    \w^\top\x_j &\leq 0, \text{ for all } y_j \leq 0.\nonumber
\end{align}
%
We show that the solution of~\eqref{eq:single_relu_minimum_norm_sol} coincides with the minimum-$\ell_2$-norm solution associated with \emph{linearly} fitting a suitable subset of training examples, after setting all negative labels to zero.
We define the linear MNI solution associated with the subset $S\subseteq [n]$ as $\w_{\text{linear-MNI},S} = \X_S^\top(\X_S\X_S^\top)^{-1}\tilde{\y}_S$, where $\tilde{\y}_S \in \R^{|S|}$ denotes the corresponding modified label subvector with all negative entries replaced by zero.

\begin{lemma}
\label{lem:relu_mni_subset}
Consider a single ReLU model $h_{\bTheta}(\x) = \sigma(\w^\top \x)$. The minimum-$\ell_2$-norm solution $\w^\star$ of $h_{\bTheta}(\x)$ defined in Equation~\eqref{eq:single_relu_minimum_norm_sol} satisfies $\w^\star = \w_{\text{linear-MNI}, S}$ for some index subset $S \subseteq [n]$ that necessarily contains all indices $i$ such that $y_i > 0$, where the corresponding labels are given by $\tilde{y}_{S,i} = \max\{y_i, 0\}$.
\end{lemma}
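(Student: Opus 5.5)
The plan is to use the KKT conditions of the convex program~\eqref{eq:single_relu_minimum_norm_sol}. The program has a strongly convex objective and linear constraints. It is feasible under Assumption~\ref{asm:row_rank}: since $\X$ has full row rank, the interpolant $\X^\top\XXi\tilde{\y}$ of the clipped labels $\tilde{y}_i=\max\{y_i,0\}$ satisfies all the constraints. Hence $\w^\star$ exists and is unique. Because the constraints are affine, Slater's condition is not needed; linear constraint qualification holds, so the KKT conditions are necessary and sufficient. They give multipliers $\nu_i\in\R$ for the equality constraints ($y_i>0$) and $\mu_j\ge 0$ for the inequality constraints ($y_j\le 0$) such that
\begin{align*}
\w^\star = \sum_{i:y_i>0}\nu_i\x_i - \sum_{j:y_j\le 0}\mu_j\x_j,\qquad \mu_j\,(\x_j^\top\w^\star)=0 .
\end{align*}

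Next, define the index set
\[
S \coloneqq \{i: y_i>0\}\cup\{j: y_j\le 0,\ \x_j^\top\w^\star = 0\},
\]
that is, all positive-label indices together with the active inequality constraints. By complementary slackness, $\mu_j=0$ whenever $j\notin S$, so $\w^\star\in\mathrm{span}\{\x_i\}_{i\in S}$, i.e.\ $\w^\star=\X_S^\top\bv$ for some $\bv$. Moreover, $\X_S\w^\star=\tilde{\y}_S$ holds exactly: for $y_i>0$ we have $\x_i^\top\w^\star = y_i=\tilde y_i$, and for the active negative indices we have $\x_j^\top\w^\star=0=\tilde y_j$.

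To conclude, note that $\X_S$ has full row rank, since it is a row submatrix of the full-row-rank $\X$. Hence $\X_S\X_S^\top$ is invertible. Substituting $\w^\star=\X_S^\top\bv$ into $\X_S\w^\star=\tilde\y_S$ gives $\bv=(\X_S\X_S^\top)^{-1}\tilde\y_S$, and therefore $\w^\star=\w_{\text{linear-MNI},S}$. Equivalently, $\w^\star$ is the unique element of the row space of $\X_S$ that satisfies $\X_S\w=\tilde\y_S$, which is precisely the linear minimum-norm interpolant on $S$. By construction, $S$ contains every index with $y_i>0$.

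The main point requiring care is justifying that the KKT multipliers exist. Existence of the minimizer and of the multipliers follows from feasibility together with the linearity of the constraints. If one prefers to avoid duality, there is an alternative route. Let $S$ be as above. Any small perturbation $\w^\star+\epsilon\bu$ with $\X_S\bu=\zero$ remains feasible, because the strict inequalities $\x_j^\top\w^\star<0$ for $j\notin S$ are preserved for small $\epsilon$. Optimality of $\w^\star$ then forces $\langle\w^\star,\bu\rangle=0$ for all $\bu\in\ker\X_S$, so $\w^\star\in\mathrm{row}(\X_S)$ directly. This gives the same conclusion without invoking duality or any sign information on the multipliers.
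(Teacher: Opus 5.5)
Your proof is correct and follows essentially the same KKT route as the paper: complementary slackness restricts the stationarity representation of $\w^\star$ to the positive-label indices plus some of the negative-label indices, and $\w^\star$ is then identified with the linear MNI on that subset. The differences are minor. The paper takes the subset to be $\{j:\mu_j^\star>0\}$ and concludes by verifying the KKT conditions of a reduced equality-constrained program, whereas you take all tight constraints and conclude directly from $\w^\star\in\mathrm{row}(\X_S)$, $\X_S\w^\star=\tilde{\y}_S$ and the full row rank of $\X_S$; your perturbation argument is also a valid duality-free way to get the inclusion $\w^\star\in\mathrm{row}(\X_S)$.
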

Lemma~\ref{lem:relu_mni_subset} is proved in Appendix~\ref{app:proof_single_relu_convergence_mni_in_general} through the Karush-Kahn-Tucker (KKT) conditions.
It is important to note that $\w^\star$ is a fundamentally different inductive bias from $\w_{\text{linear-MNI}}$ as the subset $S$ does not have an explicit formula, and is training data-dependent.

\subsection{High-dimensional Implicit Bias of Single ReLU Models}
Our first main result, stated below, characterizes the gradient descent dynamics of single ReLU models on high-dimensional data.

\begin{theorem}\label{thm:single_relu_gd_high_dim_implicit_bias}
    Consider Assumptions~\ref{asm:label_bounds} and~\ref{asm:high_dim_data}, suppose the initialization is $\w^{(0)} = \X^{\top}(\XX)^{-1} \bepsilon$, where $0 < \epsilon_i \leq \frac{1}{C_{\alpha}} y_{\min}$ for all $i \in [n]$, and the step size to satisfy $\frac{1}{CC_g\nnorm[1]{\blambda}} \leq \eta \leq \frac{1}{C_g\nnorm[1]{\blambda}}$. Then, the gradient descent limit $\w^{(\infty)}$ for the single ReLU model coincides with the solution obtained by linear regression trained only on the positively labeled examples with initialization $\w^{(1)} = \eta\X^\top\Bigl(\y-\bepsilon+\frac{1}{\eta}(\XX)^{-1}\bepsilon\Bigr)$ with probability at least $1 - 2 \exp(-
    cn)$. Formally, we have $\w^{(\infty)} =\uargmin{\w\in\{\w:\X_+\w =\y_+\}} \nnorm[2]{\w - \w^{(1)}}$ and $\X_- \w^{(\infty)} \preceq \zero$.
\end{theorem}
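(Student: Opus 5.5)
The plan is to show that after a single gradient step the activation pattern is exactly ``positives active, negatives inactive,'' and that this pattern is then frozen for all $t\ge 1$. Lemma~\ref{lem:conv_step_size} with $t_0=1$ and $S=\{i:y_i>0\}$ then gives the claim. The dual variables $\balpha^{(t)}$ and primal variables $\bbeta^{(t)}=\XX\balpha^{(t)}$ of~\eqref{eq:primal_dual_update} will be tracked throughout.

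\textbf{Step 0 (Gram matrix concentration).} I will first establish a high-probability event, of probability at least $1-2\exp(-cn)$. On it, every diagonal entry $\nnorm[2]{\x_i}^2$ lies in $[\tfrac12\nnorm[1]{\blambda},\tfrac32\nnorm[1]{\blambda}]$, and every off-diagonal entry satisfies
\[
|\x_i^\top\x_j|\le C\bigl(\sqrt{n}\nnorm[2]{\blambda}+n\nnorm[\infty]{\blambda}\bigr).
\]
The tools are Hanson--Wright for the diagonal terms, subgaussian bilinear bounds for the off-diagonal terms, and a union bound over pairs. Assumption~\ref{asm:high_dim_data} then gives
\[
n\max_{i\ne j}|\x_i^\top\x_j|\le \frac{C}{C_0}\frac{y_{\min}}{y_{\max}}\nnorm[1]{\blambda}.
\]
This is exactly the near-orthogonality of Assumption~\ref{asm:deterministic_data}. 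By Gershgorin it also yields $\mu_1(\XX)\le C_g\nnorm[1]{\blambda}$, so the step size condition $\eta\le 1/\mu_1(\XX)$ of Lemma~\ref{lem:conv_step_size} holds.

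\textbf{Step 1 (first iterate).} Since $\bbeta^{(0)}=\bepsilon\succ\zero$, all examples are active at $t=0$. The dual update then gives $\balpha^{(1)}=(\XX)^{-1}\bepsilon+\eta(\y-\bepsilon)$, so $\w^{(1)}=\X^\top\balpha^{(1)}$ matches the stated formula. Next,
\[
\bbeta^{(1)}=\bepsilon+\eta\XX(\y-\bepsilon),
\]
whose $i$-th coordinate equals $\epsilon_i+\eta\nnorm[2]{\x_i}^2(y_i-\epsilon_i)$ plus a cross term. By Step 0 the cross term is at most $\eta n\max_{i\neq j}|\x_i^\top\x_j|\,y_{\max}\lesssim y_{\min}/(C_gC_0)$. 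Because $\eta\nnorm[2]{\x_i}^2\asymp 1/C_g$ and $\epsilon_i\le y_{\min}/C_\alpha$ with $C_\alpha\gtrsim C_g^2$, we get $\beta^{(1)}_i>0$ whenever $y_i>0$. For $y_i<0$ we get $\beta_i^{(1)}\le -c\,y_{\min}/C_g<0$.

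\textbf{Step 2 (pattern is frozen; main obstacle).} I will prove by induction on $t\ge1$ that $\bbeta_+^{(t)}\succ\zero$ and $\bbeta_-^{(t)}\prec\zero$; these are the two statements I would isolate as Lemmas~\ref{lem:pos_stay_pos} and~\ref{lem:neg_stay_neg}.

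Under the induction hypothesis, $\balpha_-$ is frozen at $\balpha_-^{(1)}$. The residual $\bm{r}^{(t)}=\bbeta_+^{(t)}-\y_+$ evolves linearly, $\bm{r}^{(t+1)}=(\I-\eta\X_+\X_+^\top)\bm{r}^{(t)}$. An $\ell_\infty$ bound using row sums would only give $|r_i|\le\rho^{t}\nnorm[\infty]{\bm r^{(1)}}$, and this need not be smaller than $y_{\min}$. So the hard part is a \emph{coordinatewise} tracking.

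For that I will write $\bm{r}^{(t)}$ as the diagonal recursion, whose $i$-th coordinate is $(1-\eta\nnorm[2]{\x_i}^2)^{t-1}r^{(1)}_i$, plus an error. The error is controlled by summing the geometric series of off-diagonal perturbations, which bounds it by roughly $n\max_{i\ne j}|\x_i^\top\x_j|\,y_{\max}/\nnorm[1]{\blambda}\lesssim y_{\min}/C_0$. Because $r_i^{(1)}\in(-y_i,0)$, up to this small error, the diagonal part keeps $\beta_i^{(t)}=y_i+r_i^{(t)}$ above roughly $\min\{\beta_i^{(1)},y_i\}$ minus the error, which is positive.

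For the negatives, I will decompose
\[
\bbeta_-^{(t)}=\X_-\X_-^\top\balpha_-^{(1)}+\X_-\X_+^\top\balpha_+^{(t)}.
\]
The first term is at most $-c\,y_{\min}/C_g$ per coordinate, as in Step 1. The second term is bounded by $n\max_{i\ne j}|\x_i^\top\x_j|\,\nnorm[\infty]{\balpha_+^{(t)}}$. The residual bound keeps $\nnorm[\infty]{\balpha_+^{(t)}}\lesssim y_{\max}/\nnorm[1]{\blambda}$ along the whole trajectory, since it is $\eta$ times a summed geometric residual. Hence the second term is $\lesssim y_{\min}/C_0$, which is dominated because $C_0\gtrsim C_\alpha^2$.

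\textbf{Step 3 (conclusion).} With $\D(\X\w^{(t)})$ constant for $t\ge1$, Lemma~\ref{lem:final_phase} and Lemma~\ref{lem:conv_step_size} (with $t_0=1$, $S=\{i:y_i>0\}$) give
\[
\w^{(\infty)}=\uargmin{\w:\X_+\w=\y_+}\nnorm[2]{\w-\w^{(1)}}.
\]
The strict negativity from Step 2 persists in the limit as $\X_-\w^{(\infty)}\preceq\zero$.
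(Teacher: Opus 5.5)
Your overall architecture matches the paper's. One gradient step puts every positive example in the active regime and gives every negative example a sufficiently negative dual coordinate, which then stays frozen. An induction freezes the activation pattern, and Lemmas~\ref{lem:final_phase} and~\ref{lem:conv_step_size} with $t_0=1$ finish. However, the key inequality at the end of Step~0 does not follow from Assumption~\ref{asm:high_dim_data}, and every later estimate runs through it. From your per-pair bound you get $n\max_{i\neq j}|\x_i^\top\x_j|\le C(n^{1.5}\nnorm[2]{\blambda}+n^{2}\nnorm[\infty]{\blambda})$. Assumption~\ref{asm:high_dim_data} only gives $n^{1.5}\nnorm[2]{\blambda}\le\sqrt{n}\,\frac{y_{\min}}{C_0y_{\max}}\nnorm[1]{\blambda}$ and $n^{2}\nnorm[\infty]{\blambda}\le\sqrt{n}\,\frac{y_{\min}}{C_0y_{\max}}\nnorm[1]{\blambda}$. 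So you are off by $\sqrt{n}$, and your claim would need $d_2\gtrsim n^3$ and $d_\infty\gtrsim n^2$. This is not just slack from the union bound. For isotropic Gaussian features with $d=C_0^2n^2y_{\max}^2/y_{\min}^2$, the largest off-diagonal Gram entry is of order $\sqrt{d\log n}$ with high probability, so $n\max_{i\neq j}|\x_i^\top\x_j|\gtrsim\sqrt{\log n}\,\frac{y_{\min}}{C_0y_{\max}}\,d$, which violates your inequality for large $n$. Hence Assumption~\ref{asm:high_dim_data} does not imply Assumption~\ref{asm:deterministic_data}. The cross-term bounds in Step~1, in both halves of Step~2, and your Gershgorin bound on $\mu_1(\XX)$ all fail by this factor. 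Any estimate of the form (largest off-diagonal entry)$\times$($\ell_1$ norm of a residual or dual vector) is too lossy in this regime.

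The repair, which is the paper's route, is to pair every cross term with an $\ell_2$ norm and the operator-norm deviation of Corollary~\ref{cor:concentration_op_norm}: $\nnorm[2]{\XX-\nnorm[1]{\blambda}\I}\le\frac{Cy_{\min}}{C_0y_{\max}\sqrt{n}}\nnorm[1]{\blambda}$. The $1/\sqrt{n}$ there is exactly cancelled by $\nnorm[2]{\bbeta_+^{(t)}-\y_+}\le C_y\sqrt{n}\,y_{\max}$ and by $\nnorm[2]{\balpha^{(t)}}\le C_\alpha\sqrt{n}\,y_{\max}/\nnorm[1]{\blambda}$. The latter comes from $\bbeta_+=\X_+\X_+^\top\balpha_++\X_+\X_-^\top\balpha_-$ and $\mu_n(\XX)\ge\nnorm[1]{\blambda}/C_g$ (Lemma~\ref{lem:concentration_eigenvalues}), not from an $\ell_\infty$ geometric-sum argument whose per-step errors you would also have to show are summable.

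Once you work in $\ell_2$, your coordinatewise geometric-series tracking of the positives is unnecessary. Write $\beta_i^{(t+1)}=(1-\eta\nnorm[1]{\blambda})\beta_i^{(t)}+\eta\nnorm[1]{\blambda}y_i-\eta\e_i^\top(\XX-\nnorm[1]{\blambda}\I)\D(\bbeta^{(t)})(\bbeta^{(t)}-\y)$. Mere positivity of $\beta_i^{(t)}$ makes the first two terms at least $\eta\nnorm[1]{\blambda}y_{\min}$. The last term is at most $\eta\frac{CC_y}{C_0}\nnorm[1]{\blambda}y_{\min}$, because the $\ell_2$ residual on the positives is nonincreasing. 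So positivity propagates one step at a time (Lemma~\ref{lem:pos_stay_pos}). Your concern about $\ell_\infty$ residual bounds never arises, since no quantitative lower bound on $\beta_i^{(t)}$ is needed. The negatives are then handled by Lemma~\ref{lem:neg_stay_neg}, using the frozen value $\alpha_j^{(1)}\le-\frac{y_{\min}}{C_\alpha\nnorm[1]{\blambda}}$ together with the $\ell_2$ bound on $\balpha^{(t)}$.
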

Theorem~\ref{thm:single_relu_gd_high_dim_implicit_bias} is proved in Appendix~\ref{app:proof_single_relu_gd_high_dim_implicit_bias} and characterizes a regime of gradient descent in which the convergence behavior is tractable. 
Due to the presence of the ReLU activation, the main challenge lies in monitoring which examples are active and which are inactive during gradient descent. Under our assumption of sufficiently high-dimensional data, we show, through careful tracking of the primal and dual variables, that examples with positive labels remain active throughout the optimization process (see Lemma~\ref{lem:pos_stay_pos}), while examples with negative labels eventually become and remain inactive (see Lemma~\ref{lem:neg_stay_neg}).
Therefore, the limiting solution fits all positive labels exactly and produces predictions equal to zero for samples with negative labels. Consequently, this solution achieves the minimum empirical risk, i.e. is a specific global minimizer of~\eqref{eq:empirical_risk}. 

\begin{remark}
    In addition to Assumptions~\ref{asm:label_bounds} and~\ref{asm:high_dim_data}, Theorem~\ref{thm:single_relu_gd_high_dim_implicit_bias} assumes a sufficiently small initialization where all the training examples are active (to see this, note that $\X \w^{(0)} = \bepsilon \succ \zero$)\footnote{The initialization expression $\w^{(0)} = \X^\top(\XX)^{-1}\bepsilon$ with $\bepsilon\in\R^n$ is isomorphic to an arbitrary initialization in the span of the data $\{\x_i\}_{i=1}^n$, owing to the full-row-rank assumption on $\X$. Note in particular that such an initialization can be arbitrarily far from $\w^\star$, and does not constitute a ``local'' initialization.}. 
    Essentially, the primal variables are initialized in the positive orthant.
    The sufficiently small initialization is also assumed in previous work~\citep{boursier2022gradient,dana2025convergence}.
    The positivity assumption is made to ensure high-probability convergence to a global minimum.
    On the other hand, a random initialization would map to both positive and negative primal variables.
    Our simulations in Appendix~\ref{app:simulation} (in particular, Figure~\ref{fig:bad_init_high_dim}) demonstrate that in this case, a positively labeled but initially inactive example remains inactive, meaning that gradient descent can only converge to a local minimum\footnote{In fact, this is why~\citet{dana2025convergence} need to assume a sufficiently large number of neurons $m$ to ensure global convergence under random initialization.}. Additionally, we provide a simple explicit counterexample showing that such initializations result in convergence to a local minimum that is not globally optimal in Appendix~\ref{app:counter_example}.
\end{remark}

\subsection{Approximation to Minimum-\texorpdfstring{$\ell_2$}{l2}-norm Solution in High Dimensions}
We now show that the limiting solution obtained from Theorem~\ref{thm:single_relu_gd_high_dim_implicit_bias} is different from, but close to the minimum-$\ell_2$-norm solution in high dimensions. 
Specifically, the following theorem upper and lower bounds the Euclidean distance between $\w^{(\infty)}$ and $\w^\star$ as a function of the number of negative examples $n_-$, effective dimension and label magnitude.

\begin{theorem}\label{thm:single_relu_approx_to_w_star}
    Consider Assumptions~\ref{asm:label_bounds} and~\ref{asm:high_dim_data}, suppose the initialization is $\w^{(0)} = \X^{\top}(\XX)^{-1} \bepsilon$, where $0 < \epsilon_i \leq \frac{1}{C_{\alpha}} y_{\min}$ for all $i \in [n]$, and the step size to satisfy $\frac{1}{CC_g\nnorm[1]{\blambda}} \leq \eta \leq \frac{1}{C_g\nnorm[1]{\blambda}}$. Then, we have $\sqrt{\frac{n_-y_{\min}^2}{C C_g\nnorm[1]{\blambda}}}\leq \nnorm[2]{\w^{(\infty)} - \w^{\star}} \leq \sqrt{\frac{16n_-y_{\max}^2}{C_g\nnorm[1]{\blambda}}}$  with probability at least $1 - 2\exp(-cn)$.
\end{theorem}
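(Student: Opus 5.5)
The plan is to reduce both $\w^{(\infty)}$ and $\w^\star$ to dual coefficients in the data span. The gap is then $\nnorm[2]{\X^\top\bdelta}^2=\bdelta^\top\XX\bdelta$ for an explicit coefficient difference $\bdelta$, and I expect this to be dominated by the negatively labeled coordinates.

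I would work on the high-probability event already used for Theorem~\ref{thm:single_relu_gd_high_dim_implicit_bias}. On that event, Assumption~\ref{asm:high_dim_data} gives Gram-matrix concentration: $\nnorm[2]{\x_i}^2\in[(1-\tfrac{1}{C})\nnorm[1]{\blambda},(1+\tfrac1C)\nnorm[1]{\blambda}]$ and $|\x_i^\top\x_j|\lesssim \nnorm[1]{\blambda}/(C_0 n)\cdot y_{\min}/y_{\max}$. Hence $\mu_{\min}(\XX),\mu_1(\XX)\in[(1-\tfrac1C)\nnorm[1]{\blambda},(1+\tfrac1C)\nnorm[1]{\blambda}]$, and likewise for every principal submatrix. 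I also need $(\X_S\X_S^\top)^{-1}=\nnorm[1]{\blambda}^{-1}(\I+\boldsymbol{E}_S)$ with $\nnorm[2]{\boldsymbol{E}_S}\lesssim 1/C_0$.

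Step 1 is the dual expression for $\w^{(\infty)}$. By Theorem~\ref{thm:single_relu_gd_high_dim_implicit_bias}, $\w^{(\infty)}=\w^{(1)}+\X_+^\top(\X_+\X_+^\top)^{-1}(\y_+-\X_+\w^{(1)})$, with $\w^{(1)}=\X^\top\ba$ and $\ba=\eta(\y-\bepsilon)+(\XX)^{-1}\bepsilon$. Hence $\w^{(\infty)}=\X^\top\balpha^{(\infty)}$, where the negative block is $\balpha^{(\infty)}_-=\ba_-=\eta(\y_--\bepsilon_-)+[(\XX)^{-1}\bepsilon]_-$. The positive block is $\balpha^{(\infty)}_+=(\X_+\X_+^\top)^{-1}(\y_+-\X_+\X_-^\top\ba_-)$.

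Step 2 is the dual expression for $\w^\star$. By Lemma~\ref{lem:relu_mni_subset}, $\w^\star=\X_S^\top(\X_S\X_S^\top)^{-1}\tilde\y_S$ for some $S\supseteq\{i:y_i>0\}$, with zero labels on $S\cap\{y_i<0\}$. Writing $\w^\star=\X^\top\balpha^\star$ (zero off $S$), I need $\balpha^\star_+\approx\y_+/\nnorm[1]{\blambda}$ and $|\alpha^\star_j|\lesssim \frac{1}{C_0}\,y_{\min}/\nnorm[1]{\blambda}$ on negatives. Since $\tilde\y_S$ vanishes on $S_-\coloneqq S\cap\{y_i<0\}$, we get $\balpha^\star_{S_-}=[(\X_S\X_S^\top)^{-1}]_{S_-,+}\y_+$. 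The off-diagonal block is bounded using the off-diagonal Gram bound, giving at most $\sqrt{n}\,y_{\max}\cdot y_{\min}/(C_0\sqrt{n}\, y_{\max}\nnorm[1]{\blambda})$ per coordinate. The same bound holds uniformly over the unknown $S$, because it applies to every principal submatrix.

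Step 3 is the comparison. Set $\bdelta=\balpha^{(\infty)}-\balpha^\star$, so that $\nnorm[2]{\w^{(\infty)}-\w^\star}^2=\bdelta^\top\XX\bdelta\in[(1\pm\tfrac1C)\nnorm[1]{\blambda}]\nnorm[2]{\bdelta}^2$. On negatives, $\bdelta_-=\eta\y_-+\boldsymbol{r}$. The remainder $\boldsymbol{r}$ collects $-\eta\bepsilon_-$, $[(\XX)^{-1}\bepsilon]_-$ and $-\balpha^\star_-$, and each is at most a small constant fraction of $\eta y_{\min}$. Here I use $\epsilon_i\le y_{\min}/C_\alpha$, $\eta\ge 1/(CC_g\nnorm[1]{\blambda})$ and $C_\alpha\gtrsim C_g^2$. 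Therefore $\nnorm[2]{\bdelta_-}^2\in[\tfrac12 n_-\eta^2y_{\min}^2,\,2n_-\eta^2y_{\max}^2]$.

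On positives, $\bdelta_+=(\X_+\X_+^\top)^{-1}\bigl[\X_+\X_S^\top\text{-cross terms}\bigr]$, which is driven by $\bdelta_-$ through off-diagonal Gram entries. So $\nnorm[2]{\bdelta_+}\lesssim C_0^{-1}\nnorm[2]{\bdelta_-}$. It only helps the upper bound, and is harmless for the lower bound since $\nnorm[2]{\bdelta}\ge\nnorm[2]{\bdelta_-}$.

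Plugging in, $\nnorm[2]{\w^{(\infty)}-\w^\star}^2\asymp\nnorm[1]{\blambda}\,n_-\eta^2y^2$. Using $\eta\le 1/(C_g\nnorm[1]{\blambda})$ gives the upper bound $16n_-y_{\max}^2/(C_g\nnorm[1]{\blambda})$, where a spare factor $1/C_g\le1$ absorbs constants. Using $\eta\ge 1/(CC_g\nnorm[1]{\blambda})$ gives the lower bound after renaming $C$.

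The main obstacle is Step 2. The subset $S$ selected by $\w^\star$ is implicit, so the negative coefficients of $\w^\star$ must be controlled uniformly over all $S$, and must be shown to be of smaller order than $\eta y_{\min}$. Otherwise they could cancel $\eta\y_-$ and destroy the lower bound. I would handle this through the KKT characterization behind Lemma~\ref{lem:relu_mni_subset} combined with the uniform principal-submatrix bound above. The condition $d_\infty\gtrsim n^{1.5}y_{\max}/y_{\min}$ is exactly what makes the cross term $\sqrt{n}\,y_{\max}\max_{i\ne j}|\x_i^\top\x_j|/\nnorm[1]{\blambda}^2$ small compared with $y_{\min}/(C_g\nnorm[1]{\blambda})$.
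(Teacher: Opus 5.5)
Your proof is correct, but it is organized differently from the paper's.

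\textbf{The paper's route.} The paper works in prediction space. It bounds $\nnorm[2]{\X(\w^{(\infty)}-\w^\star)}$, which vanishes on the positive block automatically. It then uses the KKT conditions behind Lemma~\ref{lem:relu_mni_subset} to split the negative examples into two groups: those with an active constraint, where $\w^{\star\top}\x_j=0$ and so the gap is just $\w^{(\infty)\top}\x_j$, and those with $\alpha^\star_j=0$. In both groups the entry is $\nnorm[1]{\blambda}\alpha^{(\infty)}_j$ plus a cross term. The paper controls this with the interval bounds on $\alpha^{(\infty)}_j$ from Lemma~\ref{lem:primal_label_same_sign} and only the global bound $\nnorm[2]{\balpha^\star}\le C_g\sqrt{n}y_{\max}/\nnorm[1]{\blambda}$.

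\textbf{Your route.} You compare dual coefficients instead, in three moves:
\begin{enumerate}
\item You use the exact frozen value $\balpha^{(\infty)}_-=\balpha^{(1)}_-$.
\item You show from the closed form in Lemma~\ref{lem:relu_mni_subset} that $\balpha^\star_-$ is negligible uniformly in $S$, so no case split is needed.
\item You eliminate the positive block through the shared constraint $\X_+\X^\top\bdelta=\zero$, that is, $\bdelta_+=-(\X_+\X_+^\top)^{-1}\X_+\X_-^\top\bdelta_-$.
\end{enumerate}

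\textbf{What each buys.} Your route gives a leading-order statement: $\nnorm[2]{\w^{(\infty)}-\w^\star}\approx\eta\sqrt{\nnorm[1]{\blambda}}\,\nnorm[2]{\y_-}$ up to small relative errors. This is sharper than the paper's non-matching constants and makes the dependence on $\eta$ explicit. The paper's route avoids solving for $\bdelta_+$ and bounding $\balpha^\star_-$. Its KKT template also carries over more directly to the $2$-ReLU case (Theorem~\ref{thm:multiple_relu_approx_to_w_star}), where $\w^\star$ is only characterized through the restricted program of Lemma~\ref{lem:multiple_relu_w_star_eqivalence}.

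\textbf{Statements to correct in the write-up.}
\begin{itemize}
\item The entrywise bound $|\x_i^\top\x_j|\lesssim\nnorm[1]{\blambda}y_{\min}/(C_0ny_{\max})$ does not follow at probability $1-2\exp(-cn)$. What Corollary~\ref{cor:concentration_op_norm} and Assumption~\ref{asm:high_dim_data} give is $\nnorm[2]{\XX-\nnorm[1]{\blambda}\I}\le C\nnorm[1]{\blambda}y_{\min}/(C_0\sqrt{n}y_{\max})$.
\item Apply that operator-norm bound to the relevant blocks directly, not entry by entry, since summing entrywise bounds loses a factor of $\sqrt{n}$. The blocks are $\X_+\X_-^\top$ and $(\X_S\X_S^\top)^{-1}-\nnorm[1]{\blambda}^{-1}\I$, the latter via the resolvent identity.
\item This is also the scale Step~2 needs. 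The bound $\nnorm[2]{\boldsymbol{E}_S}\lesssim 1/C_0$ is too coarse. The bound $\nnorm[2]{\boldsymbol{E}_S}\lesssim C_gC\,y_{\min}/(C_0\sqrt{n}y_{\max})$ gives $\nnorm[2]{\balpha^\star_{S_-}}\lesssim C_gC\,y_{\min}/(C_0\nnorm[1]{\blambda})$, which is the number you actually used.
\item $\bdelta_+$ does not help the upper bound. It adds a negligible term of relative size $O((C_gC/C_0)^2)$ to $\nnorm[2]{\bdelta}^2$.
\end{itemize}
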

Theorem~\ref{thm:single_relu_approx_to_w_star} is proved in Appendix~\ref{app:proof_single_relu_approx_to_w_star} and heavily leverages our characterization of the minimum-$\ell_2$-norm solution in Lemma~\ref{lem:relu_mni_subset}.
Our simulation in Figure~\ref{fig:convex_prog} shows excellent agreement with Theorem~\ref{thm:single_relu_approx_to_w_star}.
Note that Theorem~\ref{thm:single_relu_approx_to_w_star} implies that $\w^{(\infty)} = \w^\star=\w_{\text{linear-MNI}}$ in the case where all labels are positive!

\section{Implicit Bias of Two ReLU Models (\texorpdfstring{$m=2$}{m=2}) Under Gradient Descent}\label{sec:multiple_relu_gd}

We now extend our analysis to a $2$-ReLU model ($m=2$), which combines one positive ReLU neuron and one negative ReLU neuron. More specifically, we define $h_{\bTheta}(\x) = \sigma(\w_{\oplus}^\top\x) - \sigma(\w_{\ominus}^\top\x)$, where $\bTheta$ is a set of model parameters such that $\bTheta \coloneqq \{\w_{\oplus}, \w_{\ominus}\}$ and $\w_{\oplus}, \w_{\ominus}\in\R^d$. As mentioned in Section~\ref{sec:minimum_l2_sol}, this model is more expressive than the single ReLU model as it can perfectly fit arbitrary labels (both positive and negative).
For the $2$-ReLU model, the gradient descent update in~\eqref{eq:gd_update_wk} simplifies to
\begin{align*}
\w_{\oplus}^{(t+1)}
&= \w_{\oplus}^{(t)} - \eta \nabla_{\w_{\oplus}} \Risk(\bTheta^{(t)}) = \w_{\oplus}^{(t)} - \eta \X^\top \D(\X\w_{\oplus}^{(t)})\bigl(h_{\bTheta^{(t)}}(\X) - \y\bigr).\\
\w_{\ominus}^{(t+1)}
&= \w_{\ominus}^{(t)} - \eta \nabla_{\w_{\ominus}} \Risk(\bTheta^{(t)}) = \w_{\ominus}^{(t)} + \eta \X^\top \D(\X\w_{\ominus}^{(t)})\bigl(h_{\bTheta^{(t)}}(\X) - \y\bigr).
\end{align*}

\subsection{Minimum-\texorpdfstring{$\ell_2$}{l2}-norm Solution of \texorpdfstring{$2$}{2}-ReLU Models}

First, we characterize the minimum-$\ell_2$-norm solution for the $2$-ReLU model, defined below:
\begin{align}\label{eq:multiple_relu_minimum_norm_sol}
    \w_{\oplus}^\star, \w_{\ominus}^\star  = &\uargmin{\w_{\oplus}, \w_{\ominus}}\frac{1}{2}\nnorm[2]{\w_{\oplus}}^2 + \frac{1}{2}\nnorm[2]{\w_{\ominus}}^2\\
   \text{s.t. } \sigma(\w_{\oplus}^\top\x_i)&- \sigma(\w_{\ominus}^\top\x_i) = y_i, \text{ for all } i\in[n].\nonumber
\end{align}
Unlike in the case of the single ReLU model,~\eqref{eq:multiple_relu_minimum_norm_sol} cannot be written as a convex program.
To analyze~\eqref{eq:multiple_relu_minimum_norm_sol}, we show that the optimal solution is also the optimal solution to a \emph{restricted convex program} obtained by fixing the activation pattern of the two ReLU units across the training examples. To state this result, we define some additional notation.
Let $S_+ = \{i: y_i > 0, \text{ for all }i\in[n]\}$, $S_- = \{j: y_j < 0, \text{ for all }j\in[n]\}$, so that $S_+ \cup S_- = [n]$ and $S_+ \cap S_- =\varnothing$. 
\begin{lemma}\label{lem:multiple_relu_w_star_eqivalence}
    The feasible set of~\eqref{eq:multiple_relu_minimum_norm_sol} is nonempty, and there exist partitions $S_1 \cup S_2 = S_+, \; S_1\cap S_2 =\varnothing, \text{ and } S_3 \cup S_4 = S_-, \; S_3\cap S_4 =\varnothing$
    such that the optimal solution $\{\w_{\oplus}^\star, \w_{\ominus}^\star\}$ of~\eqref{eq:multiple_relu_minimum_norm_sol} is also an optimal solution of the following convex program:
\begin{alignat}{3}\label{eq:multiple_relu_w_star_eq_form}
    &\myquad[2]\w_{\oplus}^\star, \w_{\ominus}^\star &= \uargmin{\w_{\oplus}, \w_{\ominus}}\frac{1}{2}\nnorm[2]{\w_{\oplus}}^2 + \frac{1}{2}\nnorm[2]{\w_{\ominus}}^2&\\
    \text{s.t. } &\w_{\oplus}^\top\x_i \phantom{-\w_{\ominus}^\top\x_i} &= y_i, \phantom{-\w_{\oplus}^\top\x_i < 0, -}\w_{\ominus}^\top\x_i\leq 0, &\quad\text{ for all } i\in S_1,\nonumber\\
    &\w_{\oplus}^\top\x_i - \w_{\ominus}^\top\x_i &= y_i,  \phantom{-\w_{\oplus}^\top\x_i < 0} -\w_{\ominus}^\top\x_i \leq 0, &\quad\text{ for all } i\in S_2,\nonumber\\
    &\phantom{\w_{\oplus}^\top\x_i} - \w_{\ominus}^\top\x_i &= y_i, \phantom{-}\w_{\oplus}^\top\x_i\leq 0, \phantom{-\w_{\ominus}^\top\x_i < 0,} &\quad\text{ for all } i\in S_3,\nonumber\\
    &\w_{\oplus}^\top\x_i - \w_{\ominus}^\top\x_i &= y_i, -\w_{\oplus}^\top\x_i \leq 0, \phantom{-\w_{\ominus}^\top\x_i < 0,} &\quad\text{ for all } i\in S_4.\nonumber
\end{alignat}
\end{lemma}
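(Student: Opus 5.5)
The plan has three steps: show the feasible set is nonempty, reduce to a finite union of convex programs indexed by activation patterns, and then read off the partition from an optimal solution.

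\emph{Nonemptiness.} Under Assumption~\ref{asm:row_rank}, $\X\X^\top$ is invertible, so for any target $\bv\in\R^n$ the vector $\w=\X^\top(\XX)^{-1}\bv$ satisfies $\X\w=\bv$. Take $\w_{\oplus}=\X^\top(\XX)^{-1}\tilde{\y}^{+}$ with $\tilde{y}^{+}_i=\max\{y_i,0\}$, and $\w_{\ominus}=\X^\top(\XX)^{-1}\tilde{\y}^{-}$ with $\tilde{y}^{-}_i=\max\{-y_i,0\}$. Then
\[
\sigma(\w_{\oplus}^\top\x_i)-\sigma(\w_{\ominus}^\top\x_i)=\max\{y_i,0\}-\max\{-y_i,0\}=y_i .
\]
Existence of a minimizer of~\eqref{eq:multiple_relu_minimum_norm_sol} follows because the objective is coercive and the feasible set is closed ($\sigma$ is continuous). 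Alternatively, existence follows from the finite-union argument below.

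\emph{Decomposition into convex pieces.} For each feasible pair and each $i$, I would classify the activation pattern. If $y_i>0$, then $\sigma(\w_{\oplus}^\top\x_i)=y_i+\sigma(\w_{\ominus}^\top\x_i)>0$, so $\w_{\oplus}^\top\x_i>0$ and the constraint reads $\w_{\oplus}^\top\x_i-\sigma(\w_{\ominus}^\top\x_i)=y_i$. This splits into two cases. Put $i\in S_1$ if $\w_{\ominus}^\top\x_i\le0$, in which case $\w_{\oplus}^\top\x_i=y_i$. Put $i\in S_2$ if $\w_{\ominus}^\top\x_i\ge0$, in which case $\w_{\oplus}^\top\x_i-\w_{\ominus}^\top\x_i=y_i$. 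The boundary value $0$ may be assigned to either set. Symmetrically, for $y_i<0$ we get $\w_{\ominus}^\top\x_i>0$ and the same split into $S_3$ and $S_4$ according to the sign of $\w_{\oplus}^\top\x_i$. The constraints in~\eqref{eq:multiple_relu_w_star_eq_form} are written with non-strict inequalities, and the equalities automatically force $\w_{\oplus}^\top\x_i=y_i+\w_{\ominus}^\top\x_i\ge y_i>0$ on $S_2$ (and analogously on $S_4$). Hence, for every choice of partition, the feasible set of~\eqref{eq:multiple_relu_w_star_eq_form} is a polyhedron contained in the nonconvex feasible set of~\eqref{eq:multiple_relu_minimum_norm_sol}. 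Conversely, the nonconvex feasible set equals the union of these finitely many polyhedra, at most $2^{n}$ of them. The one point I must check here is that on each polyhedron the ReLU constraint of~\eqref{eq:multiple_relu_minimum_norm_sol} really holds. For example, on $S_1$ we have $\sigma(y_i)-\sigma(\w_{\ominus}^\top\x_i)=y_i-0$.

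\emph{Conclusion.} Minimizing the strongly convex objective over the union is the same as taking the minimum of the per-polyhedron minima. Each nonempty polyhedron attains its minimum uniquely, so a global minimizer exists. Given the optimal $\{\w_{\oplus}^\star,\w_{\ominus}^\star\}$, I choose the partition induced by its own activation pattern. Then $\{\w_{\oplus}^\star,\w_{\ominus}^\star\}$ is feasible for that convex program. Since that program's feasible set is a subset of the original feasible set, its optimal value is at least that of~\eqref{eq:multiple_relu_minimum_norm_sol}, which $\{\w_{\oplus}^\star,\w_{\ominus}^\star\}$ attains. Therefore $\{\w_{\oplus}^\star,\w_{\ominus}^\star\}$ is optimal for~\eqref{eq:multiple_relu_w_star_eq_form}, and by strong convexity it is the unique optimum there. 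The main obstacle is only the bookkeeping in the decomposition step: ties where $\w^\top\x_i=0$, and verifying that the non-strict inequalities do not enlarge the feasible set beyond the original.
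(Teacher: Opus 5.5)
Your proof is correct and follows the paper's argument. It uses the same explicit feasible pair $\X^\top(\X\X^\top)^{-1}\y_{\oplus}$ and $\X^\top(\X\X^\top)^{-1}\y_{\ominus}$, built from $\max\{y_i,0\}$ and $-\min\{y_i,0\}$. It classifies the optimal pair's activation pattern into $S_1,\dots,S_4$ in the same way, and it makes the same observation that the restricted program's feasible set lies inside the original one, so the optimum carries over. Your additional steps fill in details the paper takes for granted. These are the existence of a minimizer (via coercivity and closedness, or via the finite union of polyhedra) and the explicit check that every point of each polyhedron satisfies the ReLU constraints.
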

Lemma~\ref{lem:multiple_relu_w_star_eqivalence} is proved in Appendix~\ref{app:proof_multiple_relu_w_star_eqivalence}.
Note that, in general, it is not possible to explicitly identify or characterize the exact activation patterns and corresponding partitions. However, the mere existence of such a partition is sufficient for our purposes and allows us to derive the desired approximation results for the implicit bias in Section~\ref{sec:multiple_relu_approx_w_star}.

\subsection{High-dimensional Implicit Bias of \texorpdfstring{$2$}{2}-ReLU Models}

Next, we characterize the gradient descent dynamics of two ReLU models in the high-dimensional regime in a manner analogous to the single-ReLU model (Theorem~\ref{thm:single_relu_gd_high_dim_implicit_bias}).

\begin{theorem}\label{thm:multiple_relu_gd_high_dim_implicit_bias}
    Consider Assumptions~\ref{asm:label_bounds} and \ref{asm:high_dim_data}, suppose the initialization $\w_{\oplus}^{(0)} = \X^\top\bigl(\X\X^\top\bigr)^{-1}\bepsilon_{\oplus}$ and $\w_{\ominus}^{(0)} = \X^\top\bigl(\X\X^\top\bigr)^{-1}\bepsilon_{\ominus}$, where $0 < \epsilon_{\oplus,i}, \epsilon_{\ominus,i} \leq \frac{1}{2C_{\alpha}}y_{\min}$ for all $i \in [n]$, and the step size to satisfy $\frac{1}{CC_g\nnorm[1]{\blambda}}\leq \eta \leq \frac{1}{C_g\nnorm[1]{\blambda}}$. Then, with probability at least $1 - 2\exp(-cn)$, we have: The gradient descent limit $\w_{\oplus}^{(\infty)}$ coincides with the solution obtained by linear regression trained only on the positively labeled examples, with the initialization $\w_{\oplus}^{(1)} = \eta\X^\top\Bigl(\y - \bepsilon_{\oplus} + \bepsilon_{\ominus} + \frac{1}{\eta}(\X\X^\top)^{-1}\bepsilon_{\oplus}\Bigr)$, and $\w_{\oplus}^{(\infty)} =\uargmin{\w\in\{\w:\X_+\w =\y_+\}} \nnorm[2]{\w - \w_{\oplus}^{(1)}}$; the gradient descent limit $\w_{\ominus}^{(\infty)}$ coincides with the solution obtained by linear regression trained only on the negatively labeled examples, with the initialization $\w_{\ominus}^{(1)} = \eta\X^\top\Bigl(-\y + \bepsilon_{\oplus} - \bepsilon_{\ominus} + \frac{1}{\eta}(\X\X^\top)^{-1}\bepsilon_{\ominus}\Bigr)$ and $\w_{\ominus}^{(\infty)} =\uargmin{\w\in\{\w:\X_-\w =-\y_-\}} \nnorm[2]{\w - \w_{\ominus}^{(1)}}$, with $\X_- \w_{\oplus}^{(\infty)} \preceq \zero$ and $\X_+ \w_{\ominus}^{(\infty)} \preceq \zero$.
\end{theorem}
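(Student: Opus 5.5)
We track both neurons in the primal-dual variables~\eqref{eq:primal_dual_def}. Write $\bbeta_{\oplus}^{(t)}=\X\w_{\oplus}^{(t)}$ and $\bbeta_{\ominus}^{(t)}=\X\w_{\ominus}^{(t)}$, with dual variables $\balpha_{\oplus}^{(t)},\balpha_{\ominus}^{(t)}$. The strategy mirrors the proof of Theorem~\ref{thm:single_relu_gd_high_dim_implicit_bias}. After one step, the activation pattern of each neuron equals its target sign pattern, and it never changes afterwards. Lemmas~\ref{lem:final_phase} and~\ref{lem:conv_step_size} then reduce each neuron to linear regression on one label class.

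\textbf{Step 1 (concentration).} We begin with the high-probability event from Assumption~\ref{asm:high_dim_data}, on which the Gram matrix is nearly diagonal:
\[
\|\x_i\|_2^2\in\bigl[c\nnorm[1]{\blambda},\,C\nnorm[1]{\blambda}\bigr],\qquad \max_{i\ne j}|\x_i^\top\x_j|\le C\bigl(\nnorm[2]{\blambda}\sqrt{n}+\nnorm[\infty]{\blambda}n\bigr).
\]
This event holds with probability $1-2\exp(-cn)$. Consequently $\mu_1(\XX)\le C_g\nnorm[1]{\blambda}$, so the step size condition gives $\eta\le 1/\mu_1(\XX)$. Moreover, $\eta\XX=\eta\,\mathrm{diag}(\|\x_i\|^2)+\boldsymbol{E}$ with $\|\boldsymbol{E}\|_\infty$ small relative to $y_{\min}/(C_0 n\,y_{\max})$.

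\textbf{Step 2 (first step).} At $t=0$ we have $\bbeta_{\oplus}^{(0)}=\bepsilon_{\oplus}\succ\zero$ and $\bbeta_{\ominus}^{(0)}=\bepsilon_{\ominus}\succ\zero$. Both neurons are therefore fully active, and the residual is $\bepsilon_{\oplus}-\bepsilon_{\ominus}-\y$. The dual update~\eqref{eq:dual_update} then gives
\[
\balpha_{\oplus}^{(1)}=(\XX)^{-1}\bepsilon_{\oplus}+\eta(\y-\bepsilon_{\oplus}+\bepsilon_{\ominus}),
\]
which is exactly the stated $\w_{\oplus}^{(1)}$, and symmetrically for $\w_{\ominus}^{(1)}$. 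In the primal, $\beta_{\oplus,i}^{(1)}=\epsilon_{\oplus,i}+\eta\|\x_i\|^2(y_i-\epsilon_{\oplus,i}+\epsilon_{\ominus,i})+\text{cross terms}$. Because $\eta\|\x_i\|^2\asymp 1$, $|\epsilon|\le y_{\min}/(2C_\alpha)$, and the cross terms are bounded by Step 1, we get $\beta_{\oplus,i}^{(1)}\gtrsim y_{\min}$ for $i\in S_+$ and $\beta_{\oplus,j}^{(1)}\lesssim -y_{\min}$ for $j\in S_-$. The negative neuron behaves symmetrically. The same computation shows $\alpha_{\oplus,j}^{(1)}\le -c\eta y_{\min}$ for $j\in S_-$, so the duals on inactive coordinates are sufficiently negative.

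\textbf{Step 3 (pattern is invariant).} For $t\ge1$ we argue by induction that the activation patterns stay as established in Step 2. Under this pattern, on $S_+$ only $\w_{\oplus}$ is active, so the residual there is $\beta_{\oplus,i}-y_i$; on $S_-$ only $\w_{\ominus}$ is active. The two neurons therefore decouple: $\balpha_{\oplus}$ updates only on $S_+$, with linear-regression dynamics on $\X_+$, and its entries on $S_-$ are frozen at their step-1 values.

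We then invoke Lemma~\ref{lem:pos_stay_pos} to conclude that $\bbeta_{\oplus,+}$ stays positive. The intended route is to show that the residual contracts in a norm dominated by the diagonal. We invoke Lemma~\ref{lem:neg_stay_neg} to conclude that $\X_-\w_{\oplus}^{(t)}$ stays negative. Its primal is
\[
\bbeta_{\oplus,-}=\X_-\X_+^\top\balpha_{\oplus,+}+\X_-\X_-^\top\balpha_{\oplus,-}.
\]
Here the frozen diagonal contribution $\|\x_j\|^2\alpha_{\oplus,j}^{(1)}\lesssim -y_{\min}$ dominates the cross terms, which are controlled by $\|\balpha_{\oplus,+}^{(t)}\|_\infty\lesssim \eta y_{\max}$ uniformly in $t$ times the off-diagonal bound. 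The identical argument applies to $\w_{\ominus}$ with $S_+$ and $S_-$ swapped and labels negated.

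\textbf{Step 4 (conclusion).} With the pattern fixed from $t_0=1$ onward, Lemmas~\ref{lem:final_phase} and~\ref{lem:conv_step_size}, applied neuron-wise, give the stated limits:
\[
\w_{\oplus}^{(\infty)}=\uargmin{\w\in\{\w:\X_+\w=\y_+\}}\nnorm[2]{\w-\w_{\oplus}^{(1)}},\qquad \w_{\ominus}^{(\infty)}=\uargmin{\w\in\{\w:\X_-\w=-\y_-\}}\nnorm[2]{\w-\w_{\ominus}^{(1)}}.
\]
The sign conditions $\X_-\w_{\oplus}^{(\infty)}\preceq\zero$ and $\X_+\w_{\ominus}^{(\infty)}\preceq\zero$ pass to the limit from the strict inequalities maintained in Step 3.

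\textbf{Main obstacle.} The hard part is the uniform-in-$t$ $\ell_\infty$ control of $\balpha_{\oplus,+}^{(t)}$ in Step 3. This requires showing that discrete GD on the nearly diagonal matrix $\X_+\X_+^\top$ keeps each residual coordinate bounded without oscillation or overshoot. I would first try a contraction argument for $\I-\eta\X_+\X_+^\top$ in the $\ell_\infty$ norm, using diagonal dominance from Assumption~\ref{asm:high_dim_data}: the diagonal part has entries in $[0,1)$, and the off-diagonal row sums are at most $1/C_0$. This would give monotone geometric decay of the residuals, from which the dual bound follows by summing a geometric series.
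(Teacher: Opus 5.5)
Your skeleton matches the paper's: one step puts both neurons into the target activation pattern, Lemmas~\ref{lem:pos_stay_pos} and~\ref{lem:neg_stay_neg} with frozen duals keep it there, and Lemmas~\ref{lem:final_phase} and~\ref{lem:conv_step_size} are then applied neuron-wise. You diverge in Step~3, and the paper sidesteps the ``main obstacle'' you flag rather than solving it.

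Its invariant (Lemma~\ref{lem:primal_label_same_sign_two}) is purely $\ell_2$. The bound $\|\bbeta_{\oplus,+}^{(t)}-\y_+\|_2\le C_y\|\y_+\|_2$ (and its $\ominus$ analogue) persists because $\eta\le 1/\mu_1(\XX)$ makes $\I-\eta\X_+\X_+^\top$ non-expansive in $\ell_2$. This is exactly the residual hypothesis of Lemma~\ref{lem:pos_stay_pos}.

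The dual bound $\|\balpha_{\oplus}^{(t)}\|_2\le C_\alpha\sqrt{n}\,y_{\max}/\|\blambda\|_1$ needed by Lemma~\ref{lem:neg_stay_neg} is then recovered \emph{statically} at each $t$. One solves $\X_+\X_+^\top\balpha_{\oplus,+}^{(t)}=\bbeta_{\oplus,+}^{(t)}-\X_+\X_-^\top\balpha_{\oplus,-}^{(t)}$ using three facts: $\mu_{n_+}(\X_+\X_+^\top)\ge\|\blambda\|_1/C_g$; $\|\X_+\X_-^\top\|_2\le C_g\|\blambda\|_1$; and the two-sided bound $-3y_{\max}/(C_g\|\blambda\|_1)\le\alpha_{\oplus,j}\le-y_{\min}/(C_\alpha\|\blambda\|_1)$ on the frozen coordinates. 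That lower bound is part of the invariant and is checked at $t=1$. No trajectory-level contraction or geometric series is needed.

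If you keep the $\ell_\infty$ route, the bounds in your Step~1 will not carry it. Your entrywise bound is what a union bound gives at failure probability $e^{-cn}$. Under Assumption~\ref{asm:high_dim_data} it yields only $\eta\max_{i\ne j}|\x_i^\top\x_j|\lesssim y_{\min}/(C_0\sqrt{n}\,y_{\max})$, not the $y_{\min}/(C_0 n\,y_{\max})$ you assert. Summing $n$ such entries makes $\eta$ times an off-diagonal row sum of order $\sqrt{n}\,y_{\min}/(C_0 y_{\max})$. So the claimed $\ell_\infty$ contraction, and any entrywise control of the cross terms in Step~2, break down for large $n$.

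The repair is Cauchy--Schwarz against the operator norm: $\sum_{j\ne i}|\x_i^\top\x_j|\le\sqrt{n}\,\|\XX-\|\blambda\|_1\I\|_2\lesssim\|\blambda\|_1 y_{\min}/(C_0y_{\max})$ by Corollary~\ref{cor:concentration_op_norm}. Combined with the step-size lower bound, this makes the $\ell_\infty\to\ell_\infty$ norm of $\I-\eta\X_+\X_+^\top$ at most $1-c\,\eta\|\blambda\|_1<1$. Your geometric-series bound on $\|\balpha_{\oplus,+}^{(t)}\|_\infty$ and the diagonal-domination argument on $S_-$ then go through. The same conversion is needed for $\|(\XX)^{-1}\bepsilon_{\oplus}\|_\infty$ inside $\balpha_{\oplus}^{(1)}$.

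With that fix your proof is a valid alternative, but it does more work than the paper's $\ell_2$ argument. There, residual and dual bounds of size $\sqrt{n}\,y_{\max}$ exactly match the $1/\sqrt{n}$ relative operator-norm deviation that Assumption~\ref{asm:high_dim_data} provides.
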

Theorem~\ref{thm:multiple_relu_gd_high_dim_implicit_bias} is proved in Appendix~\ref{app:proof_multiple_relu_gd_high_dim_implicit_bias} in a manner similar to the proof of Theorem~\ref{thm:single_relu_gd_high_dim_implicit_bias}.
Our main additional insight is that, in high dimensions, the optimization dynamics naturally decouple: $\w_{\oplus}$ learns to fit all positively labeled examples, while $\w_{\ominus}$ learns to fit all negatively labeled examples. 

\subsection{Approximation to Minimum-\texorpdfstring{$\ell_2$}{l2}-norm Solution in High Dimensions}\label{sec:multiple_relu_approx_w_star}

Finally, we show, in a result analogous to Theorem~\ref{thm:single_relu_approx_to_w_star}, that the limiting solution of Theorem~\ref{thm:multiple_relu_gd_high_dim_implicit_bias} is close to the minimum-$\ell_2$-norm solution $\{\w_{\oplus}^\star, \w_{\ominus}^\star\}$.

\begin{theorem}\label{thm:multiple_relu_approx_to_w_star}
    Consider Assumptions~\ref{asm:label_bounds} and \ref{asm:high_dim_data}, suppose the initialization is $\w_{\oplus}^{(0)} = \X^\top\bigl(\X\X^\top\bigr)^{-1}\bepsilon_{\oplus}
    $, $\w_{\ominus}^{(0)} = \X^\top\bigl(\X\X^\top\bigr)^{-1}\bepsilon_{\ominus},$  where $0 < \epsilon_{\oplus,i},\epsilon_{\ominus,i} \leq \frac{1}{2C_{\alpha}}y_{\min}$ for all $i \in [n]$, and the step size satisfies $\frac{1}{CC_g\nnorm[1]{\blambda}}\leq \eta \leq \frac{1}{C_g\nnorm[1]{\blambda}}$. Then, we have $\sqrt{\frac{n_-y_{\min}^2}{C C_g\nnorm[1]{\blambda}}}\leq\nnorm[2]{\w_{\oplus}^{(\infty)} - \w_{\oplus}^{\star}} \leq \sqrt{\frac{16n_-y_{\max}^2}{C_g\nnorm[1]{\blambda}}}$ and $\sqrt{\frac{n_+y_{\min}^2}{C C_g\nnorm[1]{\blambda}}}\leq\nnorm[2]{\w_{\ominus}^{(\infty)} - \w_{\ominus}^{\star}} \leq \sqrt{\frac{16n_+y_{\max}^2}{C_g\nnorm[1]{\blambda}}}$  with probability at least $1 - 2 \exp(-
    cn)$.
\end{theorem}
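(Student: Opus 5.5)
We plan to reduce Theorem~\ref{thm:multiple_relu_approx_to_w_star} to two copies of the single-neuron argument behind Theorem~\ref{thm:single_relu_approx_to_w_star}, one per neuron. The first step uses Theorem~\ref{thm:multiple_relu_gd_high_dim_implicit_bias}. On the high-probability event it provides, the GD limit is explicit. Namely, $\w_{\oplus}^{(\infty)} = \w_{\oplus}^{(1)} + \X_+^\top(\X_+\X_+^\top)^{-1}(\y_+ - \X_+\w_{\oplus}^{(1)})$, with $\X_-\w_{\oplus}^{(\infty)}\preceq\zero$. Symmetrically, $\w_{\ominus}^{(\infty)}$ is the projection of $\w_{\ominus}^{(1)}$ onto $\{\X_-\w = -\y_-\}$, with $\X_+\w_{\ominus}^{(\infty)}\preceq\zero$.

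The second step identifies the structure of the minimum-norm solution. By Lemma~\ref{lem:multiple_relu_w_star_eqivalence}, there is a partition $(S_1,S_2,S_3,S_4)$ for which $\{\w_{\oplus}^\star,\w_{\ominus}^\star\}$ solves the convex program~\eqref{eq:multiple_relu_w_star_eq_form}. I would first argue that in high dimensions $S_2=S_4=\varnothing$. Heuristically, using both neurons to fit one example wastes norm: in the near-orthogonal regime each $\nnorm[2]{\w}^2\approx\sum_i \beta_i^2/\nnorm[2]{\x_i}^2$, and writing $y_i = a-b$ with $a,b>0$ gives $a^2+b^2 > y_i^2$. To make this rigorous I would compare objective values of the two candidate partitions using Gram-matrix concentration ($\XX \approx \mathrm{diag}(\nnorm[2]{\x_i}^2)$ up to relative error $O(n/\sqrt{d_2})$) together with Assumption~\ref{asm:high_dim_data}. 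If this step resists, I would instead carry $S_2,S_4$ through the KKT argument, since only bounds are needed.

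With $S_2=S_4=\varnothing$, the program decouples. $\w_{\oplus}^\star$ solves exactly the single-ReLU problem~\eqref{eq:single_relu_minimum_norm_sol} with labels $\y$, and $\w_{\ominus}^\star$ solves the analogous problem with labels $-\y$. By Lemma~\ref{lem:relu_mni_subset}, each is then a linear MNI on some subset with negative labels zeroed: $\w_{\oplus}^\star = \X_{S}^\top(\X_S\X_S^\top)^{-1}\tilde{\y}_S$ with $S\supseteq S_+$.

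The third step repeats the distance computation from the proof of Theorem~\ref{thm:single_relu_approx_to_w_star}. Both $\w_{\oplus}^{(\infty)}$ and $\w_{\oplus}^\star$ lie in the row span of $\X$. Hence $\nnorm[2]{\w_{\oplus}^{(\infty)}-\w_{\oplus}^\star}^2 = \bdelta^\top(\XX)^{-1}\bdelta$, where $\bdelta = \X(\w_{\oplus}^{(\infty)}-\w_{\oplus}^\star)$. The coordinates of $\bdelta$ on $S_+$ vanish, since both solutions fit $\y_+$ exactly. On $S_-$, the coordinates are $\x_j^\top\w_{\oplus}^{(\infty)} - \x_j^\top\w_{\oplus}^\star$. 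I would then sandwich $(\XX)^{-1}$ between $\frac{1}{C\nnorm[1]{\blambda}}\I$ and $\frac{2}{\nnorm[1]{\blambda}}\I$ via concentration. The remaining task is to show $|\delta_j|\asymp y$ on each $j\in S_-$.

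For the upper bound on $|\delta_j|$, I would use $|\x_j^\top\w^{(1)}_\oplus| \lesssim \eta\nnorm[1]{\blambda}\,|y_j|$ plus cross terms, with $\eta\le 1/(C_g\nnorm[1]{\blambda})$. The projection correction onto $S_+$ contributes only $O(n/\sqrt{d_2})$-small cross terms. Together these give $|\delta_j|\le 2y_{\max}$, and hence the stated constant $16$.

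For the lower bound, I need $\x_j^\top\w_{\oplus}^{(\infty)}$ to be substantially negative, of order $\eta\nnorm[1]{\blambda}\,y_{\min}\gtrsim y_{\min}/(CC_g)$. This follows from the first-step term $\eta\nnorm[2]{\x_j}^2(y_j-\epsilon_{\oplus,j}+\epsilon_{\ominus,j})$, which dominates because the $\epsilon$'s are at most $y_{\min}/(2C_\alpha)$. I also need $\x_j^\top\w_{\oplus}^\star$ to be nearly zero: near-orthogonality should force the inequality constraints for $j\in S_-$ to be (almost) active, so that $j\in S$ with label $0$. Summing the squares over $n_-$ coordinates then yields the claimed bounds for $\w_{\oplus}$. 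The bounds for $\w_{\ominus}$ follow identically, with the roles of $n_+$ and $n_-$ swapped.

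The main obstacle I expect is the second step: ruling out, or controlling, the mixed sets $S_2,S_4$ in Lemma~\ref{lem:multiple_relu_w_star_eqivalence}. Showing the decoupling of the non-convex minimum-norm problem requires a quantitative norm comparison. That comparison must be robust to the $O(n/\sqrt{d_2})$ off-diagonal Gram perturbations, and Assumption~\ref{asm:high_dim_data} is exactly what should make it go through.
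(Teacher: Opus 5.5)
Your plan works, but it is organized differently from the paper's proof. The paper never states a decoupling result. It first bounds $\nnorm[2]{\balpha_{\oplus}^\star},\nnorm[2]{\balpha_{\ominus}^\star}\le \sqrt{2}C_g\sqrt{n}y_{\max}/\nnorm[1]{\blambda}$ by comparing the objective with the feasible pair that linearly fits $\max\{y_i,0\}$ and $-\min\{y_i,0\}$. It then writes the KKT system of the restricted program~\eqref{eq:multiple_relu_w_star_eq_form} for the four-set partition. On $S_2$ (resp.\ $S_4$) it shows the sub-case $\mu_i^\star=0$ is impossible: solving the equality constraint for $\delta_i^\star$ forces $\w_{\ominus}^{\star\top}\x_i\approx -y_i/2<0$. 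So those examples contribute exactly zero, and the $S_3,S_4$ coordinates are then bounded as in the single-neuron proof.

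You instead prove decoupling first. Then $\w_{\oplus}^\star,\w_{\ominus}^\star$ are the two single-ReLU minimum-norm solutions, and Lemma~\ref{lem:relu_mni_subset} plus the proof of Theorem~\ref{thm:single_relu_approx_to_w_star} can be reused almost verbatim. This yields a cleaner structural statement. The paper's route instead handles all four index sets in one KKT computation, with no separate claim about the nonconvex problem.

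Your second step is best done by a one-coordinate perturbation rather than by comparing partitions. For span vectors $\nnorm[2]{\w}^2=\bbeta^\top(\XX)^{-1}\bbeta$; set $\boldsymbol{E}\coloneqq\nnorm[1]{\blambda}(\XX)^{-1}-\I$. Replacing $(\beta_{\oplus,i}^\star,\beta_{\ominus,i}^\star)=(y_i+b,b)$, $b>0$, by $(y_i,0)$ preserves feasibility and changes the objective by $-\frac{2b}{\nnorm[1]{\blambda}}\bigl(y_i+b(1-\e_i^\top\boldsymbol{E}\e_i)+\e_i^\top\boldsymbol{E}(\bbeta_{\oplus}^\star+\bbeta_{\ominus}^\star)\bigr)<0$, and symmetrically for negative examples. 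The cross term is small only given an a priori bound on $\nnorm[2]{\bbeta^\star}$, which is exactly the paper's Step~1, so you still need that ingredient. Your KKT fallback also cannot settle for ``bounds'' on $S_2$. The upper bound scales with $n_-$ and is $0$ when $n_-=0$, so $\sum_{i\in S_2}(\w_{\ominus}^{\star\top}\x_i)^2$ must vanish exactly; the paper's argument delivers this.

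Two bookkeeping fixes remain.

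First, $|\delta_j|\le 2y_{\max}$ does not give the stated constant. With $\nnorm[2]{(\XX)^{-1}}\le 2/\nnorm[1]{\blambda}$ it yields $8n_-y_{\max}^2/\nnorm[1]{\blambda}$, which exceeds $16n_-y_{\max}^2/(C_g\nnorm[1]{\blambda})$ once $C_g>2$. Use $\eta\nnorm[1]{\blambda}\le 1/C_g$, which is the source of the paper's bound $\alpha_{\oplus,j}^{(1)}\ge -3y_{\max}/(C_g\nnorm[1]{\blambda})$, to get $|\delta_j|\le 4y_{\max}/C_g$. Then combine with $\mu_n(\XX)\ge\nnorm[1]{\blambda}/C_g$.

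Second, the negative constraints need not be ``(almost) active'' in the sense $j\in S$. For $j\in S_-\setminus S$ you only have $\alpha_{\oplus,j}^\star=0$, so $\x_j^\top\w_{\oplus}^\star$ is a pure cross term. It is bounded by $\nnorm[2]{\XX-\nnorm[1]{\blambda}\I}\,\nnorm[2]{\balpha_{\oplus}^\star}$, with $\nnorm[2]{\balpha_{\oplus}^\star}\le C_g\sqrt{n}y_{\max}/\nnorm[1]{\blambda}$ from the subset representation. It is this smallness, not membership in $S$, that your lower bound actually uses.
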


Theorem~\ref{thm:multiple_relu_approx_to_w_star} is proved in Appendix~\ref{app:proof_multiple_relu_approx_to_w_star} and leverages the restricted convex program that we derived in Lemma~\ref{lem:multiple_relu_w_star_eqivalence}.
Due to the relative complexity of~\eqref{eq:multiple_relu_w_star_eq_form}, the proof becomes more involved than that of Theorem~\ref{thm:single_relu_approx_to_w_star}, but the underlying basic ideas are similar.
Note that, because one of $n_+, n_- > 0$, the implicit bias of $2$-ReLU cannot exactly coincide with the minimum-$\ell_2$-norm solution.
\section{Main Proof Ideas}\label{sec:proof-sketch}

\begin{figure}
\centering 
\begin{minipage}[t]{0.45\textwidth}
\resizebox{\textwidth}{!}{
 \begin{tikzpicture}[
    node distance=1.5cm and 2cm,
    every node/.style={circle, draw, minimum size=0.8cm, font=\small},
    arrow/.style={-Stealth, thick}
]

\node[draw=none] at (0, 1.3) {\large $(t)$};
\node[draw=none] at (4, 1.3) {\large $(t+1)$};

\node[draw=none] at (-2.5, 0) {\large $y_i > 0$:};
\node (A) at (0, 0) [minimum size=1.6cm, inner sep=1pt] {$\beta_i^{(t)}>0$};
\node (B) at (4, 0) [minimum size=1.6cm, inner sep=1pt] {$\beta_i^{(t+1)}>0$};

\draw[arrow] (A) -- node[above, draw=none, inner sep=-15pt] {Lemma~\ref{lem:pos_stay_pos}} (B);

\node[draw=none] at (-2.5, -2.9) {\large $y_j < 0$:};
\node (D) at (0, -1.8) [minimum size=1.6cm, inner sep=1pt] {$\beta_j^{(t)}<0$};
\node (E) at (0, -3.8) [minimum size=1.6cm, inner sep=1pt] {$\alpha_j^{(t)}<0$};
\node (G) at (4, -3.8) [minimum size=1.6cm, inner sep=1pt, font=\tiny] {$\alpha_j^{(t+1)}=\alpha_j^{(t)}$};

\draw[arrow] (E) --  node[left, draw=none, inner sep=2pt]{Lemma~\ref{lem:neg_stay_neg}} (D);
\draw[arrow] (D) -- node[above right, draw=none, pos=0.4, yshift=-15pt] {Equation~\eqref{eq:dual_update}} (G);

\end{tikzpicture}
}

 \caption{Gradient descent transition diagram for the $k$-th neuron.} 
 \label{fig:gd_transition}
\end{minipage}
\hfill
\begin{minipage}[t]{.45\textwidth}
 \includegraphics[width=65mm]{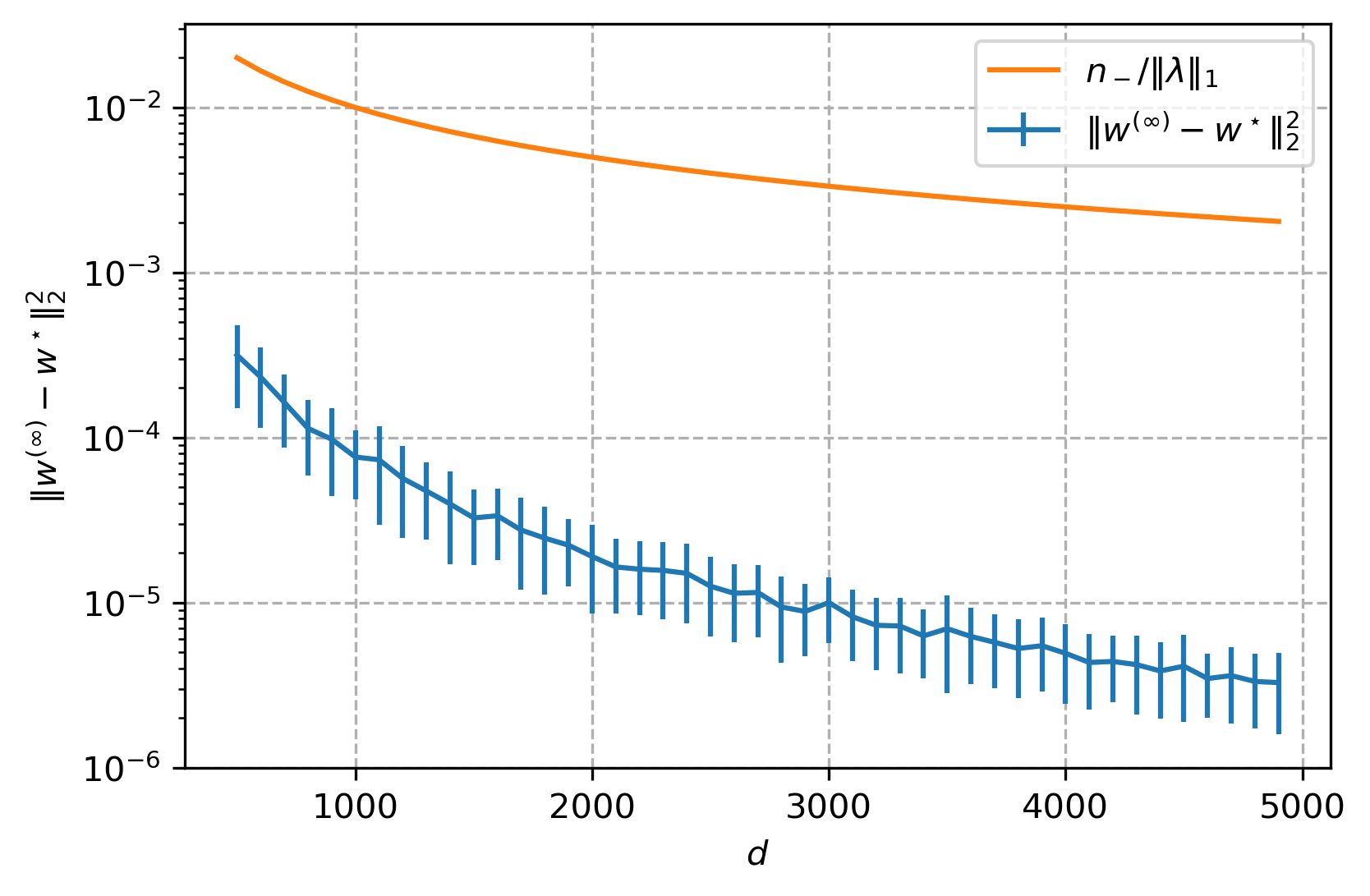}
 \caption{Approximation error between the implicit bias of the single ReLU model $\w^{(\infty)}$ and the minimum-$\ell_2$-norm solution $\w^\star$.} 
  \label{fig:convex_prog}
\end{minipage}
\end{figure}

Our analysis hinges on \textbf{precisely tracking the activation patterns of ReLU neurons across all training examples.} By controlling which examples remain active or inactive throughout training, we are able to understand the resulting gradient dynamics and, consequently, the implicit bias of the converged solution. To establish these results, we introduce two key lemmas. Lemma~\ref{lem:pos_stay_pos}, inspired by ideas in~\citet{dana2025convergence}, shows that once the primal variable $\beta_{k,i}$ corresponding to the $k$-th neuron and the $i$-th example is active—and the sign of the neuron $s_k$ agrees with the label $y_i$—it remains active in the next iteration. This ensures that such an example is not suppressed by the ReLU nonlinearity and continues to contribute to the gradient updates.

\begin{lemma}\label{lem:pos_stay_pos}
    Under Assumptions~\ref{asm:label_bounds} and~\ref{asm:high_dim_data}, suppose the gradient descent step size satisfies $\eta \leq \frac{1}{C_g\nnorm[1]{\blambda}}$. Consider the $k$-th ReLU neuron in $h_{\bTheta}$. For any $t \geq 0$ and any index $i\in[n]$ such that $s_k \cdot y_i > 0$, if $\beta_{k,i}^{(t)} > 0$, $\beta_{k,i}^{(t)} \geq s_k \cdot h_{\bTheta^{(t)}}(\x_i)$, and $\nnorm[2]{h_{\bTheta^{(t)}}(\X) - \y} \leq C_y\nnorm[2]{\y}$, then $\beta_{k,i}^{(t+1)} > 0$ with probability at least $1 - 2 \exp(-cn)$.
\end{lemma}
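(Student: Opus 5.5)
We will work directly with the primal update~\eqref{eq:primal_update} for the $k$-th neuron, looking at coordinate $i$. Write $\bm{r}^{(t)} \coloneqq h_{\bTheta^{(t)}}(\X) - \y$ and let $\D^{(t)} = \D(\bbeta_k^{(t)})$. Since $\beta_{k,i}^{(t)}>0$, we have $D^{(t)}_{ii}=1$. We split the $i$-th coordinate of $\X\X^\top\D^{(t)}\bm{r}^{(t)}$ into a diagonal part and an off-diagonal part:
\begin{align*}
\beta_{k,i}^{(t+1)} = \beta_{k,i}^{(t)} - \eta s_k \nnorm[2]{\x_i}^2 r_i^{(t)} - \eta s_k \sum_{j\neq i} (\x_i^\top\x_j) D^{(t)}_{jj} r_j^{(t)}.
\end{align*}
The proof then shows that the first two terms together stay bounded below by a positive quantity of order $y_{\min}$, and that the cross term is much smaller than this.

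\textbf{Concentration event.} We first fix the high-probability event, with probability at least $1-2\exp(-cn)$, obtained from standard subgaussian (Hanson--Wright and Bernstein) bounds under Assumption~\ref{asm:high_dim_data}. On this event,
\[
\tfrac12\nnorm[1]{\blambda}\le \nnorm[2]{\x_i}^2 \le 2\nnorm[1]{\blambda} \quad\text{for all } i,
\qquad
\max_{i\neq j}|\x_i^\top\x_j| \le C\bigl(\nnorm[2]{\blambda}\sqrt{n} + \nnorm[\infty]{\blambda}\, n\bigr).
\]
Via $d_2$ and $d_\infty$, this second bound is at most $\frac{\nnorm[1]{\blambda}\,y_{\min}}{C_0\, n\, y_{\max}}$, i.e. a near-orthogonality condition of the type in Assumption~\ref{asm:deterministic_data}. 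Every remaining step is deterministic on this event. The constant $C_g$ is chosen so that $\eta\nnorm[2]{\x_i}^2\le 2/C_g<1$; call this quantity $a\in(0,1)$.

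\textbf{Diagonal part.} Set $p\coloneqq s_k h_{\bTheta^{(t)}}(\x_i)$ and $q\coloneqq s_k y_i = |y_i|\ge y_{\min}$, so that $s_k r_i^{(t)} = p - q$. The diagonal part is then
\[
\beta_{k,i}^{(t)} - a(p-q) = (1-a)\beta_{k,i}^{(t)} + a(\beta_{k,i}^{(t)} - p) + aq.
\]
Using the hypotheses $\beta_{k,i}^{(t)}>0$ and $\beta_{k,i}^{(t)}\ge p$, this is strictly greater than $aq \ge a\,y_{\min}$. The hypothesis $\beta\ge s_k h$ matters here: for $m=2$ the other neuron could push the prediction in the opposite direction, and this inequality is what keeps the residual from overshooting. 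We also have $a\ge \eta\nnorm[1]{\blambda}/2$.

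\textbf{Cross term (main obstacle).} By Cauchy--Schwarz and the residual hypothesis,
\[
\Bigl|\eta\sum_{j\neq i} (\x_i^\top\x_j) D_{jj} r_j\Bigr| \le \eta \max_{j\neq i}|\x_i^\top\x_j|\,\sqrt{n}\,\nnorm[2]{\bm r^{(t)}} \le \eta \max_{j\neq i}|\x_i^\top\x_j|\,\sqrt{n}\, C_y \sqrt{n}\, y_{\max}.
\]
Plugging in the concentration bound gives at most $\eta\nnorm[1]{\blambda}\,C_y\,y_{\min}/C_0$. This is below $a\,y_{\min}/2$ once $C_0\gtrsim C_y$, which is consistent with the paper's choice $C_0\gtrsim C_\alpha^2$ and $C_\alpha\gtrsim C_yC_g$. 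Combining this with the diagonal part yields $\beta_{k,i}^{(t+1)}>0$.

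The delicate part is matching the exponents. The $\nnorm[2]{\blambda}\sqrt n$ term needs $d_2\gtrsim n^2 y_{\max}^2/y_{\min}^2$, and the $n\nnorm[\infty]{\blambda}$ term needs $d_\infty\gtrsim n^{1.5}y_{\max}/y_{\min}$ rather than $n^2$. For the latter, I would use a sharper union bound for the off-diagonal inner products, of order $\nnorm[2]{\blambda}\sqrt{\log n}+\nnorm[\infty]{\blambda}\log n$, or use the fact that only the $\ell_2$ norm of the row $(\x_i^\top\x_j)_j$ enters. In that case the relevant quantity is $\nnorm[2]{\X_{-i}\x_i}\lesssim \sqrt{n}\nnorm[2]{\blambda}+n\nnorm[\infty]{\blambda}$, via a subgaussian bound conditional on $\x_i$. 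Multiplied by $C_y\sqrt n\,y_{\max}$, this is exactly what the two parts of Assumption~\ref{asm:high_dim_data} are tuned to absorb. I would carry out this second route, since it also gives the stated $\exp(-cn)$ failure probability.
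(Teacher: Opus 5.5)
Your proof is correct once you carry out the row-norm route from your last paragraph, but the cross-term paragraph as written does not close and should be replaced by it. From $\max_{i\neq j}|\x_i^\top\x_j|\le C(\nnorm[2]{\blambda}\sqrt{n}+\nnorm[\infty]{\blambda}n)=C\nnorm[1]{\blambda}(\sqrt{n/d_2}+n/d_\infty)$, Assumption~\ref{asm:high_dim_data} only gives $\max_{i\neq j}|\x_i^\top\x_j|\le \frac{2C\nnorm[1]{\blambda}y_{\min}}{C_0\sqrt{n}\,y_{\max}}$, not $\frac{\nnorm[1]{\blambda}y_{\min}}{C_0 n y_{\max}}$. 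The $\ell_\infty$--$\ell_1$ step costs $\sqrt{n}\|\boldsymbol{r}^{(t)}\|_2\le C_y n y_{\max}$, so the cross term comes out of order $\eta\nnorm[1]{\blambda}C_y\sqrt{n}\,y_{\min}/C_0$, which is a factor $\sqrt{n}$ too large. The entrywise route would need $d_2\gtrsim n^3y_{\max}^2/y_{\min}^2$ and $d_\infty\gtrsim n^2 y_{\max}/y_{\min}$. It is exactly the argument of Lemma~\ref{lem:pos_stay_pos_fixed}, which is why Assumption~\ref{asm:deterministic_data} carries an explicit factor $n$. The $\log n$ union bound does not rescue it either: it leaves an extra $\sqrt{\log n}$ in the $d_2$ requirement and gives only polynomial failure probability. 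With the bound $\eta\|\X_{-i}\x_i\|_2\|\boldsymbol{r}^{(t)}\|_2$ and $\|\X_{-i}\x_i\|_2\le C(\sqrt{n}\nnorm[2]{\blambda}+n\nnorm[\infty]{\blambda})$, the cross term is at most $2CC_y\eta\nnorm[1]{\blambda}y_{\min}/C_0$. This is below your diagonal lower bound $a\,y_{\min}\ge\eta\nnorm[1]{\blambda}y_{\min}/2$ once $C_0\ge 4CC_y$, using exactly the exponents of Assumption~\ref{asm:high_dim_data}.

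The corrected argument is a close variant of the paper's. The paper splits $\XX=\nnorm[1]{\blambda}\I+(\XX-\nnorm[1]{\blambda}\I)$ rather than into diagonal and off-diagonal parts. Its main term $\beta_{k,i}^{(t)}-\eta\nnorm[1]{\blambda}(s_kh_{\bTheta^{(t)}}(\x_i)-s_ky_i)\ge\eta\nnorm[1]{\blambda}|y_i|$ uses only $\eta\nnorm[1]{\blambda}\le1$ and the same two hypotheses you use. The entire remainder, including the diagonal fluctuation $\|\x_i\|_2^2-\nnorm[1]{\blambda}$, is bounded by $\eta\|\XX-\nnorm[1]{\blambda}\I\|_2\|\boldsymbol{r}^{(t)}\|_2$ via Corollary~\ref{cor:concentration_op_norm}. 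Since $\|\X_{-i}\x_i\|_2$ is at most the $\ell_2$ norm of row $i$ of $\XX-\nnorm[1]{\blambda}\I$, hence at most its operator norm, you could cite that corollary instead of the conditional subgaussian argument. Your split handles the diagonal exactly and isolates the fact that only the off-diagonal mass of row $i$ matters. The price is two separate concentration statements: norm concentration for $\|\x_i\|_2^2$, and the row bound with a union over $i$. The paper obtains both from a single operator-norm bound.
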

This lemma is proved in Appendix~\ref{app:proof_primal}.
The main idea behind Lemma~\ref{lem:pos_stay_pos} is that as long as the primal variable $\beta_{k,i}^{(t)}$ is positive and the empirical risk remains uniformly bounded, the gradient update of $\beta_{k,i}^{(t)}$ is dominated by its self-interaction term for high-dimensional data. The reason, at a high level, is that cross-sample interactions can be bounded in high dimensions (due to the concentration of the random Gram matrix $\X \X^\top$ around $\norm{\blambda}_1 \I$). As a result, the magnitude of the update is strictly smaller than $\beta_{k,i}^{(t)}$, ensuring that $\beta_{k,i}^{(t+1)}$ remains positive. 

Lemma~\ref{lem:neg_stay_neg} concerns the behavior of inactive examples. It shows that once a dual variable $\alpha_{k,j}$ associated with the 
$k$-th neuron and the $j$-th example becomes sufficiently negative, the corresponding primal variable $\beta_{k,j}$ remains inactive. Consequently, the dual variable is no longer updated and stays frozen throughout training. This mechanism effectively removes certain examples from the optimization dynamics.

\begin{lemma}\label{lem:neg_stay_neg}
    Under Assumptions~\ref{asm:label_bounds} and~\ref{asm:high_dim_data}, consider the $k$-th ReLU neuron in $h_{\bTheta}$. For any $t\geq 0$ and any index $j\in[n]$, if $\alpha_{k,j}^{(t)} \leq -\frac{y_{\min}}{C_{\alpha}\nnorm[1]{\blambda}}$ and $\nnorm[2]{\balpha_k^{(t)}} \leq \frac{C_{\alpha}\sqrt{n}y_{\max}}{\nnorm[1]{\blambda}}$, then $\beta_{k,j}^{(t)} \leq 0$ and $\alpha_{k,j}^{(t+1)} = \alpha_{k,j}^{(t)}$ with probability at least $1 - 2 \exp(-cn)$.
\end{lemma}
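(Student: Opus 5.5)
The first claim is deterministic once we control the Gram matrix $\XX$. I would write $\beta_{k,j}^{(t)} = (\XX\balpha_k^{(t)})_j$ and split it into a diagonal part and a cross-sample part:
\begin{align*}
\beta_{k,j}^{(t)} = \nnorm[2]{\x_j}^2\,\alpha_{k,j}^{(t)} + \sum_{i\neq j}\x_j^\top\x_i\,\alpha_{k,i}^{(t)}.
\end{align*}
The diagonal term is strictly negative by hypothesis. The proof therefore reduces to showing that the cross-sample term cannot overcome it.

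I would invoke the high-probability concentration of the Gram matrix that underlies Lemma~\ref{lem:pos_stay_pos}, namely $\nnorm[2]{\XX - \nnorm[1]{\blambda}\I}\le \frac{1}{C_g}\cdot\frac{\nnorm[1]{\blambda}}{?}$-type bounds. Concretely, this is a Hanson--Wright plus union/net argument giving, with probability at least $1-2\exp(-cn)$,
\begin{align*}
\nnorm[2]{\x_j}^2 \ge \tfrac12\nnorm[1]{\blambda}, \qquad \Bigl\|\bigl(\x_j^\top\x_i\bigr)_{i\neq j}\Bigr\|_2 \le C\bigl(\sqrt{n}\,\nnorm[2]{\blambda}+n\nnorm[\infty]{\blambda}\bigr).
\end{align*}
This follows from the off-diagonal operator norm bound $\nnorm[2]{\XX-\mathrm{diag}(\XX)}\lesssim \sqrt{n}\nnorm[2]{\blambda}+n\nnorm[\infty]{\blambda}$. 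Applying Cauchy--Schwarz to the cross term together with the assumed bound on $\nnorm[2]{\balpha_k^{(t)}}$ gives
\begin{align*}
\Bigl|\sum_{i\neq j}\x_j^\top\x_i\alpha_{k,i}^{(t)}\Bigr| \le C\bigl(\sqrt{n}\nnorm[2]{\blambda}+n\nnorm[\infty]{\blambda}\bigr)\frac{C_\alpha\sqrt{n}\,y_{\max}}{\nnorm[1]{\blambda}}
= CC_\alpha y_{\max}\Bigl(\frac{n}{\sqrt{d_2}}+\frac{n^{1.5}}{d_\infty}\Bigr).
\end{align*}
Assumption~\ref{asm:high_dim_data} then bounds this by $\frac{2CC_\alpha y_{\min}}{C_0}$. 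For comparison, the diagonal term is at most $-\frac{y_{\min}}{2C_\alpha}$. Since $C_0\gtrsim C_\alpha^2$, with a large enough implied constant the cross term is smaller than $\frac{y_{\min}}{2C_\alpha}$, and hence $\beta_{k,j}^{(t)}<0\le 0$.

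The second claim follows immediately from the dual update \eqref{eq:dual_update}. Because $\beta_{k,j}^{(t)}\le 0$, we have $D(\bbeta_k^{(t)})_{jj}=0$, so the $j$-th coordinate of the update vanishes and $\alpha_{k,j}^{(t+1)}=\alpha_{k,j}^{(t)}$.

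The main obstacle is the concentration step. The off-diagonal Gram bound must hold simultaneously for all $j$, with failure probability $2\exp(-cn)$, and with constants compatible with the paper's hierarchy $C_0\gtrsim C_\alpha^2$. I would prove it first by an $\epsilon$-net over the unit sphere in $\R^n$ combined with Hanson--Wright for the quadratic forms $\z^\top\bLambda\z'$, and then absorb constants into $C_0$.
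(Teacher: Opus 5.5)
Your proof is correct and takes essentially the same route as the paper. You write $\beta_{k,j}^{(t)}=\e_j^\top\XX\balpha_k^{(t)}$, isolate a self term proportional to $\alpha_{k,j}^{(t)}$, and bound the cross term by an operator norm times $\nnorm[2]{\balpha_k^{(t)}}$ using the Gram concentration of \citet{hsu2021proliferation} with Assumption~\ref{asm:high_dim_data}; the dual update then freezes the coordinate. The only difference is that you split off the exact diagonal $\nnorm[2]{\x_j}^2$, which needs the extra bound $\nnorm[2]{\x_j}^2\ge\frac12\nnorm[1]{\blambda}$ and costs a constant factor in $C_0\gtrsim C_\alpha^2$. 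The paper instead splits off $\nnorm[1]{\blambda}\I$, so a single application of Corollary~\ref{cor:concentration_op_norm} controls both pieces, and you could cite that corollary rather than re-derive it via Hanson--Wright and a net.
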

The proof of Lemma~\ref{lem:neg_stay_neg} (see Appendix~\ref{app:proof_dual}) relies on the primal-dual relationship $\bbeta_k=\XX\balpha_k$ from Equation~\eqref{eq:primal_dual_def}, together with concentration results for the Gram matrix. Specifically, if a dual variable is sufficiently negative, then the corresponding primal variable $\beta_{k,j}^{(t)}$ is strictly negative. According to the dual update rule in Equation~\eqref{eq:dual_update}, a negative $\beta_{k,j}^{(t)}$ implies that the ReLU is inactive and the dual coordinate receives no further updates. As a result, $\alpha_{k,j}^{(t+1)} = \alpha_{k,j}^{(t)}$, and sufficiently negative dual variables remain frozen throughout training. Figure~\ref{fig:gd_transition} depicts the transition of primal-dual updates in Lemma~\ref{lem:pos_stay_pos} and Lemma~\ref{lem:neg_stay_neg}. We also provide deterministic-feature counterparts of Lemmas~\ref{lem:pos_stay_pos} and~\ref{lem:neg_stay_neg}, stated as Lemmas~\ref{lem:pos_stay_pos_fixed} and~\ref{lem:neg_stay_neg_fixed}, respectively. Their proofs are given in Appendix~\ref{app:primal_fixed}.

In the following paragraphs, we outline the proof sketch for single ReLU models. The proof ideas for the $2$-ReLU case follow analogously.

\paragraph{Proof Sketch of Theorem~\ref{thm:single_relu_gd_high_dim_implicit_bias}:}
The proof combines the insights from Lemma~\ref{lem:pos_stay_pos} and Lemma~\ref{lem:neg_stay_neg} to obtain a complete picture of how activation patterns evolve during training. Together, these lemmas allow us to track which examples remain active or inactive throughout gradient descent. Our goal is to reach—and maintain—a configuration in which positive-labeled examples remain active while negative-labeled examples remain inactive, as formalized by the sufficient conditions in Lemma~\ref{lem:primal_label_same_sign} in Appendix~\ref{app:proof_single_relu_gd_high_dim_implicit_bias}.
To achieve this, we leverage two key properties of the initialization. First, the positive initialization guarantees that every example initially has at least one active neuron capable of fitting it. Second, the small initialization ensures that, after the first gradient step, positive-labeled examples remain in the active regime while negative-labeled examples acquire sufficiently negative dual variables and become inactive. Together, these properties place positive and negative examples into their respective regimes after a single update.
We then apply Lemma~\ref{lem:primal_label_same_sign} to show that this configuration is stable under subsequent iterations. As a result, the activation pattern becomes fixed after the first step, and the dynamics enter the final phase described in Lemma~\ref{lem:final_phase}. 

\paragraph{Proof Sketch of Theorem~\ref{thm:single_relu_approx_to_w_star}:}
To compare the gradient descent limit $\w^{(\infty)}$ with the minimum-$\ell_2$-norm solution $\w^{\star}$, we relate their distance in parameter space to their distance in prediction space. Since both solutions interpolate all positive-labeled examples exactly, any discrepancy between them must arise from their predictions on negative-labeled examples.
We bound this discrepancy using the KKT conditions characterizing $\w^{\star}$, as established in Lemma~\ref{lem:relu_mni_subset}. These conditions precisely describe how $\w^{\star}$ treats negative-labeled examples and allow us to control the prediction distance in terms of the distance between the primal and dual variables. In particular, the KKT conditions imply that this gap is nonzero, showing that $\w^{(\infty)}\neq \w^{\star}$. Translating our bounds back to parameter space yields matching upper and lower bounds on $\|\w^{(\infty)} - \w^{\star}\|_2$.

\section{Discussion}

We showed that the implicit bias of single and $2$-ReLU models, under appropriate initialization, is remarkably close to the minimum-norm solution if the features are sufficiently high-dimensional (and under appropriate initialization to ensure global convergence).
Natural open questions include: 1) characterizing the dynamics for $m > 2$ neurons, and 2) studying the effect of moderate dimension where $d > n$ but not $d \gg n$.
We provide partial extensions of our results to $m > 2$ neurons in Appendices~\ref{sec:multiple_relu_gd_main} and~\ref{app:multiple_relu_gd_m} that require a specific ``disjoint" initialization, i.e.,~neurons are partitioned into sets such that they are active on disjoint examples.
Handling more realistic initializations is an important direction for future work.
We also simulate the effect of moderate-dimensional data on the dynamics in Appendix~\ref{app:simulation} and observe that the primal and dual variables intricately influence each other.
We hope to characterize these more complex dynamics in future work, for which we will likely require different mathematical tools.

\section*{Acknowledgements}
KL gratefully acknowledges the support of the ARC-ACO Fellowship provided by Georgia Tech. GW gratefully acknowledges the Apple Scholars in AI/ML PhD fellowship by Apple and ARC-ACO
fellowship provided by Georgia Tech. 
MT gratefully acknowledges the partial support of NSF Grant DMS-2513699, DOE Grants NA0004261, SC0026274, and Richard Duke Fellowship.
VM gratefully acknowledges the support of the NSF (through award CCF-2239151 and award IIS-2212182), an Adobe Data Science Research Award, and an Amazon Research Award.
\newpage

\printbibliography
\newpage

\appendix
\addcontentsline{toc}{section}{Appendix} 
\part{Appendix} 
\parttoc 


    

    
    


\newpage
\section{Proofs of Key Lemmas Tracking Primal-Dual Gradient Dynamics}\label{app:primal_dual_key_lemma}
In this section, we present the proofs of the key lemmas used to track the gradient dynamics of the primal and dual variables. The central factor governing these dynamics is the sign pattern of the primal variables, which determines whether individual examples are active or inactive under the ReLU nonlinearity and, consequently, whether the corresponding dual variables are updated. 

Before presenting the proofs, we first recall two key technical lemmas: 1) a concentration result on the eigenvalues of random Gram matrices in high dimensions from~\citet{bartlett2020benign}; 2) a concentration bound on the operator norm of random Gram matrices from~\citet{hsu2021proliferation}.
Both these lemmas play a crucial role throughout the analysis.

\subsection{Concentration of Random Gram Matrices in High Dimensions}
Our analysis relies heavily on properties of the Gram matrix on high-dimensional data. These concentration results allow us to control cross-sample interactions and isolate the dominant self-interaction terms that drive the gradient updates. As a result, we can rigorously characterize how positivity and negativity patterns in the primal and dual variables evolve over time.

In Lemma~\ref{lem:concentration_eigenvalues}, we characterize the typical behavior of the eigenvalues of a weighted sum of outer products of independent subgaussian vectors. Recall from Section~\ref{sec:setup} that the feature vector $\x\in\R^d$ admits the representation $\x = \V\bLambda^{\frac{1}{2}}\z$, where $\z\in \R^d$ has independent, mean-zero, $\sigma_z^2$-subgaussian components, and we take $\sigma_z=1$. Under this model, the empirical Gram matrix can be written as $\XX = \sum_{j=1}^d \lambda_j \bv_j\bv_j^\top$ where each $\bv_j\in\R^n$ is an independent random vector with independent, mean-zero, subgaussian entries. Concretely, Lemma~\ref{lem:concentration_eigenvalues} provides high-probability bounds on the extreme eigenvalues of $\XX$.
\begin{lemma}[{\citealp[Lemma~9]{bartlett2020benign},~\citealp[Lemma~12]{wang2022binary}}]\label{lem:concentration_eigenvalues}
    There exists a constant $c$ such that with probability at least $1-2e^{-n/c}$, we have
    \begin{align*}
        \frac{1}{c}\sum_{j=1}^d\lambda_j -c\lambda_1n \leq \mu_n(\XX) \leq \mu_1(\XX) \leq c\pts{ \sum_{j=1}^d\lambda_j + \lambda_1 n}.
    \end{align*}
    Moreover, if the effective dimension satisfies $d_{\infty} = \frac{\sum_{j=1}^d \lambda_j}{\lambda_1} \geq bn$ for some constant $b \geq 1$, then there exists a constant $C_g\geq 1$ such that
    \begin{align*}
        \frac{1}{C_g}\sum_{j=1}^d\lambda_j \leq \mu_n(\XX) \leq \mu_1(\XX) \leq C_g \sum_{j=1}^d\lambda_j.
    \end{align*}
    with probability at least $1-2e^{-n/C_g}$.
\end{lemma}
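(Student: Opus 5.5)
The plan is to view $\XX=\sum_{j=1}^d\lambda_j\bv_j\bv_j^\top$, with $\bv_j\in\R^n$ independent and having independent mean-zero $1$-subgaussian entries. We control its extreme eigenvalues through a net argument over the unit sphere $S^{n-1}$. For a fixed unit vector $\bu\in\R^n$ we have $\bu^\top\XX\bu=\sum_{j}\lambda_j(\bv_j^\top\bu)^2$. Each $\bv_j^\top\bu$ is $1$-subgaussian, so this is a weighted sum of independent subexponential variables.

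For the mean we use $\E[\bu^\top\XX\bu]=\sum_j\lambda_j\E(\bv_j^\top\bu)^2$. Each term $\E(\bv_j^\top\bu)^2$ is at most a constant by subgaussianity, and is bounded below by a constant because the entries have unit-order variance (the normalization implicit in $\bSigma=\V\bLambda\V^\top$). So the mean is of order $\nnorm[1]{\blambda}$. Bernstein's inequality then gives
\[
\Bigl|\bu^\top\XX\bu-\E\,\bu^\top\XX\bu\Bigr|\le c\Bigl(\sqrt{s\textstyle\sum_j\lambda_j^2}+s\lambda_1\Bigr)
\]
with probability at least $1-2e^{-s}$.

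Next we take a $1/4$-net $\mathcal N$ of $S^{n-1}$ with $|\mathcal N|\le 9^n$, set $s=Cn$, and union bound. The standard approximation $\nnorm[2]{\A}\le 2\max_{\bu\in\mathcal N}|\bu^\top\A\bu|$ for symmetric $\A$, applied to $\A=\XX-\E\XX$, gives deviations of order $\sqrt{n\sum_j\lambda_j^2}+n\lambda_1$ uniformly. Since $\sum_j\lambda_j^2\le\lambda_1\nnorm[1]{\blambda}$, AM–GM yields $\sqrt{n\lambda_1\nnorm[1]{\blambda}}\le\epsilon\nnorm[1]{\blambda}+\epsilon^{-1}n\lambda_1$. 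Absorbing the $\epsilon\nnorm[1]{\blambda}$ term into the mean gives the first display,
\[
\frac1c\nnorm[1]{\blambda}-c\lambda_1 n\le\mu_n(\XX)\le\mu_1(\XX)\le c\bigl(\nnorm[1]{\blambda}+\lambda_1 n\bigr),
\]
with probability $1-2e^{-n/c}$.

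For the second statement, $d_\infty\ge bn$ means $\lambda_1 n\le\nnorm[1]{\blambda}/b$. The upper bound becomes $c(1+1/b)\nnorm[1]{\blambda}$. The lower bound becomes $(1/c-c/b)\nnorm[1]{\blambda}$, which is at least a constant times $\nnorm[1]{\blambda}$ once $b$ is large enough, say $b\ge 2c^2$. Taking $C_g$ as the larger of the resulting constants gives the claim.

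The main obstacle is the lower bound. The $b\ge1$ allowed in the statement only works if the lemma is read as \emph{$b$ a sufficiently large constant}, which Assumption~\ref{asm:high_dim_data} guarantees anyway since $d_\infty\ge C_0n^{1.5}$. Sharpening the lower-tail constant, for instance by using the one-sided tail of nonnegative variables, would relax this requirement.
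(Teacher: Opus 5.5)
Your proof is correct and is essentially the standard argument behind this result. The paper gives no proof of its own: it imports the first display from \citealp[Lemma~9]{bartlett2020benign}, whose proof is exactly your fixed-direction Bernstein bound plus a $1/4$-net of size $9^n$ (equivalently, Lemma~\ref{lem:concentration_op_norm}), followed by $\sum_j\lambda_j^2\le\lambda_1\nnorm[1]{\blambda}$ and AM--GM, with the $C_g$ version obtained as you derive it. Your caveat that $b$ must be a sufficiently large constant is right, and it is harmless here because Assumption~\ref{asm:high_dim_data} gives $d_\infty\ge C_0 n$. However, sharpening the tail bound can only lower the threshold, never bring it to $b=1$: in the isotropic case with $d=n$, $\mu_n(\XX)$ is at most the square of a single unit-variance subgaussian projection, so it is $O(1)$ with high probability rather than of order $\nnorm[1]{\blambda}=n$.
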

Next, Lemma~\ref{lem:concentration_op_norm} provides a high-probability bound on the operator norm deviation between the Gram matrix $\X \X^\top$ and $\nnorm[1]{\blambda} \I$, which is fruitful for high-dimensional data, and Corollary~\ref{cor:concentration_op_norm} shows that the typical value of this deviation can be expressed in terms of $n$ and effective dimensions $d_2,d_\infty$.

\begin{lemma}[{\citealp[Lemma 8]{hsu2021proliferation}}]\label{lem:concentration_op_norm}
    There exists a universal constant $c > 0$, for any $\tau > 0$,
    \begin{align*}
        \Pr\pts{\nnorm[2]{\XX - \nnorm[1]{\blambda}\I} \geq \tau} \leq 2 \cdot 9^n \cdot \expf{-c\cdot \min\left\{\frac{\tau^2}{\nnorm[2]{\blambda}^2}, \frac{\tau}{\nnorm[\infty]{\blambda}}\right\}},
    \end{align*}
    where $\nnorm[1]{\blambda}\coloneqq\sum_{j=1}^d\lambda_j$, $\nnorm[2]{\blambda}^2\coloneqq\sum_{j=1}^d\lambda_j^2$, and $\nnorm[\infty]{\blambda}\coloneqq\max_{j\in[d]}\lambda_j$.
\end{lemma}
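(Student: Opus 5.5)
The plan is a standard $\varepsilon$-net argument combined with Bernstein's inequality for sums of independent sub-exponential variables. Write $\A \coloneqq \XX - \nnorm[1]{\blambda}\I$, which is symmetric.

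\textbf{Fixed direction.} Fix a unit vector $\bu\in\R^n$. As recalled before Lemma~\ref{lem:concentration_eigenvalues}, we have $\XX=\sum_{j=1}^d\lambda_j\bv_j\bv_j^\top$, where $\bv_j\in\R^n$ collects the $j$-th coordinate of $\z_1,\dots,\z_n$. Hence
\[
\bu^\top\XX\bu=\sum_{j=1}^d\lambda_j g_j^2, \qquad g_j\coloneqq\bv_j^\top\bu .
\]
The vectors $\bv_j$ are independent across $j$ and have independent, mean-zero, $1$-subgaussian entries. Therefore the $g_j$ are independent across $j$, and each is $1$-subgaussian because $\nnorm[2]{\bu}=1$. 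Since $\bSigma=\E[\x\x^\top]=\V\bLambda\V^\top$, the $z$-coordinates have unit variance, so $\E g_j^2=\nnorm[2]{\bu}^2=1$. This gives $\bu^\top\A\bu=\sum_j\lambda_j(g_j^2-1)$, a weighted sum of independent, centered, sub-exponential variables with $\psi_1$-norm $O(1)$.

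Bernstein's inequality then yields, for every $\tau>0$,
\[
\Pr\pts{|\bu^\top\A\bu|\geq \tau}\leq 2\expf{-c\min\left\{\frac{\tau^2}{\nnorm[2]{\blambda}^2},\frac{\tau}{\nnorm[\infty]{\blambda}}\right\}} .
\]

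\textbf{Net and union bound.} Let $\mathcal{N}$ be a $1/4$-net of the unit sphere $S^{n-1}$ with $|\mathcal{N}|\leq(1+2/(1/4))^n=9^n$. For a symmetric $\A$, take $\bu$ attaining $\nnorm[2]{\A}=|\bu^\top\A\bu|$ and pick $\bu'\in\mathcal{N}$ with $\nnorm[2]{\bu-\bu'}\le 1/4$. Then
\[
|\bu^\top\A\bu-\bu'^\top\A\bu'|\leq 2\cdot\tfrac14\nnorm[2]{\A},
\qquad\text{so}\qquad
\nnorm[2]{\A}\leq 2\max_{\bu'\in\mathcal{N}}|\bu'^\top\A\bu'| .
\]
Consequently, $\nnorm[2]{\A}\geq\tau$ forces $|\bu'^\top\A\bu'|\geq\tau/2$ for some $\bu'\in\mathcal{N}$. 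A union bound over $\mathcal{N}$, with the fixed-direction bound applied at level $\tau/2$, gives the claim with probability bound $2\cdot 9^n\exp(\cdot)$. The factor $1/2$ (or $1/4$ in the quadratic term) is absorbed into the universal constant $c$.

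\textbf{Main obstacle.} The only delicate point is the fixed-direction step. One must verify that the $g_j$ are genuinely independent across $j$, which holds because the coordinates of $\z$ are independent and the samples are i.i.d. One must also check that $g_j^2-1$ is centered with a uniformly bounded sub-exponential norm, using $\nnorm[\psi_1]{g^2}\lesssim\nnorm[\psi_2]{g}^2$ and centering. Everything after that is the routine net-and-union-bound calculation.
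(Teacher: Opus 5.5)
Your proof is correct, and it is essentially the standard argument behind the cited Lemma~8 of Hsu et al., which the paper imports without reproving. That argument uses a $1/4$-net of $S^{n-1}$ of size at most $9^n$, the reduction $\|\A\|_2 \le 2\max_{\bu'\in\mathcal{N}}|\bu'^\top\A\bu'|$, Bernstein's inequality for the weighted sum $\sum_j \lambda_j(g_j^2-1)$, and a union bound, with the factors of $2$ and $4$ absorbed into $c$.
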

\begin{corollary}\label{cor:concentration_op_norm} 
    With the choice of $\tau = C \cdot \max (\nnorm[2]{\blambda}\sqrt{n}, \nnorm[\infty]{\blambda} n)$ and the constant $C \cdot c > \ln 9$, we have
    \begin{align*}
        \nnorm[2]{\frac{1}{\nnorm[1]{\blambda}}\XX - \I} \leq C \cdot \max \pts{\sqrt{\frac{n}{d_2}}, \frac{n}{d_{\infty}}},
    \end{align*}
    with probability at least $1 - 2 \exp(-n(Cc - \ln 9))$, where we have defined $d_2 \coloneqq\frac{\nnorm[1]{\blambda}^2}{\nnorm[2]{\blambda}^2}, d_{\infty} \coloneqq \frac{\nnorm[1]{\blambda}}{\nnorm[\infty]{\blambda}}$. Similarly, we have
    \begin{align*}
        \nnorm[2]{\nnorm[1]{\blambda}\XXi - \I} \leq C_gC \cdot \max \pts{\sqrt{\frac{n}{d_2}}, \frac{n}{d_{\infty}}},
    \end{align*}
    with probability at least $1 - 2 \exp(-n(Cc - \ln 9))$.
\end{corollary}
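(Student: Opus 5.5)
The plan is to deduce the first bound by plugging the choice of $\tau$ into Lemma~\ref{lem:concentration_op_norm}, and then to transfer it to the inverse through a standard perturbation argument.

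\textbf{First bound.} Set $\tau = C\max(\nnorm[2]{\blambda}\sqrt{n}, \nnorm[\infty]{\blambda} n)$. Then
\[
\frac{\tau^2}{\nnorm[2]{\blambda}^2} \geq C^2 n \qquad\text{and}\qquad \frac{\tau}{\nnorm[\infty]{\blambda}} \geq C n .
\]
Taking $C\geq 1$, which we may do without loss of generality, the minimum in the exponent is at least $Cn$. The failure probability is therefore at most
\[
2\cdot 9^n e^{-cCn} = 2\exp\bigl(-n(Cc-\ln 9)\bigr),
\]
which is a genuine high-probability statement because $Cc>\ln 9$. On the complementary event, dividing by $\nnorm[1]{\blambda}$ gives
\[
\frac{\tau}{\nnorm[1]{\blambda}} = C\max\Bigl(\frac{\nnorm[2]{\blambda}}{\nnorm[1]{\blambda}}\sqrt n,\ \frac{\nnorm[\infty]{\blambda}}{\nnorm[1]{\blambda}}n\Bigr) = C\max\Bigl(\sqrt{\tfrac{n}{d_2}},\ \tfrac{n}{d_\infty}\Bigr),
\]
which is exactly the first claim.

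\textbf{Second bound.} Write $\A = \frac{1}{\nnorm[1]{\blambda}}\XX$. Then $\nnorm[1]{\blambda}\XXi = \A^{-1}$, and
\[
\A^{-1} - \I = \A^{-1}(\I - \A), \qquad\text{so}\qquad \nnorm[2]{\A^{-1}-\I} \leq \nnorm[2]{\A^{-1}}\,\nnorm[2]{\A-\I}.
\]
It remains to bound $\nnorm[2]{\A^{-1}} = \nnorm[1]{\blambda}/\mu_n(\XX)$ by $C_g$.

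\textbf{Main obstacle.} The only delicate point is obtaining the bound $\nnorm[2]{\A^{-1}}\le C_g$ on the same event, so that the stated probability is not degraded. There are two ways to do this.
\begin{itemize}
\item \emph{Avoid Lemma~\ref{lem:concentration_eigenvalues} altogether.} On the event from the first part, $\mu_n(\A) \geq 1 - \nnorm[2]{\A-\I}$. Under Assumption~\ref{asm:high_dim_data} with $C_0$ large, $C\max(\sqrt{n/d_2},n/d_\infty) \le 1/2$, so $\mu_n(\A)\geq 1/2$ and hence $\nnorm[2]{\A^{-1}} \leq 2 \leq C_g$ after enlarging $C_g\geq 2$.
\item \emph{Invoke Lemma~\ref{lem:concentration_eigenvalues} directly.} Its hypothesis $d_\infty\ge bn$ holds under Assumption~\ref{asm:high_dim_data}, and it gives $\mu_n(\XX)\ge \nnorm[1]{\blambda}/C_g$. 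Intersecting the two events changes the probability only by constants that can be absorbed into $c$.
\end{itemize}
I would use the first route, since it keeps the probability exactly as stated. Either route yields
\[
\nnorm[2]{\nnorm[1]{\blambda}\XXi - \I} \leq C_gC\max\Bigl(\sqrt{\tfrac{n}{d_2}},\ \tfrac{n}{d_\infty}\Bigr).
\]
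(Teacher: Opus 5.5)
Your proposal is correct. The paper states this corollary without a written proof, and your first part is exactly the intended substitution of $\tau$ into Lemma~\ref{lem:concentration_op_norm}. Your caveat that $\min\{C^2n,Cn\}=Cn$ needs $C\ge 1$ is a point the statement glosses over, and it is harmless: the lemma's constant $c$ may be shrunk to at most $\ln 9$, after which $Cc>\ln 9$ forces $C>1$.

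For the inverse, the factor $C_g$ in the statement indicates the paper has your second bullet in mind. That route bounds $\nnorm[2]{\XXi}\le C_g/\nnorm[1]{\blambda}$ via Lemma~\ref{lem:concentration_eigenvalues}, using that lemma's hypothesis $d_\infty\ge bn$. Strictly speaking it also costs an extra failure probability $2e^{-n/C_g}$, which the stated bound silently drops.

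Your preferred route instead obtains $\mu_n(\A)\ge 1/2$ by Weyl's inequality on the very same event, using $C\max(\sqrt{n/d_2},n/d_\infty)\le C/C_0\le 1/2$ under Assumption~\ref{asm:high_dim_data}. This is self-contained, gives the sharper factor $2$ in place of $C_g$, and delivers the probability exactly as stated. Its only price is an explicit high-dimensional hypothesis, which is no real cost. Because $d_2\ge d_\infty$, it amounts to $d_\infty\ge\max(4C^2,2C)\,n$, the same kind of condition Lemma~\ref{lem:concentration_eigenvalues} needs. Some such condition is unavoidable anyway, since the second inequality in particular bounds $\nnorm[1]{\blambda}/\mu_n(\XX)$. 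Finally, enlarging $C_g$ to at least $2$ is fine, because Lemma~\ref{lem:concentration_eigenvalues} stays valid for any larger $C_g$.
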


    

\subsection{Proof of Lemma~\ref{lem:pos_stay_pos} (Primal Variable Gradient Dynamics in High Dimensions)}\label{app:proof_primal}
In this proof, we show that under the assumptions of the lemma, if the sign of any ReLU neuron agrees with the label of an example, then the corresponding primal variable remains positive after one gradient descent step.
\begin{proof}(Lemma~\ref{lem:pos_stay_pos})
     According to the primal gradient descent update in Equation~\eqref{eq:primal_update} for the $k$-th neuron, we have
    \begin{align*}
        \bbeta_k^{(t+1)} = \bbeta_k^{(t)} - \eta s_k\X \X^\top \D(\bbeta_k^{(t)})(h_{\bTheta^{(t)}}(\X) - \y).
    \end{align*}
    We aim to separate the gradient contribution arising from the diagonal and off-diagonal components of the Gram matrix and to show that the updated primal coordinate remains positive, i.e., $\beta_{k,i}^{(t+1)} > 0$. Fix any $t\geq 0$ and any index $i$ such that $s_k\cdot y_i > 0$ and $\beta_{k,i}^{(t)} > 0$. Then, the update of the $i$-th coordinate can be written as
    \begin{align}
        \beta_{k,i}^{(t+1)} &= \beta_{k,i}^{(t)}-\eta s_k\e_i^\top\XX \D(\bbeta_k^{(t)})(h_{\bTheta^{(t)}}(\X) - \y)\nonumber\\
        &= \beta_{k,i}^{(t)}-\eta s_k\e_i^\top\mts{\nnorm[1]{\blambda}\I + \pts{\XX - \nnorm[1]{\blambda}\I}} \D(\bbeta_k^{(t)})(h_{\bTheta^{(t)}}(\X) - \y)\nonumber\\
        &= \mts{\beta_{k,i}^{(t)} - \eta \nnorm[1]{\blambda}\pts{s_k h_{\bTheta^{(t)}}(\x_i) - s_k y_i}} -\eta s_k\e_i^\top\pts{\XX - \nnorm[1]{\blambda}\I} \D(\bbeta_k^{(t)})(h_{\bTheta^{(t)}}(\X) - \y),\label{eq:beta_update0}
    \end{align}
    where the last equality uses the assumption $\beta_{k,i}^{(t)}>0$, which implies $D_{ii}=\mathbbm{1}_{\beta_{k,i}^{(t)} > 0}=1$. We now lower bound $\beta_{k,i}^{(t+1)}$. By the step size condition $\eta \leq \frac{1}{C_g\nnorm[1]{\blambda}}$ and the assumption $\beta_{k,i}^{(t)} \geq s_k \cdot h_{\bTheta^{(t)}}(\x_i)$, the first term in Equation~\eqref{eq:beta_update0} satisfies
    \begin{align*}
        \beta_{k,i}^{(t)} - \eta \nnorm[1]{\blambda}\pts{s_k h_{\bTheta^{(t)}}(\x_i) - s_k y_i} \geq \eta \nnorm[1]{\blambda}|y_i|.
    \end{align*}
    Substituting this into Equation~\eqref{eq:beta_update0} yields
    \begin{align}
        \eqref{eq:beta_update0}&\geq \eta \nnorm[1]{\blambda}|y_i| -\eta s_k\e_i^\top\pts{\XX - \nnorm[1]{\blambda}\I} \D(\bbeta_k^{(t)})(h_{\bTheta^{(t)}}(\X) - \y)\nonumber\\
        &\geq \eta \nnorm[1]{\blambda}|y_i| -\eta\nnorm[2]{\XX - \nnorm[1]{\blambda}\I} \nnorm[2]{h_{\bTheta^{(t)}}(\X) - \y},\label{eq:beta_update1}
    \end{align}
    where the last inequality follows from the Cauchy–Schwarz inequality and the sub-multiplicativity of the operator norm. Next, we upper bound the second term in Equation~\eqref{eq:beta_update1} using Corollary~\ref{cor:concentration_op_norm}. With probability at least $1 - 2 \exp(-n(Cc - \ln 9))$, we obtain 
    \begin{align*}
        \eqref{eq:beta_update1} &\geq \eta \nnorm[1]{\blambda}\mts{|y_i| - C\cdot \max \pts{\sqrt{\frac{n}{d_2}}, \frac{n}{d_{\infty}}} \nnorm[2]{h_{\bTheta^{(t)}}(\X) - \y}}\\
        &\stackrel{(\mathrm{i})}{\geq} \eta \nnorm[1]{\blambda}\mts{y_{\min}- C\cdot \max \pts{\sqrt{\frac{n}{d_2}}, \frac{n}{d_{\infty}}} \cdot C_y\sqrt{n} y_{\max}}\\
        &\stackrel{(\mathrm{ii})}{\geq} \eta \nnorm[1]{\blambda}\mts{y_{\min}- C\cdot C_y\cdot \frac{y_{\min}}{C_0 y_{\max}}\cdot y_{\max}}\\
        &>0.
    \end{align*}
    Inequality (i) applies the lemma assumption that $\nnorm[2]{h_{\bTheta^{(t)}}(\X) - \y} \leq C_y\nnorm[2]{\y} \leq C_y\sqrt{n}y_{\max}$.
    Inequality (ii) follows from Assumption~\ref{asm:high_dim_data}, which guarantees that $d_2 \geq C_0^2\frac{n^2 y_{\max}^2}{y_{\min}^2}$ and $d_{\infty} \geq C_0\frac{n^{1.5} y_{\max}}{y_{\min}}$ with large enough $C_0 > C \cdot C_y$. This completes the proof of the lemma.
\end{proof}

\subsection{Proof of Lemma~\ref{lem:neg_stay_neg} (Dual Variable Gradient Dynamics in High Dimensions)}\label{app:proof_dual}
In this proof, we show that under the assumptions of the lemma, if the dual variable $\alpha_{k,j}^{(t)}$ for the $k$-th neuron and $j$-th example is sufficiently negative, then it remains unchanged in the next iteration, i.e., $\alpha_{k,j}^{(t+1)} = \alpha_{k,j}^{(t)}$.

\begin{proof}(Lemma~\ref{lem:neg_stay_neg})
    By the definition of primal and dual variables in Equation~\eqref{eq:primal_dual_def}, we have
    \begin{align*}
        \bbeta_k^{(t)} = \XX\balpha_k^{(t)}.
    \end{align*}
    According to the dual gradient update in Equation~\eqref{eq:dual_update}, we have
    \begin{align*}
        \balpha_k^{(t+1)} = \balpha_k^{(t)} - \eta \D(\bbeta_k^{(t)})(h_{\bTheta^{(t)}}(\X) - \y).
    \end{align*}
    This update reveals that each coordinate $\alpha_{k,j}^{(t)}$ evolves independently and is governed by the sign of the corresponding primal variable $\beta_{k,j}^{(t)}$. In particular, if $\beta_{k,j}^{(t)} \leq 0$, then the $j$-th diagonal entry of $\D(\bbeta_k^{(t)})$ vanishes, and consequently $\alpha_{k,j}^{(t+1)} = \alpha_{k,j}^{(t)}$.

    We therefore establish a sufficient condition under which $\beta_{k,j}^{(t)} \leq 0$ in terms of the dual variable $\alpha_{k,j}^{(t)}$. Specifically, we separate the diagonal and off-diagonal components of the Gram matrix as
    \begin{align}
        \beta_{k,j}^{(t)} &= \e_j^\top\XX\balpha_k^{(t)}\nonumber\\
        &= \e_j^\top\mts{\nnorm[1]{\blambda}\I +\pts{\XX - \nnorm[1]{\blambda}\I}}\balpha_k^{(t)}\nonumber\\
        &= \nnorm[1]{\blambda}\alpha_{k,j}^{(t)} + \e_j^\top\pts{\XX - \nnorm[1]{\blambda}\I}\balpha_k^{(t)}\nonumber\\
        &\leq \nnorm[1]{\blambda}\alpha_{k,j}^{(t)} + \nnorm[2]{\XX - \nnorm[1]{\blambda}\I}\nnorm[2]{\balpha_k^{(t)}}\label{eq:beta_upperbound1},
    \end{align}
    where the last inequality follows from the sub-multiplicativity of the operator norm. Next, we upper bound the two terms $\nnorm[2]{\X \X^\top - \nnorm[1]{\blambda} \I}$ and $\nnorm[2]{\balpha_k^{(t)}}$ appearing in Equation~\eqref{eq:beta_upperbound1}. Following the same argument as in the proof of Lemma~\ref{lem:pos_stay_pos}, we apply Corollary~\ref{cor:concentration_op_norm}.
    Consequently, with probability at least $1 - 2 \exp(-n(Cc - \ln 9))$, we obtain
    \begin{align}
        \eqref{eq:beta_upperbound1} 
        &\leq \nnorm[1]{\blambda}\mts{\alpha_{k,j}^{(t)} + C\cdot \max \pts{\sqrt{\frac{n}{d_2}}, \frac{n}{d_{\infty}}}\nnorm[2]{\balpha_k^{(t)}}}.\label{eq:beta_upperbound2}
    \end{align}
    Finally, substituting the upper bound of $\alpha_{k,j}^{(t)}$ and $\nnorm[2]{\balpha_k^{(t)}}$ in lemma assumptions into Equation~\eqref{eq:beta_upperbound2}, we obtain
    \begin{align*}
        \eqref{eq:beta_upperbound2} &\leq \nnorm[1]{\blambda}\mts{-\frac{y_{\min}}{C_{\alpha}\nnorm[1]{\blambda}} + C\cdot \max \pts{\sqrt{\frac{n}{d_2}}, \frac{n}{d_{\infty}}}\frac{C_{\alpha}\sqrt{n}y_{\max}}{\nnorm[1]{\blambda}}}\\
        &= \nnorm[1]{\blambda}\mts{-\frac{y_{\min}}{C_{\alpha}\nnorm[1]{\blambda}} + C\cdot\frac{y_{\min}}{C_0 y_{\max}}\frac{C_{\alpha}y_{\max}}{\nnorm[1]{\blambda}}}\\
        &\leq 0,
    \end{align*}
    where the last inequality follows from Assumption~\ref{asm:high_dim_data}, which ensures $d_2 \geq C_0^2\frac{n^2 y_{\max}^2}{y_{\min}^2}$ and $d_{\infty} \geq C_0\frac{n^{1.5} y_{\max}}{y_{\min}}$ with large enough $C_0 > C \cdot C_{\alpha}^2$. We have thus shown that if $\alpha_{k,j}^{(t)}$ is sufficiently negative, then $\beta_{k,j}^{(t)} \leq 0$, and consequently $\alpha_{k,j}^{(t+1)} = \alpha_{k,j}^{(t)}$. This completes the proof of the lemma.
\end{proof}

\subsection{Lemma~\ref{lem:pos_stay_pos_fixed} and Lemma~\ref{lem:neg_stay_neg_fixed} (Gradient Dynamics under Deterministic Features)}\label{app:primal_fixed}
In this section, we present the gradient dynamics of the primal and dual variables under deterministic feature assumptions. Lemma~\ref{lem:pos_stay_pos_fixed} serves as the deterministic-feature counterpart of Lemma~\ref{lem:pos_stay_pos}.

\begin{lemma}\label{lem:pos_stay_pos_fixed}
    Under Assumptions~\ref{asm:label_bounds} and~\ref{asm:deterministic_data}, suppose the gradient descent step size satisfies $\eta \leq \frac{1}{\mu_1\pts{\XX}}$. Consider the $k$-th ReLU neuron in $h_{\bTheta}$. For any $t \geq 0$ and any index $i\in[n]$ such that $s_k \cdot y_i > 0$, if $\beta_{k,i}^{(t)} > 0$, $\beta_{k,i}^{(t)} \geq s_k \cdot h_{\bTheta^{(t)}}(\x_i)$, and $\nnorm[2]{h_{\bTheta^{(t)}}(\X) - \y} \leq C_y\nnorm[2]{\y}$, then $\beta_{k,i}^{(t+1)} > 0$.
\end{lemma}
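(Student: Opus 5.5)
The plan is to follow the proof of Lemma~\ref{lem:pos_stay_pos}, but to split the Gram matrix into its exact diagonal and off-diagonal parts rather than comparing it with $\nnorm[1]{\blambda}\I$. Everything is then deterministic, so no probability statement is needed. Write $\XX = \D_0 + \boldsymbol{E}$, where $\D_0 = \mathrm{diag}(\nnorm[2]{\x_1}^2,\ldots,\nnorm[2]{\x_n}^2)$ and $\boldsymbol{E}$ is the off-diagonal part. Fix $i$ with $s_k y_i>0$ and $\beta_{k,i}^{(t)}>0$, so that $D_{ii}=1$. The primal update~\eqref{eq:primal_update} then gives
\begin{align*}
\beta_{k,i}^{(t+1)} = \Bigl[\beta_{k,i}^{(t)} - \eta\nnorm[2]{\x_i}^2\bigl(s_k h_{\bTheta^{(t)}}(\x_i) - s_k y_i\bigr)\Bigr] - \eta s_k \sum_{j\neq i} \x_i^\top\x_j\, \mathbbm{1}_{\beta_{k,j}^{(t)}>0}\bigl(h_{\bTheta^{(t)}}(\x_j) - y_j\bigr).
\end{align*}

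\textbf{Self-interaction term.} Set $a \coloneqq \eta\nnorm[2]{\x_i}^2$. Since $\nnorm[2]{\x_i}^2 = \e_i^\top\XX\e_i \le \mu_1(\XX)$, the step-size condition gives $a\le 1$. The bracket equals $\beta_{k,i}^{(t)} - a\, s_k h_{\bTheta^{(t)}}(\x_i) + a|y_i|$. I split on the sign of $s_k h_{\bTheta^{(t)}}(\x_i)$:
\begin{itemize}
\item If $s_k h_{\bTheta^{(t)}}(\x_i)\le 0$, then $\beta_{k,i}^{(t)} - a\, s_k h_{\bTheta^{(t)}}(\x_i) \ge \beta_{k,i}^{(t)} > 0$.
\item Otherwise, the hypothesis $\beta_{k,i}^{(t)}\ge s_k h_{\bTheta^{(t)}}(\x_i)$ gives $\beta_{k,i}^{(t)} - a\, s_k h_{\bTheta^{(t)}}(\x_i) \ge (1-a)\beta_{k,i}^{(t)}\ge 0$.
\end{itemize}
In both cases the bracket is at least $\eta\nnorm[2]{\x_i}^2 y_{\min}$. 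This is strictly positive, because $\nnorm[2]{\x_i}>0$ by Assumption~\ref{asm:row_rank}.

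\textbf{Cross-interaction term.} Bound its absolute value by $\eta\max_{j\neq l}|\x_j^\top\x_l|\,\nnorm[1]{h_{\bTheta^{(t)}}(\X)-\y}$. Apply Cauchy--Schwarz, $\nnorm[1]{\cdot}\le\sqrt{n}\nnorm[2]{\cdot}$, together with the hypothesis $\nnorm[2]{h_{\bTheta^{(t)}}(\X)-\y}\le C_y\nnorm[2]{\y}\le C_y\sqrt{n}y_{\max}$. This bounds the term by $\eta\, C_y\, n\, y_{\max}\max_{j\neq l}|\x_j^\top\x_l|$. Assumption~\ref{asm:deterministic_data} gives $\max_{j\neq l}|\x_j^\top\x_l| \le \frac{y_{\min}}{C_0 n y_{\max}}\min_l\nnorm[2]{\x_l}^2$, so the cross term is at most $\eta \frac{C_y}{C_0} y_{\min}\nnorm[2]{\x_i}^2$.

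\textbf{Conclusion.} Combining the two bounds gives $\beta_{k,i}^{(t+1)} \ge \eta\nnorm[2]{\x_i}^2 y_{\min}(1 - C_y/C_0) > 0$ once $C_0 > C_y$, which is consistent with the paper's choice $C_0\gtrsim C_\alpha^2 \gtrsim C_y$. The main obstacle is the self-interaction step. Without the concentration to $\nnorm[1]{\blambda}\I$, the diagonal entries $\nnorm[2]{\x_i}^2$ differ across $i$. The argument must therefore use $\eta\nnorm[2]{\x_i}^2\le 1$, which comes from the step size $\eta\le 1/\mu_1(\XX)$. It must also use the scale-invariant form of Assumption~\ref{asm:deterministic_data}, so that the cross term is controlled relative to the same $\nnorm[2]{\x_i}^2$ that appears in the positive lower bound.
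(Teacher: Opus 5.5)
Your proposal is correct and follows essentially the same argument as the paper. Both split $\XX$ into its diagonal and off-diagonal parts. Both use $\eta\nnorm[2]{\x_i}^2\le 1$ together with $\beta_{k,i}^{(t)}\ge s_k h_{\bTheta^{(t)}}(\x_i)$ to lower-bound the self-interaction term by $\eta\nnorm[2]{\x_i}^2|y_i|$. Both bound the cross term by $\eta C_y n y_{\max}\max_{j\neq l}|\x_j^\top\x_l|$ using the $\ell_1$--$\ell_2$ inequality, then invoke Assumption~\ref{asm:deterministic_data} with $C_0>C_y$.

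Your case split on the sign of $s_k h_{\bTheta^{(t)}}(\x_i)$ is harmless but unnecessary: with $a=\eta\nnorm[2]{\x_i}^2$, one has directly $a\, s_k h_{\bTheta^{(t)}}(\x_i)\le a\beta_{k,i}^{(t)}\le\beta_{k,i}^{(t)}$. Your appeal to Assumption~\ref{asm:row_rank} to guarantee $\nnorm[2]{\x_i}>0$, and hence strict positivity, makes explicit a point the paper leaves implicit.
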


\begin{proof}(Lemma~\ref{lem:pos_stay_pos_fixed})
     According to the primal gradient descent update in Equation~\eqref{eq:primal_update} for the $k$-th neuron, we have
    \begin{align*}
        \bbeta_k^{(t+1)} = \bbeta_k^{(t)} - \eta s_k\X \X^\top \D(\bbeta_k^{(t)})(h_{\bTheta^{(t)}}(\X) - \y).
    \end{align*}
    We aim to separate the gradient contribution arising from the diagonal and off-diagonal components of the Gram matrix and to show that the updated primal coordinate remains positive, i.e., $\beta_{k,i}^{(t+1)} > 0$. Fix any $t\geq 0$ and any index $i$ such that $s_k\cdot y_i > 0$ and $\beta_{k,i}^{(t)} > 0$. Then, the update of the $i$-th coordinate can be written as
    \begin{align}
        \beta_{k,i}^{(t+1)} &= \beta_{k,i}^{(t)}-\eta s_k\e_i^\top\XX \D(\bbeta_k^{(t)})(h_{\bTheta^{(t)}}(\X) - \y)\nonumber\\
        &= \mts{\beta_{k,i}^{(t)} - \eta s_k\nnorm[2]{\x_i}^2 \mathbbm{1}_{\beta_{k,i}^{(t)}>0}\pts{h_{\bTheta^{(t)}}(\x_i) - y_i}} - \eta s_k\sum_{j\neq i}\x_i^\top\x_j \mathbbm{1}_{\beta_{k,j}^{(t)}>0}\pts{h_{\bTheta^{(t)}}(\x_j) - y_j},\label{eq:beta_fixed_update0}
    \end{align}
    where we have $\mathbbm{1}_{\beta_{k,i}^{(t)}>0} = 1$ according to the assumption $\beta_{k,i}^{(t)}>0$. We now lower bound $\beta_{k,i}^{(t+1)}$. By the step size condition $\eta \leq \frac{1}{\mu_1\pts{\XX}}\leq \frac{1}{\nnorm[2]{\x_i}^2}$ for all $i\in[n]$ and the assumption $\beta_{k,i}^{(t)} \geq s_k \cdot h_{\bTheta^{(t)}}(\x_i)$, the first term in Equation~\eqref{eq:beta_fixed_update0} satisfies
    \begin{align*}
        \beta_{k,i}^{(t)} - \eta \nnorm[2]{\x_i}^2\pts{s_k h_{\bTheta^{(t)}}(\x_i) - s_k y_i} \geq \eta \nnorm[2]{\x_i}^2|y_i|.
    \end{align*}
    Substituting this into Equation~\eqref{eq:beta_fixed_update0} yields
    \begin{align}
        \eqref{eq:beta_fixed_update0}&\geq \eta \nnorm[2]{\x_i}^2|y_i| - \eta s_k\sum_{j\neq i}\x_i^\top\x_j \mathbbm{1}_{\beta_{k,j}^{(t)}>0}\pts{h_{\bTheta^{(t)}}(\x_j) - y_j}\nonumber\\
        &\geq \eta \nnorm[2]{\x_i}^2|y_i| -\eta \max_{i\neq j}|\x_i^\top\x_j|\sum_{j\neq i}\left|h_{\bTheta^{(t)}}(\x_j) - y_j\right|\nonumber\\
        &\geq \eta \nnorm[2]{\x_i}^2|y_i| -\eta \max_{i\neq j}|\x_i^\top\x_j|\sqrt{n}\nnorm[2]{h_{\bTheta^{(t)}}(\X) - \y},\label{eq:beta_fixed_update1}
    \end{align}
    where we take absolute values in the second inequality, and the last inequality follows from the inequality between $\ell_1$ and $\ell_2$ norms. Next, we upper bound the second term in Equation~\eqref{eq:beta_fixed_update1} by the lemma assumption $\nnorm[2]{h_{\bTheta^{(t)}}(\X) - \y} \leq C_y\nnorm[2]{\y} \leq C_y\sqrt{n}y_{\max}$.  We obtain 
    \begin{align*}
        \eqref{eq:beta_fixed_update1} &\geq \eta \nnorm[2]{\x_i}^2|y_i| -\eta \max_{i\neq j}|\x_i^\top\x_j|C_y n y_{\max}\\
        &\geq \eta \mts{\min_{i\in[n]}\nnorm[2]{\x_i}^2 y_{\min}-  C_y n \max_{i\neq j}|\x_i^\top\x_j| y_{\max}}\\
        &>0.
    \end{align*}
    The second inequality follows from taking the minimum for $i\in[n]$, and the last inequality follows from Assumption~\ref{asm:deterministic_data} with $C_0 > C_y$. This completes the proof of the lemma.
\end{proof}

Next, we present the gradient dynamics of the dual variables under deterministic feature assumptions. Lemma~\ref{lem:neg_stay_neg_fixed} serves as the deterministic-feature counterpart of Lemma~\ref{lem:neg_stay_neg}.

\begin{lemma}\label{lem:neg_stay_neg_fixed}
    Under Assumptions~\ref{asm:label_bounds} and~\ref{asm:deterministic_data}, consider the $k$-th ReLU neuron in $h_{\bTheta}$. For any $t\geq 0$ and any index $j\in[n]$, if $\alpha_{k,j}^{(t)} \leq -\frac{y_{\min}}{C_{\alpha}\nnorm[1]{\blambda}}$ and $\nnorm[2]{\balpha_k^{(t)}} \leq \frac{C_{\alpha}\sqrt{n}y_{\max}}{\nnorm[1]{\blambda}}$, then $\beta_{k,j}^{(t)} \leq 0$ and $\alpha_{k,j}^{(t+1)} = \alpha_{k,j}^{(t)}$.
\end{lemma}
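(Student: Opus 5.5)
The plan mirrors the proof of Lemma~\ref{lem:neg_stay_neg}, but with a different split of the Gram matrix. Instead of separating $\XX$ into $\nnorm[1]{\blambda}\I$ plus a random deviation, I will separate it into its exact diagonal and its off-diagonal entries. Only the near-orthogonality in Assumption~\ref{asm:deterministic_data} is then needed, and no concentration event is involved, so the conclusion holds deterministically.

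\textbf{Step 1 (reduce to a sign condition).} By the dual update~\eqref{eq:dual_update}, coordinate $j$ of $\balpha_k$ changes only through the diagonal entry $\mathbbm{1}_{\beta_{k,j}^{(t)}>0}$. Hence $\beta_{k,j}^{(t)}\le 0$ immediately gives $\alpha_{k,j}^{(t+1)}=\alpha_{k,j}^{(t)}$, and it suffices to prove $\beta_{k,j}^{(t)}\le 0$.

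\textbf{Step 2 (diagonal/off-diagonal split).} Using $\bbeta_k=\XX\balpha_k$ from~\eqref{eq:primal_dual_def}, write
\begin{align*}
\beta_{k,j}^{(t)} = \nnorm[2]{\x_j}^2\alpha_{k,j}^{(t)} + \sum_{i\neq j}\x_j^\top\x_i\,\alpha_{k,i}^{(t)} \le \nnorm[2]{\x_j}^2\alpha_{k,j}^{(t)} + \max_{i\neq l}|\x_i^\top\x_l|\,\sqrt{n}\,\nnorm[2]{\balpha_k^{(t)}}.
\end{align*}
The inequality follows from the triangle inequality and the $\ell_1$--$\ell_2$ bound $\sum_i|\alpha_{k,i}^{(t)}|\le\sqrt{n}\nnorm[2]{\balpha_k^{(t)}}$, exactly as in the proof of Lemma~\ref{lem:pos_stay_pos_fixed}.

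\textbf{Step 3 (plug in the hypotheses).} Since $\alpha_{k,j}^{(t)}<0$, we may replace $\nnorm[2]{\x_j}^2$ by $\min_i\nnorm[2]{\x_i}^2$ in the first term. Substituting the two assumed bounds on $\alpha_{k,j}^{(t)}$ and $\nnorm[2]{\balpha_k^{(t)}}$ gives
\begin{align*}
\beta_{k,j}^{(t)} \le \frac{1}{\nnorm[1]{\blambda}}\Bigl[-\frac{y_{\min}}{C_\alpha}\min_i\nnorm[2]{\x_i}^2 + C_\alpha\, n\, y_{\max}\max_{i\neq l}|\x_i^\top\x_l|\Bigr].
\end{align*}
The normalization $\nnorm[1]{\blambda}$, which has no intrinsic meaning for deterministic features, cancels out of the sign condition. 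This is the point to check carefully, and it works because both hypotheses carry the same $1/\nnorm[1]{\blambda}$ scale. The bracket is then nonpositive if and only if
\begin{align*}
\min_i\nnorm[2]{\x_i}^2 \ge C_\alpha^2\, n\,\frac{y_{\max}}{y_{\min}}\max_{i\neq l}|\x_i^\top\x_l|.
\end{align*}
This is precisely Assumption~\ref{asm:deterministic_data}, given the paper's convention $C_0\gtrsim C_\alpha^2$.

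\textbf{Main obstacle.} The only subtle point is this constant bookkeeping. The assumption must absorb a factor $C_\alpha^2$ rather than $C_\alpha$: one $C_\alpha$ enters through the negativity threshold on $\alpha_{k,j}^{(t)}$ and another through the norm bound on $\balpha_k^{(t)}$. I will state explicitly that $C_0\ge C_\alpha^2$ is used, so that the bracket is nonpositive. Steps~1 and~2 then give the lemma without any probability qualifier.
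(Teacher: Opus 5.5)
Your proposal is correct and follows the paper's proof essentially step for step. It uses the same diagonal/off-diagonal split of $\XX\balpha_k^{(t)}$ and the same $\ell_1$--$\ell_2$ bound on the off-diagonal sum. It likewise replaces $\nnorm[2]{\x_j}^2$ by $\min_i \nnorm[2]{\x_i}^2$ using $\alpha_{k,j}^{(t)}<0$, and needs the same requirement $C_0 \gtrsim C_\alpha^2$ in Assumption~\ref{asm:deterministic_data}. Your observation that the common $1/\nnorm[1]{\blambda}$ scale cancels out of the sign condition is correct, and the paper leaves it implicit.
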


\begin{proof}(Lemma~\ref{lem:neg_stay_neg_fixed})
    By the definition of primal and dual variables in Equation~\eqref{eq:primal_dual_def}, we have
    \begin{align*}
        \bbeta_k^{(t)} = \XX\balpha_k^{(t)}.
    \end{align*}
    According to the dual gradient update in Equation~\eqref{eq:dual_update}, we have
    \begin{align*}
        \balpha_k^{(t+1)} = \balpha_k^{(t)} - \eta \D(\bbeta_k^{(t)})(h_{\bTheta^{(t)}}(\X) - \y).
    \end{align*}
    This update reveals that each coordinate $\alpha_{k,j}^{(t)}$ evolves independently and is governed by the sign of the corresponding primal variable $\beta_{k,j}^{(t)}$. In particular, if $\beta_{k,j}^{(t)} \leq 0$, then the $j$-th diagonal entry of $\D(\bbeta_k^{(t)})$ vanishes, and consequently $\alpha_{k,j}^{(t+1)} = \alpha_{k,j}^{(t)}$.

    We therefore establish a sufficient condition under which $\beta_{k,j}^{(t)} \leq 0$ in terms of the dual variable $\alpha_{k,j}^{(t)}$. Specifically, we separate the diagonal and off-diagonal components of the Gram matrix as
    \begin{align}
        \beta_{k,j}^{(t)} &= \e_j^\top\XX\balpha_k^{(t)}\nonumber\\
        &= \nnorm[2]{\x_j}^2\alpha_{k,j}^{(t)} + \sum_{j\neq i} \x_j^\top\x_i \alpha_{k,i}^{(t)}\nonumber\\
        &\leq \nnorm[2]{\x_j}^2\alpha_{k,j}^{(t)} + \max_{j\neq i}|\x_j^\top\x_i|\sum_{j\neq i}|\alpha_{k,i}^{(t)}|\label{eq:beta_fixed_upperbound1},
    \end{align}
    where the last inequality takes the maximum and absolute values for the second term. 
    Furthermore, by applying the inequality between $\ell_2$ and $\ell_1$ norms and taking the minimum for $j\in[n]$ in the first term, we obtain
    \begin{align}
        \eqref{eq:beta_fixed_upperbound1} 
        &\leq \pts{\min_{j\in[n]}\nnorm[2]{\x_j}^2}\alpha_{k,j}^{(t)} + \max_{j\neq i}|\x_j^\top\x_i|\sqrt{n}\nnorm[2]{\balpha_k^{(t)}}.\label{eq:beta_fixed_upperbound2}
    \end{align}
    Finally, substituting the upper bound of $\alpha_{k,j}^{(t)}$ and $\nnorm[2]{\balpha_k^{(t)}}$ in lemma assumptions into Equation~\eqref{eq:beta_fixed_upperbound2}, we obtain
    \begin{align*}
        \eqref{eq:beta_fixed_upperbound2} &\leq -\pts{\min_{j\in[n]}\nnorm[2]{\x_j}^2}\frac{y_{\min}}{C_{\alpha}\nnorm[1]{\blambda}} + \max_{j\neq i}|\x_j^\top\x_i| \sqrt{n}\frac{C_{\alpha}\sqrt{n}y_{\max}}{\nnorm[1]{\blambda}}\\
        &\leq 0,
    \end{align*}
    where the last inequality follows from Assumption~\ref{asm:deterministic_data} with large enough $C_0 > C \cdot C_{\alpha}^2$. We have thus shown that if $\alpha_{k,j}^{(t)}$ is sufficiently negative, then $\beta_{k,j}^{(t)} \leq 0$, and consequently $\alpha_{k,j}^{(t+1)} = \alpha_{k,j}^{(t)}$. This completes the proof of the lemma.
\end{proof}

\newpage
\section{Proofs for the Single ReLU model (\texorpdfstring{$m=1$}{m=1}) Trained with Gradient Descent}\label{app:single_relu_gd}

In this section, we present the proofs concerning the behavior of the single ReLU model trained with gradient descent.

\subsection{Proofs of Lemmas~\ref{lem:final_phase},~\ref{lem:conv_step_size} and~\ref{lem:relu_mni_subset} (Gradient Descent Convergence and \texorpdfstring{$\w^\star$}{w*})}\label{app:proof_single_relu_convergence_mni_in_general}

We present complete proofs of the gradient descent convergence for single ReLU models in Lemmas~\ref{lem:final_phase} and~\ref{lem:conv_step_size}, as well as a characterization of the minimum-$\ell_2$-norm solution in Lemma~\ref{lem:relu_mni_subset}.

\begin{proof}(Lemma~\ref{lem:final_phase})
    We prove this lemma by showing that after iteration $t_0 \geq 0$, since the activation pattern is fixed, the gradient of the single ReLU model is equivalent to the gradient of a linear model using only a subset of examples. Consider a linear model
    \begin{align*}
        h(\x) =  \w^\top\x,
    \end{align*}
    where $\w \in\R^d$ is the linear model parameter (also called weight). Let $S\subseteq [n]$ denote the active set for the single ReLU model at iteration $t_0$, defined by $S \coloneqq \{ i\in [n] : \x_i^\top\w^{(t_0)} > 0\}$. We write the empirical risk with the linear model using only the examples in $S$ as
    \begin{align*}
        \Risk_S(\w) = \frac{1}{2}\sum_{i\in S} ( \w^\top\x_i - y_i)^2.
    \end{align*}
    The gradient descent update for this linear model is
    \begin{align}
        \w^{(t+1)} &= \w^{(t)} - \eta\nabla\Risk_S(\w^{(t)})\nonumber\\
        &= \w^{(t)} -\eta\sum_{i\in S} ( \w^{(t)\top}\x_i - y_i)\x_i\label{eq:final_phase_linear}.
    \end{align}
    On the other hand, the original gradient descent dynamic for the single ReLU model (Equation~\ref{eq:single_relu_gd_update}) tells us that
    \begin{align*}
        \w^{(t+1)} &= \w^{(t)} - \eta \X^\top \D(\X\w^{(t)})(\X\w^{(t)} - \y).
    \end{align*}
    Under the lemma assumption, $\D(\X\w^{(t_0)})=\D(\X\w^{(t)})$ for all $t \geq t_0$. Thus, we know that $D_{ii} = \mathbbm{1}_{i\in S}$ for all $t \geq t_0$. Therefore, for $t \geq t_0$, we can write the gradient update of the original single ReLU model as
    \begin{align*}
        \w^{(t+1)} &= \w^{(t)} - \eta \X^\top \D(\X\w^{(t)})(\X\w^{(t)} - \y)\\
        &=\w^{(t)} -\eta\sum_{i\in S} ( \w^{(t)\top}\x_i - y_i)\x_i.
    \end{align*}
    This gradient update is equivalent to the gradient update of the linear model in Equation~\eqref{eq:final_phase_linear} for all $t \geq t_0$. As a result, for $t \geq t_0$, the gradient update of the single ReLU model is equivalent to a linear model using only data in $S$. This completes the proof of the lemma.
\end{proof}

\begin{proof}(Lemma~\ref{lem:conv_step_size})
    By Lemma~\ref{lem:final_phase}, the activation pattern is fixed for all $t \geq t_0$, so the gradient descent update reduces to linear regression restricted to the active subset $S$, given by
    \begin{align*}
        \w^{(t+1)} &= \w^{(t)} - \eta \X^\top \D(\X\w^{(t)})(\X\w^{(t)} - \y)\\
        &=\w^{(t)} -\eta\sum_{i\in S} ( \w^{(t)\top}\x_i - y_i)\x_i\\
        &= \w^{(t)} -\eta\X_S^\top (\X_S\w^{(t)} - \y_S).
    \end{align*}
    The final phase empirical risk is given by
    \begin{align*}
       \Risk(\w) = \frac{1}{2}\nnorm[2]{\X_S\w-\y_S}^2 + \frac{1}{2} \nnorm[2]{\y_{S^{\mathsf{c}}}}^2,
    \end{align*}
    where the second term comes from the examples in $S^{\mathsf{c}}$ with negative pre-activations, and it does not depend on $\w$ because the activation pattern does not change after $t_0$. Note that $\Risk(\w)$ is a convex quadratic with
    \begin{align*}
        \nabla\Risk(\w) = \X_S^\top(\X_S\w-\y_S), \quad\quad  \nabla^2\Risk(\w) = \X_S^\top\X_S.
    \end{align*}
    Therefore, $\Risk$ is $L$-smooth with 
    \begin{align*}
        L =\nnorm[2]{\nabla^2\Risk(\w)} = \nnorm[2]{\X_S^\top\X_S} = \mu_1(\X_S\X_S^\top).
    \end{align*}
    A standard smoothness/descent result (e.g., \citealt[Equation~9.17]{boyd2004convex}) implies that for any $\eta \leq \frac{1}{L}$,
    \begin{align*}
        \Risk(\w^{(t+1)}) \leq \Risk(\w^{(t)}) - \frac{\eta}{2} \nnorm[2]{\nabla\Risk(\w^{(t)})}^2,
    \end{align*}
    and in particular, $\Risk(\w^{(t)})$ is non-increasing for all $t \geq t_0$.

    It remains to upper bound $L$. Since $S$ is a subset of the training indices, $|S| \leq n$. Since $L = \mu_1(\X_S\X_S^\top) \leq \mu_1(\X\X^\top)$,
    choosing $\eta \leq \frac{1}{\mu_1(\X\X^\top)}$ guarantees that $\Risk(\w^{(t)})$ is non-increasing for all $t \geq t_0$. This establishes the desired step size condition in the final phase (and thus convergence in function value for the single ReLU dynamics after $t_0$).

    Finally, according to~\citet[Section~2.1]{gunasekar2018characterizing}, the set of minimizers of $\Risk(\w)$ is the affine subspace,
    \begin{align*}
        \mathcal{W}_S = \{\w:\X_S\w =\y_S\},
    \end{align*}
    and gradient descent with constant step size converges to the Euclidean projection of the initialization $\w^{(t_0)}$ onto this subspace $\w^{(\infty)} =\uargmin{\w\in\mathcal{W}_S} \nnorm[2]{\w - \w^{(t_0)}}$. This completes the proof of the lemma.
\end{proof}

\begin{proof}(Lemma~\ref{lem:relu_mni_subset})
    We prove the lemma by showing that the optimal solution $\w^\star$ of the original convex program for single ReLU models also solves a reduced convex program whose solution is the minimum-$\ell_2$-norm interpolation (MNI) over an index subset $S\subseteq[n]$ with modified labels. First, we restate the convex program~\eqref{eq:single_relu_minimum_norm_sol} and its KKT conditions below:
    \begin{align*}
        \w^\star \in {} &\uargmin{\w}\frac{1}{2}\nnorm[2]{\w}^2\\
        \text{s.t. } \w^\top\x_i &= y_i, \text{ for all } i\in S_1\nonumber,\\
        \w^\top\x_j &\leq 0, \text{ for all } j\in S_2\nonumber,
    \end{align*}
    where we denote $S_1 = \{i: y_i > 0, \text{ for all }i\in[n]\}$, $S_2 = \{j: y_j \leq 0, \text{ for all }j\in[n]\}$ and $S_1 \cup S_2 = [n]$. Since $n\leq d$ and we have assumed $\mathrm{rank}(\X)=n$, we can always find a feasible solution satisfying all $n$ equality constraints. This implies that the solution set is nonempty, and $\w^\star$ always exists. Hence, the following KKT conditions are necessary (and also sufficient) to $\w^\star$ for some $\blambda^\star\in \R^{|S_1|}$ and $\bmu^\star\in \R^{|S_2|}$:\\
    \textbf{Stationarity:}
    \begin{align*}
        \w^\star + \sum_{i\in S_1}\lambda^\star_i\x_i + \sum_{j\in S_2}\mu^\star_j\x_j = 0 \Leftrightarrow \w^\star = -\sum_{i\in S_1}\lambda^\star_i\x_i - \sum_{j\in S_2}\mu^\star_j\x_j.
    \end{align*}
    \textbf{Primal feasibility:}
    \begin{align*}
        \w^{\star\top}\x_i &= y_i, \text{ for all } i\in S_1,\\
        \w^{\star\top}\x_j &\leq 0, \text{ for all } j\in S_2.
    \end{align*}
    \textbf{Dual feasibility:}
    \begin{align*}
        \lambda^\star_i \in \R, \text{ for all } i\in S_1,\\
        \mu^\star_j \geq 0, \text{ for all }j\in S_2.
    \end{align*}
    \textbf{Complementary slackness:}
    \begin{align*}
        \sum_{j\in S_2}\mu^\star_j\pts{\w^{\star\top}\x_j} = 0.
    \end{align*}
    Next, we further denote a subset $\tilde{S}_2 \subseteq S_2$ such that $\tilde{S}_2 = \{j: \mu^\star_j > 0 \text{ for all } j\in S_2\}$ (note that $\tilde{S}_2$ can be empty). By the KKT conditions, it is necessary for $\w^\star$ to satisfy the following: 
    \begin{subequations}\label{eq:kkt_nec_cond}
    \begin{align}
        \w^\star = -\sum_{i\in S_1}\lambda^\star_i\x_i &- \sum_{j\in \tilde{S}_2}\mu^\star_j\x_j, \text{ with } \lambda^\star_i \in \R \text{ and } \mu^\star_j > 0,\\
        \w^{\star\top}\x_i &= y_i, \text{ for all } i\in S_1,\\
        \w^{\star\top}\x_j &= 0, \text{ for all } j\in \tilde{S}_2.
    \end{align}
    \end{subequations}
    Now, we consider a reduced convex program:
    \begin{align}\label{eq:reduced_kkt}
        \tilde{\w} \in {}&\uargmin{\w}\frac{1}{2}\nnorm[2]{\w}^2\\
        \text{s.t. } \w^\top\x_i &= y_i, \text{ for all } i\in S_1,\nonumber\\
        \w^\top\x_j &= 0, \text{ for all } j\in \tilde{S}_2\nonumber.
    \end{align}
    Its KKT conditions are given below.\\
    \textbf{Stationarity:}
    \begin{align*}
        \tilde{\w} + \sum_{i\in S_1}\tilde{\lambda}_i\x_i + \sum_{j\in \tilde{S}_2}\tilde{\lambda}_j\x_j = 0 \Leftrightarrow \tilde{\w} = -\sum_{i\in S_1}\tilde{\lambda}_i\x_i - \sum_{j\in \tilde{S}_2}\tilde{\lambda}_j\x_j.
    \end{align*}
    \textbf{Primal feasibility:}
    \begin{align*}
        {\tilde{\w}}^\top\x_i &= y_i, \text{ for all } i\in S_1,\\
        {\tilde{\w}}^\top\x_j &= 0, \text{ for all } j\in \tilde{S}_2.
    \end{align*}
    \textbf{Dual feasibility:}
    \begin{align*}
        \tilde{\lambda}_i \in \R, \text{ for all } i\in S_1,\\
        \tilde{\lambda}_j \in \R, \text{ for all }j\in \tilde{S}_2.
    \end{align*}
    Since $\w^\star$ satisfies all the conditions in Equation~\eqref{eq:kkt_nec_cond}, it also satisfies the KKT conditions for the reduced convex program~\eqref{eq:reduced_kkt}. Thus, $\w^\star$ is also the optimal solution of the reduced convex program. Finally, we have a closed-form solution for the reduced convex program such that $\w^\star=\tilde{\w} = \w_{\text{linear-MNI},S} = \X_S^\top(\X_S\X_S^\top)^{-1}\tilde{\y}_S$ where $S = S_1 \cup \tilde{S}_2$ and $\tilde{\y}_S$ denotes the corresponding label subvector with all negative entries replaced by zero. This completes the proof of the lemma.
\end{proof}

\subsection{Proof of Theorem~\ref{thm:single_relu_gd_high_dim_implicit_bias} (High-dimensional Implicit Bias)}\label{app:proof_single_relu_gd_high_dim_implicit_bias}

In this section, we present the proof of Theorem~\ref{thm:single_relu_gd_high_dim_implicit_bias}. For the single ReLU model ($m=1$), the primal-dual gradient update in~\eqref{eq:primal_dual_update} simplifies to
\begin{subequations}
\begin{alignat}{2}
    &\text{(Primal) }\myquad[7] &&\bbeta^{(t+1)} = \bbeta^{(t)} - \eta \X \X^\top \D(\bbeta^{(t)})(\bbeta^{(t)} - \y),\myquad[6]\label{eq:single_primal_update} \\
    &\text{(Dual) }\myquad[7] &&\balpha^{(t+1)} = \balpha^{(t)} - \eta \D(\bbeta^{(t)})(\bbeta^{(t)} - \y)\label{eq:single_dual_update}.\myquad[6]
\end{alignat}
\end{subequations}

Before proceeding to the proof, we introduce a set of sufficient conditions under which the signs of the primal variables agree with the signs of the labels at iteration $t$. Moreover, these conditions are preserved at iteration $t+1$.

\begin{lemma}\label{lem:primal_label_same_sign}
    Under Assumptions~\ref{asm:label_bounds} and~\ref{asm:high_dim_data}, suppose the gradient descent step size satisfies $\eta \leq \frac{1}{C_g\nnorm[1]{\blambda}}$. For any single ReLU model, if the following six conditions hold at some iteration $t\geq 0$, then they also hold at iteration $t+1$. 
    \begin{enumerate}[label=\alph*., ref=\alph*]
        \item $\beta_i^{(t)} > 0$, for all $i\in[n]$ with $y_i >0$\label{eq:cond1}.
        \item $-\frac{3y_{\max}}{C_g\nnorm[1]{\blambda}}\leq \alpha_j^{(t)} \leq -\frac{y_{\min}}{C_{\alpha}\nnorm[1]{\blambda}}$, for all $j\in[n]$ with $y_j < 0$\label{eq:cond2}.
        \item $\nnorm[2]{\bbeta_+^{(t)} - \y_+} \leq C_y\nnorm[2]{\y_+}$\label{eq:cond3}.
        \item $\nnorm[2]{\balpha^{(t)}} \leq \frac{C_{\alpha}\sqrt{n}y_{\max}}{\nnorm[1]{\blambda}}$\label{eq:cond4}.
        \item $\beta_j^{(t)} \leq 0$, for all $j\in[n]$ with $y_j < 0$\label{eq:cond5}.
        \item $\sigma(\bbeta^{(t)}) = \begin{bmatrix} \bbeta_+^{(t)} \\ \zero \end{bmatrix}$\label{eq:cond6}.
    \end{enumerate}
    Consequently, the set of active examples consists exactly of the positively labeled examples, and the activation pattern remains unchanged, i.e., $\D(\bbeta^{(t)}) = \D(\bbeta^{(t+1)})$.
\end{lemma}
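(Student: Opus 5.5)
The plan is to exploit the fact that, under the six conditions at iteration $t$, the activation pattern is known exactly. Condition (\ref{eq:cond1}) says every positively labeled example is active, and (\ref{eq:cond5}) says every negatively labeled example is inactive. Hence $\D(\bbeta^{(t)}) = \mathrm{diag}(\I_{n_+}, \zero)$, and the dual update~\eqref{eq:single_dual_update} splits as
\[
\balpha_+^{(t+1)} = \balpha_+^{(t)} - \eta(\bbeta_+^{(t)}-\y_+), \qquad \balpha_-^{(t+1)} = \balpha_-^{(t)}.
\]
Write $\mathbf{G} \coloneqq \XX$ with blocks $\mathbf{G}_{++}=\X_+\X_+^\top$ and $\mathbf{G}_{+-}=\X_+\X_-^\top$. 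The primal update for the positive block is then
\[
\bbeta_+^{(t+1)}-\y_+ = (\I-\eta\,\mathbf{G}_{++})(\bbeta_+^{(t)}-\y_+).
\]
I will work on the single high-probability event on which both Lemma~\ref{lem:concentration_eigenvalues} and Corollary~\ref{cor:concentration_op_norm} hold. This event concerns only $\X$, so no union bound over $t$ is needed. I then verify the six conditions at $t+1$ in the order (\ref{eq:cond2}), (\ref{eq:cond3}), (\ref{eq:cond4}), (\ref{eq:cond1}), (\ref{eq:cond5}), (\ref{eq:cond6}).

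Condition (\ref{eq:cond2}) is immediate because $\balpha_-$ is frozen. For (\ref{eq:cond3}), note that $\zero \preceq \mathbf{G}_{++}$ and, by interlacing, $\mu_1(\mathbf{G}_{++})\le\mu_1(\XX)\le C_g\nnorm[1]{\blambda}$. Together with $\eta\le 1/(C_g\nnorm[1]{\blambda})$ this gives $\nnorm[2]{\I-\eta\mathbf{G}_{++}}\le 1$, so $\nnorm[2]{\bbeta_+^{(t+1)}-\y_+}\le\nnorm[2]{\bbeta_+^{(t)}-\y_+}\le C_y\nnorm[2]{\y_+}$.

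For (\ref{eq:cond4}) I use $\bbeta_+=\mathbf{G}_{++}\balpha_+ + \mathbf{G}_{+-}\balpha_-$ at time $t+1$, which yields
\[
\balpha_+^{(t+1)}=\mathbf{G}_{++}^{-1}\bigl(\bbeta_+^{(t+1)}-\mathbf{G}_{+-}\balpha_-^{(t+1)}\bigr).
\]
Interlacing again gives $\nnorm[2]{\mathbf{G}_{++}^{-1}}\le C_g/\nnorm[1]{\blambda}$. Condition (\ref{eq:cond3}) gives $\nnorm[2]{\bbeta_+^{(t+1)}}\le (1+C_y)\sqrt n\,y_{\max}$. The off-diagonal block satisfies $\nnorm[2]{\mathbf{G}_{+-}}\le\nnorm[2]{\XX-\nnorm[1]{\blambda}\I}$, which is small by Corollary~\ref{cor:concentration_op_norm} and Assumption~\ref{asm:high_dim_data}. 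Finally, (\ref{eq:cond2}) gives $\nnorm[2]{\balpha_-}\le 3\sqrt n\,y_{\max}/(C_g\nnorm[1]{\blambda})$. Adding the two blocks,
\[
\nnorm[2]{\balpha^{(t+1)}}\lesssim \frac{(C_g(1+C_y)+3)\sqrt n\,y_{\max}}{\nnorm[1]{\blambda}},
\]
which is at most $C_\alpha\sqrt n\,y_{\max}/\nnorm[1]{\blambda}$ by the standing choice $C_\alpha\gtrsim\max\{C_g^2,C_yC_g\}$. I expect this step to be the main obstacle. It is the only place where primal information must be converted back into a dual bound, and it forces the Schur-type decomposition and the constant bookkeeping above. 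A naive bound $\nnorm[2]{\balpha_+^{(t+1)}}\le\nnorm[2]{\balpha_+^{(t)}}+\eta\nnorm[2]{\bbeta_+^{(t)}-\y_+}$ would accumulate over iterations and fail.

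For (\ref{eq:cond1}), apply Lemma~\ref{lem:pos_stay_pos} with $m=1$ and $s_1=1$. Since $h_{\bTheta^{(t)}}(\x_i)=\sigma(\beta_i^{(t)})=\beta_i^{(t)}$ for active $i$, the hypothesis $\beta_i^{(t)}\ge h_{\bTheta^{(t)}}(\x_i)$ holds with equality. By (\ref{eq:cond6}) the residual satisfies $\nnorm[2]{h_{\bTheta^{(t)}}(\X)-\y}^2=\nnorm[2]{\bbeta_+^{(t)}-\y_+}^2+\nnorm[2]{\y_-}^2\le C_y^2\nnorm[2]{\y}^2$ when $C_y\ge1$. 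Lemma~\ref{lem:pos_stay_pos} therefore gives $\beta_i^{(t+1)}>0$.

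For (\ref{eq:cond5}), apply Lemma~\ref{lem:neg_stay_neg} at iteration $t+1$, using (\ref{eq:cond2}) and (\ref{eq:cond4}) at $t+1$ just established; this gives $\beta_j^{(t+1)}\le0$ for every $y_j<0$. Condition (\ref{eq:cond6}) then follows directly from (\ref{eq:cond1}) and (\ref{eq:cond5}) at $t+1$. Together these also give $\D(\bbeta^{(t+1)})=\D(\bbeta^{(t)})$, which is the claimed invariance of the activation pattern.
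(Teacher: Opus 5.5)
Your proof is correct and follows essentially the same route as the paper. You use the same block identity $\bbeta_+ = \X_+\X_+^\top\balpha_+ + \X_+\X_-^\top\balpha_-$ (with $\X_+\X_+^\top$ inverted via interlacing) to convert the primal residual bound back into the dual norm bound, and Lemmas~\ref{lem:pos_stay_pos} and~\ref{lem:neg_stay_neg} handle conditions (a) and (e). The only differences are cosmetic. You get non-expansiveness of the positive-block residual directly from $\nnorm[2]{\I-\eta\X_+\X_+^\top}\le 1$ rather than from the descent lemma. You also bound $\nnorm[2]{\X_+\X_-^\top}$ by the small deviation $\nnorm[2]{\XX-\nnorm[1]{\blambda}\I}$, where the paper uses the cruder $\nnorm[2]{\X_+\X_-^\top}\le C_g\nnorm[1]{\blambda}$; either suffices for the constant bookkeeping.
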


\begin{proof}(Lemma~\ref{lem:primal_label_same_sign})
    In the following, we show that if the six sufficient conditions hold at some iteration $t\geq 0$, then they also hold at iteration $t+1$.
    \begin{enumerate}[label=Part (\alph*):,leftmargin=3\parindent]
        \item By conditions~\eqref{eq:cond3} and~\eqref{eq:cond6} at iteration $t$, we have $\nnorm[2]{h_{\bTheta^{(t)}}(\X) - \y}^2 = \nnorm[2]{\sigma(\bbeta^{(t)})- \y}^2 = \nnorm[2]{\bbeta_+^{(t)} - \y_+}^2 +\nnorm[2]{\y_-}^2 \leq C_y^2\nnorm[2]{\y}^2$. Together with $h_{\bTheta^{(t)}}(\x_i) = \beta_i^{(t)}$ and condition~\eqref{eq:cond1}, all the assumptions of Lemma~\ref{lem:pos_stay_pos} are satisfied for all $i$ with $y_i > 0$. Consequently, we obtain $\beta_i^{(t+1)} > 0$ for all $i\in[n]$ with $y_i >0$, and thus condition~\eqref{eq:cond1} holds at iteration $t+1$.
        
        \item According to the dual gradient update in Equation~\eqref{eq:single_dual_update}, and using condition~\eqref{eq:cond5} at iteration $t$, we conclude that the dual variables corresponding to negatively labeled examples remain unchanged, i.e., $\alpha_j^{(t+1)} = \alpha_j^{(t)}$ for all $j\in[n]$ with $y_j < 0$. Therefore, condition~\eqref{eq:cond2} continues to hold at iteration $t+1$.

        \item By conditions~\eqref{eq:cond1} and~\eqref{eq:cond5}, the gradient update at iteration $t$ depends only on the positively labeled examples. Consequently, the update is equivalent to a linear regression gradient descent step using only the positive-labeled subset. As similarly argued in the proof of Lemma~\ref{lem:conv_step_size}, since the step size satisfies $\eta \leq \frac{1}{C_g\nnorm[1]{\blambda}}$, the squared loss is monotonically non-increasing, and we obtain $\nnorm[2]{\bbeta_+^{(t+1)} -\y_+} \leq \nnorm[2]{\bbeta_+^{(t)} -\y_+} \leq C_y\nnorm[2]{\y_+}$ by condition~\eqref{eq:cond3} at iteration $t$.
        Therefore, condition~\eqref{eq:cond3} holds at iteration $t+1$.

        \item For this part, we use conditions~\eqref{eq:cond2} and~\eqref{eq:cond3} at iteration $t+1$. By the triangle inequality, we have
        \begin{align*}
            \nnorm[2]{\balpha^{(t+1)}} \leq \nnorm[2]{\balpha_+^{(t+1)}} + \nnorm[2]{\balpha_-^{(t+1)}}.
        \end{align*}
        By condition~\eqref{eq:cond2} at iteration $t+1$, it follows that $\nnorm[2]{\balpha_-^{(t+1)}} \leq \frac{3\sqrt{n}y_{\max}}{C_g\nnorm[1]{\blambda}}$. It therefore remains to upper bound $\nnorm[2]{\balpha_+^{(t+1)}}$. By condition~\eqref{eq:cond3} at iteration $t+1$, we have $\nnorm[2]{\bbeta_+^{(t+1)}} \leq C_y\nnorm[2]{\y_+} + \nnorm[2]{\y_+} \leq (C_y+1)\nnorm[2]{\y}$. Moreover, we have
        \begin{align*}
            \nnorm[2]{\bbeta_+^{(t+1)}} &= \nnorm[2]{\X_+\X^\top\balpha^{(t+1)}}\\
            &= \nnorm[2]{\X_+\begin{bmatrix} \X_+^\top  \X_-^\top\end{bmatrix}\begin{bmatrix} \balpha_+^{(t+1)}\\ \balpha_-^{(t+1)}\end{bmatrix}}\\
            &= \nnorm[2]{\X_+\X_+^\top\balpha_+^{(t+1)} + \X_+\X_-^\top\balpha_-^{(t+1)}}.
        \end{align*}
        Applying the triangle inequality yields
        \begin{align*}
            \nnorm[2]{\X_+\X_+^\top\balpha_+^{(t+1)}} &\leq \nnorm[2]{\bbeta_+^{(t+1)}} + \nnorm[2]{\X_+\X_-^\top\balpha_-^{(t+1)}}\\
            &\leq (C_y+1)\nnorm[2]{\y} + \nnorm[2]{\X_+\X_-^\top\balpha_-^{(t+1)}}.
        \end{align*}
        Since $\X_+\X_+^\top\in\R^{n_+\times n_+}$ is full rank, we obtain
        \begin{align*}
             \nnorm[2]{\balpha_+^{(t+1)}} &\leq \frac{(C_y+1)\nnorm[2]{\y} + \nnorm[2]{\X_+\X_-^\top\balpha_-^{(t+1)}}}{\mu_{n_+}(\X_+\X_+^\top)}.
        \end{align*}
        For the denominator, the variational formulation for eigenvalues of a submatrix and Lemma~\ref{lem:concentration_eigenvalues} imply that 
        \begin{align*}
            \mu_{n_+}(\X_+\X_+^\top) \geq \mu_{n}(\XX) \geq \frac{1}{C_g}\sum_{j=1}^d\lambda_j=\frac{\nnorm[1]{\blambda}}{C_g},
        \end{align*}
        with probability at least $1 -2e^{-n/C_g}$.
        For the numerator, we have $(C_y+1)\nnorm[2]{\y} \leq (C_y+1)\sqrt{n}y_{\max}$. Moreover, by~\citet[Theorem 1]{bhatia1990singular}, we have
        \begin{align*}
            \nnorm[2]{\X_+\X_-^\top} &\leq \frac{1}{2} \nnorm[2]{\X_+\X_+^\top + \X_-\X_-^\top}\\
            &\leq \frac{1}{2} \pts{\nnorm[2]{\X_+\X_+^\top} + \nnorm[2]{\X_-\X_-^\top}}\\
            &\leq C_g\sum_{j=1}^d\lambda_j\\
            &= C_g\nnorm[1]{\blambda},
        \end{align*}
        where the last inequality follows from Lemma~\ref{lem:concentration_eigenvalues}. Combining these bounds yields
        \begin{align*}
            \nnorm[2]{\balpha_+^{(t+1)}} \leq \frac{(C_y+1)\sqrt{n}y_{\max} + C_g\nnorm[1]{\blambda} \cdot \frac{3\sqrt{n}y_{\max}}{C_g\nnorm[1]{\blambda}}}{\nnorm[1]{\blambda}/C_g} = ((C_y+1)C_g+3C_g)\frac{\sqrt{n}y_{\max}}{\nnorm[1]{\blambda}}.
        \end{align*}
        Consequently, we have
        \begin{align*}
            \nnorm[2]{\balpha^{(t+1)}} \leq  ((C_y+1)C_g+3C_g)\frac{\sqrt{n}y_{\max}}{\nnorm[1]{\blambda}} + \frac{3\sqrt{n}y_{\max}}{C_g\nnorm[1]{\blambda}} \leq \frac{C_{\alpha}\sqrt{n}y_{\max}}{\nnorm[1]{\blambda}},
        \end{align*}
        with $C_{\alpha} \gtrsim \max\{C_g^2, C_yC_g\}$, and thus condition~\eqref{eq:cond4} holds at iteration $t+1$.

        \item By Lemma~\ref{lem:neg_stay_neg}, and since conditions~\eqref{eq:cond2} and~\eqref{eq:cond4} hold at iteration $t+1$, we conclude that $\beta_j^{(t+1)} \leq 0$ for all $j\in[n]$ with $y_j < 0$. Thus, condition~\eqref{eq:cond5} holds at iteration $t+1$.

        \item By conditions~\eqref{eq:cond1} and~\eqref{eq:cond5} at iteration $t+1$, the signs of the primal variables continue to agree with the signs of the labels. Consequently, $\sigma(\bbeta^{(t+1)}) = \begin{bmatrix} \bbeta_+^{(t+1)} \\ \zero \end{bmatrix}$, and thus condition~\eqref{eq:cond6} holds at iteration $t+1$.
    \end{enumerate}
    We have shown that the six sufficient conditions hold at iteration $t+1$. Consequently, the signs of the primal variables continue to agree with the signs of the labels, and hence $\D(\bbeta^{(t)}) = \D(\bbeta^{(t+1)})$. This completes the proof.
\end{proof}

Equipped with Lemma~\ref{lem:primal_label_same_sign}, we are now ready to prove Theorem~\ref{thm:single_relu_gd_high_dim_implicit_bias}.
\begin{proof}(Theorem~\ref{thm:single_relu_gd_high_dim_implicit_bias})
    In the proof, we first show that after the first gradient step, the iterate at $t=1$ satisfies the conditions in Lemma~\ref{lem:primal_label_same_sign}. Next, since the conditions hold at $t=1$ and are preserved from $t=\tilde{t}$ to $t=\tilde{t} +1$ by Lemma~\ref{lem:primal_label_same_sign}, we fully characterize the gradient descent dynamics by induction.
    
    We begin by verifying that the iterate at $t=1$ satisfies the sufficient conditions in Lemma~\ref{lem:primal_label_same_sign}. With the initialization $\w^{(0)} = \X^\top(\XX)^{-1} \bepsilon$, we have $\bbeta^{(0)} = \X\w^{(0)} = \bepsilon$. Therefore, using the primal gradient update in Equation~\eqref{eq:single_primal_update}, we obtain
    \begin{align}
        \bbeta^{(1)} &= \bbeta^{(0)} - \eta \X \X^\top \D(\bbeta^{(0)})(\bbeta^{(0)} - \y)\nonumber\\
        &= \bepsilon -\eta \X \X^\top (\bepsilon - \y)\nonumber\\
        &= \X \X^\top\mts{\underbrace{\eta\pts{\y-\bepsilon+\frac{1}{\eta}\XXi\bepsilon}}_{\eqqcolon\balpha^{(1)}}}.\label{eq:single_beta_one}
    \end{align}
    We denote $\balpha^{(1)} \coloneqq\eta\pts{\y-\bepsilon+\frac{1}{\eta}(\XX)^{-1}\bepsilon}$ according to the primal-dual formulation $\bbeta^{(1)} =\XX\balpha^{(1)}$ in Equation~\eqref{eq:primal_dual_def}. In the below, we show that at iteration $t=1$, the variables $\bbeta^{(1)}$ and $\balpha^{(1)}$ satisfy all the conditions in Lemma~\ref{lem:primal_label_same_sign}.
    \begin{enumerate}[label=Part (\alph*):,leftmargin=3\parindent]
        \item For all $i\in[n]$ with $y_i > 0$, we apply Lemma~\ref{lem:pos_stay_pos}. Since $\beta_i^{(0)} = \epsilon_i > 0$, $\beta_i^{(0)} = h_{\bTheta^{(0)}}(\x_i)$ and $\nnorm[2]{\sigma(\bbeta^{(0)}) -\y} \leq \nnorm[2]{\bepsilon} + \nnorm[2]{\y} \leq \frac{\sqrt{n}}{C_{\alpha}}y_{\min} + \nnorm[2]{\y} \leq C_y\nnorm[2]{\y}$ with $C_y > 1 + \frac{1}{C_{\alpha}}$, it follows that $\beta_i^{(1)}>0$ for all $i\in[n]$ with $y_i >0$.

        \item For all $j\in[n]$ with $y_j <0$, we verify that $\alpha_j^{(1)}$ satisfies the required upper and lower bounds. For the upper bound, recall that
        \begin{align*}
            \alpha_j^{(1)} &= \eta\pts{y_j-\epsilon_j+\frac{1}{\eta}\e_j^\top\XXi\bepsilon}\\
            &= \eta\pts{y_j-\epsilon_j+\frac{1}{\eta}\e_j^\top\mts{\frac{1}{\nnorm[1]{\blambda}}\I + \pts{\XXi - \frac{1}{\nnorm[1]{\blambda}}\I}}\bepsilon}\\
            &= \eta\pts{y_j-\epsilon_j+ \frac{\epsilon_j}{\eta\nnorm[1]{\blambda}} + \frac{1}{\eta}\e_j^\top \pts{\XXi - \frac{1}{\nnorm[1]{\blambda}}\I}\bepsilon}\\
            &\stackrel{(\mathrm{i})}{\leq} \eta\pts{y_j + \frac{\epsilon_j}{\eta\nnorm[1]{\blambda}} + \frac{1}{\eta}\e_j^\top \pts{\XXi - \frac{1}{\nnorm[1]{\blambda}}\I}\bepsilon}\\
            &\stackrel{(\mathrm{ii})}{\leq} \eta\pts{y_j + \frac{\epsilon_j}{\eta\nnorm[1]{\blambda}} + \frac{1}{\eta} \nnorm[2]{\XXi - \frac{1}{\nnorm[1]{\blambda}}\I}\nnorm[2]{\bepsilon}},
        \end{align*}
        where inequality (i) drops the negative term $-\epsilon_j$, and inequality (ii) follows from the submultiplicativity of the operator norm. By Corollary~\ref{cor:concentration_op_norm}, we have
        \begin{align*}
            \nnorm[2]{\XXi - \frac{1}{\nnorm[1]{\blambda}}\I} \leq \frac{C_gC}{\nnorm[1]{\blambda}} \cdot \max \pts{\sqrt{\frac{n}{d_2}}, \frac{n}{d_{\infty}}},
        \end{align*}
        with probability at least $1 - 2 \exp(-n(Cc - \ln 9))$. Moreover, by the theorem assumptions, $\nnorm[2]{\bepsilon} \leq \frac{\sqrt{n}}{C_{\alpha}}y_{\min}$ and $\frac{1}{\eta} \leq CC_g\nnorm[1]{\blambda}$. Combining these bounds yields
        \begin{align*}
            \alpha_j^{(1)} &\leq \frac{1}{CC_g\nnorm[1]{\blambda}}\pts{-y_{\min} + \frac{CC_g}{C_{\alpha}}y_{\min} + C^2C_g^2\cdot \max \pts{\sqrt{\frac{n}{d_2}}, \frac{n}{d_{\infty}}} \cdot \frac{\sqrt{n}}{C_{\alpha}}y_{\min}}\\
            &\leq \frac{1}{CC_g\nnorm[1]{\blambda}}\pts{-y_{\min} + \frac{CC_g}{C_{\alpha}}y_{\min} + C^2C_g^2\cdot \frac{y_{\min}}{C_0y_{\max}} \cdot \frac{1}{C_{\alpha}}y_{\min}}\\
            & = -\frac{y_{\min}}{C_{\alpha}\nnorm[1]{\blambda}}\pts{\frac{C_{\alpha}}{CC_g} -1 - \frac{CC_gy_{\min}}{C_0y_{\max}}}\\
            &\leq -\frac{y_{\min}}{C_{\alpha}\nnorm[1]{\blambda}}.
        \end{align*}
        The second inequality follows from $d_2 \geq C_0^2\frac{n^2 y_{\max}^2}{y_{\min}^2}$ and $d_{\infty} \geq C_0\frac{n^{1.5} y_{\max}}{y_{\min}}$ in Assumption~\ref{asm:high_dim_data}, and the last inequality uses the following relationships between constants: $C_0 > C \cdot C_{\alpha}^2$ and $C_{\alpha} > C \cdot \max\{C_g^2, C_yC_g\}$. For the lower bound, we have
        \begin{align*}
            \alpha_j^{(1)} &= \eta\pts{y_j-\epsilon_j+ \frac{\epsilon_j}{\eta\nnorm[1]{\blambda}} + \frac{1}{\eta}\e_j^\top \pts{\XXi - \frac{1}{\nnorm[1]{\blambda}}\I}\bepsilon}\\
            &\geq \eta\pts{-y_{\max} -\epsilon_j - \frac{1}{\eta}\nnorm[2]{\XXi - \frac{1}{\nnorm[1]{\blambda}}\I}\nnorm[2]{\bepsilon}}\\
            &\geq \frac{1}{C_g\nnorm[1]{\blambda}}\pts{-y_{\max} - \frac{1}{C_{\alpha}}y_{\min} - C^2C_g^2\cdot \max \pts{\sqrt{\frac{n}{d_2}}, \frac{n}{d_{\infty}}}\cdot \frac{\sqrt{n}}{C_{\alpha}}y_{\min}}\\
            &\geq \frac{1}{C_g\nnorm[1]{\blambda}}\pts{-y_{\max} - \frac{1}{C_{\alpha}}y_{\min} - C^2C_g^2\cdot \frac{y_{\min}}{C_0 y_{\max}}\cdot \frac{1}{C_{\alpha}}y_{\min}}\\
            &\geq \frac{-3y_{\max}}{C_g\nnorm[1]{\blambda}},
        \end{align*}
        by the same arguments.
        Thus, $\alpha_j^{(1)}$ satisfies both the required upper and lower bounds for all $j$ with $y_j < 0$.

        \item We now verify that the primal variables corresponding to positively labeled examples minus $\y_+$ satisfy the norm bound in Lemma~\ref{lem:primal_label_same_sign}. Specifically, we show that\\ $\nnorm[2]{\bbeta_+^{(1)} -\y_+}^2 \leq C_y^2\nnorm[2]{\y_+}^2$. According to Equation~\eqref{eq:single_beta_one}, we have
        \begin{align}
            \nnorm[2]{\bbeta_+^{(1)} -\y_+}^2 &= \sum_{i: y_i>0} \pts{\beta_i^{(1)} - y_i}^2\nonumber\\
            &= \sum_{i: y_i>0} \pts{\underbrace{\epsilon_i -\eta\e_i^\top\XX\pts{\bepsilon -\y} - y_i}_{\eqqcolon T_i}}^2.\label{eq:single_pos_diff}
        \end{align}
        Next, we bound the term $T_i\coloneqq\epsilon_i -\eta\e_i^\top\XX\pts{\bepsilon -\y} - y_i$ for all $i\in[n]$ with $y_i > 0$. We have
        \begin{align*}
            T_i&=\epsilon_i -\eta\e_i^\top\XX\pts{\bepsilon -\y} - y_i\\
            &= (\epsilon_i - y_i) -\eta\e_i^\top\mts{\nnorm[1]{\blambda}\I + \pts{\XX - \nnorm[1]{\blambda}\I}}\pts{\bepsilon -\y}\\
            &= (1 - \eta\nnorm[1]{\blambda})(\epsilon_i - y_i) - \eta\e_i^\top\pts{\XX - \nnorm[1]{\blambda}\I}\pts{\bepsilon -\y}.
        \end{align*}
        Since the step size assumption guarantees that $\frac{1}{CC_g\nnorm[1]{\blambda}} \leq \eta \leq \frac{1}{C_g\nnorm[1]{\blambda}}$, and $\epsilon_i \leq \frac{1}{C_{\alpha}}y_{\min}$, the term $(1 - \eta\nnorm[1]{\blambda})(\epsilon_i - y_i)$ is strictly negative. Hence, in order to upper bound $T_i^2$, it suffices to find the lower bound for $T_i$. We have
        \begin{align*}
            T_i &= (1 - \eta\nnorm[1]{\blambda})(\epsilon_i - y_i) - \eta\e_i^\top\pts{\XX - \nnorm[1]{\blambda}\I}\pts{\bepsilon -\y}\\
            &\geq -y_i - \eta \nnorm[2]{\XX - \nnorm[1]{\blambda}\I}\nnorm[2]{\bepsilon -\y},
        \end{align*}
        where the inequality drops the positive terms $(1 - \eta\nnorm[1]{\blambda})\epsilon_i$ and $\eta\nnorm[1]{\blambda}y_i$. We again upper bound $\nnorm[2]{\XX - \nnorm[1]{\blambda}\I}$ by Corollary~\ref{cor:concentration_op_norm}. With probability at least\\ $1 - 2 \exp(-n(Cc - \ln 9))$, we have
        \begin{align*}
            T_i &\geq -y_i - \eta \cdot C\nnorm[1]{\blambda}\cdot \max \pts{\sqrt{\frac{n}{d_2}}, \frac{n}{d_{\infty}}}\nnorm[2]{\bepsilon -\y}\\
            &\geq -y_i - \frac{C}{C_g} \cdot \max \pts{\sqrt{\frac{n}{d_2}}, \frac{n}{d_{\infty}}}\nnorm[2]{\bepsilon -\y},
        \end{align*}
        by applying $\eta \leq \frac{1}{C_g\nnorm[1]{\blambda}}$. Finally, we apply the upper bounds for $\nnorm[2]{\bepsilon}$ and $\nnorm[2]{\y}$, and Assumption~\ref{asm:high_dim_data} ensures that $d_2 \geq C_0^2\frac{n^2 y_{\max}^2}{y_{\min}^2}$ and $d_{\infty} \geq C_0\frac{n^{1.5} y_{\max}}{y_{\min}}$. We have
        \begin{align*}
             T_i &\geq -y_i - \frac{C}{C_g} \max \pts{\sqrt{\frac{n}{d_2}}, \frac{n}{d_{\infty}}}\pts{\frac{\sqrt{n}}{C_{\alpha}}y_{\min} + \sqrt{n}y_{\max}}\\
             &\geq -y_i - \frac{Cy_{\min}}{C_gC_0y_{\max}}\pts{\frac{1}{C_{\alpha}}y_{\min} + y_{\max}}\\
             &\geq -y_i \pts{1 + \frac{2C}{C_gC_0}}\\
             &\geq -C_y y_i,
        \end{align*}
        with the choice of $C_y \geq 2$. Substituting $T_i^2 \leq C_y^2 y_i^2$ into Equation~\eqref{eq:single_pos_diff}, we have
        \begin{align*}
            \nnorm[2]{\bbeta_+^{(1)} -\y_+}^2 &\leq \sum_{i:y_i>0} C_y^2 y_i^2 = C_y^2 \nnorm[2]{\y_+}^2.
        \end{align*}
        As a result, we conclude that $ \nnorm[2]{\bbeta_+^{(1)} -\y_+} \leq C_y \nnorm[2]{\y_+}$ as required.

        \item We next verify that $\balpha^{(1)}$ satisfies the required norm bound. Recall that
        \begin{align*}
            \balpha^{(1)} &=\eta\pts{\y-\bepsilon+\frac{1}{\eta}\XXi\bepsilon}.
        \end{align*}
        Taking the $\ell_2$ norm and applying the triangle inequality yields
        \begin{align*}
            \nnorm[2]{\balpha^{(1)}} &=\nnorm[2]{\eta\pts{\y-\bepsilon+\frac{1}{\eta}\XXi\bepsilon}}\\
            &\leq \eta \mts{\nnorm[2]{\y} + \nnorm[2]{\bepsilon} + \frac{1}{\eta}\nnorm[2]{\XXi}\nnorm[2]{\bepsilon}}.
        \end{align*}
        We now bound each term on the right-hand side. We apply the label bound, $\nnorm[2]{\y}\leq \sqrt{n}y_{\max}$ and the construction of the initialization, $\nnorm[2]{\bepsilon}\leq \frac{\sqrt{n}}{C_{\alpha}}y_{\min}$. Moreover, Lemma~\ref{lem:concentration_eigenvalues} implies $\nnorm[2]{(\XX)^{-1}} \leq \frac{C_g}{\nnorm[1]{\blambda}}$ with probability at least $1-2e^{-n/C_g}$, and the step size condition ensures $\frac{1}{CC_g\nnorm[1]{\blambda}} \leq \eta \leq \frac{1}{C_g\nnorm[1]{\blambda}}$. Substituting these bounds, we obtain
        \begin{align*}
            \nnorm[2]{\balpha^{(1)}} &\leq \frac{1}{C_g\nnorm[1]{\blambda}}\mts{\sqrt{n}y_{\max} + \frac{\sqrt{n}}{C_{\alpha}}y_{\min} + CC_g\nnorm[1]{\blambda}\cdot\frac{C_g}{\nnorm[1]{\blambda}}\cdot \frac{\sqrt{n}}{C_{\alpha}}y_{\min}}\\
            &\leq \frac{1}{C_g\nnorm[1]{\blambda}}\pts{3\sqrt{n}y_{\max}}\\
            &\leq \frac{C_{\alpha}\sqrt{n}y_{\max}}{\nnorm[1]{\blambda}},
        \end{align*}
        with $C_{\alpha}\gtrsim \max\{C_g^2, C_yC_g\}$. Therefore, $\balpha^{(1)}$ satisfies the required norm bound.

        \item Since we have shown that $\alpha_j^{(1)} \leq -\frac{y_{\min}}{C_{\alpha}\nnorm[1]{\blambda}}$ and $\nnorm[2]{\balpha^{(1)}} \leq \frac{C_{\alpha}\sqrt{n}y_{\max}}{\nnorm[1]{\blambda}}$ for all $j\in[n]$ with $y_j < 0$, it follows from Lemma~\ref{lem:neg_stay_neg} that $\beta_j^{(1)} \leq 0$ for all $j\in[n]$ with $y_j < 0$.

        \item Since we have shown that $\beta_i^{(1)} > 0$ for all $i\in[n]$ with $y_i > 0$ and $\beta_j^{(1)} \leq 0$ for all $j\in[n]$ with $y_j < 0$, the signs of the primal variables coincide with the signs of the labels. Consequently, $\sigma(\bbeta^{(1)}) = \begin{bmatrix} \bbeta_+^{(1)} \\ \zero \end{bmatrix}$.
    \end{enumerate}
    We have shown that at iteration $t=1$, all conditions in Lemma~\ref{lem:primal_label_same_sign} are satisfied. Consequently, all positively labeled examples are active, while all negatively labeled examples are inactive. We now complete the proof by induction and characterize the gradient descent dynamics for all subsequent iterations.
    By Lemma~\ref{lem:primal_label_same_sign}, since the conditions hold at $t=1$, they also hold at $t=2$. More generally, the same lemma implies that if the conditions hold at $t=\tilde{t}$ then they continue to hold at $t=\tilde{t}+1$. This completes the induction argument.
    
    As a result, for all $t\geq 1$, the activation pattern remains fixed, i.e., $\D(\bbeta^{(t)}) = \D(\bbeta^{(1)})$, and all negative labeled examples are inactive,~i.e., $\X_-\w^{(t)}\preceq 0$. By Lemma~\ref{lem:final_phase}, the gradient descent dynamics from this point onward are equivalent to those of linear regression trained on the positively labeled examples, with initialization  $\w^{(1)}$. Finally, by Lemma~\ref{lem:conv_step_size}, the $\w^{(\infty)}$ satisfies
    \begin{align*}
        \w^{(\infty)} =\uargmin{\w\in\{\w:\X_+\w =\y_+\}} \nnorm[2]{\w - \w^{(1)}},
    \end{align*}
    where we have $\w^{(1)} = \eta\X^\top\pts{\y-\bepsilon+\frac{1}{\eta}(\XX)^{-1}\bepsilon}$.
    This completes the proof of Theorem~\ref{thm:single_relu_gd_high_dim_implicit_bias}.
\end{proof}

\subsection{Proof of Theorem~\ref{thm:single_relu_approx_to_w_star} (Implicit Bias Approximation to \texorpdfstring{$\w^\star$}{w*})}\label{app:proof_single_relu_approx_to_w_star}
In this section, we present the proof of implicit bias approximation to $\w^\star$ for single ReLU models.

\begin{proof}(Theorem~\ref{thm:single_relu_approx_to_w_star})
    We restate the definition of $\w^\star$ in Equation~\eqref{eq:single_relu_minimum_norm_sol} below.
    
    \begin{align*}
        \w^\star = {}&\uargmin{\w}\frac{1}{2}\nnorm[2]{\w}^2\\
        \text{s.t. } \w^\top\x_i &= y_i, \text{ for all } y_i > 0\nonumber\\
        \w^\top\x_j &\leq 0, \text{ for all } y_j \leq 0\nonumber.
    \end{align*}
    Recall that the gradient descent limit $\w^{(\infty)}$ satisfies the same set of constraints: it interpolates all positively labeled examples and produces negative predictions for negatively labeled examples. Consequently, both $\w^{(\infty)}$ and $\w^\star$ are feasible solutions achieving the minimum empirical risk.

    We start with showing the upper bound on $\|\w^{(\infty)}-\w^\star\|_2$. We first relate the distance between the predictors $\w^{(\infty)}$ and $\w^\star$ to the distance in their predictions, i.e., $\|\X\w^{(\infty)} - \X\w^\star\|_2$. Since both vectors lie in the span of the data $\{\x_i\}_{i=1}^n$, their difference has no component in the null space corresponding to the smallest $d-n$ eigenvalues of $\X^\top\X$. Therefore, we have
    \begin{align}
        \nnorm[2]{\X\w^{(\infty)} - \X\w^\star}^2 = \nnorm[2]{\X\pts{\w^{(\infty)} -\w^\star}}^2 &\geq \mu_{n}(\X^\top\X) \nnorm[2]{\w^{(\infty)} -\w^\star}^2\nonumber\\
        &= \mu_{n}(\XX) \nnorm[2]{\w^{(\infty)} -\w^\star}^2.\label{eq:w_star_upper}
    \end{align}
    As a result, to derive an upper bound for $\|\w^{(\infty)} -\w^\star\|_2$, it suffices to upper bound the distance between their prediction $\|\X\w^{(\infty)} - \X\w^\star\|_2$. We begin with analyzing $\w^{(\infty)}$. By Theorem~\ref{thm:single_relu_gd_high_dim_implicit_bias}, $\w^{(\infty)}$ satisfies the following:
    \begin{align*}
        \w^{(\infty)\top}\x_i &= y_i \myquad[17] \text{ for all } y_i > 0,\\
        \alpha_j^{(\infty)} &= \alpha_j^{(1)} = \eta \pts{y_j - \epsilon_j + \frac{1}{\eta}\e_j^\top\XXi\bepsilon} \quad \text{ for all } y_j < 0,
    \end{align*}
    and also all the conditions in Lemma~\ref{lem:primal_label_same_sign}. On the other hand, according to the necessary conditions in Equation~\eqref{eq:kkt_nec_cond} in Lemma~\ref{lem:relu_mni_subset}, $\w^\star$ satisfies
    \begin{align*}
        \w^\star = -\sum_{i\in S_1}\lambda^\star_i\x_i &- \sum_{j\in \tilde{S}_2}\mu^\star_j\x_j, \text{ with } \lambda^\star_i \in \R \text{ and } \mu^\star_j > 0,\\
        \w^{\star\top}\x_i &= y_i, \text{ for all } i\in S_1,\\
        \w^{\star\top}\x_j &= 0, \text{ for all } j\in \tilde{S}_2,
    \end{align*}
    where we have denoted $S_1 = \{i: y_i > 0, \text{ for all }i\in[n]\}$, $S_2 = \{j: y_j \leq 0, \text{ for all }j\in[n]\}$, $\tilde{S}_2 \subseteq S_2$ (note that $\tilde{S}_2$ can be empty) and $S=S_1\cup \tilde{S_2}$. Based on these necessary conditions, we can define $\w^\star = \X^\top\balpha^\star$ where
    \begin{align*}
        \alpha_i^\star = \condds{-\lambda_i^\star}{\quad \text{ for all } i\in S_1}{-\mu_i^\star}{\quad \text{ for all } i \in \tilde{S}_2}{0}{\quad \text{ for all } i \in S_2\cup \tilde{S}_2^{\mathsf{c}}\eqqcolon S_3 }.
    \end{align*}
    Let $\X_S \in \R^{|S| \times d}$ denote the submatrix of $\X$ consisting of the rows indexed by $S$ (taken in increasing order), and let $\y_S \in \R^{|S|}$ denote the corresponding label subvector with all negative entries replaced by zero. We have
    \begin{align*}
        \y_S &= \X_S\X_S^\top\balpha_S^\star,
    \end{align*}
    and similarly, by taking the norm and using the matrix norm lower bound of the smallest eigenvalue of $\X_S\X_S^\top$, we have
    \begin{align*}
        \nnorm[2]{\y_S} &= \nnorm[2]{\X_S\X_S^\top\balpha_S^\star}\\
        &\geq \mu_{|S|}(\X_S\X_S^\top)\nnorm[2]{\balpha_S^\star}.
    \end{align*}
    Consequently, we have
    \begin{align}
        \nnorm[2]{\balpha^\star} = \nnorm[2]{\balpha_S^\star} \leq \frac{\nnorm[2]{\y_S}}{\mu_{|S|}(\X_S\X_S^\top)} \leq \frac{\sqrt{n}y_{\max}}{\mu_{n}(\XX)} \leq \frac{C_g\sqrt{n}y_{\max}}{\nnorm[1]{\blambda}},\label{eq:alpha_star_upper}
    \end{align}
    where the second inequality follows from the variational formulation of submatrix, and the last inequality follows from Lemma~\ref{lem:concentration_eigenvalues} with probability at least $1-2e^{-n/C_g}$.
    
    We know that for all $i \in S_1$, $\w^{(\infty)\top}\x_i = \w^{\star\top}\x_i = y_i$, and $\w^{\star\top}\x_j = 0$ for all $j\in\tilde{S}_2$. Therefore, we can write
    \begin{align}
        \nnorm[2]{\X\w^{(\infty)} - \X\w^\star}^2 &= \sum_{i=1}^n \pts{\w^{(\infty)\top}\x_i - \w^{\star\top}\x_i}^2\nonumber\\
        &= \sum_{i\in \tilde{S}_2} \pts{\w^{(\infty)\top}\x_i}^2 + \sum_{i\in S_3} \pts{\w^{(\infty)\top}\x_i - \w^{\star\top}\x_i}^2.\label{eq:difference_in_prediction}
    \end{align}
    We start with upper bounding the term $(\w^{(\infty)\top}\x_i)^2$ for all $i\in \tilde{S}_2$. Since $\w^{(\infty)\top}\x_i \leq 0$ by the conditions in Lemma~\ref{lem:primal_label_same_sign}, it suffices to lower bound $\w^{(\infty)\top}\x_i$.  We have
    \begin{align*}
        \w^{(\infty)\top}\x_i &= \e_i^\top\XX\balpha^{(\infty)}\\
        &= \e_i^\top\mts{\nnorm[1]{\blambda}\I + (\XX - \nnorm[1]{\blambda}\I)}\balpha^{(\infty)}\\
        &= \nnorm[1]{\blambda}\alpha_i^{(\infty)} + \e_i^\top(\XX - \nnorm[1]{\blambda}\I)\balpha^{(\infty)}\\
        &\geq \nnorm[1]{\blambda}\alpha_i^{(\infty)} - \nnorm[2]{\XX - \nnorm[1]{\blambda}\I}\nnorm[2]{\balpha^{(\infty)}}\\
        &\geq \nnorm[1]{\blambda}\mts{\alpha_i^{(\infty)} - C\cdot \max \pts{\sqrt{\frac{n}{d_2}}, \frac{n}{d_{\infty}}}\nnorm[2]{\balpha^{(\infty)}}},
    \end{align*}
    where the last inequality applies Corollary~\ref{cor:concentration_op_norm}. Substituting the bounds of $\alpha_i^{(\infty)}$ and $\nnorm[2]{\balpha^{(\infty)}}$ from Lemma~\ref{lem:primal_label_same_sign}, we have 
    \begin{align*}
        \w^{(\infty)\top}\x_i &\geq \nnorm[1]{\blambda}\mts{-\frac{3y_{\max}}{C_g\nnorm[1]{\blambda}} - C\cdot \max \pts{\sqrt{\frac{n}{d_2}}, \frac{n}{d_{\infty}}}\frac{C_{\alpha}\sqrt{n}y_{\max}}{\nnorm[1]{\blambda}}}\\
        &\geq \nnorm[1]{\blambda}\mts{-\frac{3y_{\max}}{C_g\nnorm[1]{\blambda}} - C\cdot \frac{y_{\min}}{C_0 y_{\max}}\frac{C_{\alpha}y_{\max}}{\nnorm[1]{\blambda}}}\\
        &\geq -\frac{4}{C_g}y_{\max},
    \end{align*}
    where the inequalities above substitute $d_2 \geq C_0^2\frac{n^2 y_{\max}^2}{y_{\min}^2}$ and $d_{\infty} \geq C_0\frac{n^{1.5} y_{\max}}{y_{\min}}$ in Assumption~\ref{asm:high_dim_data} with $C_0\gtrsim C_{\alpha}^2$ and $C_{\alpha}\gtrsim \max\{C_g^2, C_yC_g\}$. Therefore, we have $(\w^{(\infty)\top}\x_i)^2 \leq \frac{16}{C_g^2} y_{\max}^2$ for all $i\in \tilde{S}_2$. Next, we upper bound the term $(\w^{(\infty)\top}\x_i - \w^{\star\top}\x_i)^2$ for all $i\in S_3$. We use the key idea that $\alpha_i^\star = 0$ for all $i\in S_3$. We have
    \begin{align*}
        \w^{(\infty)\top}\x_i - \w^{\star\top}\x_i &= \e_i^\top\XX\balpha^{(\infty)} - \e_i^\top\XX\balpha^\star\\
        &= \e_i^\top\mts{\nnorm[1]{\blambda}\I + (\XX - \nnorm[1]{\blambda}\I)}(\balpha^{(\infty)} - \balpha^\star)\\
        &= \nnorm[1]{\blambda}\alpha_i^{(\infty)} + \e_i^\top(\XX - \nnorm[1]{\blambda}\I)(\balpha^{(\infty)} - \balpha^\star)\\
        &\geq \nnorm[1]{\blambda}\alpha_i^{(\infty)} - \nnorm[2]{\XX - \nnorm[1]{\blambda}\I}\pts{\nnorm[2]{\balpha^{(\infty)}} + \nnorm[2]{\balpha^\star}}\\
        &\geq \nnorm[1]{\blambda}\mts{-\frac{3y_{\max}}{C_g\nnorm[1]{\blambda}} - C\cdot \max \pts{\sqrt{\frac{n}{d_2}}, \frac{n}{d_{\infty}}}\pts{\frac{C_{\alpha}\sqrt{n}y_{\max}}{\nnorm[1]{\blambda}} + \frac{C_g\sqrt{n}y_{\max}}{\nnorm[1]{\blambda}}}}\\
        &\geq \nnorm[1]{\blambda}\mts{-\frac{3y_{\max}}{C_g\nnorm[1]{\blambda}} - C\cdot \frac{y_{\min}}{C_0 y_{\max}}\pts{\frac{C_{\alpha}y_{\max}}{\nnorm[1]{\blambda}} + \frac{C_gy_{\max}}{\nnorm[1]{\blambda}}}}\\
        &\geq -\frac{4}{C_g}y_{\max},
    \end{align*}
    by applying the same argument and noting from Equation~\eqref{eq:alpha_star_upper} that $\nnorm[2]{\balpha^\star} \leq \frac{C_g\sqrt{n}y_{\max}}{\nnorm[1]{\blambda}}$. Substituting the upper bounds into Equation~\eqref{eq:difference_in_prediction} gives us
    \begin{align}
        \nnorm[2]{\X\w^{(\infty)} - \X\w^\star}^2 &= \sum_{i\in \tilde{S}_2} \pts{\w^{(\infty)\top}\x_i}^2 + \sum_{i\in S_3} \pts{\w^{(\infty)\top}\x_i - \w^{\star\top}\x_i}^2\nonumber\\
        &\leq \sum_{i\in \tilde{S}_2} \frac{16}{C_g^2}y_{\max}^2 + \sum_{i\in S_3} \frac{16}{C_g^2}y_{\max}^2\nonumber\\
        &= \frac{16}{C_g^2} n_- y_{\max}^2\label{eq:prediction_upper}.
    \end{align}
    Finally, putting together Equation~\eqref{eq:w_star_upper} and~\eqref{eq:prediction_upper}, we have
    \begin{align*}
        \nnorm[2]{\w^{(\infty)} -\w^\star}^2 \leq \frac{\nnorm[2]{\X\w^{(\infty)} - \X\w^\star}^2}{\mu_{n}(\XX)} \leq \frac{16n_-y_{\max}^2}{C_g\nnorm[1]{\blambda}},
    \end{align*}
    which completes the proof of the upper bound. Next, we derive the lower bound of $\|\w^{(\infty)} - \w^\star\|_2$ in a similar approach. We again start with the prediction distance, given by
    \begin{align}
        \nnorm[2]{\X\w^{(\infty)} - \X\w^\star}^2 = \nnorm[2]{\X\pts{\w^{(\infty)} -\w^\star}}^2 &\leq \mu_{1}(\X^\top\X) \nnorm[2]{\w^{(\infty)} -\w^\star}^2\nonumber\\
        &=\mu_{1}(\XX) \nnorm[2]{\w^{(\infty)} -\w^\star}^2.\label{eq:w_star_lower}
    \end{align}
    It suffices to lower bound $\|\X\w^{(\infty)} - \X\w^\star\|_2$ to get the lower bound of $\|\w^{(\infty)} - \w^\star\|_2$. By Equation~\eqref{eq:difference_in_prediction}, we have 
    \begin{align*}
        \nnorm[2]{\X\w^{(\infty)} - \X\w^\star}^2 &= \sum_{i\in \tilde{S}_2} \pts{\w^{(\infty)\top}\x_i}^2 + \sum_{i\in S_3} \pts{\w^{(\infty)\top}\x_i - \w^{\star\top}\x_i}^2.
    \end{align*}
    Therefore, we need to lower bound $(\w^{(\infty)\top}\x_i)^2$ for $i\in \tilde{S}_2$, and $(\w^{(\infty)\top}\x_i - \w^{\star\top}\x_i)^2$ for $i\in S_3$. For $\w^{(\infty)\top}\x_i$, since $\w^{(\infty)\top}\x_i < 0$, we have
    \begin{align*}
        \w^{(\infty)\top}\x_i &= \e_i^\top\XX\balpha^{(\infty)}\\
        &= \e_i^\top\mts{\nnorm[1]{\blambda}\I + (\XX - \nnorm[1]{\blambda}\I)}\balpha^{(\infty)}\\
        &= \nnorm[1]{\blambda}\alpha_i^{(\infty)} + \e_i^\top(\XX - \nnorm[1]{\blambda}\I)\balpha^{(\infty)}\\
        &\leq \nnorm[1]{\blambda}\alpha_i^{(\infty)} + \nnorm[2]{\XX - \nnorm[1]{\blambda}\I}\nnorm[2]{\balpha^{(\infty)}}\\
        &\leq \nnorm[1]{\blambda}\mts{\alpha_i^{(\infty)} + C\cdot \max \pts{\sqrt{\frac{n}{d_2}}, \frac{n}{d_{\infty}}}\nnorm[2]{\balpha^{(\infty)}}},
    \end{align*}
    where the last inequality applies Corollary~\ref{cor:concentration_op_norm}. Substituting the bounds of $\alpha_i^{(\infty)}$ and $\nnorm[2]{\balpha^{(\infty)}}$ from Lemma~\ref{lem:primal_label_same_sign}, we have
    \begin{align*}
        \w^{(\infty)\top}\x_i &\leq \nnorm[1]{\blambda}\mts{-\frac{y_{\min}}{C_{\alpha}\nnorm[1]{\blambda}} + C\cdot \max \pts{\sqrt{\frac{n}{d_2}}, \frac{n}{d_{\infty}}}\frac{C_{\alpha}\sqrt{n}y_{\max}}{\nnorm[1]{\blambda}}}\\
        &\leq \nnorm[1]{\blambda}\mts{-\frac{y_{\min}}{C_{\alpha}\nnorm[1]{\blambda}} + C\cdot \frac{y_{\min}}{C_0 y_{\max}}\frac{C_{\alpha}y_{\max}}{\nnorm[1]{\blambda}}}\\
        &\leq -(1-\frac{C\cdot C_{\alpha}^2}{C_0})\frac{y_{\min}}{C_{\alpha}},
    \end{align*}
    where the inequalities above substitute $d_2 \geq C_0^2\frac{n^2 y_{\max}^2}{y_{\min}^2}$ and $d_{\infty} \geq C_0\frac{n^{1.5} y_{\max}}{y_{\min}}$ in Assumption~\ref{asm:high_dim_data} with $C_0\gtrsim C_{\alpha}^2$. Similarly, for $\w^{(\infty)\top}\x_i - \w^{\star\top}\x_i$, we have
    \begin{align*}
        \w^{(\infty)\top}\x_i - \w^{\star\top}\x_i &= \e_i^\top\XX\balpha^{(\infty)} - \e_i^\top\XX\balpha^\star\\
        &= \e_i^\top\mts{\nnorm[1]{\blambda}\I + (\XX - \nnorm[1]{\blambda}\I)}(\balpha^{(\infty)} - \balpha^\star)\\
        &= \nnorm[1]{\blambda}\alpha_i^{(\infty)} + \e_i^\top(\XX - \nnorm[1]{\blambda}\I)(\balpha^{(\infty)} - \balpha^\star)\\
        &\leq \nnorm[1]{\blambda}\alpha_i^{(\infty)} + \nnorm[2]{\XX - \nnorm[1]{\blambda}\I}(\nnorm[2]{\balpha^{(\infty)}} + \nnorm[2]{\balpha^\star})\\
        &\leq \nnorm[1]{\blambda}\mts{-\frac{y_{\min}}{C_{\alpha}\nnorm[1]{\blambda}} + C\cdot \max \pts{\sqrt{\frac{n}{d_2}}, \frac{n}{d_{\infty}}}\pts{\frac{C_{\alpha}\sqrt{n}y_{\max}}{\nnorm[1]{\blambda}} + \frac{C_g\sqrt{n}y_{\max}}{\nnorm[1]{\blambda}}}}\\
        &\leq \nnorm[1]{\blambda}\mts{-\frac{y_{\min}}{C_{\alpha}\nnorm[1]{\blambda}} + C\cdot \frac{y_{\min}}{C_0 y_{\max}}\pts{\frac{C_{\alpha}y_{\max}}{\nnorm[1]{\blambda}} + \frac{C_gy_{\max}}{\nnorm[1]{\blambda}}}}\\
        &\leq -(1-\frac{2C\cdot C_{\alpha}^2}{C_0})\frac{y_{\min}}{C_{\alpha}},
    \end{align*}
    by applying the same argument and noting from Equation~\eqref{eq:alpha_star_upper} that $\nnorm[2]{\balpha^\star} \leq \frac{C_g\sqrt{n}y_{\max}}{\nnorm[1]{\blambda}}$. Substituting the lower bounds into Equation~\eqref{eq:difference_in_prediction} gives us
    \begin{align}
        \nnorm[2]{\X\w^{(\infty)} - \X\w^\star}^2 &= \sum_{i\in \tilde{S}_2} \pts{\w^{(\infty)\top}\x_i}^2 + \sum_{i\in S_3} \pts{\w^{(\infty)\top}\x_i - \w^{\star\top}\x_i}^2\nonumber\\
        &\geq \sum_{i\in \tilde{S}_2} (1-\frac{C\cdot C_{\alpha}^2}{C_0})^2\frac{y_{\min}^2}{C_{\alpha}^2} + \sum_{i\in S_3} (1-\frac{2C\cdot C_{\alpha}^2}{C_0})^2\frac{y_{\min}^2}{C_{\alpha}^2}\nonumber\\
        &\geq \pts{1 -\frac{2C\cdot C_{\alpha}^2}{C_0}}^2 \frac{n_- y_{\min}^2}{C_{\alpha}^2}\nonumber\\
        &= \frac{n_- y_{\min}^2}{\tilde{C}},\label{eq:prediction_lower}
    \end{align}
    where we let $\tilde{C} \coloneqq \frac{C_0^2C_{\alpha}^2}{\pts{C_0 -2C\cdot C_{\alpha}^2}^2} > 1$. Finally, putting together Equations~\eqref{eq:w_star_lower} and~\eqref{eq:prediction_lower}, we have
    \begin{align*}
        \nnorm[2]{\w^{(\infty)} -\w^\star}^2 \geq \frac{\nnorm[2]{\X\w^{(\infty)} - \X\w^\star}^2}{\mu_{1}(\XX)} \geq\frac{n_-y_{\min}^2}{\tilde{C}C_g\nnorm[1]{\blambda}}.
    \end{align*}
    This completes the proof of the lower bound.
\end{proof}

{
\subsection{Local Minimum Convergence Under a Not-All-Positive Initialization}\label{app:counter_example}
In this section, we present a simple and explicit counterexample showing that a single ReLU model initialized with a not-all-positive initialization can converge to a local minimum that is not global. 

\begin{lemma}[Local minimum convergence for not-all-positive initialization]\label{lem:counter_example}
Assume Assumptions~\ref{asm:label_bounds} and~\ref{asm:high_dim_data} hold. For a single ReLU model, a dataset $n = 2$ and $y_i > 0$ for all $i=1,2$, we choose the initialization $\w^{(0)} = \XXX \bepsilon$, where $\epsilon_1 = -\delta$ and $\epsilon_2 = \delta$, for some $0 < \delta \le \frac{1}{C} y_{\min}$. If $\x_1^\top\x_2 < 0$ and $|\x_1^\top\x_2| < \nnorm[2]{\x_2}^2$, with a step size $\eta < \frac{1}{\mu_1\pts{\XX}}$, gradient descent converges to a local minimum.
\end{lemma}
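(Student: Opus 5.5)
Since $\w^{(0)}=\XXX\bepsilon$, the initial primal variables are $\bbeta^{(0)}=\bepsilon$, so $\beta_1^{(0)}=-\delta<0$ and $\beta_2^{(0)}=\delta>0$. The plan is to show by induction that example $1$ stays inactive forever while example $2$ stays active. Then the activation pattern is frozen from $t_0=0$, and Lemmas~\ref{lem:final_phase} and~\ref{lem:conv_step_size} apply with $S=\{2\}$. The limit is $\w^{(\infty)}=\uargmin{\w:\x_2^\top\w=y_2}\nnorm[2]{\w-\w^{(0)}}$, at which $\beta_1^{(\infty)}\le 0<y_1$. Hence the loss converges to $\tfrac12 y_1^2>0$, whereas the global minimum value is $0$ (interpolation is feasible because $\mathrm{rank}(\X)=2$). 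The limit is a stationary point: the gradient $\X^\top\D(\bbeta)(\bbeta-\y)$ vanishes since $\beta_2=y_2$ and example $1$ is inactive. It is also a local minimum, because small perturbations keep $\beta_1<0$, so the loss is locally $\tfrac12(\beta_2-y_2)^2+\tfrac12 y_1^2$, which is minimized there. So it is a non-global local minimum.

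For the induction I will use the hypothesis that at step $t$ we have $\beta_1^{(t)}<0$ and $0<\beta_2^{(t)}\le y_2$. The base case holds since $\delta\le y_{\min}/C<y_2$. The primal update~\eqref{eq:single_primal_update} with $\D=\mathrm{diag}(0,1)$ gives
\[
\beta_2^{(t+1)}=\beta_2^{(t)}+\eta\nnorm[2]{\x_2}^2\bigl(y_2-\beta_2^{(t)}\bigr),\qquad \beta_1^{(t+1)}=\beta_1^{(t)}+\eta\,\x_1^\top\x_2\bigl(y_2-\beta_2^{(t)}\bigr).
\]
Since $\eta\nnorm[2]{\x_2}^2\le \eta\mu_1(\XX)<1$, the residual satisfies $y_2-\beta_2^{(t+1)}=(1-\eta\nnorm[2]{\x_2}^2)(y_2-\beta_2^{(t)})\in[0,\,y_2-\beta_2^{(t)}]$. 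So $\beta_2$ increases monotonically toward $y_2$ without overshooting, and it stays positive. Because $\x_1^\top\x_2<0$ and $y_2-\beta_2^{(t)}\ge 0$, the increment of $\beta_1$ is nonpositive, so $\beta_1^{(t+1)}\le\beta_1^{(t)}<0$. This closes the induction. It also shows directly that $\beta_1$ is monotonically decreasing; one could instead invoke Lemma~\ref{lem:pos_stay_pos_fixed} for example $2$, but the explicit computation is cleaner for $n=2$.

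The main subtlety is getting the sign of the cross term right. A negative inner product $\x_1^\top\x_2$ means that fitting example $2$ pushes $\beta_1$ further negative, so the dead example can never be revived. The condition $|\x_1^\top\x_2|<\nnorm[2]{\x_2}^2$, together with Assumptions~\ref{asm:label_bounds} and~\ref{asm:high_dim_data}, guarantees nondegeneracy and the step-size bound, but it is not otherwise needed in this argument. I will remark that Assumption~\ref{asm:high_dim_data} only ensures full rank, so that a zero-loss global minimum exists, which makes the limit strictly suboptimal. Finally, I will spell out the closed form of the limit: $\beta_1^{(\infty)}=-\delta+\frac{\x_1^\top\x_2}{\nnorm[2]{\x_2}^2}(y_2-\delta)<0$.
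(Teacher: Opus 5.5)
Your proof is correct, but it keeps $\beta_1$ negative by a different mechanism than the paper, and it also proves more.

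\textbf{The paper's route.} The paper unrolls $\beta_1^{(\tilde t+1)}=\beta_1^{(0)}-\eta\,\x_1^\top\x_2\sum_{t\le\tilde t}(\beta_2^{(t)}-y_2)$. It then uses the crude bound $\beta_2^{(t)}-y_2\le\delta(1-\eta\nnorm[2]{\x_2}^2)^t$ and sums the geometric series, giving $\beta_1^{(\tilde t+1)}\le-\delta+\frac{|\x_1^\top\x_2|}{\nnorm[2]{\x_2}^2}\delta<0$. This is exactly where it needs $|\x_1^\top\x_2|<\nnorm[2]{\x_2}^2$. In exchange, it never needs $\delta\le y_2$. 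It would still work for an overshooting initialization $\delta>y_2$, where $\beta_1$ does increase, but by a total amount less than $\delta$.

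\textbf{Your route.} You use $\delta\le y_{\min}/C\le y_2$ to keep the residual $y_2-\beta_2^{(t)}$ nonnegative. Then every increment of $\beta_1$ is nonpositive, so $\beta_1$ is monotone nonincreasing. This makes the hypothesis $|\x_1^\top\x_2|<\nnorm[2]{\x_2}^2$ redundant under the stated smallness of $\delta$, and it gives the explicit limit $\beta_1^{(\infty)}=-\delta+\frac{\x_1^\top\x_2}{\nnorm[2]{\x_2}^2}(y_2-\delta)$.

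\textbf{What you add.} The paper stops after showing $\beta_1^{(t)}<0$ for all $t$ and simply asserts that this means convergence to a local minimum. You actually establish convergence via Lemmas~\ref{lem:final_phase}--\ref{lem:conv_step_size}. You also check that the limit is a genuine local minimum, since near it the loss equals $\tfrac12 y_1^2+\tfrac12(\beta_2-y_2)^2$. Finally, you show it is not global, because zero loss is attainable.

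\textbf{One side remark to fix.} The condition $|\x_1^\top\x_2|<\nnorm[2]{\x_2}^2$ does not guarantee nondegeneracy: $\x_1=-\tfrac12\x_2$ satisfies it with $\mathrm{rank}(\X)=1$. Full rank comes from Assumption~\ref{asm:row_rank}, which is needed anyway for $(\XX)^{-1}$ in the initialization to exist. Likewise, $\eta\nnorm[2]{\x_2}^2<1$ follows from $\eta<1/\mu_1(\XX)$ alone, so that condition plays no role in the step-size bound either.
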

\begin{proof}
To show that gradient descent converges to a local minimum, it suffices to prove that $\beta_1^{(t)} < 0$ for all $t \ge 0$: since $y_1 > 0$, this condition implies that the first training example is not perfectly fit.
%
Hence, we prove by induction that if $\beta_1^{(t)} < 0$ for all $t \le \tilde{t}$, then $\beta_1^{(\tilde{t}+1)} < 0$. 
Note that the base case, i.e. $\beta_1^{(0)} < 0$, follows from the initialization condition $\beta_1^{(0)} = \epsilon_1 = - \delta < 0$.
Therefore, we only need to show the inductive step.
From the primal gradient update in Equation~\eqref{eq:primal_update}, we obtain the gradient update of $\beta_1^{(\tilde{t}+1)}$ as
\begin{align}
\beta_1^{(\tilde{t}+1)}
&= \beta_1^{(\tilde{t})} - \eta \nnorm[2]{\x_1}^2 \cdot \underbrace{\mathbbm{1}_{\beta_1^{(\tilde{t})} > 0}}_{=0} \cdot \bigl(\beta_1^{(\tilde{t})} - y_1\bigr)- \eta \x_1^\top \x_2 \cdot \mathbbm{1}_{\beta_2^{(\tilde{t})} > 0} \cdot \bigl(\beta_2^{(\tilde{t})} - y_2\bigr) \nonumber \\
&= \beta_1^{(0)} - \eta \x_1^\top \x_2 \sum_{t=0}^{\tilde{t}} \mathbbm{1}_{\beta_2^{(t)} > 0}\bigl(\beta_2^{(t)} - y_2\bigr). \label{eq:beta1}
\end{align}
On the other hand, for $\beta_2^{(t)}$,  since $\beta_1^{(t)} < 0$ for all $t\leq \tilde{t}$, we have
\begin{align*}
\beta_2^{(t+1)} = \beta_2^{(t)} - \eta \nnorm[2]{\x_2}^2\mathbbm{1}_{\beta_2^{(t)} > 0} \bigl(\beta_2^{(t)} - y_2\bigr) = \beta_2^{(t)}(1 - \eta \nnorm[2]{\x_2}^2\mathbbm{1}_{\beta_2^{(t)} > 0} ) + \eta \nnorm[2]{\x_2}^2y_2\mathbbm{1}_{\beta_2^{(t)} > 0},
\end{align*}
for all $t\leq \tilde{t}$. Since $\eta \nnorm[2]{\x_2}^2 < \frac{\nnorm[2]{\x_2}^2}{\mu_1\pts{\XX}} \leq 1$, we have $\beta_2^{(t+1)} > 0$ if $\beta_2^{(t)} > 0$. Since $\beta_2^{(0)} = \delta > 0$, we have shown that $\beta_2^{(t)} > 0$ for all $t\leq \tilde{t}$. Therefore, Equation~\eqref{eq:beta1} becomes
\begin{align}
    \beta_1^{(\tilde{t}+1)} = \beta_1^{(0)} - \eta \x_1^\top \x_2 \sum_{t=0}^{\tilde{t}} \bigl(\beta_2^{(t)} - y_2\bigr).\label{eq:beta15}
\end{align}


Next, we show an upper bound for $\beta_2^{(t)}$. Since $\beta_1^{(t)} < 0$ and $\beta_2^{(t)} > 0$ for all $t\leq \tilde{t}$, we can show that the gradient update for $\beta_2^{(t)}$ satisfies
\begin{align*}
\beta_2^{(t)}
&= \beta_2^{(t-1)} - \eta \nnorm[2]{\x_2}^2 \bigl(\beta_2^{(t-1)} - y_2\bigr) \\
&= \beta_2^{(t-1)} (1 - \eta \nnorm[2]{\x_2}^2) + \eta \nnorm[2]{\x_2}^2 y_2 \\
&= \beta_2^{(0)} (1 - \eta \nnorm[2]{\x_2}^2)^t 
+ \eta \nnorm[2]{\x_2}^2 y_2 \sum_{k=0}^{t-1} (1 - \eta \nnorm[2]{\x_2}^2)^k
\end{align*}
for all $t \leq \tilde{t}$.
Again, since $\eta \nnorm[2]{\x_2}^2 < 1$, the geometric series yields
\[
\sum_{k=0}^{t-1} (1 - \eta \nnorm[2]{\x_2}^2)^k 
\le \frac{1}{\eta \nnorm[2]{\x_2}^2},
\]
and therefore, we have
\begin{align}
    \beta_2^{(t)} \le \beta_2^{(0)} (1 - \eta \nnorm[2]{\x_2}^2)^t + y_2.\label{eq:counter_bound}
\end{align}
Substituting Equation~\eqref{eq:counter_bound} into Equation~\eqref{eq:beta15} and using the assumption that $\x_1^\top \x_2 < 0$, we obtain
\begin{align*}
\beta_1^{(\tilde{t}+1)}
&\le \beta_1^{(0)} - \eta \x_1^\top \x_2 \sum_{t=0}^{\tilde{t}} \beta_2^{(0)} (1 - \eta \nnorm[2]{\x_2}^2)^t \\
&\le \beta_1^{(0)} - \frac{\x_1^\top \x_2}{\nnorm[2]{\x_2}^2} \beta_2^{(0)} \\
&= -\delta + \frac{|\x_1^\top \x_2|}{\nnorm[2]{\x_2}^2} \delta < 0,
\end{align*}
where the second inequality again follows from the bound on the geometric series, and the last inequality uses the assumption that $|\x_1^\top \x_2| < \|\x_2\|_2^2$. This completes the proof of the inductive step and therefore the proof that $\beta_1^{(t)} < 0$ for all $t \geq 0$, implying that gradient descent remains stuck in a local minimum.
\end{proof}
}

According to Lemma~\ref{lem:counter_example}, a simple example is given by i.i.d. Gaussian vectors $\x_1,\x_2 \sim \mathcal{N}(\zero,\I)$ for $n=2$. By symmetry, the event $\x_1^\top \x_2 < 0$ occurs with constant probability. Moreover, in the high-dimensional regime ($d \gg n$), concentration implies that $|\x_1^\top \x_2| < \nnorm[2]{\x_2}^2$ with a high probability.

\newpage
\section{Proofs for the Two ReLU Model (\texorpdfstring{$m=2$}{m=2}) Trained with Gradient Descent}\label{app:multiple_relu_gd}

In this section, we present the proofs concerning the behavior of the $2$-ReLU model trained with gradient descent.

\subsection{Proof of Lemma~\ref{lem:multiple_relu_w_star_eqivalence} (Characterization of \texorpdfstring{$\w^\star$}{w*} )}\label{app:proof_multiple_relu_w_star_eqivalence}
\begin{proof}(Lemma~\ref{lem:multiple_relu_w_star_eqivalence})\\
    We first show that the feasible set of~\eqref{eq:multiple_relu_minimum_norm_sol} is nonempty.
    Define $\tilde{\w}_{\oplus} \coloneqq \X^\top(\XX)^{-1}\y_{\oplus}$ and $\tilde{\w}_{\ominus} \coloneqq \X^\top(\XX)^{-1}\y_{\ominus}$, where we define $y_{\oplus,i} \coloneqq \max\{y_i, 0\}$ and $y_{\ominus,i} \coloneqq -\min\{y_i, 0\}$. Then for all $i\in[n]$, we have $\sigma(\tilde{\w}_{\oplus}^\top\x_i) - \sigma(\tilde{\w}_{\ominus}^\top\x_i) = \sigma(y_{\oplus,i}) - \sigma(y_{\ominus,i}) = y_i$. Thus, $\{\tilde{\w}_{\oplus}, \tilde{\w}_{\ominus}\}$ is feasible, and the feasible set is nonempty.

    Next, we show that any optimal solution of~\eqref{eq:multiple_relu_minimum_norm_sol} corresponds to an optimal solution of~\eqref{eq:multiple_relu_w_star_eq_form}. Let $\{\w_{\oplus}^\star, \w_{\ominus}^\star\}$ be an optimal solution of~\eqref{eq:multiple_relu_minimum_norm_sol}.

    \paragraph{Case 1: $i\in S_+$ (positive labels)}~\\
    For $i\in S_+$, since $\sigma(\w_{\oplus}^{\star\top}\x_i) - \sigma(\w_{\ominus}^{\star\top}\x_i) = y_i > 0$, we have
    \begin{align*}
        \sigma(\w_{\oplus}^{\star\top}\x_i) = y_i + \sigma(\w_{\ominus}^{\star\top}\x_i) \geq y_i > 0.
    \end{align*}
    Hence, $\w_{\oplus}^{\star\top}\x_i > 0$ and $\sigma(\w_{\oplus}^{\star\top}\x_i) = \w_{\oplus}^{\star\top}\x_i$. There are two possible activation patterns:
    \begin{itemize}
        \item If $\w_{\ominus}^{\star\top}\x_i \leq 0$, then we have 
        $\sigma(\w_{\oplus}^{\star\top}\x_i) - \sigma(\w_{\ominus}^{\star\top}\x_i) = \w_{\oplus}^{\star\top}\x_i = y_i$.
        \item If $\w_{\ominus}^{\star\top}\x_i \geq 0$, then we have
        $\sigma(\w_{\oplus}^{\star\top}\x_i) - \sigma(\w_{\ominus}^{\star\top}\x_i) = \w_{\oplus}^{\star\top}\x_i - \w_{\ominus}^{\star\top}\x_i = y_i$.
    \end{itemize}
    (Note that $\w_{\ominus}^{\star\top}\x_i = 0$ is covered by both cases.)
    \paragraph{Case 2: $i\in S_-$ (negative labels)}~\\
    For $i\in S_-$, since $\sigma(\w_{\oplus}^{\star\top}\x_i) - \sigma(\w_{\ominus}^{\star\top}\x_i) = y_i < 0$, we obtain
    \begin{align*}
        \sigma(\w_{\ominus}^{\star\top}\x_i) = -y_i + \sigma(\w_{\ominus}^{\star\top}\x_i) \geq -y_i > 0,
    \end{align*}
    which implies $\w_{\ominus}^{\star\top}\x_i > 0$ and $\sigma(\w_{\ominus}^{\star\top}\x_i) = \w_{\ominus}^{\star\top}\x_i$. Again, two activation patterns are possible:
    \begin{itemize}
        \item If $\w_{\oplus}^{\star\top}\x_i \leq 0$, then we have
        $\sigma(\w_{\oplus}^{\star\top}\x_i) - \sigma(\w_{\ominus}^{\star\top}\x_i) = -\w_{\ominus}^{\star\top}\x_i = y_i$.
        \item If $\w_{\oplus}^{\star\top}\x_i \geq 0$, then we have
        $\sigma(\w_{\oplus}^{\star\top}\x_i) - \sigma(\w_{\ominus}^{\star\top}\x_i) = \w_{\oplus}^{\star\top}\x_i - \w_{\ominus}^{\star\top}\x_i = y_i$.
    \end{itemize}
    (Note that $\w_{\oplus}^{\star\top}\x_i = 0$ is covered by both cases.) 
    
    Combining the two cases (in total four patterns), there exist disjoint partitions
    \begin{align*}
        S_1 \cup S_2 = S_+, \; S_1\cap S_2 =\varnothing, \text{ and } S_3 \cup S_4 = S_-, \; S_3\cap S_4 =\varnothing,
    \end{align*}
    such that the optimal solution $\{\w_{\oplus}^\star, \w_{\ominus}^\star\}$ satisfies
    \begin{alignat}{3}
        &\w_{\oplus}^{\star\top}\x_i \phantom{-\w_{\ominus}^{\star\top}\x_i} &= y_i, \phantom{-\w_{\oplus}^{\star\top}\x_i < 0, -}\w_{\ominus}^{\star\top}\x_i\leq 0, &\quad\text{ for all } i\in S_1,\nonumber\\
        &\w_{\oplus}^{\star\top}\x_i - \w_{\ominus}^{\star\top}\x_i &= y_i, \phantom{-\w_{\oplus}^{\star\top}\x_i < 0} -\w_{\ominus}^{\star\top}\x_i \leq 0, &\quad\text{ for all } i\in S_2,\nonumber\\
        &\phantom{\w_{\oplus}^{\star\top}\x_i} - \w_{\ominus}^{\star\top}\x_i &= y_i, \phantom{-}\w_{\oplus}^{\star\top}\x_i\leq 0, \phantom{-\w_{\ominus}^{\star\top}\x_i < 0,} &\quad\text{ for all } i\in S_3,\nonumber\\
        &\w_{\oplus}^{\star\top}\x_i - \w_{\ominus}^{\star\top}\x_i &= y_i, -\w_{\oplus}^{\star\top}\x_i \leq 0, \phantom{-\w_{\ominus}^{\star\top}\x_i < 0,} &\quad\text{ for all } i\in S_4.\nonumber
    \end{alignat}
    These constraints are exactly those in~\eqref{eq:multiple_relu_w_star_eq_form}. Moreover, the feasible set of~\eqref{eq:multiple_relu_w_star_eq_form} is a subset of the feasible set of~\eqref{eq:multiple_relu_minimum_norm_sol}, since every feasible solution of~\eqref{eq:multiple_relu_w_star_eq_form} also satisfies the constraints of~\eqref{eq:multiple_relu_minimum_norm_sol} (the converse need not hold). Since $\{\w_{\oplus}^\star, \w_{\ominus}^\star\}$ is feasible for both problems and is optimal for the larger feasible set~\eqref{eq:multiple_relu_minimum_norm_sol}, it must also be optimal for the restricted problem~\eqref{eq:multiple_relu_w_star_eq_form}.
\end{proof}

\subsection{Proof of Theorem~\ref{thm:multiple_relu_gd_high_dim_implicit_bias}  (High-dimensional Implicit Bias)}\label{app:proof_multiple_relu_gd_high_dim_implicit_bias}
In this section, we present the proof of Theorem~\ref{thm:multiple_relu_gd_high_dim_implicit_bias}. For the $2$-ReLU model ($m=2$), the primal-dual gradient update in~\eqref{eq:primal_dual_update} simplifies to
\begin{subequations}
\begin{alignat}{2}
    &\text{(Primal) }\quad\quad &&\bbeta_{\oplus}^{(t+1)} = \bbeta_{\oplus}^{(t)} - \eta \X \X^\top \D(\bbeta_{\oplus}^{(t)})(h_{\bTheta^{(t)}}(\X) - \y),\label{eq:mul_primal_update_plus} \\
    &\text{(Dual) }\quad\quad &&\balpha_{\oplus}^{(t+1)} = \balpha_{\oplus}^{(t)} - \eta \D(\bbeta_{\oplus}^{(t)})(h_{\bTheta^{(t)}}(\X) - \y),\label{eq:mul_dual_update_plus}
\end{alignat}
\end{subequations}
and
\begin{subequations}
\begin{alignat}{2}
    &\text{(Primal) }\quad\quad &&\bbeta_{\ominus}^{(t+1)} = \bbeta_{\ominus}^{(t)} + \eta \X \X^\top \D(\bbeta_{\ominus}^{(t)})(h_{\bTheta^{(t)}}(\X) - \y),\label{eq:mul_primal_update_minus} \\
    &\text{(Dual) }\quad\quad &&\balpha_{\ominus}^{(t+1)} = \balpha_{\ominus}^{(t)} + \eta \D(\bbeta_{\ominus}^{(t)})(h_{\bTheta^{(t)}}(\X) - \y).\label{eq:mul_dual_update_minus}
\end{alignat}
\end{subequations}
Before proceeding to the proof, we again introduce a set of sufficient conditions under which the signs of the primal variables agree with the signs of the labels times the sign of the ReLU neuron at iteration $t$, and moreover, these conditions are preserved at iteration $t+1$. We use the results of Lemma~\ref{lem:pos_stay_pos} and Lemma~\ref{lem:neg_stay_neg} again to prove Lemma~\ref{lem:primal_label_same_sign_two}.

\begin{lemma}\label{lem:primal_label_same_sign_two}
    Under Assumptions~\ref{asm:label_bounds} and~\ref{asm:high_dim_data}, suppose the gradient descent step size satisfies $\eta \leq \frac{1}{C_g\nnorm[1]{\blambda}}$. For a $2$-ReLU model, if the following eight conditions hold at some iteration $t\geq 0$, then they also hold at iteration $t+1$. 
    \begin{enumerate}[label=\alph*.,ref=\alph*]
        \item \label{cond:beta_plus_pos} $\beta_{\oplus,i}^{(t)} > 0$ for all $i \in [n]$ with $y_i > 0$.
        \item \label{cond:beta_minus_pos} $\beta_{\ominus,j}^{(t)} > 0$ for all $j \in [n]$ with $y_j < 0$.
        \item \label{cond:alpha_plus_bounds} $-\frac{3y_{\max}}{C_g\nnorm[1]{\blambda}} \leq \alpha_{\oplus,j}^{(t)} \leq -\frac{y_{\min}}{C_{\alpha}\nnorm[1]{\blambda}}$ for all $j \in [n]$ with $y_j < 0$.
        \item \label{cond:alpha_minus_bounds} $-\frac{3y_{\max}}{C_g\nnorm[1]{\blambda}} \leq \alpha_{\ominus,i}^{(t)} \leq -\frac{y_{\min}}{C_{\alpha}\nnorm[1]{\blambda}}$ for all $i \in [n]$ with $y_i > 0$.
        \item \label{cond:sigma_beta_bounds} $\nnorm[2]{\bbeta_{\oplus, +}^{(t)} - \y_+}\leq C_y\nnorm[2]{\y_+}$, and $\nnorm[2]{\bbeta_{\ominus,-}^{(t)} + \y_-} \leq C_y\nnorm[2]{\y_-}$.
        \item \label{cond:alpha_norm_bounds} $\nnorm[2]{\balpha_{\oplus}^{(t)}} \leq \frac{C_{\alpha}\sqrt{n}y_{\max}}{\nnorm[1]{\blambda}}$ and $\nnorm[2]{\balpha_{\ominus}^{(t)}} \leq \frac{C_{\alpha}\sqrt{n}y_{\max}}{\nnorm[1]{\blambda}}$.
        \item \label{cond:beta_plus_neg} $\beta_{\oplus,j}^{(t)} \leq 0$ for all $j \in [n]$ with $y_j < 0$.
        \item \label{cond:beta_minus_neg} $\beta_{\ominus,i}^{(t)} \leq 0$ for all $i \in [n]$ with $y_i > 0$.
    \end{enumerate}
    Consequently, the set of active examples consists exactly of the positively labeled examples for the positive neuron, and the activation pattern remains unchanged, i.e., $\D(\bbeta_{\oplus}^{(t)}) = \D(\bbeta_{\oplus}^{(t+1)})$. The set of active examples consists exactly of the negatively labeled examples for the negative neuron, and the activation pattern remains unchanged, i.e., $\D(\bbeta_{\ominus}^{(t)}) = \D(\bbeta_{\ominus}^{(t+1)})$.
\end{lemma}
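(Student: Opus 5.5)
I will mirror the proof of Lemma~\ref{lem:primal_label_same_sign}. The point is that the activation conditions decouple the two neurons: under conditions~\eqref{cond:beta_plus_pos}, \eqref{cond:beta_minus_pos}, \eqref{cond:beta_plus_neg} and~\eqref{cond:beta_minus_neg} at iteration $t$, the network output is $h_{\bTheta^{(t)}}(\x_i)=\beta_{\oplus,i}^{(t)}$ for $y_i>0$ and $h_{\bTheta^{(t)}}(\x_j)=-\beta_{\ominus,j}^{(t)}$ for $y_j<0$.

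Hence $h_{\bTheta^{(t)}}(\X)-\y=\begin{bmatrix}\bbeta_{\oplus,+}^{(t)}-\y_+\\ -(\bbeta_{\ominus,-}^{(t)}+\y_-)\end{bmatrix}$. By condition~\eqref{cond:sigma_beta_bounds},
\begin{align*}
\nnorm[2]{h_{\bTheta^{(t)}}(\X)-\y}^2\le C_y^2\bigl(\nnorm[2]{\y_+}^2+\nnorm[2]{\y_-}^2\bigr)=C_y^2\nnorm[2]{\y}^2.
\end{align*}
Moreover, $\beta_{\oplus,i}^{(t)}=s_\oplus h_{\bTheta^{(t)}}(\x_i)$ for $y_i>0$ and $\beta_{\ominus,j}^{(t)}=s_\ominus h_{\bTheta^{(t)}}(\x_j)$ for $y_j<0$. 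Lemma~\ref{lem:pos_stay_pos}, applied to each neuron, then gives conditions~\eqref{cond:beta_plus_pos} and~\eqref{cond:beta_minus_pos} at $t+1$.

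Conditions~\eqref{cond:beta_plus_neg} and~\eqref{cond:beta_minus_neg} at time $t$ zero out the corresponding diagonal entries of $\D(\bbeta_\oplus^{(t)})$ and $\D(\bbeta_\ominus^{(t)})$. The dual updates~\eqref{eq:mul_dual_update_plus} and~\eqref{eq:mul_dual_update_minus} therefore leave $\balpha_{\oplus,-}$ and $\balpha_{\ominus,+}$ frozen, which yields conditions~\eqref{cond:alpha_plus_bounds} and~\eqref{cond:alpha_minus_bounds} at $t+1$.

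For condition~\eqref{cond:sigma_beta_bounds}, the positive neuron is only updated on $S_+$, with residual $\bbeta_{\oplus,+}^{(t)}-\y_+$. Its primal update restricted to the positive block reads
\begin{align*}
\bbeta_{\oplus,+}^{(t+1)}-\y_+=(\I-\eta\X_+\X_+^\top)(\bbeta_{\oplus,+}^{(t)}-\y_+).
\end{align*}
This is exactly a linear-regression GD step on $\X_+$. Since $\eta\le 1/(C_g\nnorm[1]{\blambda})\le 1/\mu_1(\XX)$ by Lemma~\ref{lem:concentration_eigenvalues}, the matrix $\I-\eta\X_+\X_+^\top$ has eigenvalues in $[0,1)$, so the residual norm does not increase. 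The negative neuron behaves symmetrically on $S_-$: its residual $\bbeta_{\ominus,-}+\y_-$ evolves as $(\I-\eta\X_-\X_-^\top)(\bbeta_{\ominus,-}^{(t)}+\y_-)$. One thing to check here is that the cross-neuron coupling vanishes. It does, because on $S_+$ only the $\oplus$ neuron is active, so the residual entries seen by $\oplus$ involve only $\bbeta_{\oplus,+}$, and likewise for $\ominus$.

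Condition~\eqref{cond:alpha_norm_bounds} follows by repeating Part (d) of Lemma~\ref{lem:primal_label_same_sign} for each neuron. Using $\bbeta_{\oplus,+}=\X_+\X_+^\top\balpha_{\oplus,+}+\X_+\X_-^\top\balpha_{\oplus,-}$, together with the bounds on $\balpha_{\oplus,-}$ just obtained, $\mu_{n_+}(\X_+\X_+^\top)\ge\nnorm[1]{\blambda}/C_g$, and $\nnorm[2]{\X_+\X_-^\top}\le C_g\nnorm[1]{\blambda}$, we get $\nnorm[2]{\balpha_\oplus^{(t+1)}}\le C_\alpha\sqrt n y_{\max}/\nnorm[1]{\blambda}$; the same argument with roles swapped handles $\balpha_\ominus$. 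Finally, Lemma~\ref{lem:neg_stay_neg}, applied with the freshly established conditions~\eqref{cond:alpha_plus_bounds}, \eqref{cond:alpha_minus_bounds} and~\eqref{cond:alpha_norm_bounds} at $t+1$, gives conditions~\eqref{cond:beta_plus_neg} and~\eqref{cond:beta_minus_neg} at $t+1$. Together with the positivity conditions this shows that both activation patterns are unchanged.

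The main obstacle is the bookkeeping in the decoupling step, i.e.\ confirming that both the residual and the hypothesis $\beta_{k,i}^{(t)}\ge s_k h_{\bTheta^{(t)}}(\x_i)$ of Lemma~\ref{lem:pos_stay_pos} hold with equality despite two neurons being present. Everything else is a verbatim transcription of the single-neuron argument, with a union bound over the constant number of high-probability events.
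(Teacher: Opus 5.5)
Your proposal is correct and follows the paper's proof essentially step for step. It decouples the residual using the sign conditions, applies Lemma~\ref{lem:pos_stay_pos} to each neuron, freezes the off-label dual coordinates, and uses that the restricted linear-regression residual is non-increasing. It then repeats the single-neuron dual-norm bound and closes with Lemma~\ref{lem:neg_stay_neg}. The only cosmetic difference is that you prove condition (e) through the explicit residual contraction $\bigl(\I-\eta\X_+\X_+^\top\bigr)$ instead of the paper's descent-lemma argument, which is equivalent, and you check the vanishing cross-neuron coupling more explicitly than the paper does.
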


\begin{proof}(Lemma~\ref{lem:primal_label_same_sign_two})
    We now verify that these conditions are preserved from iteration $t$ to $t+1$.

\begin{enumerate}[label=Part (\alph*):,leftmargin=3\parindent]
    \item By conditions~\eqref{cond:beta_plus_pos},~\eqref{cond:beta_minus_pos},~\eqref{cond:sigma_beta_bounds},~\eqref{cond:beta_plus_neg} and~\eqref{cond:beta_minus_neg} at iteration $t$, we have 
    \begin{align*}
        \nnorm[2]{h_{\bTheta^{(t)}}(\X) - \y}^2 &= \nnorm[2]{\sigma(\bbeta_{\oplus}^{(t)}) - \sigma(\bbeta_{\ominus}^{(t)}) - \y}^2\\
        &= \nnorm[2]{\sigma(\bbeta_{\oplus}^{(t)}) -\begin{bmatrix} \y_+ \\ \zero \end{bmatrix} - (\sigma(\bbeta_{\ominus}^{(t)}) + \begin{bmatrix}  \zero \\ \y_- \end{bmatrix})}^2\\
        &= \nnorm[2]{\bbeta_{\oplus,+}^{(t)} -\y_+}^2 + \nnorm[2]{\bbeta_{\ominus,-}^{(t)} + \y_-}^2 \leq C_y^2\nnorm[2]{\y}^2,
    \end{align*}
    and therefore, $\nnorm[2]{h_{\bTheta^{(t)}}(\X) - \y} \leq C_y\nnorm[2]{\y}$. Together with $h_{\bTheta^{(t)}}(\x_i) = \beta_{\oplus,i}^{(t)}$ and  condition~\eqref{cond:beta_plus_pos}, the assumptions of Lemma~\ref{lem:pos_stay_pos} are satisfied for all $i$ with $y_i > 0$. Consequently, $\beta_{\oplus,i}^{(t+1)} > 0$ for all $i \in [n]$ with $y_i > 0$.
    
    \item According to the proof of part~\eqref{cond:beta_plus_pos}, we have $\nnorm[2]{h_{\bTheta^{(t)}}(\X) - \y} \leq C_y\nnorm[2]{\y}$ and\\ $-h_{\bTheta^{(t)}}(\x_j) = \beta_{\ominus,j}^{(t)}$. Together with condition~\eqref{cond:beta_minus_pos}, the assumptions of Lemma~\ref{lem:pos_stay_pos} are satisfied for all $j$ with $y_j < 0$. Consequently, we have $\beta_{\ominus,j}^{(t+1)} > 0$ for all $j \in [n]$ with $y_j < 0$.
    
    \item According to the dual gradient update in Equation~\eqref{eq:mul_dual_update_plus}, and using condition~\eqref{cond:beta_plus_neg} at iteration $t$, we have:
    \[
    \alpha_{\oplus,j}^{(t+1)} = \alpha_{\oplus,j}^{(t)} \quad \text{for all } j \in [n] \text{ with } y_j < 0.
    \]
    Therefore, condition (\ref{cond:alpha_plus_bounds}) continues to hold at iteration $t + 1$.
    
    \item According to the dual gradient update in Equation~\eqref{eq:mul_dual_update_minus}, and using conditions~\eqref{cond:beta_minus_neg} at iteration $t$, we have:
    \[
    \alpha_{\ominus,i}^{(t+1)} = \alpha_{\ominus,i}^{(t)} \quad \text{for all } i \in [n] \text{ with } y_i > 0.
    \]
    Therefore, condition (\ref{cond:alpha_minus_bounds}) continues to hold at iteration $t + 1$.
    
    \item By conditions (\ref{cond:beta_plus_pos}), (\ref{cond:beta_minus_pos}), (\ref{cond:beta_plus_neg}), and (\ref{cond:beta_minus_neg}), the gradient update at iteration $t$ for $\bbeta_{\oplus}^{(t)}$ depends only on the positively labeled examples, and the update for $\bbeta_{\ominus}^{(t)}$ depends only on the negatively labeled examples. Hence, the gradient update for an individual neuron is equivalent to gradient descent on a certain linear regression problem. Since the step size satisfies $\eta \leq \frac{1}{C_g\nnorm[1]{\blambda}}$, the linear regression squared loss is monotonically nonincreasing (as in the proof of Lemma~\ref{lem:conv_step_size}), and by condition~\eqref{cond:sigma_beta_bounds} at iteration $t$, we obtain
    \begin{align*}
        \nnorm[2]{\bbeta_{\oplus,+}^{(t+1)} - \y_+} &\leq \nnorm[2]{\bbeta_{\oplus,+}^{(t)} - \y_+}\leq C_y\nnorm[2]{\y_+},\\
        \nnorm[2]{-\bbeta_{\ominus,-}^{(t+1)} - \y_-} &\leq \nnorm[2]{-\bbeta_{\ominus,-}^{(t)} - \y_-}\leq C_y\nnorm[2]{\y_-},
    \end{align*}
    where we use $\y_+$ ($\y_-$) to denote the vector of positively labeled (negatively labeled) examples. Therefore, condition (\ref{cond:sigma_beta_bounds}) holds at iteration $t + 1$.
    
    \item Following the same argument as in Part (\ref{eq:cond4}) of Lemma \ref{lem:primal_label_same_sign}, using conditions (\ref{cond:alpha_plus_bounds}), (\ref{cond:alpha_minus_bounds}), and (\ref{cond:sigma_beta_bounds}) at iteration $t + 1$, together with the eigenvalue bounds from Lemma \ref{lem:concentration_eigenvalues}, we have
    \[
    \nnorm[2]{\balpha_{\oplus}^{(t+1)}} \leq \frac{C_{\alpha}\sqrt{n}y_{\max}}{\nnorm[1]{\blambda}}, \quad \nnorm[2]{\balpha_{\ominus}^{(t+1)}} \leq \frac{C_{\alpha}\sqrt{n}y_{\max}}{\nnorm[1]{\blambda}}
    \]
    with probability at least $1-2e^{-n/C_g}$.
    Thus, condition (\ref{cond:alpha_norm_bounds}) holds at iteration $t + 1$.
    
    \item By Lemma~\ref{lem:neg_stay_neg}, since conditions (\ref{cond:alpha_plus_bounds}) and (\ref{cond:alpha_norm_bounds}) hold at iteration $t + 1$, we conclude that $\beta_{\oplus,j}^{(t+1)} \leq 0$ for all $j \in [n]$ with $y_j < 0$. Thus, condition (\ref{cond:beta_plus_neg}) holds at iteration $t + 1$.
    
    \item Similarly, since conditions (\ref{cond:alpha_minus_bounds}) and (\ref{cond:alpha_norm_bounds}) hold at iteration $t + 1$, we have $\beta_{\ominus,i}^{(t+1)} \leq 0$ for all $i \in [n]$ with $y_i > 0$. Thus, condition (\ref{cond:beta_minus_neg}) holds at iteration $t + 1$.
\end{enumerate}
\end{proof}

Equipped with Lemma~\ref{lem:primal_label_same_sign_two}, we are ready to prove Theorem~\ref{thm:multiple_relu_gd_high_dim_implicit_bias}.

\begin{proof}(Theorem~\ref{thm:multiple_relu_gd_high_dim_implicit_bias})
The proof follows a similar structure to that of Theorem~\ref{thm:single_relu_gd_high_dim_implicit_bias} for single ReLU models, but now we must track the dynamics for both $\w_{\oplus}$ and $\w_{\ominus}$ simultaneously. Equipped with sufficient conditions under which the activation patterns are preserved in Lemma~\ref{lem:primal_label_same_sign_two}, we verify these conditions hold after the first gradient step, and use induction to characterize the full gradient descent dynamics.

We first verify that the iterate at $t = 1$ satisfies all the sufficient conditions. With the initialization
\[
\w_{\oplus}^{(0)} = \X^\top\XXi\bepsilon_{\oplus}, \quad \w_{\ominus}^{(0)} = \X^\top\XXi\bepsilon_{\ominus},
\]
we have $\bbeta_{\oplus}^{(0)} = \bepsilon_{\oplus}$ and $\bbeta_{\ominus}^{(0)} = \bepsilon_{\ominus}$. By the theorem assumptions, $0 < \epsilon_{\oplus,i} \leq \frac{1}{2C_{\alpha}}y_{\min}$ and $0 < \epsilon_{\ominus,i} \leq \frac{1}{2C_{\alpha}}y_{\min}$ for all $i \in [n]$. Using the primal gradient updates in Equations~\eqref{eq:mul_primal_update_plus} and \eqref{eq:mul_primal_update_minus}, we have
\begin{align}
    \bbeta_{\oplus}^{(1)} &= \bepsilon_{\oplus} - \eta\X\X^\top\D(\bepsilon_{\oplus})(h_{\bTheta^{(0)}}(\X) - \y)\nonumber \\
    &= \bepsilon_{\oplus} - \eta\X\X^\top\D(\bepsilon_{\oplus})(\sigma(\bepsilon_{\oplus}) - \sigma(\bepsilon_{\ominus}) - \y)\nonumber \\
    &= \bepsilon_{\oplus} - \eta\X\X^\top(\bepsilon_{\oplus} - \bepsilon_{\ominus} - \y),\label{eq:beta_oplus_one}
\end{align}
where the last equality uses the fact that $\bepsilon_{\oplus} > \mathbf{0}$ and $\bepsilon_{\ominus} > \mathbf{0}$ componentwise, so $\D(\bepsilon_{\oplus}) = \I$, $\sigma(\bepsilon_{\oplus}) = \bepsilon_{\oplus}$, and $\sigma(\bepsilon_{\ominus}) = \bepsilon_{\ominus}$. Similarly, we have
\begin{align}
    \bbeta_{\ominus}^{(1)} &= \bepsilon_{\ominus} + \eta\X\X^\top\D(\bepsilon_{\ominus})(\sigma(\bepsilon_{\oplus}) - \sigma(\bepsilon_{\ominus}) - \y)\nonumber \\
    &= \bepsilon_{\ominus} + \eta\X\X^\top(\bepsilon_{\oplus} - \bepsilon_{\ominus} - \y).\label{eq:beta_ominus_one}
\end{align}
For the dual variables, we have $\balpha_{\oplus}^{(0)} = (\X\X^\top)^{-1}\bepsilon_{\oplus}$ and $\balpha_{\ominus}^{(0)} = (\X\X^\top)^{-1}\bepsilon_{\ominus}$. The dual updates give:
\begin{align*}
    \balpha_{\oplus}^{(1)} &= \balpha_{\oplus}^{(0)} - \eta\D(\bepsilon_{\oplus})(\bepsilon_{\oplus} - \bepsilon_{\ominus} - \y) \\
    &= \XXi\bepsilon_{\oplus} - \eta(\bepsilon_{\oplus} - \bepsilon_{\ominus} - \y) \\
    &= \eta\pts{\y - \bepsilon_{\oplus} + \bepsilon_{\ominus} + \frac{1}{\eta}\XXi\bepsilon_{\oplus}},
\end{align*}
and
\begin{align*}
    \balpha_{\ominus}^{(1)} &= \balpha_{\ominus}^{(0)} + \eta\D(\bepsilon_{\ominus})(\bepsilon_{\oplus} - \bepsilon_{\ominus} - \y) \\
    &= \XXi\bepsilon_{\ominus} + \eta(\bepsilon_{\oplus} - \bepsilon_{\ominus} - \y) \\
    &= \eta\pts{-\y + \bepsilon_{\oplus} - \bepsilon_{\ominus} + \frac{1}{\eta}\XXi\bepsilon_{\ominus}}.
\end{align*}
We now verify each condition at $t = 1$.

\begin{enumerate}[label=Part (\alph*):,leftmargin=3\parindent,ref=\alph*]
    \item For all $i \in [n]$ with $y_i > 0$, we apply Lemma~\ref{lem:pos_stay_pos}. Since $\beta_{\oplus,i}^{(0)} = \epsilon_{\oplus,i} > 0$, $h_{\bTheta^{(0)}}(\x_i) = \beta_{\oplus,i}^{(0)} - \beta_{\ominus,i}^{(0)} \leq \beta_{\oplus,i}^{(0)}$, and 
    \begin{align}
        \nnorm[2]{h_{\bTheta^{(0)}}(\X) -\y} &= \nnorm[2]{\bepsilon_{\oplus} - \bepsilon_{\ominus} -\y}\nonumber\\
        &\leq \nnorm[2]{\bepsilon_{\oplus}} + \nnorm[2]{\bepsilon_{\ominus}} + \nnorm[2]{\y} \leq \frac{\sqrt{n}}{C_{\alpha}}y_{\min} + \nnorm[2]{\y} \leq C_y\nnorm[2]{\y},\label{eq:error_t0}
    \end{align}
    where we have used $C_y \geq \frac{1}{C_{\alpha}} + 1$. We conclude that $\beta_{\oplus,i}^{(1)} > 0$ for all $i$ with $y_i > 0$.

    \item For all $j \in [n]$ with $y_j < 0$, we apply Lemma~\ref{lem:pos_stay_pos}.
    Since $\beta_{\ominus,j}^{(0)} = \epsilon_{\ominus,j} > 0$, $-h_{\bTheta^{(0)}}(\x_j) = -\beta_{\oplus,j}^{(0)} + \beta_{\ominus,j}^{(0)} \leq \beta_{\ominus,j}^{(0)}$, and $\nnorm[2]{h_{\bTheta^{(0)}}(\X) -\y} \leq C_y\nnorm[2]{\y}$ by Equation~\eqref{eq:error_t0}, we conclude that $\beta_{\ominus,j}^{(1)} > 0$ for all $j\in[n]$ with $y_j < 0$.

    \item \label{cond:oplus_alpha}For all $j \in [n]$ with $y_j < 0$, we verify that $\alpha_{\oplus,j}^{(1)}$ satisfies the required upper and lower bounds. For the upper bound, we have
    \begin{align*}
        \alpha_{\oplus,j}^{(1)} &= \eta\pts{y_j - \epsilon_{\oplus,j} + \epsilon_{\ominus,j} + \frac{1}{\eta}\e_j^\top\XXi\bepsilon_{\oplus}}\\
        &= \eta\pts{y_j-\epsilon_{\oplus,j} +\epsilon_{\ominus,j} +\frac{1}{\eta}\e_j^\top\mts{\frac{1}{\nnorm[1]{\blambda}}\I + \pts{\XXi - \frac{1}{\nnorm[1]{\blambda}}\I}}\bepsilon_{\oplus}}\\
        &= \eta\pts{y_j-\epsilon_{\oplus,j}+ \epsilon_{\ominus,j} + \frac{\epsilon_{\oplus,j}}{\eta\nnorm[1]{\blambda}} + \frac{1}{\eta}\e_j^\top \pts{\XXi - \frac{1}{\nnorm[1]{\blambda}}\I}\bepsilon_{\oplus}}\\
        &\leq \eta\pts{y_j + \epsilon_{\ominus,j} + \frac{\epsilon_{\oplus,j}}{\eta\nnorm[1]{\blambda}}+ \frac{1}{\eta}\nnorm[2]{\XXi - \frac{1}{\nnorm[1]{\blambda}}\I}\nnorm[2]{\bepsilon_{\oplus}}},
    \end{align*}
    where the inequality drops the negative term $-\epsilon_{\oplus,j}$. By Corollary~\ref{cor:concentration_op_norm}, we have
    \begin{align*}
        \nnorm[2]{\XXi - \frac{1}{\nnorm[1]{\blambda}}\I} \leq \frac{C_gC}{\nnorm[1]{\blambda}} \cdot \max \pts{\sqrt{\frac{n}{d_2}}, \frac{n}{d_{\infty}}},
    \end{align*}
    with probability at least $1 - 2 \exp(-n(Cc - \ln 9))$. Moreover, by the theorem assumptions, $\epsilon_{\oplus, j} \leq \frac{1}{2C_{\alpha}}y_{\min}$, $\epsilon_{\ominus,j} \leq \frac{1}{2C_{\alpha}}y_{\min}$, and $\frac{1}{\eta} \leq CC_g\nnorm[1]{\blambda}$. Combining these bounds yields
    \begin{align}
        \alpha_{\oplus,j}^{(1)} &\leq \frac{1}{CC_g\nnorm[1]{\blambda}}\bigg(-y_{\min} + \frac{1}{2C_{\alpha}}y_{\min} + \frac{CC_g}{2C_{\alpha}}y_{\min} + C^2C_g^2\cdot \max \pts{\sqrt{\frac{n}{d_2}}, \frac{n}{d_{\infty}}} \cdot \frac{\sqrt{n}}{2C_{\alpha}}y_{\min}\bigg)\nonumber \\
        &\leq \frac{1}{CC_g\nnorm[1]{\blambda}}\pts{-y_{\min} + \frac{1}{2C_{\alpha}}y_{\min}+ \frac{CC_g}{2C_{\alpha}}y_{\min} + C^2C_g^2\cdot \frac{y_{\min}}{C_0y_{\max}} \cdot \frac{1}{2C_{\alpha}}y_{\min}}\nonumber\\
        &=  -\frac{y_{\min}}{C_{\alpha}\nnorm[1]{\blambda}}\pts{\frac{C_{\alpha}}{CC_g} - \frac{1}{2CC_g} - \frac{1}{2} - \frac{CC_gy_{\min}}{2C_0y_{\max}}}\nonumber \\
        &\leq -\frac{y_{\min}}{C_{\alpha}\nnorm[1]{\blambda}}\label{eq:oplus_alpha_upper}.
    \end{align}
    The second inequality follows from $d_2 \geq C_0^2\frac{n^2 y_{\max}^2}{y_{\min}^2}$ and $d_{\infty} \geq C_0\frac{n^{1.5} y_{\max}}{y_{\min}}$ in Assumption~\ref{asm:high_dim_data}, and the last inequality follows the relationship between constants $C_0\gtrsim C_{\alpha}^2$ and $C_{\alpha}\gtrsim \max\{C_g^2, C_yC_g\}$. For the lower bound, we have
    \begin{align}
        \alpha_{\oplus,j}^{(1)} &= \eta\pts{y_j-\epsilon_{\oplus,j}+ \epsilon_{\ominus,j} + \frac{\epsilon_{\oplus,j}}{\eta\nnorm[1]{\blambda}} + \frac{1}{\eta}\e_j^\top \pts{\XXi - \frac{1}{\nnorm[1]{\blambda}}\I}\bepsilon_{\oplus}}\nonumber \\
        &\geq \eta\pts{-y_{\max}-\epsilon_{\oplus,j} - \frac{1}{\eta}\nnorm[2]{\XXi - \frac{1}{\nnorm[1]{\blambda}}\I}\nnorm[2]{\bepsilon_{\oplus}}}\nonumber \\
        &\geq \frac{1}{C_g\nnorm[1]{\blambda}}\pts{-y_{\max} - \frac{1}{2C_{\alpha}}y_{\min} - C^2C_g^2\cdot \max \pts{\sqrt{\frac{n}{d_2}}, \frac{n}{d_{\infty}}} \cdot \frac{\sqrt{n}}{2C_{\alpha}}y_{\min}}\nonumber \\
        &\geq \frac{1}{C_g\nnorm[1]{\blambda}}\pts{-y_{\max} - \frac{1}{2C_{\alpha}}y_{\min} - C^2C_g^2\cdot \frac{y_{\min}}{C_0 y_{\max}}\cdot \frac{1}{2C_{\alpha}}y_{\min}}\nonumber\\
        &\geq -\frac{3y_{\max}}{C_g\nnorm[1]{\blambda}},\label{eq:oplus_alpha_lower}
    \end{align}
    by the same arguments. Thus, $\alpha_{\oplus,j}^{(1)}$ satisfies both the required upper and lower bounds for all $j$ with $y_j < 0$.

    \item For all $i \in [n]$ with $y_i > 0$, we verify that $\alpha_{\ominus,i}^{(1)}$ satisfies the required bounds in the approach analogous to Part~\eqref{cond:oplus_alpha}. For the upper bound, we have
    \begin{align*}
        \alpha_{\ominus,i}^{(1)} &= \eta\pts{-y_i + \epsilon_{\oplus,i} - \epsilon_{\ominus,i} + \frac{1}{\eta}\e_i^\top\XXi\bepsilon_{\ominus}} \\
        &= \eta\pts{-y_i+\epsilon_{\oplus,i} -\epsilon_{\ominus,i} +\frac{1}{\eta}\e_i^\top\mts{\frac{1}{\nnorm[1]{\blambda}}\I + \pts{\XXi - \frac{1}{\nnorm[1]{\blambda}}\I}}\bepsilon_{\ominus}}\\
        &= \eta\pts{-y_i+\epsilon_{\oplus,i}- \epsilon_{\ominus,i} + \frac{\epsilon_{\ominus,i}}{\eta\nnorm[1]{\blambda}} + \frac{1}{\eta}\e_i^\top \pts{\XXi - \frac{1}{\nnorm[1]{\blambda}}\I}\bepsilon_{\ominus}}\\
        &\leq \eta\pts{-y_i + \epsilon_{\oplus,i} + \frac{\epsilon_{\ominus,i}}{\eta\nnorm[1]{\blambda}} + \frac{1}{\eta}\nnorm[2]{\XXi - \frac{1}{\nnorm[1]{\blambda}}\I}\nnorm[2]{\bepsilon_{\ominus}}},
    \end{align*}
    where the inequality drops the negative term $-\epsilon_{\ominus,i}$. Applying the upper bound in Corollary~\ref{cor:concentration_op_norm} and the theorem assumptions $\epsilon_{\oplus, i} \leq \frac{1}{2C_{\alpha}}y_{\min}$, $\epsilon_{\ominus,i} \leq \frac{1}{2C_{\alpha}}y_{\min}$, and $\frac{1}{\eta} \leq CC_g\nnorm[1]{\blambda}$, we have
    \begin{align*}
        \alpha_{\ominus,i}^{(1)} &\leq \frac{1}{CC_g\nnorm[1]{\blambda}}\bigg(-y_{\min} + \frac{1}{2C_{\alpha}}y_{\min} + \frac{CC_g}{2C_{\alpha}}y_{\min} + C^2C_g^2\cdot \max \pts{\sqrt{\frac{n}{d_2}}, \frac{n}{d_{\infty}}} \cdot \frac{\sqrt{n}}{2C_{\alpha}}y_{\min}\bigg) \\
        &\leq -\frac{y_{\min}}{C_{\alpha}\nnorm[1]{\blambda}},
    \end{align*}
    where the second inequality follows the argument used in Equation~\eqref{eq:oplus_alpha_upper}. For the lower bound, following the same argument as in Part~\eqref{cond:oplus_alpha}, we have
    \begin{align*}
        \alpha_{\ominus,i}^{(1)} &= \eta\pts{-y_i+\epsilon_{\oplus,i}- \epsilon_{\ominus,i} + \frac{\epsilon_{\ominus,i}}{\eta\nnorm[1]{\blambda}} + \frac{1}{\eta}\e_i^\top \pts{\XXi - \frac{1}{\nnorm[1]{\blambda}}\I}\bepsilon_{\ominus}}\\
        &\geq \eta\pts{-y_{\max} - \epsilon_{\ominus,i} - \frac{1}{\eta} \nnorm[2]{\XXi - \frac{1}{\nnorm[1]{\blambda}}\I}\nnorm[2]{\bepsilon_{\ominus}}}\\
        &\geq \frac{1}{C_g\nnorm[1]{\blambda}}\pts{-y_{\max} - \frac{1}{2C_{\alpha}}y_{\min} - C^2C_g^2\cdot \max \pts{\sqrt{\frac{n}{d_2}}, \frac{n}{d_{\infty}}} \cdot \frac{\sqrt{n}}{2C_{\alpha}}y_{\min}} \\
        &\geq -\frac{3y_{\max}}{C_g\nnorm[1]{\blambda}},
    \end{align*}
    where the last inequality follows follows the argument used in Equation~\eqref{eq:oplus_alpha_lower}. Thus, $\alpha_{\ominus,i}^{(1)}$ satisfies both bounds for all $i$ with $y_i > 0$.

    \item We verify that the primal variables $\bbeta_{\oplus}^{(1)}$ corresponding to positively labeled examples minus $\y_+$ satisfy the norm bound. Specifically, we show that $\nnorm[2]{\bbeta_{\oplus,+}^{(1)} -\y_+}^2 \leq C_y^2\nnorm[2]{\y_+}^2$. According to Equation~\eqref{eq:beta_oplus_one}, we have
    \begin{align}
        \nnorm[2]{\bbeta_{\oplus,+}^{(1)} -\y_+}^2 &= \sum_{i: y_i>0} \pts{\beta_{\oplus,i}^{(1)} - y_i}^2\nonumber\\
            &= \sum_{i: y_i>0} \pts{\underbrace{\epsilon_{\oplus,i} -\eta\e_i^\top\XX\pts{\bepsilon_{\oplus} - \bepsilon_{\ominus} -\y} - y_i}_{\eqqcolon T_i}}^2.\label{eq:oplus_diff}
    \end{align}
    Next, we bound the term $T_i\coloneqq\epsilon_{\oplus,i} -\eta\e_i^\top\XX\pts{\bepsilon_{\oplus} - \bepsilon_{\ominus} -\y} - y_i$ for all $i\in[n]$ with $y_i > 0$. We have
    \begin{align*}
        T_i&=\epsilon_{\oplus,i} -\eta\e_i^\top\XX\pts{\bepsilon_{\oplus} -\bepsilon_{\ominus} -\y} - y_i\\
        &= (\epsilon_{\oplus,i} - y_i) -\eta\e_i^\top\mts{\nnorm[1]{\blambda}\I + \pts{\XX - \nnorm[1]{\blambda}\I}}\pts{\bepsilon_{\oplus} -\bepsilon_{\ominus} -\y}\\
        &= (1 - \eta\nnorm[1]{\blambda})\epsilon_{\oplus,i} + \eta\nnorm[1]{\blambda}\epsilon_{\ominus,i} - (1 - \eta\nnorm[1]{\blambda})y_i - \eta\e_i^\top\pts{\XX - \nnorm[1]{\blambda}\I}\pts{\bepsilon_{\oplus} -\bepsilon_{\ominus} -\y}.
    \end{align*}
    Since the step size assumption guarantees that $\frac{1}{CC_g\nnorm[1]{\blambda}} \leq \eta \leq \frac{1}{C_g\nnorm[1]{\blambda}}$, and $\epsilon_{\oplus,i},\epsilon_{\ominus,i}  \leq \frac{1}{2C_{\alpha}}y_{\min}$, we have 
    \begin{align*}
        (1 - \eta\nnorm[1]{\blambda})\epsilon_{\oplus,i} + \eta\nnorm[1]{\blambda}\epsilon_{\ominus,i} - (1 - \eta\nnorm[1]{\blambda})y_i &\leq \epsilon_{\oplus,i} + \eta\nnorm[1]{\blambda} \epsilon_{\ominus,i} - (1 - \eta\nnorm[1]{\blambda})y_i\\
        &\leq \pts{1 + \frac{1}{C_g}}\frac{1}{2C_{\alpha}}y_{\min} - \pts{1 - \frac{1}{C_g}}y_{\min}\\
        &< 0,
    \end{align*}
    with $C_{\alpha} \gtrsim C_g^2$. Hence, in order to upper bound $T_i^2$, it suffices to find the lower bound for $T_i$. We have
        \begin{align*}
            T_i &= (1 - \eta\nnorm[1]{\blambda})\epsilon_{\oplus,i} + \eta\nnorm[1]{\blambda}\epsilon_{\ominus,i} - (1 - \eta\nnorm[1]{\blambda})y_i - \eta\e_i^\top\pts{\XX - \nnorm[1]{\blambda}\I}\pts{\bepsilon_{\oplus} -\bepsilon_{\ominus} -\y}\\
            &\geq -y_i - \eta \nnorm[2]{\XX - \nnorm[1]{\blambda}\I}\nnorm[2]{\bepsilon_{\oplus} -\bepsilon_{\ominus} -\y},
        \end{align*}
        where the inequality drops the positive terms $(1 - \eta\nnorm[1]{\blambda})\epsilon_{\oplus,i}$, $\eta\nnorm[1]{\blambda}\epsilon_{\ominus,i}$, and $\eta\nnorm[1]{\blambda}y_i$. We again upper bound $\nnorm[2]{\XX - \nnorm[1]{\blambda}\I}$ by Corollary~\ref{cor:concentration_op_norm}. With probability at least $1 - 2 \exp(-n(Cc - \ln 9))$, we have
        \begin{align*}
            T_i &\geq -y_i - \eta \cdot C\nnorm[1]{\blambda}\cdot \max \pts{\sqrt{\frac{n}{d_2}}, \frac{n}{d_{\infty}}}\nnorm[2]{\bepsilon_{\oplus} - \bepsilon_{\ominus} -\y}\\
            &\geq -y_i - \frac{C}{C_g} \cdot \max \pts{\sqrt{\frac{n}{d_2}}, \frac{n}{d_{\infty}}}\nnorm[2]{\bepsilon_{\oplus} - \bepsilon_{\ominus} -\y},
        \end{align*}
        by applying $\eta \leq \frac{1}{C_g\nnorm[1]{\blambda}}$. Finally, we apply the upper bounds for $\nnorm[2]{\bepsilon_{\oplus}}$, $\nnorm[2]{\bepsilon_{\ominus}}$ and $\nnorm[2]{\y}$, and Assumption~\ref{asm:high_dim_data} ensures that $d_2 \geq C_0^2\frac{n^2 y_{\max}^2}{y_{\min}^2}$ and $d_{\infty} \geq C_0\frac{n^{1.5} y_{\max}}{y_{\min}}$. We have
        \begin{align*}
             T_i &\geq -y_i - \frac{C}{C_g} \max \pts{\sqrt{\frac{n}{d_2}}, \frac{n}{d_{\infty}}}\pts{\frac{\sqrt{n}}{C_{\alpha}}y_{\min} + \sqrt{n}y_{\max}}\\
             &\geq -y_i - \frac{Cy_{\min}}{C_gC_0y_{\max}}\pts{\frac{1}{C_{\alpha}}y_{\min} + y_{\max}}\\
             &\geq -y_i \pts{1 + \frac{2C}{C_gC_0}}\\
             &\geq -C_y y_i,
        \end{align*}
        with the choice of $C_y \geq 2$. Substituting $T_i^2 \leq C_y^2 y_i^2$ into Equation~\eqref{eq:oplus_diff}, we have
        \begin{align*}
            \nnorm[2]{\bbeta_{\oplus,+}^{(1)} -\y_+}^2 &\leq \sum_{i:y_i>0} C_y^2 y_i^2 = C_y^2 \nnorm[2]{\y_+}^2.
        \end{align*}
        As a result, we conclude that $ \nnorm[2]{\bbeta_{\oplus,+}^{(1)} -\y_+} \leq C_y \nnorm[2]{\y_+}$ as required.
    The same derivation holds for $\nnorm[2]{\bbeta_{\ominus, -}^{(1)} + \y_-} \leq C_y \nnorm[2]{\y_-}$ by an analogous argument. Therefore, condition~(\ref{cond:sigma_beta_bounds}) holds at $t = 1$.

    \item We verify the norm bounds on the dual variables. By the triangle inequality, we have
    \begin{align*}
        \nnorm[2]{\balpha_{\oplus}^{(1)}} &= \nnorm[2]{\eta\pts{\y - \bepsilon_{\oplus} + \bepsilon_{\ominus} + \frac{1}{\eta}\XXi\bepsilon_{\oplus}}} \\
        &\leq \eta\pts{\nnorm[2]{\y} + \nnorm[2]{\bepsilon_{\oplus}} + \nnorm[2]{\bepsilon_{\ominus}} + \frac{1}{\eta}\nnorm[2]{\XXi}\nnorm[2]{\bepsilon_{\oplus}}}.
    \end{align*}
    Using $\nnorm[2]{\y} \leq \sqrt{n}y_{\max}$, $\nnorm[2]{\bepsilon_{\oplus}}, \nnorm[2]{\bepsilon_{\ominus}} \leq \frac{\sqrt{n}}{2C_{\alpha}}y_{\min}$, $\nnorm[2]{\XXi} \leq \frac{C_g}{\nnorm[1]{\blambda}}$, and $\frac{1}{CC_g\nnorm[1]{\blambda}} \leq \eta \leq \frac{1}{C_g\nnorm[1]{\blambda}}$, we have
    \begin{align*}
        \nnorm[2]{\balpha_{\oplus}^{(1)}} &\leq \frac{1}{C_g\nnorm[1]{\blambda}}\pts{\sqrt{n}y_{\max} + \frac{\sqrt{n}}{C_{\alpha}}y_{\min} + CC_g\nnorm[1]{\blambda} \cdot \frac{C_g}{\nnorm[1]{\blambda}} \cdot \frac{\sqrt{n}}{C_{\alpha}}y_{\min}} \\
        &\leq \frac{1}{C_g\nnorm[1]{\blambda}}\pts{3\sqrt{n}y_{\max}}\\
        &\leq \frac{C_{\alpha}\sqrt{n}y_{\max}}{\nnorm[1]{\blambda}},
    \end{align*}
    with $C_{\alpha}\gtrsim \max\{C_g^2, C_yC_g\}$. The same bound holds for $\nnorm[2]{\balpha_{\ominus}^{(1)}}$. Thus, condition~(\ref{cond:alpha_norm_bounds}) holds at $t = 1$.

    \item Since we have shown that $\alpha_{\oplus,j}^{(1)} \leq -\frac{y_{\min}}{C_{\alpha}\nnorm[1]{\blambda}}$ and $\nnorm[2]{\balpha_{\oplus}^{(1)}} \leq \frac{C_{\alpha}\sqrt{n}y_{\max}}{\nnorm[1]{\blambda}}$ for all $j\in[n]$ with $y_j < 0$, it follows from Lemma~\ref{lem:neg_stay_neg} that $\beta_{\oplus,j}^{(1)} \leq 0$ for all $j\in[n]$ with $y_j < 0$.

    \item Similarly, since we have shown that $\alpha_{\ominus,i}^{(1)} \leq -\frac{y_{\min}}{C_{\alpha}\nnorm[1]{\blambda}}$ and $\nnorm[2]{\balpha_{\ominus}^{(1)}} \leq \frac{C_{\alpha}\sqrt{n}y_{\max}}{\nnorm[1]{\blambda}}$ for all $i\in[n]$ with $y_i > 0$, it follows from Lemma~\ref{lem:neg_stay_neg} that $\beta_{\ominus,i}^{(1)} \leq 0$ for all $i\in[n]$ with $y_i > 0$.
\end{enumerate}
We have shown that at iteration $t=1$ the conditions in Lemma~\ref{lem:primal_label_same_sign_two} are satisfied, and by induction, these conditions will also hold for $t \geq 1$. As a result, the positive neuron $\w_{\oplus}$ is trained with only positive examples starting from the iteration $t=1$, and it is equivalent to linear regression using only positive examples with initialization $\w_{\oplus}^{(1)} = \eta\X^\top\Bigl(\y - \bepsilon_{\oplus} + \bepsilon_{\ominus} + \frac{1}{\eta}(\X\X^\top)^{-1}\bepsilon_{\oplus}\Bigr)$. Finally, since $\w_{\oplus}$ and $\w_{\ominus}$ are trained on disjoint subsets of examples, $\w_{\oplus}^{(\infty)}$ satisfies
    \begin{align*}
        \w_{\oplus}^{(\infty)} =\uargmin{\w\in\{\w:\X_+\w =\y_+\}} \nnorm[2]{\w - \w_{\oplus}^{(1)}},
    \end{align*}
    by Lemma~\ref{lem:conv_step_size}. The same arguments apply to the negative neuron $\w_{\ominus}$ as well. This completes the proof of Theorem~\ref{thm:multiple_relu_gd_high_dim_implicit_bias}.

\end{proof}


\subsection{Proof of Theorem~\ref{thm:multiple_relu_approx_to_w_star} (Implicit Bias Approximation to \texorpdfstring{$\w^\star$}{w*})}\label{app:proof_multiple_relu_approx_to_w_star}

\begin{proof}(Theorem~\ref{thm:multiple_relu_approx_to_w_star})
    We divide the proof into four steps, and formally show the result for $\w_{\oplus}^\star$. The result for $\w_{\ominus}^\star$ follows an identical series of steps. In the first step, we derive an upper bound for $\nnorm[2]{\balpha_{\oplus}^\star}$ and $\nnorm[2]{\balpha_{\ominus}^\star}$ where $\w_{\oplus}^\star \coloneqq\X^\top\balpha_{\oplus}^\star$ and $\w_{\ominus}^\star \coloneqq \X^\top\balpha_{\ominus}^\star$, by using the optimality of the objective function in~\eqref{eq:multiple_relu_minimum_norm_sol}. In the second step, we use the KKT conditions of~\eqref{eq:multiple_relu_w_star_eq_form} to find a representation of $\{\w_{\oplus}^\star, \w_{\ominus}^\star\}$. In Steps 3 and 4, we derive the corresponding upper bound and lower bound.
    \paragraph{Step 1: Upper bounds for $\nnorm[2]{\balpha_{\oplus}^\star}$ and $\nnorm[2]{\balpha_{\ominus}^\star}$.}~\\
    $\{\w_{\oplus}^\star, \w_{\ominus}^\star\}$ is the optimal solution to~\eqref{eq:multiple_relu_minimum_norm_sol} and it achieves the minimum objective in~\eqref{eq:multiple_relu_minimum_norm_sol}. In the proof of Lemma~\ref{lem:multiple_relu_w_star_eqivalence}, we introduce $\{\tilde{\w}_{\oplus}, \tilde{\w}_{\ominus}\}$ which is also a feasible solution to~\eqref{eq:multiple_relu_minimum_norm_sol}, where $\tilde{\w}_{\oplus} \coloneqq \X^\top(\XX)^{-1}\y_{\oplus}$ and $\tilde{\w}_{\ominus} \coloneqq \X^\top(\XX)^{-1}\y_{\ominus}$, with $y_{\oplus,i} = \max\{y_i, 0\}$ and $y_{\ominus,i} = -\min\{y_i, 0\}$. Therefore, by the optimality of $\{\w_{\oplus}^\star, \w_{\ominus}^\star\}$ in the objective, we have
    \begin{align*}
        \nnorm[2]{\w_{\oplus}^\star}^2 + \nnorm[2]{\w_{\ominus}^\star}^2 &= \balpha_{\oplus}^{\star\top}\XX\balpha_{\oplus}^\star + \balpha_{\ominus}^{\star\top}\XX\balpha_{\ominus}^\star\\
        &\leq \nnorm[2]{\tilde{\w}_{\oplus}}^2 + \nnorm[2]{\tilde{\w}_{\ominus}}^2\\
        &= \y_{\oplus}^\top\XXi\y_{\oplus} + \y_{\ominus}^\top\XXi\y_{\ominus}\\
        &\leq \nnorm[2]{\XXi}\nnorm[2]{\y_{\oplus}}^2 + \nnorm[2]{\XXi}\nnorm[2]{\y_{\ominus}}^2\\
        &\leq \frac{2C_gny_{\max}^2}{\nnorm[1]{\blambda}},
    \end{align*}
    where the last inequality uses Lemma~\ref{lem:concentration_eigenvalues} with probability at least $1-2e^{-n/C_g}$, and we have $\max\{\nnorm[2]{\y_{\oplus}}^2, \nnorm[2]{\y_{\ominus}}^2\} \leq \nnorm[2]{\y}^2 \leq ny_{\max}^2$. Therefore, we have 
    \begin{align*}
        \lambda_{n}(\XX)\nnorm[2]{\balpha_{\oplus}^\star}^2 &\leq \balpha_{\oplus}^{\star\top}\XX\balpha_{\oplus}^\star\\
        &\leq \balpha_{\oplus}^{\star\top}\XX\balpha_{\oplus}^\star + \balpha_{\ominus}^{\star\top}\XX\balpha_{\ominus}^\star\\
        &\leq \frac{2C_gny_{\max}^2}{\nnorm[1]{\blambda}}.
    \end{align*}
    As a result, we have $\nnorm[2]{\balpha_{\oplus}^\star} \leq \frac{\sqrt{2}C_g\sqrt{n}y_{\max}}{\nnorm[1]{\blambda}}$. This bound applies to $\nnorm[2]{\balpha_{\ominus}^\star}$ as well via an identical argument.

    \paragraph{Step 2: KKT conditions of $\{\w_{\oplus}^\star, \w_{\ominus}^\star\}$ by Lemma~\ref{lem:multiple_relu_w_star_eqivalence}.}~\\
    Based on Lemma~\ref{lem:multiple_relu_w_star_eqivalence}, the optimal solution $\{\w_{\oplus}^\star, \w_{\ominus}^\star\}$ of~\eqref{eq:multiple_relu_minimum_norm_sol} is also the optimal solution of a convex program~\eqref{eq:multiple_relu_w_star_eq_form}. Hence, we restate the convex program in~\eqref{eq:multiple_relu_w_star_eq_form} below
    \begin{alignat*}{3}
        &\myquad[2]\w_{\oplus}^\star, \w_{\ominus}^\star &= \uargmin{\w_{\oplus}, \w_{\ominus}}\frac{1}{2}\nnorm[2]{\w_{\oplus}}^2 + \frac{1}{2}\nnorm[2]{\w_{\ominus}}^2&\\
        \text{s.t. } &\w_{\oplus}^\top\x_i \phantom{-\w_{\ominus}^\top\x_i} &= y_i, \phantom{-\w_{\oplus}^\top\x_i < 0, -}\w_{\ominus}^\top\x_i\leq 0, &\quad\text{ for all } i\in S_1,\nonumber\\
        &\w_{\oplus}^\top\x_i - \w_{\ominus}^\top\x_i &= y_i, \phantom{-\w_{\oplus}^\top\x_i < 0} -\w_{\ominus}^\top\x_i \leq 0, &\quad\text{ for all } i\in S_2,\nonumber\\
        &\phantom{\w_{\oplus}^\top\x_i} - \w_{\ominus}^\top\x_i &= y_i, \phantom{-}\w_{\oplus}^\top\x_i\leq 0, \phantom{-\w_{\ominus}^\top\x_i < 0,} &\quad\text{ for all } i\in S_3,\nonumber\\
        &\w_{\oplus}^\top\x_i - \w_{\ominus}^\top\x_i &= y_i, -\w_{\oplus}^\top\x_i \leq 0, \phantom{-\w_{\ominus}^\top\x_i < 0,} &\quad\text{ for all } i\in S_4.\nonumber
    \end{alignat*}    
    The Lagrange function in terms of $\bdelta\in \R^n$ and non-negative $\bmu\in\R_+^n$ is given by 
    \begin{align*}
        \mathcal{L}=\frac{1}{2}\nnorm[2]{\w_{\oplus}}^2 + \frac{1}{2}\nnorm[2]{\w_{\ominus}}^2 &+ \sum_{i\in S_1}\delta_i\pts{\w_{\oplus}^\top\x_i - y_i} + \sum_{i\in S_1}\mu_i\pts{\w_{\ominus}^\top\x_i}\\
        &+ \sum_{i\in S_2}\delta_i\pts{\w_{\oplus}^\top\x_i - \w_{\ominus}^\top\x_i - y_i} - \sum_{i\in S_2}\mu_i\pts{\w_{\ominus}^\top\x_i}\\
        &+ \sum_{i\in S_3}\delta_i\pts{- \w_{\ominus}^\top\x_i - y_i} + \sum_{i \in S_3}\mu_i\pts{\w_{\oplus}^\top\x_i}\\
        &+ \sum_{i\in S_4}\delta_i\pts{\w_{\oplus}^\top\x_i - \w_{\ominus}^\top\x_i - y_i}- \sum_{i\in S_4}\mu_i\pts{\w_{\oplus}^\top\x_i}.
    \end{align*}
    The KKT conditions are given below.\\
    \textbf{Stationarity:}
    \begin{align}
        \pdv{\mathcal{L}}{\w_{\oplus}} &= \w_{\oplus}^\star + \sum_{i\in S_1}\delta_i^\star\x_i + \sum_{i\in S_2}\delta_i^\star\x_i + \sum_{i\in S_3}\mu_i^\star\x_i + \sum_{i\in S_4}(\delta_i^\star - \mu_i^\star)\x_i = 0,\nonumber\\
        \pdv{\mathcal{L}}{\w_{\ominus}} &= \w_{\ominus}^\star + \sum_{i\in S_1}\mu_i^\star\x_i - \sum_{i\in S_2}(\delta_i^\star+\mu_i^\star)\x_i - \sum_{i\in S_3}\delta_i^\star\x_i - \sum_{i\in S_4}\delta_i^\star\x_i = 0,\nonumber\\
        \Leftrightarrow \w_{\oplus}^\star &= -\sum_{i\in S_1}\delta_i^\star\x_i - \sum_{i\in S_2}\delta_i^\star\x_i - \sum_{i\in S_3}\mu_i^\star\x_i + \sum_{i\in S_4}(-\delta_i^\star+\mu_i^\star)\x_i,\label{eq:multiple_relu_w_p_star}\\
        \w_{\ominus}^\star &= - \sum_{i\in S_1}\mu_i^\star\x_i + \sum_{i\in S_2}(\delta_i^\star + \mu_i^\star)\x_i + \sum_{i\in S_3}\delta_i^\star\x_i + \sum_{i\in S_4}\delta_i^\star\x_i.\label{eq:multiple_relu_w_n_star}
    \end{align}
    \textbf{Primal feasibility:}
    \begin{alignat*}{3}
        &\w_{\oplus}^{\star\top}\x_i \phantom{-\w_{\ominus}^{\star\top}\x_i} &= y_i, \phantom{-\w_{\oplus}^{\star\top}\x_i < 0, -}\w_{\ominus}^{\star\top}\x_i\leq 0, &\quad\text{ for all } i\in S_1,\nonumber\\
        &\w_{\oplus}^{\star\top}\x_i - \w_{\ominus}^{\star\top}\x_i &= y_i, \phantom{-\w_{\oplus}^{\star\top}\x_i < 0} -\w_{\ominus}^{\star\top}\x_i \leq 0, &\quad\text{ for all } i\in S_2,\nonumber\\
        &\phantom{\w_{\oplus}^{\star\top}\x_i} - \w_{\ominus}^{\star\top}\x_i &= y_i, \phantom{-}\w_{\oplus}^{\star\top}\x_i\leq 0, \phantom{-\w_{\ominus}^{\star\top}\x_i < 0,} &\quad\text{ for all } i\in S_3,\nonumber\\
        &\w_{\oplus}^{\star\top}\x_i - \w_{\ominus}^{\star\top}\x_i &= y_i, -\w_{\oplus}^{\star\top}\x_i \leq 0, \phantom{-\w_{\ominus}^{\star\top}\x_i < 0,} &\quad\text{ for all } i\in S_4.\nonumber
    \end{alignat*}    
    \textbf{Dual feasibility:}
    \begin{align*}
        \delta_i^\star &\in \R, \text{ for all } i\in [n],\\
        \mu_i^\star &\geq 0, \text{ for all }i\in [n].
    \end{align*}
    \textbf{Complementary slackness:}
    \begin{align*}
        \sum_{i\in S_1}\mu_i^\star\pts{\w_{\ominus}^{\star\top}\x_i} + \sum_{i\in S_2}\mu_i^\star\pts{-\w_{\ominus}^{\star\top}\x_i} + \sum_{i \in S_3}\mu_i^\star\pts{\w_{\oplus}^{\star\top}\x_i} + \sum_{i \in S_4}\mu_i^\star\pts{-\w_{\oplus}^{\star\top}\x_i} = 0.
    \end{align*}
    Note that the representation of $\w_{\oplus}^\star$ and $\w_{\ominus}^\star$ shares the parameters $\{\delta_i^\star$ : $i\in S_2\cup S_4\}$. As a result, since we define $\w_{\oplus}^\star =\X^\top\balpha_{\oplus}^\star$ and $\w_{\ominus}^\star =\X^\top\balpha_{\ominus}^\star$, we can write $\alpha_{\oplus,i}^\star$ and $\alpha_{\ominus,i}^\star$ in terms of $\delta_i$ and $\mu_i$ for all $i\in[n]$ by Equations~\eqref{eq:multiple_relu_w_p_star} and~\eqref{eq:multiple_relu_w_n_star} as
    \begin{align*}
        \alpha_{\oplus,i}^\star=\conddd{-\delta_i^\star}{\; \forall i\in S_1}{-\delta_i^\star}{\; \forall i\in S_2}{-\mu_i^\star}{\; \forall i\in S_3}{-\delta_i^\star+\mu_i^\star}{\; \forall i\in S_4}, \text{ and } \alpha_{\ominus,i}^\star=\conddd{-\mu_i^\star}{\; \forall i\in S_1}{\delta_i^\star + \mu_i^\star}{\; \forall i\in S_2}{\delta_i^\star}{\; \forall i\in S_3}{\delta_i^\star}{\; \forall i\in S_4}.
    \end{align*}
    
    \paragraph{Step 3: Upper bound for $\nnorm[2]{\w_{\oplus}^{(\infty)} - \w_{\oplus}^{\star}}$.}~\\
    We now generalize the proof of Theorem~\ref{thm:single_relu_approx_to_w_star}. We first relate the distance between the predictors $\w_{\oplus}^{(\infty)}$ and $\w_{\oplus}^\star$ to the distance in their predictions, i.e., $\nnorm[2]{\X\w_{\oplus}^{(\infty)} - \X\w_{\oplus}^\star}$. Since both vectors lie in the span of the data $\{\x_i\}_{i=1}^n$, their difference has no component in the null space corresponding to the smallest $d-n$ eigenvalues of $\X^\top\X$. Therefore, we have
    \begin{align}
        \nnorm[2]{\X\w_{\oplus}^{(\infty)} - \X\w_{\oplus}^\star}^2 = \nnorm[2]{\X\pts{\w_{\oplus}^{(\infty)} -\w_{\oplus}^\star}}^2 &\geq \mu_{n}(\X^\top\X) \nnorm[2]{\w_{\oplus}^{(\infty)} -\w_{\oplus}^\star}^2\nonumber\\
        &= \mu_{n}(\XX) \nnorm[2]{\w_{\oplus}^{(\infty)} -\w_{\oplus}^\star}^2.\label{eq:multiple_relu_w_star_upper}
    \end{align}
    As a result, to derive an upper bound for $\nnorm[2]{\w_{\oplus}^{(\infty)} -\w_{\oplus}^\star}$, it suffices to upper bound the distance between their predictions $\nnorm[2]{\X\w_{\oplus}^{(\infty)} - \X\w_{\oplus}^\star}$. We begin with analyzing $\w_{\oplus}^{(\infty)}$. By Theorem~\ref{thm:multiple_relu_gd_high_dim_implicit_bias}, $\w_{\oplus}^{(\infty)}$ satisfies the following
    \begin{subequations}
    \begin{align}
        \w_{\oplus}^{(\infty)\top}\x_i &= y_i \myquad[21] \text{ for all } y_i > 0,\label{eq:w_p_inf_p}\\
        \alpha_{\oplus, j}^{(\infty)} &= \alpha_{\oplus,j}^{(1)} = \eta (y_j - \epsilon_{\oplus,j} +\epsilon_{\ominus,j} + \frac{1}{\eta}\e_j^\top\XXi\bepsilon_{\oplus}) \quad \text{ for all } y_j < 0,\label{eq:w_p_inf_n}
    \end{align}
    \end{subequations}
    and also all the conditions in Lemma~\ref{lem:primal_label_same_sign_two}.

    We know that $\w_{\oplus}^{(\infty)\top}\x_i = \w_{\oplus}^{\star\top}\x_i = y_i$ for all $i\in S_1$, and $\w_{\oplus}^{(\infty)\top}\x_i = y_i$ and $\w_{\oplus}^{\star\top}\x_i = y_i + \w_{\ominus}^{\star\top}\x_i$ for all $i\in S_2$. Therefore, we can write
    \begin{align}
        \nnorm[2]{\X\w_{\oplus}^{(\infty)} - \X\w_{\oplus}^\star}^2 &= \sum_{i=1}^n \pts{\w_{\oplus}^{(\infty)\top}\x_i - \w_{\oplus}^{\star\top}\x_i}^2\nonumber\\
        &= \sum_{i\in S_2} \pts{-\w_{\ominus}^{\star\top}\x_i}^2 + \sum_{i\in S_3} \pts{\w_{\oplus}^{(\infty)\top}\x_i - \w_{\oplus}^{\star\top}\x_i}^2 + \sum_{i\in S_4} \pts{\w_{\oplus}^{(\infty)\top}\x_i - \w_{\oplus}^{\star\top}\x_i}^2.\label{eq:multiple_relu_difference_in_prediction}
    \end{align}
    We start with upper bounding the term $(-\w_{\ominus}^{\star\top}\x_i)^2$ for all $i\in S_2$. For $i\in S_2$, by the complementary slackness, we either have $-\w_{\ominus}^{\star\top}\x_i = 0$ with $\mu_i^\star \geq 0$ or $-\w_{\ominus}^{\star\top}\x_i \leq 0$ with $\mu_i^\star = 0$. In the first case, we have $(-\w_{\ominus}^{\star\top}\x_i)^2 = 0$. In the second case, we define $\tilde{S}_2\subseteq S_2$ such that $\mu_i^\star = 0$ and $-\w_{\ominus}^{\star\top}\x_i \leq 0$ for all $i\in\tilde{S}_2$, and we will show that $\tilde{S}_2$ is empty with probability at least $1 - 2\exp(-n(Cc-\ln9))$. Since $\w_{\oplus}^{\star\top}\x_i - \w_{\ominus}^{\star\top}\x_i = y_i$ for all $i\in \tilde{S}_2\subseteq S_2$, we have
    \begin{align*}
        y_i = \w_{\oplus}^{\star\top}\x_i - \w_{\ominus}^{\star\top}\x_i &= \e_i^\top\XX\pts{\balpha_{\oplus}^\star - \balpha_{\ominus}^\star}\\
        &= \nnorm[1]{\blambda}(-2\delta_i^\star - \underbrace{\mu_i^\star}_{=0}) + \e_i^\top(\XX - \nnorm[1]{\blambda}\I)\pts{\balpha_{\oplus}^\star - \balpha_{\ominus}^\star}.
    \end{align*}
    The, for all $i\in \tilde{S}_2$, we have
    \begin{align*}
        \delta_i^\star = \frac{y_i - \e_i^\top(\XX - \nnorm[1]{\blambda}\I)\pts{\balpha_{\oplus}^\star - \balpha_{\ominus}^\star}}{-2\nnorm[1]{\blambda}}.
    \end{align*}
    Based on this representation of $\delta_i^\star$, for all $i\in \tilde{S}_2$, we have
    \begin{align*}
        \w_{\ominus}^{\star\top}\x_i &= \e_i^\top\XX\balpha_{\ominus}^\star\\
        &= \nnorm[1]{\blambda}(\delta_i^\star) + \e_i^\top\pts{\XX - \nnorm[1]{\blambda}\I}\balpha_{\ominus}^\star\\
        &=-\frac{y_i}{2} +\frac{1}{2}\e_i^\top\pts{\XX - \nnorm[1]{\blambda}\I}\pts{\balpha_{\oplus}^\star + \balpha_{\ominus}^\star}\\
        &\leq -\frac{y_i}{2} + \frac{1}{2}\nnorm[2]{\XX - \nnorm[1]{\blambda}\I}\pts{\nnorm[2]{\balpha_{\oplus}^\star} + \nnorm[2]{\balpha_{\ominus}^\star}}\\
        &\leq \nnorm[1]{\blambda}\mts{-\frac{y_i}{2\nnorm[1]{\blambda}} + \frac{1}{2} C\cdot \max \pts{\sqrt{\frac{n}{d_2}}, \frac{n}{d_{\infty}}} \cdot \frac{2\sqrt{2}C_g\sqrt{n}y_{\max}}{\nnorm[1]{\blambda}}}\\
        &\leq \nnorm[1]{\blambda}\mts{-\frac{y_{\min}}{2\nnorm[1]{\blambda}} + \frac{1}{2} C\cdot \frac{y_{\min}}{C_0y_{\max}} \cdot \frac{2\sqrt{2}C_g y_{\max}}{\nnorm[1]{\blambda}}}\\
        &< 0,
    \end{align*}
    where the inequalities above apply Corollary~\ref{cor:concentration_op_norm} and the upper bound of $\nnorm[2]{\balpha_{\oplus}^\star}$, $\nnorm[2]{\balpha_{\ominus}^\star}$ from Step 1, and substitute $d_2 \geq C_0^2\frac{n^2 y_{\max}^2}{y_{\min}^2}$ and $d_{\infty} \geq C_0\frac{n^{1.5} y_{\max}}{y_{\min}}$ in Assumption~\ref{asm:high_dim_data} with $C_0\gtrsim C_{\alpha}^2$ and $C_{\alpha}\gtrsim \max\{C_g^2, C_yC_g\}$. However, $\w_{\ominus}^{\star\top}\x_i < 0$ contradicts with the condition $-\w_{\ominus}^{\star\top}\x_i \leq 0$ for $i\in \tilde{S}_2$. Therefore, $\tilde{S}_2=\varnothing$. By combining these two cases, we conclude that $\sum_{i\in S_2} \pts{-\w_{\ominus}^{\star\top}\x_i}^2 = 0$.
    
    Next, we upper bound the term $\pts{\w_{\oplus}^{(\infty)\top}\x_i - \w_{\oplus}^{\star\top}\x_i}^2$ for all $i\in S_3$. We know that $\w_{\oplus}^{(\infty)\top}\x_i < 0$ in Theorem~\ref{thm:multiple_relu_gd_high_dim_implicit_bias}. For $\w_{\oplus}^{\star\top}\x_i$ with $i\in S_3$, by the complementary slackness, we either have $\w_{\oplus}^{\star\top}\x_i = 0$ with $\mu_i^\star \geq 0$ or $\mu_i^\star = 0$ with $\w_{\oplus}^{\star\top}\x_i \leq 0$. In the first case, $\w_{\oplus}^{\star\top}\x_i = 0$, we have
    \begin{align*}
        \w_{\oplus}^{(\infty)\top}\x_i - \w_{\oplus}^{\star\top}\x_i &= \w_{\oplus}^{(\infty)\top}\x_i\\
        &= \e_i^\top\XX\balpha_{\oplus}^{(\infty)}\\
        &= \e_i^\top\mts{\nnorm[1]{\blambda}\I + \pts{\XX - \nnorm[1]{\blambda}\I}}\balpha_{\oplus}^{(\infty)}\\
        &= \nnorm[1]{\blambda}\alpha_{\oplus,i}^{(\infty)} + \e_i^\top\pts{\XX - \nnorm[1]{\blambda}\I}\balpha_{\oplus}^{(\infty)}\\
        &\geq \nnorm[1]{\blambda}\alpha_{\oplus,i}^{(\infty)} - \nnorm[2]{\XX - \nnorm[1]{\blambda}\I}\nnorm[2]{\balpha_{\oplus}^{(\infty)}}\\
        &\geq \nnorm[1]{\blambda}\mts{\alpha_{\oplus,i}^{(\infty)} - C\cdot \max \pts{\sqrt{\frac{n}{d_2}}, \frac{n}{d_{\infty}}}\nnorm[2]{\balpha_{\oplus}^{(\infty)}}},
    \end{align*}
    where the last inequality applies Corollary~\ref{cor:concentration_op_norm}. Substituting the bounds of $\alpha_{\oplus,i}^{(\infty)}$ and $\nnorm[2]{\balpha_{\oplus}^{(\infty)}}$ from Lemma~\ref{lem:primal_label_same_sign_two}, we have
    \begin{align*}
        \w_{\oplus}^{(\infty)\top}\x_i - \w_{\oplus}^{\star\top}\x_i &\geq \nnorm[1]{\blambda}\mts{-\frac{3y_{\max}}{C_g\nnorm[1]{\blambda}} - C\cdot \max \pts{\sqrt{\frac{n}{d_2}}, \frac{n}{d_{\infty}}}\frac{C_{\alpha}y_{\max}}{\nnorm[1]{\blambda}}}\\
        &\geq \nnorm[1]{\blambda}\mts{-\frac{3y_{\max}}{C_g\nnorm[1]{\blambda}} - C\cdot \frac{y_{\min}}{C_0y_{\max}}\frac{C_{\alpha}y_{\max}}{\nnorm[1]{\blambda}}}\\
        &\geq -\frac{4}{C_g}y_{\max},
    \end{align*}
    where the inequalities substitute $d_2 \geq C_0^2\frac{n^2 y_{\max}^2}{y_{\min}^2}$ and $d_{\infty} \geq C_0\frac{n^{1.5} y_{\max}}{y_{\min}}$ in Assumption~\ref{asm:high_dim_data} with $C_0\gtrsim C_{\alpha}^2$ and $C_{\alpha}\gtrsim \max\{C_g^2, C_yC_g\}$. In the second case, $\alpha_{\oplus,i}^\star = -\mu_i^\star = 0$ for $i\in S_3$, we have
    \begin{align*}
        \w_{\oplus}^{(\infty)\top}\x_i - \w_{\oplus}^{\star\top}\x_i &= \e_i^\top\XX\pts{\balpha_{\oplus}^{(\infty)} -\balpha_{\oplus}^\star}\\
        &= \e_i^\top\mts{\nnorm[1]{\blambda}\I + \pts{\XX - \nnorm[1]{\blambda}\I}}\pts{\balpha_{\oplus}^{(\infty)} -\balpha_{\oplus}^\star}\\
        &= \nnorm[1]{\blambda}\alpha_{\oplus,i}^{(\infty)} + \e_i^\top\pts{\XX - \nnorm[1]{\blambda}\I}\pts{\balpha_{\oplus}^{(\infty)} -\balpha_{\oplus}^\star}\\
        &\geq \nnorm[1]{\blambda}\alpha_{\oplus,i}^{(\infty)} - \nnorm[2]{\XX - \nnorm[1]{\blambda}\I}\pts{\nnorm[2]{\balpha_{\oplus}^{(\infty)}} + \nnorm[2]{\balpha_{\oplus}^\star}}\\
        &\geq \nnorm[1]{\blambda}\mts{-\frac{3y_{\max}}{C_g\nnorm[1]{\blambda}} - C\cdot \max \pts{\sqrt{\frac{n}{d_2}}, \frac{n}{d_{\infty}}}\pts{\frac{C_{\alpha}\sqrt{n}y_{\max}}{\nnorm[1]{\blambda}} + \frac{\sqrt{2}C_g\sqrt{n}y_{\max}}{\nnorm[1]{\blambda}}}}\\
        &\geq \nnorm[1]{\blambda}\mts{-\frac{3y_{\max}}{C_g\nnorm[1]{\blambda}} - C\cdot \frac{y_{\min}}{C_0 y_{\max}}\pts{\frac{C_{\alpha}y_{\max}}{\nnorm[1]{\blambda}} + \frac{\sqrt{2}C_gy_{\max}}{\nnorm[1]{\blambda}}}}\\
        &\geq -\frac{4}{C_g}y_{\max},
    \end{align*}
    by applying the same argument and the upper bound from Step 1 that $\nnorm[2]{\balpha_{\oplus}^\star} \leq \frac{\sqrt{2}C_g\sqrt{n}y_{\max}}{\nnorm[1]{\blambda}}$. Therefore, we have $\pts{\w_{\oplus}^{(\infty)\top}\x_i - \w_{\oplus}^{\star\top}\x_i}^2 \leq \frac{16}{C_g^2} y_{\max}^2$ for all $i\in S_3$. 
    
    Next, we upper bound the term $\pts{\w_{\oplus}^{(\infty)\top}\x_i - \w_{\oplus}^{\star\top}\x_i}^2$ for all $i\in S_4$ in a similar way compared to $S_2$. For $i\in S_4$, by the complementary slackness, we either have $-\w_{\oplus}^{\star\top}\x_i = 0$ with $\mu_i^\star \geq 0$ or $-\w_{\oplus}^{\star\top}\x_i \leq 0$ with $\mu_i^\star = 0$. In the first case, $(-\w_{\ominus}^{\star\top}\x_i)^2 = 0$, and we have $\pts{\w_{\oplus}^{(\infty)\top}\x_i - \w_{\oplus}^{\star\top}\x_i}^2 = \pts{\w_{\oplus}^{(\infty)\top}\x_i}^2$. Therefore, we can reuse the upper bound we derived in $S_3$ such that $0 \geq \w_{\oplus}^{(\infty)\top}\x_i \geq -\frac{4}{C_g}y_{\max}$. In the second case, we define $\tilde{S}_4\subseteq S_4$ such that $\mu_i^\star = 0$ and $-\w_{\oplus}^{\star\top}\x_i \leq 0$ for all $i\in\tilde{S}_4$, and we will show that $\tilde{S}_4$ is empty with probability at least $1 - 2\exp(-n(Cc-\ln9))$. Since $\w_{\oplus}^{\star\top}\x_i - \w_{\ominus}^{\star\top}\x_i = y_i$ for all $i\in \tilde{S}_4\subseteq S_4$, we have
    \begin{align*}
        y_i = \w_{\oplus}^{\star\top}\x_i - \w_{\ominus}^{\star\top}\x_i &= \e_i^\top\XX\pts{\balpha_{\oplus}^\star - \balpha_{\ominus}^\star}\\
        &= \nnorm[1]{\blambda}(-2\delta_i^\star+\underbrace{\mu_i^\star}_{=0}) + \e_i^\top(\XX - \nnorm[1]{\blambda}\I)\pts{\balpha_{\oplus}^\star - \balpha_{\ominus}^\star}.
    \end{align*}
    For all $i\in \tilde{S}_4$, we have
    \begin{align*}
        \delta_i^\star = \frac{y_i - \e_i^\top(\XX - \nnorm[1]{\blambda}\I)\pts{\balpha_{\oplus}^\star - \balpha_{\ominus}^\star}}{-2\nnorm[1]{\blambda}}.
    \end{align*}
    Based on this representation of $\delta_i^\star$, for all $i\in \tilde{S}_4$, we have
    \begin{align*}
        \w_{\oplus}^{\star\top}\x_i &= \e_i^\top\XX\balpha_{\oplus}^\star\\
        &= \nnorm[1]{\blambda}(-\delta_i^\star ) + \e_i^\top\pts{\XX - \nnorm[1]{\blambda}\I}\balpha_{\oplus}^\star\\
        &=\frac{y_i}{2} +\frac{1}{2}\e_i^\top\pts{\XX - \nnorm[1]{\blambda}\I}\pts{\balpha_{\oplus}^\star + \balpha_{\ominus}^\star}\\
        &\leq \frac{y_i}{2} + \frac{1}{2}\nnorm[2]{\XX - \nnorm[1]{\blambda}\I}\pts{\nnorm[2]{\balpha_{\oplus}^\star} + \nnorm[2]{\balpha_{\ominus}^\star}}\\
        &\leq \nnorm[1]{\blambda}\mts{\frac{y_i}{2\nnorm[1]{\blambda}} + \frac{1}{2} C\cdot \max \pts{\sqrt{\frac{n}{d_2}}, \frac{n}{d_{\infty}}} \cdot \frac{2\sqrt{2}C_g\sqrt{n}y_{\max}}{\nnorm[1]{\blambda}}}\\
        &\leq \nnorm[1]{\blambda}\mts{-\frac{y_{\min}}{2\nnorm[1]{\blambda}} + \frac{1}{2} C\cdot \frac{y_{\min}}{C_0y_{\max}} \cdot \frac{2\sqrt{2}C_g y_{\max}}{\nnorm[1]{\blambda}}}\\
        &< 0,
    \end{align*}
    where the inequalities apply Corollary~\ref{cor:concentration_op_norm} and the upper bound of $\nnorm[2]{\balpha_{\oplus}^\star}$ in Step 1, and substitute $d_2 \geq C_0^2\frac{n^2 y_{\max}^2}{y_{\min}^2}$ and $d_{\infty} \geq C_0\frac{n^{1.5} y_{\max}}{y_{\min}}$ in Assumption~\ref{asm:high_dim_data} with $C_0\gtrsim C_{\alpha}^2$ and $C_{\alpha}\gtrsim \max\{C_g^2, C_yC_g\}$. However, $\w_{\oplus}^{\star\top}\x_i < 0$ contradicts with the condition $-\w_{\oplus}^{\star\top}\x_i \leq 0$ for $i\in \tilde{S}_4$. Therefore, $\tilde{S}_4=\varnothing$. By combining these two cases, we have $\sum_{i\in S_4} \pts{\w_{\oplus}^{(\infty)\top}\x_i - \w_{\oplus}^{\star\top}\x_i}^2 = \sum_{i\in S_4} \pts{\w_{\oplus}^{(\infty)\top}\x_i}^2 \leq \sum_{i\in S_4} \frac{16}{C_g^2}y_{\max}^2$.
    
    Substituting the upper bounds into Equation~\eqref{eq:multiple_relu_difference_in_prediction} gives us
    \begin{align}
        \nnorm[2]{\X\w^{(\infty)} - \X\w^\star}^2 &= \sum_{i\in S_2} \pts{-\w_{\ominus}^{\star\top}\x_i}^2 + \sum_{i\in S_3} \pts{\w_{\oplus}^{(\infty)\top}\x_i - \w_{\oplus}^{\star\top}\x_i}^2 + \sum_{i\in S_4} \pts{\w_{\oplus}^{(\infty)\top}\x_i - \w_{\oplus}^{\star\top}\x_i}^2\nonumber\\
        &\leq \sum_{i\in S_3} \frac{16}{C_g^2}y_{\max}^2 + \sum_{i\in S_4} \frac{16}{C_g^2}y_{\max}^2\nonumber\\
        &= \frac{16}{C_g^2} n_- y_{\max}^2\label{eq:multiple_relu_prediction_upper}.
    \end{align}
    Finally, putting together Equation~\eqref{eq:multiple_relu_w_star_upper} and~\eqref{eq:multiple_relu_prediction_upper}, we have
    \begin{align*}
        \nnorm[2]{\w^{(\infty)} -\w^\star}^2 \leq \frac{\nnorm[2]{\X\w^{(\infty)} - \X\w^\star}^2}{\mu_{n}(\XX)} \leq \frac{16n_-y_{\max}^2}{C_g\nnorm[1]{\blambda}},
    \end{align*}
    which completes the proof of the upper bound.
    
    \paragraph{Step 4: Lower bound for $\nnorm[2]{\w_{\oplus}^{(\infty)} - \w_{\oplus}^{\star}}$.}~\\
    Now, we derive the lower bound of $\nnorm[2]{\w_{\oplus}^{(\infty)} - \w_{\oplus}^{\star}}$ in a similar approach. We again start with the prediction distance
    \begin{align}
        \nnorm[2]{\X\w_{\oplus}^{(\infty)} - \X\w_{\oplus}^\star}^2 = \nnorm[2]{\X\pts{\w_{\oplus}^{(\infty)} -\w_{\oplus}^\star}}^2&\leq \mu_{1}(\X^\top\X) \nnorm[2]{\w_{\oplus}^{(\infty)} -\w_{\oplus}^\star}^2\nonumber\\ &= \mu_{1}(\XX) \nnorm[2]{\w_{\oplus}^{(\infty)} -\w_{\oplus}^\star}^2.\label{eq:multiple_relu_w_star_lower}
    \end{align}
    Therefore, it suffices to lower bound $\nnorm[2]{\X\w_{\oplus}^{(\infty)} - \X\w_{\oplus}^\star}$ to get the lower bound of $\nnorm[2]{\w_{\oplus}^{(\infty)} - \w_{\oplus}^\star}$. By Equation~\eqref{eq:multiple_relu_difference_in_prediction}, we have 
    \begin{align}
        \nnorm[2]{\X\w_{\oplus}^{(\infty)} - \X\w_{\oplus}^\star}^2 &= \sum_{i\in S_2} \pts{-\w_{\ominus}^{\star\top}\x_i}^2 + \sum_{i\in S_3} \pts{\w_{\oplus}^{(\infty)\top}\x_i - \w_{\oplus}^{\star\top}\x_i}^2 + \sum_{i\in S_4} \pts{\w_{\oplus}^{(\infty)\top}\x_i - \w_{\oplus}^{\star\top}\x_i}^2\nonumber\\
        &\geq \sum_{i\in S_3} \pts{\w_{\oplus}^{(\infty)\top}\x_i - \w_{\oplus}^{\star\top}\x_i}^2 + \sum_{i\in S_4} \pts{\w_{\oplus}^{(\infty)\top}\x_i - \w_{\oplus}^{\star\top}\x_i}^2.\label{eq:multiple_relu_difference_in_prediction2}
    \end{align}
    We omit the partition in $S_2$ because we have shown that $\sum_{i\in S_2} \pts{-\w_{\ominus}^{\star\top}\x_i}^2 =0$ with probability at least $1 - 2\exp(-n(Cc-\ln9))$. Therefore, we need to lower bound $\pts{\w_{\oplus}^{(\infty)\top}\x_i - \w_{\oplus}^{\star\top}\x_i}^2$ for $i\in S_3$ and $\pts{\w_{\oplus}^{(\infty)\top}\x_i - \w_{\oplus}^{\star\top}\x_i}^2$ for $i\in S_4$. 
    
    We start with lower bounding $\pts{\w_{\oplus}^{(\infty)\top}\x_i - \w_{\oplus}^{\star\top}\x_i}^2$ for $i\in S_3$. We know that $\w_{\oplus}^{(\infty)\top}\x_i < 0$ in Theorem~\ref{thm:multiple_relu_gd_high_dim_implicit_bias}. For $\w_{\oplus}^{\star\top}\x_i$ with $i\in S_3$, by the complementary slackness, we either have $\w_{\oplus}^{\star\top}\x_i = 0$ with $\mu_i^\star \geq 0$ or $\mu_i^\star = 0$ with $\w_{\oplus}^{\star\top}\x_i \leq 0$. In the first case, $\w_{\oplus}^{\star\top}\x_i = 0$, we have
    \begin{align*}
        \w_{\oplus}^{(\infty)\top}\x_i - \w_{\oplus}^{\star\top}\x_i &= \w_{\oplus}^{(\infty)\top}\x_i\\
        &= \e_i^\top\XX\balpha_{\oplus}^{(\infty)}\\
        &= \e_i^\top\mts{\nnorm[1]{\blambda}\I + \pts{\XX - \nnorm[1]{\blambda}\I}}\balpha_{\oplus}^{(\infty)}\\
        &= \nnorm[1]{\blambda}\alpha_{\oplus,i}^{(\infty)} + \e_i^\top\pts{\XX - \nnorm[1]{\blambda}\I}\balpha_{\oplus}^{(\infty)}\\
        &\leq \nnorm[1]{\blambda}\alpha_{\oplus,i}^{(\infty)} + \nnorm[2]{\XX - \nnorm[1]{\blambda}\I}\nnorm[2]{\balpha_{\oplus}^{(\infty)}}\\
        &\leq \nnorm[1]{\blambda}\mts{\alpha_{\oplus,i}^{(\infty)} + C\cdot \max \pts{\sqrt{\frac{n}{d_2}}, \frac{n}{d_{\infty}}}\nnorm[2]{\balpha_{\oplus}^{(\infty)}}},
    \end{align*}
    where the last inequality applies Corollary~\ref{cor:concentration_op_norm}. Substituting the bounds of $\alpha_{\oplus,i}^{(\infty)}$ and $\nnorm[2]{\balpha_{\oplus}^{(\infty)}}$ from Lemma~\ref{lem:primal_label_same_sign_two}, we have
    \begin{align*}
        \w_{\oplus}^{(\infty)\top}\x_i - \w_{\oplus}^{\star\top}\x_i &\leq \nnorm[1]{\blambda}\mts{-\frac{y_{\min}}{C_{\alpha}\nnorm[1]{\blambda}} + C\cdot \max \pts{\sqrt{\frac{n}{d_2}}, \frac{n}{d_{\infty}}}\frac{C_{\alpha}\sqrt{n}y_{\max}}{\nnorm[1]{\blambda}}}\\
        &\leq \nnorm[1]{\blambda}\mts{-\frac{y_{\min}}{C_{\alpha}\nnorm[1]{\blambda}} + C\cdot \frac{y_{\min}}{C_0 y_{\max}}\frac{C_{\alpha}y_{\max}}{\nnorm[1]{\blambda}}}\\
        &\leq -(1-\frac{C\cdot C_{\alpha}^2}{C_0})\frac{y_{\min}}{C_{\alpha}},
    \end{align*}
    where the inequalities substitute $d_2 \geq C_0^2\frac{n^2 y_{\max}^2}{y_{\min}^2}$ and $d_{\infty} \geq C_0\frac{n^{1.5} y_{\max}}{y_{\min}}$ in Assumption~\ref{asm:high_dim_data} with $C_0\gtrsim C_{\alpha}^2$. In the second case, $\alpha_{\oplus,i}^\star = -\mu_i^\star = 0$ for $i\in S_3$, we have
    \begin{align*}
        \w_{\oplus}^{(\infty)\top}\x_i - \w_{\oplus}^{\star\top}\x_i &= \e_i^\top\XX\pts{\balpha_{\oplus}^{(\infty)} -\balpha_{\oplus}^\star}\\
        &= \e_i^\top\mts{\nnorm[1]{\blambda}\I + \pts{\XX - \nnorm[1]{\blambda}\I}}\pts{\balpha_{\oplus}^{(\infty)} -\balpha_{\oplus}^\star}\\
        &= \nnorm[1]{\blambda}\alpha_{\oplus,i}^{(\infty)} + \e_i^\top\pts{\XX - \nnorm[1]{\blambda}\I}\pts{\balpha_{\oplus}^{(\infty)} -\balpha_{\oplus}^\star}\\
        &\leq \nnorm[1]{\blambda}\alpha_{\oplus,i}^{(\infty)} + \nnorm[2]{\XX - \nnorm[1]{\blambda}\I}\pts{\nnorm[2]{\balpha_{\oplus}^{(\infty)}} + \nnorm[2]{\balpha_{\oplus}^\star}}\\
        &\leq \nnorm[1]{\blambda}\mts{-\frac{y_{\min}}{C_{\alpha}\nnorm[1]{\blambda}} + C\cdot \max \pts{\sqrt{\frac{n}{d_2}}, \frac{n}{d_{\infty}}}\pts{\frac{C_{\alpha}\sqrt{n}y_{\max}}{\nnorm[1]{\blambda}} + \frac{\sqrt{2}C_g\sqrt{n}y_{\max}}{\nnorm[1]{\blambda}}}}\\
        &\leq \nnorm[1]{\blambda}\mts{-\frac{y_{\min}}{C_{\alpha}\nnorm[1]{\blambda}} + C\cdot \frac{y_{\min}}{C_0 y_{\max}}\pts{\frac{C_{\alpha}y_{\max}}{\nnorm[1]{\blambda}} + \frac{\sqrt{2}C_gy_{\max}}{\nnorm[1]{\blambda}}}}\\
        &\leq -(1-\frac{2C\cdot C_{\alpha}^2}{C_0})\frac{y_{\min}}{C_{\alpha}},
    \end{align*}
    by applying the same argument and the upper bound from Step 1 that $\nnorm[2]{\balpha_{\oplus}^\star} \leq \frac{\sqrt{2}C_g\sqrt{n}y_{\max}}{\nnorm[1]{\blambda}}$. Therefore, we have $\pts{\w_{\oplus}^{(\infty)\top}\x_i - \w_{\oplus}^{\star\top}\x_i}^2 \geq \pts{1 -\frac{2C\cdot C_{\alpha}^2}{C_0}}^2 \frac{ y_{\min}^2}{C_{\alpha}^2}$, for all $i\in S_3$.
    
    Next, we lower bound the term $\pts{\w_{\oplus}^{(\infty)\top}\x_i - \w_{\oplus}^{\star\top}\x_i}^2$ for all $i\in S_4$. In Step 3, we already showed the two cases in $\w_{\oplus}^{\star\top}\x_i$ with $i\in S_4$  by the complementary slackness. In the first case, $(-\w_{\ominus}^{\star\top}\x_i)^2 = 0$, and we have $\pts{\w_{\oplus}^{(\infty)\top}\x_i - \w_{\oplus}^{\star\top}\x_i}^2 = \pts{\w_{\oplus}^{(\infty)\top}\x_i}^2$. Therefore, we can reuse the lower bound we derived in $S_3$ such that $\w_{\oplus}^{(\infty)\top}\x_i \leq -(1-\frac{C\cdot C_{\alpha}^2}{C_0})\frac{y_{\min}}{C_{\alpha}}$. In the second case, we have shown that $\tilde{S}_4 =\varnothing$. By concluding two cases, we have $\sum_{i\in S_4} \pts{\w_{\oplus}^{(\infty)\top}\x_i - \w_{\oplus}^{\star\top}\x_i}^2 = \sum_{i\in S_4} \pts{\w_{\oplus}^{(\infty)\top}\x_i}^2 \geq \sum_{i\in S_4}\pts{1 -\frac{2C\cdot C_{\alpha}^2}{C_0}}^2 \frac{ y_{\min}^2}{C_{\alpha}^2}$.
    
    Substituting the lower bounds into Equation~\eqref{eq:multiple_relu_difference_in_prediction2} gives us
    \begin{align}
        \nnorm[2]{\X\w_{\oplus}^{(\infty)} - \X\w_{\oplus}^\star}^2 &\geq \sum_{i\in S_3} \pts{\w_{\oplus}^{(\infty)\top}\x_i - \w_{\oplus}^{\star\top}\x_i}^2 + \sum_{i\in S_4} \pts{\w_{\oplus}^{(\infty)\top}\x_i - \w_{\oplus}^{\star\top}\x_i}^2\nonumber\\
        &\geq \sum_{i\in S_3} \pts{1 -\frac{2C\cdot C_{\alpha}^2}{C_0}}^2 \frac{ y_{\min}^2}{C_{\alpha}^2} + \sum_{i\in S_4} \pts{1 -\frac{2C\cdot C_{\alpha}^2}{C_0}}^2 \frac{ y_{\min}^2}{C_{\alpha}^2}\nonumber\\
        &= \frac{n_- y_{\min}^2}{\tilde{C}}\label{eq:multiple_relu_prediction_lower},
    \end{align}
    where we let $\tilde{C} \coloneqq \frac{C_0^2C_{\alpha}^2}{\pts{C_0 -2C\cdot C_{\alpha}^2}^2} > 1$. Finally, putting together Equation~\eqref{eq:multiple_relu_w_star_lower} and~\eqref{eq:multiple_relu_prediction_lower}, we have
    \begin{align*}
        \nnorm[2]{\w_{\oplus}^{(\infty)} -\w_{\oplus}^\star}^2 \geq \frac{\nnorm[2]{\X\w_{\oplus}^{(\infty)} - \X\w_{\oplus}^\star}^2}{\mu_{1}(\XX)} \geq \frac{n_-y_{\min}^2}{\tilde{C}C_g\nnorm[1]{\blambda}}.
    \end{align*}
    This completes the proof of the lower bound.
\end{proof}
\newpage

\section{Implicit Bias of Multiple ReLU Models (\texorpdfstring{$m>2$}{m>2}) Under Gradient Descent }\label{sec:multiple_relu_gd_main}

In this section, we extend our analysis to multiple ReLU models trained with $m>2$ neurons under stronger assumptions on the initialization. We consider models of the form:  $h_{\bTheta}(\x) \coloneqq h_{\{ \w_k\}_{k=1}^m}(\x) = \sum_{k=1}^m s_k\sigma(\w_k^\top \x)$, where $\w_k\in\R^d$ are the model weights and there are at least one positive neuron and one negative neuron. The parameter set is hence denoted by $\bTheta = \{\w_k\}_{k=1}^m$. The empirical risk is defined in~\eqref{eq:empirical_risk} as
\begin{align*}
    \Risk(\bTheta) = \frac{1}{2} \sum_{i=1}^n \bigl(h_{\bTheta}(\x_i) - y_i\bigr)^2 = \frac{1}{2} \sum_{i=1}^n \pts{\sum_{k=1}^m s_k\sigma(\w_k^\top \x_i)- y_i}^2.
\end{align*}
Here, we fix $s_k\in\{\pm 1\}$ and only train the hidden weights $\{\w_k\}_{k=1}^m$.

\subsection{Gradient Descent Updates and Convergence}

The gradient of the empirical risk in~\eqref{eq:empirical_risk} with respect to $\w_k$ is given in~\eqref{eq:gd_update_wk} as
\begin{align}
    \w_k^{(t+1)}
    &= \w_k^{(t)} - \eta \nabla_{\w_k} \Risk(\bTheta^{(t)}) = \w_k^{(t)} - \eta s_k\X^\top \D(\X\w_k^{(t)})\bigl(h_{\bTheta^{(t)}}(\X) - \y\bigr).\label{eq:gd_update_m}
\end{align}
The primal-dual gradient update in~\eqref{eq:primal_dual_update} is given by
\begin{subequations}\label{eq:primal_dual_update2}
\begin{alignat}{2}
    &\text{(Primal) }\myquad[5] &&\bbeta_k^{(t+1)} = \bbeta_k^{(t)} - \eta s_k\X \X^\top \D(\bbeta_k^{(t)})(h_{\bTheta^{(t)}}(\X) - \y),\myquad[6]\label{eq:primal_update2} \\
    &\text{(Dual) }\myquad[5] &&\balpha_k^{(t+1)} = \balpha_k^{(t)} - \eta s_k\D(\bbeta^{(t)})(h_{\bTheta^{(t)}}(\X) - \y)\label{eq:dual_update2}.\myquad[6]
\end{alignat}
\end{subequations}
Next, we consider a regime in which, after some time $t_0$, each neuron activates on a fixed subset of training examples, and this activation pattern remains unchanged throughout the subsequent dynamics. Moreover, these active subsets are disjoint across different neurons. That is, for every training example, at most one neuron is active, while each neuron may be active on a subset of examples. In this regime, each neuron effectively reduces to a linear model trained only on its own active examples. 
\begin{lemma} \label{lem:multiple_relu_m_reduce_linear}
    Consider a multiple ReLU model $h_{\bTheta}$. For each neuron $k\in[m]$, suppose there exists iteration $t_0 \ge 0$ such that 
    \begin{enumerate}
        \item At time $t_0$, the subset of examples on which the $k$-th neuron is active is disjoint from the subsets activated by all other neurons, i.e., $\D(\X\w_k^{(t_0)}) \D(\X\w_{\ell}^{(t_0)}) = \zero_{n\times n}$ for any $\ell\neq k$.
        \item The activation pattern of the $k$-th remains unchanged after time $t_0$, i.e., $\D(\X\w_k^{(t_0)})=\D(\X\w_k^{(t)})$ for all $t \geq t_0$.
    \end{enumerate}
    Then, for all $t \geq t_0$, and each $k\in[m]$, the gradient descent dynamics of the $k$-th neuron are equivalent to gradient descent applied to a linear model, initialized at $\w_k^{(t_0)}$, and trained using only the subset of samples satisfying  $\x_i^\top \w_k^{(t_0)} > 0$.
\end{lemma}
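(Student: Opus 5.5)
The plan is to mirror the proof of Lemma~\ref{lem:final_phase} neuron by neuron. Fix $t\ge t_0$ and write $S_k \coloneqq \{i\in[n]:\x_i^\top\w_k^{(t_0)}>0\}$. By hypothesis 2, $\D(\X\w_k^{(t)})=\D(\X\w_k^{(t_0)})$ has diagonal entries $\mathbbm{1}_{i\in S_k}$ for every $t\ge t_0$. By hypothesis 1, the sets $S_1,\dots,S_m$ are pairwise disjoint, and hypothesis 2 transfers this disjointness to every later iteration. The first step is therefore to simplify the network output on the active set of neuron $k$. For $i\in S_k$ and $\ell\neq k$ we have $i\notin S_\ell$, so $\x_i^\top\w_\ell^{(t)}\le 0$ and $\sigma(\w_\ell^{(t)\top}\x_i)=0$. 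Also $\x_i^\top\w_k^{(t)}>0$, so $\sigma(\w_k^{(t)\top}\x_i)=\w_k^{(t)\top}\x_i$. Hence
\begin{align*}
h_{\bTheta^{(t)}}(\x_i)=s_k\,\w_k^{(t)\top}\x_i \quad\text{for all } i\in S_k,\ t\ge t_0.
\end{align*}

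Next, substitute this into the update~\eqref{eq:gd_update_m}. Since $\D(\X\w_k^{(t)})$ zeroes out every coordinate outside $S_k$, we obtain
\begin{align*}
\w_k^{(t+1)}=\w_k^{(t)}-\eta s_k\X_{S_k}^\top\bigl(s_k\X_{S_k}\w_k^{(t)}-\y_{S_k}\bigr)=\w_k^{(t)}-\eta\X_{S_k}^\top\bigl(\X_{S_k}\w_k^{(t)}-s_k\y_{S_k}\bigr),
\end{align*}
using $s_k^2=1$. The right-hand side is exactly one gradient descent step on the linear least-squares risk $\tfrac12\nnorm[2]{\X_{S_k}\w-s_k\y_{S_k}}^2$. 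The update depends only on $\w_k^{(t)}$ and not on the other neurons. An induction on $t\ge t_0$, starting from $\w_k^{(t_0)}$, then shows that the trajectory of neuron $k$ coincides with gradient descent on this linear model, trained only on the samples in $S_k$.

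The only subtle point is the target labels, and I expect this to be the main thing to get right. For a negatively signed neuron the equivalent linear problem fits $s_k\y_{S_k}=-\y_{S_k}$, not $\y_{S_k}$. This matches Theorem~\ref{thm:multiple_relu_gd_high_dim_implicit_bias}, where $\w_\ominus$ fits $-\y_-$, so I will state the equivalence with the sign-adjusted labels. I will also note explicitly that disjointness is what removes the cross-neuron coupling through $h_{\bTheta^{(t)}}$, which is why the $m$ neurons decouple into $m$ independent linear problems.
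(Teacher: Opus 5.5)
Your proposal is correct and follows essentially the same route as the paper: fix neuron $k$, use disjointness to reduce $h_{\bTheta^{(t)}}(\x_i)$ to $s_k\w_k^{(t)\top}\x_i$ on the active set, and match the masked update with one gradient descent step on a linear least-squares risk. Your risk $\tfrac12\nnorm[2]{\X_{S_k}\w - s_k\y_{S_k}}^2$ is the same as the paper's $\tfrac12\sum_{i\in S_k}(s_k\w^\top\x_i - y_i)^2$ because $s_k^2=1$, and your remark that hypothesis 2, applied to every neuron, carries disjointness to all $t\ge t_0$ is a small but welcome tightening, since the paper only checks the output simplification at $t_0$.
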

The proof of Lemma~\ref{lem:multiple_relu_m_reduce_linear} is provided in Appendix~\ref{app:proof_multiple_relu_m_reduce_linear}.


\subsection{Minimum-\texorpdfstring{$\ell_2$}{l2}-norm Solution of Multiple ReLU Models}

The minimum-$\ell_2$-norm solution for the multiple ReLU regression in~\eqref{eq:w_star_def} is given by
\begin{align}\label{eq:multiple_relu_m_minimum_norm_sol}
    &\{\w_k^\star\}_{k=1}^m = {} \uargmin{\{\w_k\}_{k=1}^m}\frac{1}{2}\sum_{k=1}^m \nnorm[2]{\w_k}^2\\
    \text{s.t. } &\sum_{k=1}^m s_k\sigma(\w_k^\top \x_i) = y_i, \text{ for all } i\in[n].\nonumber
\end{align}

\subsection{High-dimensional Implicit Bias of Multiple ReLU Models}

In this section, we characterize the implicit bias of multiple ReLU models trained by gradient descent in the high-dimensional regime. We identify a setup in which each neuron is only active toward a fixed and disjoint subset of training examples, where the labels $y_i$ of these examples have the same sign as the neuron’s sign $s_k$.

To formalize this setup, we introduce an assignment vector $\ba \in[m]^n$, where each entry $a_i \in [m]$ indicates which neuron is responsible for example $i$.
\begin{assumption}\label{asm:disjoint_subset}
    For a multiple ReLU model, we assume that there exists an assignment vector $\ba\in[m]^n$ such that for each example $i\in[n]$, $a_i = k$, for some neuron $k$ satisfying $s_k\cdot y_i > 0$. For each neuron $k\in[m]$, define a diagonal matrix $\A_k\in\R^{n\times n}$ with diagonal entries
    \begin{align*}
        (\A_k)_{ii} =\cond{0,}{\text{ if } a_i = k, \text{ or } s_k\cdot y_i < 0}{-\mathrm{sign}(y_i),}{\text{ otherwise}}.
    \end{align*}
\end{assumption}
Assumption~\ref{asm:disjoint_subset} is used to design a proper initialization that ensures that the gradient descent can converge to the desired regime.
In this regime, we show that if a neuron’s primal variable $\beta_{k, i}$ is positive and the sign of the neuron agrees with the label (i.e., $s_k\cdot y_i > 0$), then the corresponding example remains active throughout training. Conversely, if the associated dual variable $\alpha_{k,j}$ stays sufficiently negative, it remains frozen and is no longer updated.

\begin{theorem}\label{thm:multiple_relu_gd_m_high_dim_implicit_bias}
    Under Assumptions~\ref{asm:label_bounds},~\ref{asm:high_dim_data} and~\ref{asm:disjoint_subset}, suppose we choose initialization\\ $\w_k^{(0)} = \X^\top(\XX)^{-1}\pts{\frac{1}{C_g}\A_k\y +\bepsilon_k}$, where $0 < \epsilon_{k,i} \leq \frac{1}{C_{\alpha}m}y_{\min}$ for all $k\in[m]$ and $i\in[n]$, and the gradient descent step size satisfies $\frac{1}{CC_g\nnorm[1]{\blambda}}\leq \eta \leq \frac{1}{C_g\nnorm[1]{\blambda}}$. Then, the gradient descent limit $\w_k^{(\infty)}$ for multiple ReLU models coincides with the solution obtained by training a linear model on disjoint subsets of examples, initialized at $\w_k^{(1)}$ with probability at least $1 - 2 \exp(-
    cn)$. Formally, we have $\w_k^{(\infty)} =\uargmin{\w\in\{\w:\X_{S_k}\w =\y_{S_k}\}} \nnorm[2]{\w - \w_k^{(1)}}$ and $\X_{S_k^{\mathsf{c}}} \w_k^{(\infty)} \preceq \zero$, where $S_k\coloneqq\{i\in[n]:a_i=k\}$.
\end{theorem}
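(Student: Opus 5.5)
The proof follows the template of Theorem~\ref{thm:multiple_relu_gd_high_dim_implicit_bias}. I would first formulate an $m$-neuron analogue of Lemma~\ref{lem:primal_label_same_sign_two}, which is an invariant set of conditions preserved by one gradient step. For each $k\in[m]$ the conditions are:
(a) $\beta_{k,i}^{(t)}>0$ for all $i\in S_k$;
(b) $-\frac{3y_{\max}}{C_g\nnorm[1]{\blambda}}\le\alpha_{k,j}^{(t)}\le-\frac{y_{\min}}{C_\alpha\nnorm[1]{\blambda}}$ for all $j\notin S_k$;
(c) $\nnorm[2]{\bbeta_{k,S_k}^{(t)}-\y_{S_k}}\le C_y\nnorm[2]{\y_{S_k}}$, read with sign $s_k$ so that the target is $s_k\y_{S_k}$;
(d) $\nnorm[2]{\balpha_k^{(t)}}\le\frac{C_\alpha\sqrt n y_{\max}}{\nnorm[1]{\blambda}}$;
(e) $\beta_{k,j}^{(t)}\le0$ for $j\notin S_k$.
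Under (a) and (e) the active sets are exactly the disjoint sets $S_k$. Hence $h_{\bTheta^{(t)}}(\x_i)=s_{a_i}\beta_{a_i,i}^{(t)}$, and the residual satisfies $\nnorm[2]{h_{\bTheta^{(t)}}(\X)-\y}^2=\sum_k\nnorm[2]{s_k\bbeta_{k,S_k}^{(t)}-\y_{S_k}}^2\le C_y^2\nnorm[2]{\y}^2$.

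Preservation would go exactly as before. Lemma~\ref{lem:pos_stay_pos} gives (a), since $s_k y_i>0$ on $S_k$ by Assumption~\ref{asm:disjoint_subset} and $s_k h=\beta_{k,i}$ there. Condition (e) freezes the duals off $S_k$, which gives (b). Monotonicity of the restricted linear least squares, as in Lemma~\ref{lem:conv_step_size} applied neuronwise via Lemma~\ref{lem:multiple_relu_m_reduce_linear}, gives (c). The block-inversion argument of Part~(d) of Lemma~\ref{lem:primal_label_same_sign} gives (d), using $\mu_{|S_k|}(\X_{S_k}\X_{S_k}^\top)\ge\nnorm[1]{\blambda}/C_g$ and the bound $\nnorm[2]{\X_{S_k}\X_{S_k^{\mathsf c}}^\top}\le C_g\nnorm[1]{\blambda}$. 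Lemma~\ref{lem:neg_stay_neg} then gives (e). All of these use only the two concentration events of Lemma~\ref{lem:concentration_eigenvalues} and Corollary~\ref{cor:concentration_op_norm}, which hold simultaneously with probability $1-2e^{-cn}$. The events do not depend on $t$ or $k$, so no union bound over iterations is needed.

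The main work is verifying the invariant at $t=1$. At $t=0$ we have $\bbeta_k^{(0)}=\frac1{C_g}\A_k\y+\bepsilon_k$. Its entries are $\epsilon_{k,i}>0$ when $a_i=k$ or $s_ky_i<0$, and $-|y_i|/C_g+\epsilon_{k,i}<0$ when $s_ky_i>0$ but $a_i\ne k$. So each example with $s_ky_i>0$ is active initially only on its assigned neuron, while every neuron is also active on all oppositely-signed examples. I would compute $h^{(0)}(\x_i)$ explicitly. For $i$ the contribution is $s_{a_i}\epsilon_{a_i,i}$ plus $\sum_{k:s_ky_i<0}s_k\epsilon_{k,i}$, and this has magnitude at most $y_{\min}/C_\alpha$ because $\epsilon_{k,i}\le y_{\min}/(C_\alpha m)$. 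This is what the $1/m$ scaling buys. It follows that the residual $\bm r^{(0)}=h^{(0)}(\X)-\y$ is within $y_{\min}/C_\alpha$ of $-\y$ coordinatewise.

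With this in hand, the dual update gives $\alpha_{k,j}^{(1)}=\alpha_{k,j}^{(0)}-\eta s_k D_{jj}r_j^{(0)}$. Three cases arise for $j$.
(i) If $s_ky_j<0$, then $j$ is active and $-\eta s_k r_j^{(0)}\approx\eta s_k y_j\le-\eta y_{\min}$. Writing $\alpha_{k,j}^{(0)}=\epsilon_{k,j}/\nnorm[1]{\blambda}$ plus a cross term controlled by Corollary~\ref{cor:concentration_op_norm}, we get the upper bound in (b), as in Part~(b) of Theorem~\ref{thm:single_relu_gd_high_dim_implicit_bias}.
(ii) If $s_ky_j>0$ and $a_j\ne k$, then $j$ is inactive, so $\alpha_{k,j}^{(1)}=\alpha_{k,j}^{(0)}\approx(-|y_j|/C_g+\epsilon_{k,j})/\nnorm[1]{\blambda}$ up to concentration error. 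This lies in $[-\frac{3y_{\max}}{C_g\nnorm[1]{\blambda}},-\frac{y_{\min}}{C_\alpha\nnorm[1]{\blambda}}]$ because $C_\alpha\gg C_g$. This is the point where the factor $\frac1{C_g}$ in the initialization is tuned.
(iii) If $a_j=k$, Lemma~\ref{lem:pos_stay_pos} applied at $t=0$ gives (a).
Conditions (c) and (d) at $t=1$ follow from the same $T_i$ and triangle-inequality computations as before, with the extra term $\frac1{C_g}\nnorm[2]{\A_k\y}\le\sqrt n y_{\max}/C_g$ absorbed into the constants. Condition (e) at $t=1$ then follows from Lemma~\ref{lem:neg_stay_neg}.

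I expect the $t=1$ verification to be the main obstacle. The cross-term errors in $\alpha^{(0)}$ coming from $\A_k\y$ have size about $\frac{\sqrt n y_{\max}}{C_g\nnorm[1]{\blambda}}\max(\sqrt{n/d_2},n/d_\infty)$. These must stay below $y_{\min}/(C_\alpha\nnorm[1]{\blambda})$, which Assumption~\ref{asm:high_dim_data} ensures with $C_0\gtrsim C_\alpha^2$. Once the invariant holds from $t=1$ onward, induction fixes all activation patterns. Lemma~\ref{lem:multiple_relu_m_reduce_linear} together with Lemma~\ref{lem:conv_step_size}, applied per neuron on $S_k$ (note $\eta\le 1/\mu_1$), then gives $\w_k^{(\infty)}=\arg\min_{\X_{S_k}\w=\y_{S_k}}\nnorm[2]{\w-\w_k^{(1)}}$, with the target read as $s_k\y_{S_k}$. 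Condition (e) gives $\X_{S_k^{\mathsf c}}\w_k^{(\infty)}\preceq\zero$.
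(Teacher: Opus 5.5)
Your proposal is correct and follows the paper's proof essentially step for step. It uses the same five-condition invariant (Lemma~\ref{lem:primal_label_same_sign_m}), preserved via Lemmas~\ref{lem:pos_stay_pos} and~\ref{lem:neg_stay_neg} and the block argument of Lemma~\ref{lem:primal_label_same_sign}, and it splits the $t=1$ check into the same cases (oppositely-signed active examples versus same-signed but unassigned inactive examples) using the explicit form of $h_{\bTheta^{(0)}}$. Your reading of the interpolation target as $s_k\y_{S_k}$ is the right one, since that is what the invariant actually tracks.
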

The full proof is provided in Appendix~\ref{app:proof_multiple_relu_gd_m_high_dim_implicit_bias}. Note that the initialization, constructed by the matrices $\A_k$, ensures that each training example $i$ is activated by exactly one neuron that matches its sign—namely, the $a_i$-th neuron. All other neurons with the same sign remain inactive on this example.

\subsection{Approximation to Minimum-\texorpdfstring{$\ell_2$}{l2}-norm Solution in High Dimensions}\label{sec:multiple_relu_m_approx_w_star}

In this section, we show that in high dimensions, the implicit bias solution for multiple ReLU models derived in Theorem~\ref{thm:multiple_relu_gd_m_high_dim_implicit_bias} is close to the corresponding minimum-$\ell_2$-norm solution $\{\w_k^\star\}_{k=1}^m$ defined in~\eqref{eq:multiple_relu_m_minimum_norm_sol}.

\begin{theorem}\label{thm:multiple_relu_m_approx_to_w_star}
    Under Assumptions~\ref{asm:label_bounds},~\ref{asm:high_dim_data} and~\ref{asm:disjoint_subset}, suppose we choose initialization\\ $\w_k^{(0)} = \X^\top(\XX)^{-1}\pts{\frac{1}{C_g}\A_k\y + \bepsilon_k}$, where $0 < \epsilon_{k,i} \leq \frac{1}{C_{\alpha}m}y_{\min}$ for all $k\in[m]$ and $i\in[n]$, and the gradient descent step size satisfies $\frac{1}{CC_g\nnorm[1]{\blambda}}\leq\eta \leq \frac{1}{C_g\nnorm[1]{\blambda}}$. Then, we have\\ $\sqrt{\sum_{k=1}^m\nnorm[2]{\w_{k}^{(\infty)} - \w_{k}^{\star}}^2} \leq \sqrt{\frac{4C_gC_{\alpha}^2mny_{\max}^2}{\nnorm[1]{\blambda}}}$ with probability at least $1 - 2 \exp(-
    cn)$. 
\end{theorem}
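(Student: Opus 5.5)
The bound in Theorem~\ref{thm:multiple_relu_m_approx_to_w_star} is only an upper bound of order $\sqrt{mn/\nnorm[1]{\blambda}}$. So I would not try to identify the activation pattern of $\{\w_k^\star\}$ as in Lemma~\ref{lem:multiple_relu_w_star_eqivalence}. Instead I would bound each $\nnorm[2]{\w_k^{(\infty)}}$ and each $\nnorm[2]{\w_k^\star}$ separately and then apply the triangle inequality. Concretely, $\sum_k\nnorm[2]{\w_k^{(\infty)}-\w_k^\star}^2\le 2\sum_k\nnorm[2]{\w_k^{(\infty)}}^2+2\sum_k\nnorm[2]{\w_k^\star}^2$. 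It then suffices to show that each sum is at most $C_gC_{\alpha}^2mny_{\max}^2/\nnorm[1]{\blambda}$.

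\textbf{Step 1 (min-norm solution).} Follow Step 1 of the proof of Theorem~\ref{thm:multiple_relu_approx_to_w_star}. Choose one positive neuron $k_+$ and one negative neuron $k_-$ (both exist by assumption). Set $\w_{k_+}=\X^\top\XXi\y_{\oplus}$ and $\w_{k_-}=\X^\top\XXi\y_{\ominus}$, and set every other $\w_k=\zero$. This configuration is feasible for \eqref{eq:multiple_relu_m_minimum_norm_sol}. By optimality and Lemma~\ref{lem:concentration_eigenvalues},
\[
\sum_k\nnorm[2]{\w_k^\star}^2\le\nnorm[2]{\XXi}\bigl(\nnorm[2]{\y_\oplus}^2+\nnorm[2]{\y_\ominus}^2\bigr)\le \frac{C_g n y_{\max}^2}{\nnorm[1]{\blambda}},
\]
which is comfortably within the target.

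\textbf{Step 2 (GD limit).} Use Theorem~\ref{thm:multiple_relu_gd_m_high_dim_implicit_bias}, whose proof (via Lemma~\ref{lem:multiple_relu_m_reduce_linear}) maintains the analogue of the invariants of Lemma~\ref{lem:primal_label_same_sign_two} for every neuron for all $t\ge1$. The key invariant is the dual-norm bound $\nnorm[2]{\balpha_k^{(t)}}\le C_{\alpha}\sqrt{n}y_{\max}/\nnorm[1]{\blambda}$, which therefore also holds in the limit. If it is not already stated for $t=\infty$, I would rederive it as in Part (d) of Lemma~\ref{lem:primal_label_same_sign}. On $S_k$ we have $\bbeta_{k,S_k}^{(\infty)}=\y_{S_k}$. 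Off $S_k$ the dual coordinates are frozen with magnitude $O(y_{\max}/\nnorm[1]{\blambda})$. One then inverts $\X_{S_k}\X_{S_k}^\top$ using $\mu_n(\XX)\ge\nnorm[1]{\blambda}/C_g$, and bounds the cross term $\X_{S_k}\X_{S_k^{\mathsf c}}^\top$ by $C_g\nnorm[1]{\blambda}$. With this in hand,
\[
\nnorm[2]{\w_k^{(\infty)}}^2=\balpha_k^{(\infty)\top}\XX\balpha_k^{(\infty)}\le \mu_1(\XX)\nnorm[2]{\balpha_k^{(\infty)}}^2\le C_g\nnorm[1]{\blambda}\cdot\frac{C_{\alpha}^2 n y_{\max}^2}{\nnorm[1]{\blambda}^2}.
\]
Summing over the $m$ neurons gives $C_gC_{\alpha}^2mny_{\max}^2/\nnorm[1]{\blambda}$.

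\textbf{Combining.} Plug both sums into the triangle inequality, and absorb the Step 1 bound into the Step 2 bound since $C_{\alpha}\ge1$. This gives the stated $\sqrt{4C_gC_{\alpha}^2mny_{\max}^2/\nnorm[1]{\blambda}}$. The failure probability is a union of the concentration events from Lemma~\ref{lem:concentration_eigenvalues} and Corollary~\ref{cor:concentration_op_norm}, so it stays $2\exp(-cn)$ after adjusting $c$.

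\textbf{Main obstacle.} The hard part is Step 2: ensuring the dual-norm invariant really holds uniformly for all $m$ neurons, including at $t=1$. The initialization now contains the extra term $\frac{1}{C_g}\A_k\y$, and the residual involves a sum over $m$ neurons, which is why $\epsilon_{k,i}\le y_{\min}/(C_{\alpha}m)$ is assumed. I would check Part (f) at $t=1$ explicitly. There, $\balpha_k^{(1)}=\XXi(\frac{1}{C_g}\A_k\y+\bepsilon_k)-\eta s_k\D(\bbeta_k^{(0)})(h_{\bTheta^{(0)}}(\X)-\y)$, and each term has norm $O(\sqrt{n}y_{\max}/\nnorm[1]{\blambda})$.
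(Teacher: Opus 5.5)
Your proposal is correct and follows the paper's proof. Both use the split $\sum_k\|\w_k^{(\infty)}-\w_k^\star\|_2^2\le 2\sum_k\|\w_k^{(\infty)}\|_2^2+2\sum_k\|\w_k^\star\|_2^2$ and the bound $\|\w_k^{(\infty)}\|_2^2\le\mu_1(\XX)\|\balpha_k^{(\infty)}\|_2^2$, obtained from the dual-norm invariant of Lemma~\ref{lem:primal_label_same_sign_m} passed to the limit. The one difference is in controlling $\sum_k\|\w_k^\star\|_2^2$: the paper takes the GD limit itself as the feasible competitor (it interpolates by Theorem~\ref{thm:multiple_relu_gd_m_high_dim_implicit_bias}), getting $\sum_k\|\w_k^\star\|_2^2\le\sum_k\|\w_k^{(\infty)}\|_2^2$ in one line, while your explicit one-positive/one-negative-neuron competitor gives a sharper, $m$-independent bound on that term, which is then absorbed into the same final constant.
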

The proof is deferred to Appendix~\ref{app:proof_multiple_relu_m_approx_to_w_star}. Note that since the minimum-$\ell_2$-norm solution $\{\w_k^\star\}_{k=1}^m$ is more involved to characterize, Theorem~\ref{thm:multiple_relu_m_approx_to_w_star} only provides an upper bound for the approximation of the implicit bias to $\{\w_k^\star\}_{k=1}^m$. A more fine-grained characterization, as well as a deeper understanding of the role of overparameterization, is left for future work.
\newpage

\section{Proofs for Multiple ReLU Models (\texorpdfstring{$m>2$}{m>2}) Trained with Gradient Descent}\label{app:multiple_relu_gd_m}

In this section, we present the proofs concerning the behavior of the multiple ReLU model trained with gradient descent.

\subsection{Proof of Lemma~\ref{lem:multiple_relu_m_reduce_linear} (Gradient Descent Convergence)}\label{app:proof_multiple_relu_m_reduce_linear}
\begin{proof}(Lemma~\ref{lem:multiple_relu_m_reduce_linear})
    This proof is analogous to Lemma~\ref{lem:final_phase}. The key idea is to show that once the activation pattern becomes fixed after some iteration $t_0 \geq 0$, the gradient descent dynamics of each neuron are equivalent to those of a linear model trained on a fixed subset of examples.
    
    Fix a neuron $k\in[m]$. Consider the linear model
    \begin{align*}
        h(\x) =  s_k\w^\top\x,
    \end{align*}
    where $\w \in\R^d$ is the linear model parameter (also called weight). Let $S_k^{(t_0)}\subseteq [n]$ denote the active set of the $k$-th neuron at iteration $t_0$, defined by $S_k^{(t_0)} \coloneqq \{ i\in [n] : \x_i^\top\w_k^{(t_0)} > 0\}$. We define the empirical risk with the linear model using only the examples in $S_k^{(t_0)}$ as
    \begin{align*}
        \Risk_{S_k^{(t_0)}}(\w) = \frac{1}{2}\sum_{i\in S_k^{(t_0)}} ( s_k\w^\top\x_i - y_i)^2.
    \end{align*}
    The gradient descent update for this linear model is then given by
    \begin{align}
        \w^{(t+1)} &= \w^{(t)} - \eta\nabla\Risk_{S_k^{(t_0)}}(\w^{(t)})\nonumber\\
        &= \w^{(t)} -\eta s_k\sum_{i\in S_k^{(t_0)}} ( s_k\w^{(t)\top}\x_i - y_i)\x_i\label{eq:final_phase_linear_m}.
    \end{align}
    On the other hand, the original gradient descent update of the multiple ReLU model in Equation~\eqref{eq:gd_update_m} tells us that
    \begin{align*}
       \w_k^{(t+1)} = \w_k^{(t)} - \eta s_k\X^\top \D(\X\w_k^{(t)})\bigl(h_{\bTheta^{(t)}}(\X) - \y\bigr).
    \end{align*}
    Under the first assumption in the lemma, $\D(\X\w_k^{(t_0)}) \D(\X\w_{\ell}^{(t_0)}) = \zero_{n\times n}$ for any $\ell\neq k$, the activation patterns of different neurons are disjoint at iteration $t_0$. Consequently, for any $i\in S_k^{(t_0)}$, only the $k$-th neuron is active, and therefore we have
    \begin{align*}
        h_{\bTheta^{(t_0)}}(\x_i) = \sum_{k=1}^m s_k\sigma(\w_k^\top\x_i) = s_k\w_k^\top\x_i.
    \end{align*}
    Moreover, by the second assumption of the lemma, the activation pattern of the $k$-th neuron remains unchanged after iteration $t_0$, i.e., $\D(\X\w_k^{(t_0)})=\D(\X\w_k^{(t)})$ for all $t \geq t_0$. Hence, for all $t\geq t_0$, the diagonal entries of $\D(\X\w_k^{(t)})$ satisfy $D_{ii} = \mathbbm{1}_{i\in S_k^{(t_0)}}$ for all $i\in[n]$. Therefore, for all $t\geq t_0$, the gradient update of the $k$-th neuron in the multiple ReLU model is given by
    \begin{align*}
        \w_k^{(t+1)} &= \w_k^{(t)} - \eta s_k\X^\top \D(\X\w_k^{(t)})(h_{\bTheta^{(t)}}(\X) - \y)\\
        &=\w^{(t)} -\eta s_k\sum_{i\in S_k^{(t_0)}} (s_k\w_k^{(t)\top}\x_i - y_i)\x_i.
    \end{align*}
    This update is identical to the gradient descent update of the linear model in Equation~\eqref{eq:final_phase_linear_m}. Hence, for all $t \geq t_0$, the gradient descent dynamics of the $k$-th neuron in the multiple ReLU model are equivalent to those of a linear model trained using only the examples in $S_k^{(t_0)}$. This completes the proof of the lemma.
\end{proof}

\newpage

\subsection{Proof of Theorem~\ref{thm:multiple_relu_gd_m_high_dim_implicit_bias} (High-dimensional Implicit Bias)}\label{app:proof_multiple_relu_gd_m_high_dim_implicit_bias}
In this section, we present the proof of Theorem~\ref{thm:multiple_relu_gd_m_high_dim_implicit_bias}. Before proceeding to the proof, we again introduce a set of sufficient conditions under which the active pattern for a neuron at iteration $t$ will be preserved at iteration $t+1$. Similar to the single ReLU model and $2$-ReLU model cases, our analysis relies on Lemma~\ref{lem:pos_stay_pos} and Lemma~\ref{lem:neg_stay_neg} to characterize the dynamics of primal and dual variables. Using these results, we establish Lemma~\ref{lem:primal_label_same_sign_m}, which characterizes that the active sets of all neurons remain unchanged across gradient descent iterations.

\begin{lemma}\label{lem:primal_label_same_sign_m}
    Under Assumption~\ref{asm:label_bounds},~\ref{asm:high_dim_data} and~\ref{asm:disjoint_subset}, suppose the gradient descent step size satisfies $\eta \leq \frac{1}{C_g\nnorm[1]{\blambda}}$. For a multiple ReLU model, if the following five conditions hold at some iteration $t\geq 0$, then they also hold at iteration $t+1$. 
    \begin{enumerate}[label=\alph*.,ref=\alph*]
        \item \label{cond:beta_plus_pos_m} $\beta_{a_i,i}^{(t)} > 0$, for all $i \in [n]$.
        \item \label{cond:alpha_plus_bounds_m} $-\frac{3y_{\max}}{C_g\nnorm[1]{\blambda}} \leq \alpha_{k,j}^{(t)} \leq -\frac{y_{\min}}{C_{\alpha}\nnorm[1]{\blambda}}$, for all $j \in [n]$ with $k \neq a_j$.
        \item \label{cond:sigma_beta_bounds_m} $\nnorm[2]{\bbeta_{k,S_k}^{(t)} - s_k\y_{S_k}} \leq C_y\nnorm[2]{\y_{S_k}}$, for all $k\in[m]$.
        \item \label{cond:alpha_norm_bounds_m} $\nnorm[2]{\balpha_{k}^{(t)}} \leq \frac{C_{\alpha}\sqrt{n}y_{\max}}{\nnorm[1]{\blambda}}$, for all $k\in[m]$.
        \item \label{cond:beta_plus_neg_m} $\beta_{k,j}^{(t)} \leq 0$, for all $j \in [n]$ with $k\neq a_j$.
    \end{enumerate}
    Consequently, the activation pattern of each neuron remains unchanged from iteration $t$ to $t+1$.
    In the above, we define $S_k \coloneqq \{ i\in [n] : a_i = k\}$, and for any vector $\bv\in\R^n$, we use $\bv_{S_k}$ to denote the subvector of entries indexed by $S_k$.
\end{lemma}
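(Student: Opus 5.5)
We follow the template of Lemmas~\ref{lem:primal_label_same_sign} and~\ref{lem:primal_label_same_sign_two}: assume (a)--(e) at iteration $t$ and verify each at $t+1$ in a fixed order, (a), (b), (c), then (d), then (e). Together, conditions (a) and (e) say the active set of neuron $k$ at time $t$ is exactly $S_k$, and the sets $S_k$ are disjoint. Hence for every $i$ only neuron $a_i$ is active and $h_{\bTheta^{(t)}}(\x_i)=s_{a_i}\beta_{a_i,i}^{(t)}$. This gives the residual decomposition
\begin{align*}
\nnorm[2]{h_{\bTheta^{(t)}}(\X)-\y}^2=\sum_{k=1}^m\nnorm[2]{s_k\bbeta_{k,S_k}^{(t)}-\y_{S_k}}^2=\sum_{k=1}^m\nnorm[2]{\bbeta_{k,S_k}^{(t)}-s_k\y_{S_k}}^2\leq C_y^2\nnorm[2]{\y}^2,
\end{align*}
where the last step uses (c) and the fact that the $S_k$ partition $[n]$ (Assumption~\ref{asm:disjoint_subset}).

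For (a), fix $i$ and set $k=a_i$, so $s_k y_i>0$, $\beta_{k,i}^{(t)}>0$, and $s_k h_{\bTheta^{(t)}}(\x_i)=\beta_{k,i}^{(t)}$. The residual bound above then supplies every hypothesis of Lemma~\ref{lem:pos_stay_pos}, which yields $\beta_{a_i,i}^{(t+1)}>0$. For (b), condition (e) zeroes the $j$-th diagonal entry of $\D(\bbeta_k^{(t)})$ for every $j$ with $k\neq a_j$, so the dual update~\eqref{eq:dual_update2} gives $\alpha_{k,j}^{(t+1)}=\alpha_{k,j}^{(t)}$ and both bounds carry over. For (c), neuron $k$ sees only examples in $S_k$, and on these the residual is $s_k\beta_{k,i}-y_i$. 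Its primal update is therefore exactly a gradient step of the linear least-squares problem $\frac12\nnorm[2]{s_k\X_{S_k}\w-\y_{S_k}}^2$, as in Lemma~\ref{lem:multiple_relu_m_reduce_linear}. Since $\eta\leq 1/(C_g\nnorm[1]{\blambda})\leq 1/\mu_1(\XX)$ by Lemma~\ref{lem:concentration_eigenvalues}, the descent argument from Lemma~\ref{lem:conv_step_size} makes this loss nonincreasing, so $\nnorm[2]{\bbeta_{k,S_k}^{(t+1)}-s_k\y_{S_k}}$ does not increase. Because the $S_k$ are disjoint, the neurons do not interact through the residual, and each block can be treated separately.

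The main work is (d), which I would prove with the block argument from Part (d) of Lemma~\ref{lem:primal_label_same_sign}, using $S_k$ in place of the positive set. Split $\balpha_k^{(t+1)}$ into $\balpha_{k,S_k}$ and $\balpha_{k,S_k^{\mathsf c}}$. The complement block is controlled by (b) at time $t+1$: $\nnorm[2]{\balpha_{k,S_k^{\mathsf c}}}\leq 3\sqrt n y_{\max}/(C_g\nnorm[1]{\blambda})$. For the $S_k$ block, write
\begin{align*}
\bbeta_{k,S_k}=\X_{S_k}\X_{S_k}^\top\balpha_{k,S_k}+\X_{S_k}\X_{S_k^{\mathsf c}}^\top\balpha_{k,S_k^{\mathsf c}},
\end{align*}
and bound $\nnorm[2]{\bbeta_{k,S_k}}\leq(C_y+1)\sqrt n y_{\max}$ using (c). Then invert using $\mu_{|S_k|}(\X_{S_k}\X_{S_k}^\top)\geq\nnorm[1]{\blambda}/C_g$, and bound the cross term by $\nnorm[2]{\X_{S_k}\X_{S_k^{\mathsf c}}^\top}\leq C_g\nnorm[1]{\blambda}$ (Bhatia's inequality together with Lemma~\ref{lem:concentration_eigenvalues}). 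This gives $\nnorm[2]{\balpha_k^{(t+1)}}\leq C_\alpha\sqrt n y_{\max}/\nnorm[1]{\blambda}$, provided $C_\alpha\gtrsim\max\{C_g^2,C_yC_g\}$. One edge case needs care: if $S_k=\varnothing$, the $S_k$ block is empty and only the complement bound is needed.

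Finally, (e) at $t+1$ follows from Lemma~\ref{lem:neg_stay_neg}, applied with (b) and (d) at $t+1$. Conditions (a) and (e) at $t+1$ then show that the active set of every neuron is still $S_k$, so the activation pattern is unchanged. All probabilistic statements come from the same high-probability events of Lemma~\ref{lem:concentration_eigenvalues} and Corollary~\ref{cor:concentration_op_norm}. These concern only $\X$ and do not depend on $t$ or $k$, so no union bound over iterations or neurons is needed.
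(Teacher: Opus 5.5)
Your proposal is correct and follows the paper's proof essentially step for step. It uses the residual decomposition over the disjoint $S_k$ to feed Lemma~\ref{lem:pos_stay_pos}, frozen duals from the dual update, monotone decrease of the restricted linear least-squares loss, the block-inversion bound on $\balpha_k^{(t+1)}$ borrowed from Part (d) of Lemma~\ref{lem:primal_label_same_sign}, and Lemma~\ref{lem:neg_stay_neg} for (e). Your added checks are small but useful clarifications that the paper leaves implicit: that $\eta\leq 1/\mu_1(\XX)$ follows from Lemma~\ref{lem:concentration_eigenvalues}, the treatment of the case $S_k=\varnothing$, and the remark that all probabilistic events concern only $\X$.
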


\begin{proof}(Lemma~\ref{lem:primal_label_same_sign_m})
    We now verify that these conditions are preserved from iteration $t$ to $t+1$.

\begin{enumerate}[label=Part (\alph*):,leftmargin=3\parindent]
    \item By conditions~\eqref{cond:beta_plus_pos_m},~\eqref{cond:sigma_beta_bounds_m} and~\eqref{cond:beta_plus_neg_m} at iteration $t$, we have
    \begin{align*}
        \nnorm[2]{h_{\bTheta^{(t)}}(\X) - \y}^2 &= \nnorm[2]{\sum_{k=1}^m s_k\sigma(\bbeta_{k}^{(t)}) - \y}^2 = \sum_{k=1}^m\nnorm[2]{ s_k\pts{\bbeta_{k,S_k}^{(t)} - s_k\y_{S_k}}}^2 \leq C_y^2\nnorm[2]{\y}^2,
    \end{align*}
    where the last inequality uses the fact that the sets $\{S_k\}_{k=1}^m$ are disjoint.
    Also, we have $s_{a_i} h_{\bTheta^{(t)}}(\x_i) = \beta_{a_i,i}^{(t)}$ from conditions~\eqref{cond:beta_plus_pos_m} and~\eqref{cond:beta_plus_neg_m}. Together with condition~\eqref{cond:beta_plus_pos}, the assumptions of Lemma~\ref{lem:pos_stay_pos} are satisfied for all $i\in [n]$. Consequently, $\beta_{a_i,i}^{(t+1)} > 0$ for all $i \in [n]$.
    
    \item According to the dual gradient update in Equation~\eqref{eq:dual_update2}, and using condition~\eqref{cond:beta_plus_neg_m} at iteration $t$, we have:
    \[
    \alpha_{k,j}^{(t+1)} = \alpha_{k,j}^{(t)} \quad \text{for all } j \in [n] \text{ with } k\neq a_j.
    \]
    Therefore, condition (\ref{cond:alpha_plus_bounds_m}) continues to hold at iteration $t + 1$.
    
    \item By conditions (\ref{cond:beta_plus_pos_m}) and (\ref{cond:beta_plus_neg_m}), the gradient update at iteration $t$ for $\bbeta_{k}^{(t)}$ depends only on the examples in the subset $S_k$. Hence, the gradient update for an individual neuron is equivalent to a linear regression gradient descent. As similarly argued in the proof of Lemma~\ref{lem:conv_step_size}, since the step size satisfies $\eta \leq \frac{1}{C_g\nnorm[1]{\blambda}}$, the linear regression squared loss is monotonically nonincreasing, and by condition~\eqref{cond:sigma_beta_bounds_m} at iteration $t$, we obtain 
    \begin{align*}
        \nnorm[2]{\bbeta_{k,S_k}^{(t+1)} - s_k\y_{S_k}} &\leq \nnorm[2]{\bbeta_{k,S_k}^{(t)} - s_k\y_{S_k}}\leq C_y\nnorm[2]{\y_{S_k}}.
    \end{align*}
    Therefore, condition (\ref{cond:sigma_beta_bounds_m}) holds at iteration $t + 1$.
    
    \item Following the same argument as in Part (\ref{eq:cond4}) of Lemma \ref{lem:primal_label_same_sign}, using conditions (\ref{cond:alpha_plus_bounds_m}) and (\ref{cond:sigma_beta_bounds_m}) at iteration $t + 1$, together with the eigenvalue bounds from Lemma \ref{lem:concentration_eigenvalues}, we can establish that
    \[
    \nnorm[2]{\balpha_{k}^{(t+1)}} \leq \frac{C_{\alpha}\sqrt{n}y_{\max}}{\nnorm[1]{\blambda}} \text{ for all } k\in[m],
    \]
    with probability at least $1 -2e^{-n/C_g}$. Thus, condition (\ref{cond:alpha_norm_bounds_m}) holds at iteration $t + 1$.
    
    \item By Lemma~\ref{lem:neg_stay_neg}, since conditions (\ref{cond:alpha_plus_bounds_m}) and (\ref{cond:alpha_norm_bounds_m}) hold at iteration $t + 1$, we conclude that $\beta_{k,j}^{(t+1)} \leq 0$ for all $j \in [n]$ with $k \neq a_j$. Thus, condition (\ref{cond:beta_plus_neg_m}) holds at iteration $t + 1$.
    
\end{enumerate}
\end{proof}

Equipped with Lemma~\ref{lem:primal_label_same_sign_m}, we are ready to prove Theorem~\ref{thm:multiple_relu_gd_m_high_dim_implicit_bias}.

\begin{proof}(Theorem~\ref{thm:multiple_relu_gd_m_high_dim_implicit_bias})
    The proof follows a similar structure to that of Theorem~\ref{thm:single_relu_gd_high_dim_implicit_bias} for single ReLU models, but now we must track the dynamics of all the neurons $\{\w_k\}_{k=1}^m$ simultaneously. Equipped with sufficient conditions under which the activation patterns are preserved in Lemma~\ref{lem:primal_label_same_sign_m}, we verify these conditions hold after the first gradient step, and use induction to characterize the full gradient descent dynamics.
    
We first verify that the iterate at $t = 1$ satisfies all the sufficient conditions. With the initialization
\begin{align*}
    \w_k^{(0)} =\X^\top\XXi\pts{\frac{1}{C_g}\A_k\y + \bepsilon_k},
\end{align*}
we have $\bbeta_k^{(0)} = \frac{1}{C_g}\A_k\y + \bepsilon_k$. Recalling the definition of $\A_k$ in Assumption~\ref{asm:disjoint_subset}, we have 
\begin{align}
    \bbeta_{k,i}^{(0)} = \cond{\epsilon_{a_i,i},}{\text{ if } a_i = k, \text{ or } s_k\cdot y_i < 0}{-\frac{|y_i|}{C_g} + \epsilon_{k,i},}{\text{ otherwise}},\label{eq:beta_zero}
\end{align}
for all $k\in[m]$ and $i\in[n]$. Since the theorem assumption ensures $\epsilon_{k,i} \leq \frac{1}{C_{\alpha}m}y_{\min}$ and $C_{\alpha} \gtrsim C_g^2$, we have $-\frac{|y_i|}{C_g} + \epsilon_{k,i} < 0$. Therefore, we obtain
\begin{align}
    h_{\bTheta^{(0)}}(\x_i) = \sum_{k=1}^m s_k\sigma\pts{\bbeta_{k,i}^{(0)}} = s_{a_i}\epsilon_{a_i,i} - s_{a_i} \sum_{k:s_k\cdot y_i < 0}\epsilon_{k,i},\label{eq:h_zero}
\end{align}
for all $i \in [n]$. Therefore, using the primal gradient update in Equation~\eqref{eq:primal_update2}, we obtain
    \begin{align}
        \bbeta_k^{(1)} &= \bbeta_k^{(0)} - \eta s_k\X \X^\top \D(\bbeta_k^{(0)})(h_{\bTheta^{(0)}}(\X) - \y)\nonumber\\
        &= \X \X^\top\mts{\underbrace{\eta\pts{s_k\D(\bbeta_k^{(0)})\pts{\y-h_{\bTheta^{(0)}}(\X)}+\frac{1}{\eta}\XXi\bbeta_k^{(0)}}}_{\eqqcolon\balpha_k^{(1)}}},\label{eq:alpha_one}
    \end{align}
    according to the primal-dual formulation $\bbeta_k^{(1)} =\XX\balpha_k^{(1)}$ in Equation~\eqref{eq:primal_dual_def}. In the below, we show that at iteration $t=1$, the variables $\bbeta_k^{(1)}$ and $\balpha_k^{(1)}$ satisfy all the conditions in Lemma~\ref{lem:primal_label_same_sign_m}.
\begin{enumerate}[label=Part (\alph*):,leftmargin=3\parindent]
    \item For all $i \in [n]$, we show that $\beta_{a_i,i}^{(1)} > 0$ by applying Lemma~\ref{lem:pos_stay_pos}. According to Equation~\eqref{eq:beta_zero} and Equation~\eqref{eq:h_zero}, we have $\beta_{a_i,i}^{(0)} = \epsilon_{a_i,i} > 0$ and $s_{a_i}\cdot h_{\bTheta^{(0)}}(\x_i) = \epsilon_{a_i,i} -\sum_{k:s_k\cdot y_i < 0} \epsilon_{k,i} \leq \beta_{a_i,i}^{(0)}$. Moreover, we have
    \begin{align*}
        \nnorm[2]{h_{\bTheta^{(0)}}(\X) - \y} \leq  \nnorm[2]{h_{\bTheta^{(0)}}(\X)} + \nnorm[2]{\y} \leq \sum_{k=1}^m \nnorm[2]{\bepsilon_k} + \nnorm[2]{\y} &\leq \frac{\sqrt{n}}{C_{\alpha}}y_{\min} + \nnorm[2]{\y} \leq C_y\nnorm[2]{\y},
    \end{align*}
    with $C_y\geq 1 + \frac{1}{C_{\alpha}}$. All the conditions of Lemma~\ref{lem:pos_stay_pos} are satisfied, and therefore, $\beta_{a_i,i}^{(1)}>0$ for all $i\in[n]$.

    \item For all $j \in [n]$ with $k \neq a_j$, we verify that $\alpha_{k,j}^{(1)}$ satisfies the required upper and lower bounds. We need to discuss two cases: 1) $\beta_{k, j}^{(0)} = \epsilon_{k,j} > 0$ with $s_k\cdot y_j < 0$, and 2) $\beta_{k, j}^{(0)} = -\frac{|y_j|}{C_g}+\epsilon_{k,j} < 0$ with $s_k\cdot y_j > 0$.\\
    \textbf{Case 1):} For $\beta_{k, j}^{(0)} = \epsilon_{k,j} > 0$ with $s_k\cdot y_j < 0$, we work from Equation~\eqref{eq:alpha_one} to get
    \begin{align}
        \alpha_{k, j}^{(1)} &= \eta\e_j^\top\pts{s_k\D(\bbeta_k^{(0)})\pts{\y-h_{\bTheta^{(0)}}(\X)}+\frac{1}{\eta}\XXi\bbeta_k^{(0)}}\nonumber\\
        &\stackrel{(\mathrm{i})}{=} \eta\Bigg(-|y_j| - s_k\pts{s_{a_j}\epsilon_{a_j,j} - s_{a_j} \sum_{k:s_k\cdot y_j < 0}\epsilon_{k,j}} + \frac{1}{\eta}\e_j^\top\XXi\pts{\frac{1}{C_g}\A_k\y + \bepsilon_k}\Bigg)\nonumber\\
        &= \eta\pts{-|y_j| +\epsilon_{a_j,j} -  \sum_{k:s_k\cdot y_j < 0}\epsilon_{k,j} + \frac{1}{\eta}\e_j^\top\XXi\pts{\frac{1}{C_g}\A_k\y + \bepsilon_k}}\nonumber\\
        &= \eta\Bigg(-|y_j| +\epsilon_{a_j,j} -  \sum_{k:s_k\cdot y_j < 0}\epsilon_{k,j} + \frac{1}{\eta} \e_j^\top\mts{\frac{1}{\nnorm[1]{\blambda}}\I + \pts{\XXi - \frac{1}{\nnorm[1]{\blambda}}\I}}\pts{\frac{1}{C_g}\A_k\y + \bepsilon_k}\Bigg)\nonumber\\
        &\stackrel{(\mathrm{ii})}{=} \eta\Bigg(-|y_j| +\epsilon_{a_j,j} -  \sum_{k:s_k\cdot y_j < 0}\epsilon_{k,j} + \frac{\epsilon_{k,j}}{\eta\nnorm[1]{\blambda}} + \frac{1}{\eta}\e_j^\top\pts{\XXi - \frac{1}{\nnorm[1]{\blambda}}\I}\pts{\frac{1}{C_g}\A_k\y + \bepsilon_k}\Bigg)\label{eq:c1_eq}
    \end{align}
    where equality (i) substitutes $h_{\bTheta^{(0)}}(\x_j) = s_{a_j}\epsilon_{a_j,j} - s_{a_j} \sum_{k:s_k\cdot y_j < 0}\epsilon_{k,j}$ from Equation~\eqref{eq:beta_zero} and $\bbeta_k^{(0)} = \frac{1}{C_g}\A_k\y + \bepsilon_k$, and equality (ii) applies $\pts{\A_k}_{jj} = 0$ for $k\neq a_j$ with $s_k\cdot y_j < 0$. For the upper bound, we have
    \begin{align}
        \alpha_{k, j}^{(1)} &\leq \eta\pts{-|y_j| +\epsilon_{a_j,j} + \frac{\epsilon_{k,j}}{\eta\nnorm[1]{\blambda}} + \frac{1}{\eta} \nnorm[2]{\XXi - \frac{1}{\nnorm[1]{\blambda}}\I}\nnorm[2]{\frac{1}{C_g}\A_k\y + \bepsilon_k}},\label{eq:c1_upper}
    \end{align}
    by dropping negative terms $-  \sum_{k:s_k\cdot y_j < 0}\epsilon_{k,j}$. Next, by applying the upper bound in Corollary~\ref{cor:concentration_op_norm} and the upper bounds for $\nnorm[2]{\y}$ and $\nnorm[2]{\bepsilon_k}$, we have
    \begin{align*}
        \alpha_{k, j}^{(1)} &\leq \eta\bigg(-|y_j| +\epsilon_{a_j,j} + \frac{\epsilon_{k,j}}{\eta\nnorm[1]{\blambda}} +  \frac{C_g}{\eta\nnorm[1]{\blambda}} C \cdot \max \pts{\sqrt{\frac{n}{d_2}}, \frac{n}{d_{\infty}}}\pts{\frac{\sqrt{n}y_{\max}}{C_g} + \frac{\sqrt{n}}{C_{\alpha}m}y_{\min}}\bigg)\\
        &\leq \eta\pts{-|y_j| +\epsilon_{a_j,j} + \frac{\epsilon_{k,j}}{\eta\nnorm[1]{\blambda}} + \frac{C_g}{\eta\nnorm[1]{\blambda}} C \cdot \frac{y_{\min}}{C_0 y_{\max}}\pts{\frac{y_{\max}}{C_g} + \frac{y_{\min}}{C_{\alpha}m}}},
    \end{align*}
    where the second inequality substitutes $d_2 \geq C_0^2\frac{n^2 y_{\max}^2}{y_{\min}^2}$ and $d_{\infty} \geq C_0\frac{n^{1.5} y_{\max}}{y_{\min}}$ in Assumption~\ref{asm:high_dim_data}. Finally, by using the step size assumption $\frac{1}{\eta} \leq CC_g\nnorm[1]{\blambda}$ and the theorem assumption $\epsilon_{k,j} \leq \frac{1}{C_{\alpha}m}y_{\min}$, we have
    \begin{align*}
        \alpha_{k, j}^{(1)} &\leq \frac{1}{CC_g\nnorm[1]{\blambda}}\pts{-y_{\min} + \frac{y_{\min}}{C_{\alpha}m} + \frac{CC_gy_{\min}}{C_{\alpha}m} +  \frac{2C^2C_g^2}{C_0}y_{\min}} \\
        &\leq -\frac{y_{\min}}{C_{\alpha}\nnorm[1]{\blambda}},
    \end{align*}
    with $C_0\gtrsim C_{\alpha}^2$ and $C_{\alpha}\gtrsim \max\{C_g^2, C_yC_g\}$. For the lower bound, starting from Equation~\eqref{eq:c1_eq}, we have
    \begin{align*}
        \alpha_{k,j}^{(1)} &\geq \eta\pts{-|y_j| - \sum_{k:s_k\cdot y_j < 0}\epsilon_{k,j} - \frac{1}{\eta}\nnorm[2]{\XXi - \frac{1}{\nnorm[1]{\blambda}}\I}\nnorm[2]{\frac{1}{C_g}\A_k\y + \bepsilon_k}}\\
        &\stackrel{(\mathrm{i})}{\geq} \eta\Bigg(-|y_j| - \sum_{k:s_k\cdot y_j < 0}\epsilon_{k,j} - \frac{C_g}{\eta\nnorm[1]{\blambda}} C \cdot \max \pts{\sqrt{\frac{n}{d_2}}, \frac{n}{d_{\infty}}}\pts{\frac{\sqrt{n}y_{\max}}{C_g} + \frac{\sqrt{n}}{C_{\alpha}m}y_{\min}}\Bigg) \\
        &\stackrel{(\mathrm{ii})}{\geq} \frac{1}{C_g\nnorm[1]{\blambda}}\pts{-y_{\max} - \frac{y_{\min}}{C_{\alpha}} - C^2C_g^2 \cdot \frac{y_{\min}}{C_0 y_{\max}}\pts{\frac{y_{\max}}{C_g} + \frac{y_{\min}}{C_{\alpha}m}}} \\
        &\stackrel{(\mathrm{iii})}{\geq} -\frac{3y_{\max}}{C_g\nnorm[1]{\blambda}},
    \end{align*}
    where inequality (i) applies the upper bound in Corollary~\ref{cor:concentration_op_norm} and the upper bounds for $\nnorm[2]{\y}$ and $\nnorm[2]{\bepsilon_k}$, inequalities (ii) applies $d_2 \geq C_0^2\frac{n^2 y_{\max}^2}{y_{\min}^2}$ and $d_{\infty} \geq C_0\frac{n^{1.5} y_{\max}}{y_{\min}}$ in Assumption~\ref{asm:high_dim_data}, and inequality (iii) follows by the constant relationship that $C_0\gtrsim C_{\alpha}^2$ and $C_{\alpha}\gtrsim \max\{C_g^2, C_yC_g\}$. Thus, for $\beta_{k, j}^{(0)} = \epsilon_{k,j} > 0$ with $s_k\cdot y_j < 0$, $\alpha_{k,j}^{(1)}$ satisfies both the required upper and lower bounds.\\
    \textbf{Case 2):} For $\beta_{k, j}^{(0)} = -\frac{|y_j|}{C_g}+\epsilon_{k,j} < 0$ with $s_k\cdot y_j > 0$, we work from Equation~\eqref{eq:alpha_one} to get
    \begin{align*}
        \alpha_{k, j}^{(1)} &= \eta\e_j^\top\pts{s_k\D(\bbeta_k^{(0)})\pts{\y-h_{\bTheta^{(0)}}(\X)}+\frac{1}{\eta}\XXi\bbeta_k^{(0)}}\nonumber\\
        &= \e_j^\top\XXi\pts{\frac{1}{C_g}\A_k\y + \bepsilon_k},
    \end{align*}
    where we substitute $\bbeta_k^{(0)} = \frac{1}{C_g}\A_k\y + \bepsilon_k$ and $\beta_{k, j}^{(0)} = -\frac{1}{C_g}|y_j|+\epsilon_{k,j} < 0$, and this eliminates the first term, since $D_{jj} = 0$. Then, $\alpha_{k,j}^{(1)}$ can further be written as
    \begin{align}
        \alpha_{k,j}^{(1)} &= \e_j^\top\mts{\frac{1}{\nnorm[1]{\blambda}}\I + \pts{\XXi - \frac{1}{\nnorm[1]{\blambda}}\I}}\pts{\frac{1}{C_g}\A_k\y + \bepsilon_k}\nonumber\\
        &= \frac{1}{\nnorm[1]{\blambda}}\pts{-\frac{|y_j|}{C_g} + \epsilon_{k,j}} + \e_j^\top \pts{\XXi - \frac{1}{\nnorm[1]{\blambda}}\I} \pts{\frac{1}{C_g}\A_k\y + \bepsilon_k}.\label{eq:op_norm_rev}
    \end{align}
    For the upper bound, we have
    \begin{align*}
        \alpha_{k,j}^{(1)} &\leq \frac{1}{\nnorm[1]{\blambda}}\pts{-\frac{1}{C_g}|y_j| + \epsilon_{k,j}} + \nnorm[2]{\XXi - \frac{1}{\nnorm[1]{\blambda}}\I} \nnorm[2]{\frac{1}{C_g}\A_k\y + \bepsilon_k}\\
        &\stackrel{(\mathrm{i})}{\leq} \frac{1}{\nnorm[1]{\blambda}}\pts{-\frac{|y_j|}{C_g} + \epsilon_{k,j} + C_gC \cdot \max \pts{\sqrt{\frac{n}{d_2}}, \frac{n}{d_{\infty}}} \pts{\frac{\sqrt{n}y_{\max}}{C_g} + \frac{\sqrt{n}}{C_{\alpha}}y_{\min}}}\\
        &\stackrel{(\mathrm{ii})}{\leq} \frac{1}{\nnorm[1]{\blambda}}\pts{-\frac{y_{\min}}{C_g} + \frac{y_{\min}}{C_{\alpha}m}+ C_gC \cdot \frac{y_{\min}}{C_0 y_{\max}}\pts{\frac{y_{\max}}{C_g} + \frac{1}{C_{\alpha}}y_{\min}}}\\
        &\leq -\frac{y_{\min}}{C_{\alpha}\nnorm[1]{\blambda}},
    \end{align*}
    where inequality (i) applies the upper bound in Corollary~\ref{cor:concentration_op_norm} and the upper bounds for $\nnorm[2]{\y}$ and $\nnorm[2]{\bepsilon_k}$, inequalities (ii) substitutes $d_2 \geq C_0^2\frac{n^2 y_{\max}^2}{y_{\min}^2}$ and $d_{\infty} \geq C_0\frac{n^{1.5} y_{\max}}{y_{\min}}$ in Assumption~\ref{asm:high_dim_data}. The last inequality follows by $C_0\gtrsim C_{\alpha}^2$ and $C_{\alpha}\gtrsim \max\{C_g^2, C_yC_g\}$.
    For the lower bound, we work from Equation~\eqref{eq:op_norm_rev} to get
    \begin{align*}
        \alpha_{k,j}^{(1)} &\geq \frac{1}{\nnorm[1]{\blambda}}\pts{-\frac{|y_j|}{C_g}} - \nnorm[2]{\XXi - \frac{1}{\nnorm[1]{\blambda}}\I} \nnorm[2]{\frac{1}{C_g}\A_k\y + \bepsilon_k}\\
        &\geq \frac{1}{\nnorm[1]{\blambda}}\pts{-\frac{|y_j|}{C_g}- C_gC \cdot \max \pts{\sqrt{\frac{n}{d_2}}, \frac{n}{d_{\infty}}} \pts{\frac{\sqrt{n}y_{\max}}{C_g} + \frac{\sqrt{n}}{C_{\alpha}}y_{\min}}}\\
        &\geq \frac{1}{\nnorm[1]{\blambda}}\pts{-\frac{y_{\max}}{C_g}  - C_gC \cdot \frac{y_{\min}}{C_0 y_{\max}}\pts{\frac{y_{\max}}{C_g} + \frac{y_{\min}}{C_{\alpha}}}}\\
        &\geq -\frac{3y_{\max}}{C_g\nnorm[1]{\blambda}},
    \end{align*}
    by the same argument. Thus, for $\beta_{k, j}^{(0)} = -\frac{|y_j|}{C_g}+\epsilon_{k,j} < 0$ with $s_k\cdot y_j > 0$, $\alpha_{k,j}^{(1)}$ satisfies both the required upper and lower bounds. This completes the proof of this part.

    \item We verify that the primal variables $\bbeta_{k,S_k}^{(1)}$ corresponding to active examples minus $\y_{S_k}$ satisfy the norm bound. Specifically, we show that $\nnorm[2]{\bbeta_{k,S_k}^{(1)} -s_k\y_{S_k}}^2 \leq C_y^2\nnorm[2]{\y}^2$. According to Equation~\eqref{eq:alpha_one}, we have
    \begin{align}
        \nnorm[2]{\bbeta_{k,S_k}^{(1)} -s_k\y_{S_k}}^2 &= \sum_{i:a_i = k} \pts{\beta_{k,i}^{(1)} - s_k y_i}^2\nonumber\\
        &= \sum_{i:a_i = k} \pts{\beta_{k,i}^{(0)} - \eta s_k\e_i^\top\X \X^\top \D(\bbeta_k^{(0)})(h_{\bTheta^{(0)}}(\X) - \y) - s_ky_i}^2\nonumber\\
        &= \sum_{i:a_i = k} \pts{\underbrace{\epsilon_{a_i,i} - \eta s_{a_i}\e_i^\top\X \X^\top \D(\bbeta_{a_i}^{(0)})(h_{\bTheta^{(0)}}(\X) - \y) - |y_i|}_{\eqqcolon T_i}}^2,\label{eq:k_diff}
    \end{align}
    where we substitute $\beta_{k,i}^{(0)} = \beta_{a_i,i}^{(0)} = \epsilon_{a_i,i}$ for $k=a_i$, and $s_{a_i} \cdot y_i > 0$. Next, we bound the term\\ $T_i\coloneqq \epsilon_{a_i,i} - \eta s_{a_i}\e_i^\top\X \X^\top \D(\bbeta_{a_i}^{(0)})(h_{\bTheta^{(0)}}(\X) - \y) - |y_i|$ for all $i\in[n]$. We have
        \begin{align*}
            T_i &=\epsilon_{a_i,i} - \eta s_{a_i}\e_i^\top\X \X^\top \D(\bbeta_{a_i}^{(0)})(h_{\bTheta^{(0)}}(\X) - \y) - |y_i|\\
            &= (\epsilon_{a_i,i} - |y_i|) -\eta s_{a_i}\e_i^\top\mts{\nnorm[1]{\blambda}\I + \pts{\XX - \nnorm[1]{\blambda}\I}}\D(\bbeta_{a_i}^{(0)})\pts{h_{\bTheta^{(0)}}(\X) -\y}\\
            &= (\epsilon_{a_i,i} - |y_i|) -\eta s_{a_i}\nnorm[1]{\blambda}\pts{s_{a_i}\epsilon_{a_i,i} - s_{a_i} \sum_{k:s_k\cdot y_i < 0}\epsilon_{k,i} - y_i}\\
            &\myquad[8] - \eta s_k\e_i^\top\pts{\XX - \nnorm[1]{\blambda}\I}\D(\bbeta_k^{(0)})\pts{h_{\bTheta^{(0)}}(\X) -\y}\\
            &= (1-\eta\nnorm[1]{\blambda})\epsilon_{a_i,i} - (1 -\eta\nnorm[1]{\blambda})|y_i| + \eta \nnorm[1]{\blambda}\sum_{k:s_k\cdot y_i < 0}\epsilon_{k,i}\\
            &\myquad[8] - \eta s_k\e_i^\top\pts{\XX - \nnorm[1]{\blambda}\I}\D(\bbeta_k^{(0)})\pts{h_{\bTheta^{(0)}}(\X) -\y},
        \end{align*}
        by applying $h_{\bTheta^{(0)}}(\x_i) = s_{a_i}\epsilon_{a_i,i} - s_{a_i} \sum_{k:s_k\cdot y_i < 0}\epsilon_{k,i}$ from Equation~\eqref{eq:h_zero}. Since the step size assumption guarantees that $\frac{1}{CC_g\nnorm[1]{\blambda}} \leq \eta \leq \frac{1}{C_g\nnorm[1]{\blambda}}$, and $\epsilon_{k,i} \leq \frac{1}{C_{\alpha}m}y_{\min}$, we have 
        \begin{align*}
            (1-\eta\nnorm[1]{\blambda})\epsilon_{a_i,i} - (1 -\eta\nnorm[1]{\blambda})|y_i| &+ \eta \nnorm[1]{\blambda}\sum_{k:s_k\cdot y_i < 0}\epsilon_{k,i}\\
            &\leq \epsilon_{a_i,i} - (1 - \eta\nnorm[1]{\blambda})|y_i| + \eta\nnorm[1]{\blambda} \sum_{k:s_k\cdot y_i < 0}\epsilon_{k,i} \\
            &\leq \pts{\frac{1}{m} + \frac{1}{C_g}}\frac{1}{C_{\alpha}}y_{\min} - \pts{1 - \frac{1}{C_g}}y_{\min}\\
            &< 0,
        \end{align*}
        with $C_{\alpha} \gtrsim C_g^2$. Hence, in order to upper bound $T_i^2$, it suffices to find the lower bound for $T_i$. We have
        \begin{align*}
            T_i &= (1-\eta\nnorm[1]{\blambda})\epsilon_{a_i,i} - (1 -\eta\nnorm[1]{\blambda})|y_i| + \eta \nnorm[1]{\blambda}\sum_{k:s_k\cdot y_i < 0}\epsilon_{k,i}\\
            &\myquad[8] - \eta s_k\e_i^\top\pts{\XX - \nnorm[1]{\blambda}\I}\D(\bbeta_k^{(0)})\pts{h_{\bTheta^{(0)}}(\X) -\y}\\
            &\geq -|y_i| - \eta \nnorm[2]{\XX - \nnorm[1]{\blambda}\I}\nnorm[2]{h_{\bTheta^{(0)}}(\X) -\y},
        \end{align*}
        where the inequality drops the positive terms $(1-\eta\nnorm[1]{\blambda})\epsilon_{a_i,i}$, $\eta\nnorm[1]{\blambda}|y_i|$, and\\$\eta \nnorm[1]{\blambda}\sum_{k:s_k\cdot y_i < 0}\epsilon_{k,i}$. We again upper bound $\nnorm[2]{\XX - \nnorm[1]{\blambda}\I}$ by Corollary~\ref{cor:concentration_op_norm}. With probability at least $1 - 2 \exp(-n(Cc - \ln 9))$, we have
        \begin{align*}
            T_i &\geq -|y_i| - \eta \cdot C\nnorm[1]{\blambda}\cdot \max \pts{\sqrt{\frac{n}{d_2}}, \frac{n}{d_{\infty}}}\nnorm[2]{h_{\bTheta^{(0)}}(\X) -\y}\\
            &\geq -|y_i| - \frac{C}{C_g} \cdot \max \pts{\sqrt{\frac{n}{d_2}}, \frac{n}{d_{\infty}}}\nnorm[2]{h_{\bTheta^{(0)}}(\X) -\y},
        \end{align*}
        by applying $\eta \leq \frac{1}{C_g\nnorm[1]{\blambda}}$. Finally, we apply the upper bound that $\nnorm[2]{h_{\bTheta^{(0)}}(\X)} \leq \sum_{k=1}^m\nnorm[2]{\bepsilon_k} \leq \frac{\sqrt{n}}{C_{\alpha}}y_{\min}$ and $\nnorm[2]{\y}\leq \sqrt{n}y_{\max}$, and Assumption~\ref{asm:high_dim_data} ensures that $d_2 \geq C_0^2\frac{n^2 y_{\max}^2}{y_{\min}^2}$ and $d_{\infty} \geq C_0\frac{n^{1.5} y_{\max}}{y_{\min}}$. We have
        \begin{align*}
             T_i &\geq -|y_i| - \frac{C}{C_g} \max \pts{\sqrt{\frac{n}{d_2}}, \frac{n}{d_{\infty}}}\pts{\frac{\sqrt{n}}{C_{\alpha}}y_{\min} + \sqrt{n}y_{\max}}\\
             &\geq -|y_i| - \frac{Cy_{\min}}{C_gC_0y_{\max}}\pts{\frac{1}{C_{\alpha}}y_{\min} + y_{\max}}\\
             &\geq -|y_i| \pts{1 + \frac{2C}{C_gC_0}}\\
             &\geq -C_y |y_i|,
        \end{align*}
        with the choice of $C_y \geq 2$. Substituting $T_i^2 \leq C_y^2 y_i^2$ into Equation~\eqref{eq:k_diff}, we have
        \begin{align*}
            \nnorm[2]{\bbeta_{k,S_k}^{(1)} -s_k\y_{S_k}}^2 &\leq \sum_{i:a_i=k} C_y^2 y_i^2 = C_y^2 \nnorm[2]{\y_{S_k}}^2.
        \end{align*}
        As a result, we conclude that $ \nnorm[2]{\bbeta_{k,S_k}^{(1)} -s_k\y_{S_k}} \leq C_y \nnorm[2]{\y_{S_k}}$ as required.

    \item We verify the norm bounds on the dual variables. By the triangle inequality, we work from Equation~\eqref{eq:alpha_one} to get
    \begin{align*}
        \nnorm[2]{\balpha_{k}^{(1)}} &= \nnorm[2]{\eta\pts{s_k\D(\bbeta_k^{(0)})\pts{\y-h_{\bTheta^{(0)}}(\X)}+\frac{1}{\eta}\XXi\bbeta_k^{(0)}}} \\
        &\leq \eta \mts{\nnorm[2]{\y} + \nnorm[2]{h_{\bTheta^{(0)}}(\X)} + \frac{1}{\eta}\nnorm[2]{\XXi}\nnorm[2]{\bbeta_k^{(0)}}}\\
        &= \eta \mts{\nnorm[2]{\y} + \nnorm[2]{h_{\bTheta^{(0)}}(\X)} + \frac{1}{\eta}\nnorm[2]{\XXi}\nnorm[2]{\frac{1}{C_g}\A_k\y + \bepsilon_k}},
    \end{align*}
    by substituting $\bbeta_k^{(0)} = \frac{1}{C_g}\A_k\y + \bepsilon_k$. Using $\nnorm[2]{\y} \leq \sqrt{n}y_{\max}$, $\nnorm[2]{h_{\bTheta^{(0)}}(\X)} \leq \sum_{k=1}^m\nnorm[2]{\bepsilon_k} \leq \frac{\sqrt{n}}{C_{\alpha}}y_{\min}$, $\nnorm[2]{\XXi} \leq \frac{C_g}{\nnorm[1]{\blambda}}$, $\epsilon_{k,i} \leq \frac{1}{C_{\alpha}m}y_{\min}$, and $\frac{1}{CC_g\nnorm[1]{\blambda}} \leq \eta \leq \frac{1}{C_g\nnorm[1]{\blambda}}$, we have
    \begin{align*}
        \nnorm[2]{\balpha_{k}^{(1)}} &\leq \frac{1}{C_g\nnorm[1]{\blambda}}\mts{\sqrt{n}y_{\max} +  \frac{\sqrt{n}}{C_{\alpha}}y_{\min} + CC_g\nnorm[1]{\blambda}\cdot\frac{C_g}{\nnorm[1]{\blambda}}\cdot \pts{\frac{\sqrt{n}y_{\max}}{C_g} + \frac{\sqrt{n}}{C_{\alpha}m}y_{\min}}}\\
        &\leq \frac{1}{\nnorm[1]{\blambda}}\pts{3\sqrt{n}y_{\max}}\\
        &\leq \frac{C_{\alpha}\sqrt{n}y_{\max}}{\nnorm[1]{\blambda}},
    \end{align*}
    with $C_{\alpha}\gtrsim \max\{C_g^2, C_yC_g\}$. Thus, condition~(\ref{cond:alpha_norm_bounds_m}) holds at $t = 1$.

    \item Since we have shown that $\alpha_{k,j}^{(1)} \leq -\frac{y_{\min}}{C_{\alpha}\nnorm[1]{\blambda}}$ and $\nnorm[2]{\balpha_{k}^{(1)}} \leq \frac{C_{\alpha}\sqrt{n}y_{\max}}{\nnorm[1]{\blambda}}$ for all $j\in[n]$ with $k\neq a_j$, by Lemma~\ref{lem:neg_stay_neg}, it follows that $\beta_{k,j}^{(1)} \leq 0$ for all $j\in[n]$ with $k \neq a_j$.

\end{enumerate}
We have shown that at iteration $t=1$ the conditions in Lemma~\ref{lem:primal_label_same_sign_m} are satisfied, and by induction, these conditions will also hold for $t \geq 1$. As a result, $\w_k$ is trained with only predefined active examples starting from the iteration $t=0$, and it is equivalent to linear regression using only active examples with initialization $\w_{k}^{(1)}= \eta\X^\top\pts{s_k\D(\bbeta_k^{(0)})\pts{\y-h_{\bTheta^{(0)}}(\X)}+\frac{1}{\eta}\XXi\bbeta_k^{(0)}}$. Finally, since $\w_{k}$ is trained on disjoint subset of examples by Assumption~\ref{asm:disjoint_subset}, by Lemma~\ref{lem:conv_step_size}, $\w_{k}^{(\infty)}$ satisfies
    \begin{align*}
        \w_{k}^{(\infty)} =\uargmin{\w\in\{\w:\X_{S_k}\w =\y_{S_k}\}} \nnorm[2]{\w - \w_{k}^{(1)}}.
    \end{align*}
    This completes the proof of Theorem~\ref{thm:multiple_relu_gd_m_high_dim_implicit_bias}.
\end{proof}
\subsection{Proof of Theorem~\ref{thm:multiple_relu_m_approx_to_w_star} (Implicit Bias Approximation to \texorpdfstring{$\w^\star$}{w*})}\label{app:proof_multiple_relu_m_approx_to_w_star}

\begin{proof}(Theorem~\ref{thm:multiple_relu_m_approx_to_w_star})
    We restate the definition of $\w^\star$ in Equation~\eqref{eq:multiple_relu_m_minimum_norm_sol}.
    \begin{align}\label{eq:multiple_relu_m_approx_to_w_star_rep}
        &\{\w_k^\star\}_{k=1}^m = {} \uargmin{\{\w_k\}_{k=1}^m}\frac{1}{2}\sum_{k=1}^m \nnorm[2]{\w_k}^2\\
    \text{s.t. } &\sum_{k=1}^m s_k\sigma(\w_k^\top \x_i) = y_i, \text{ for all } i\in[n].\nonumber
    \end{align}
    Recall that the gradient descent limit $\{\w_k^{(\infty)}\}_{k=1}^m$ satisfies the same set of constraints: it interpolates all examples. Consequently, both $\{\w_k^{(\infty)}\}_{k=1}^m$ and $\{\w_k^\star\}_{k=1}^m$ are feasible solutions to~\eqref{eq:multiple_relu_m_approx_to_w_star_rep}. We show that the norm difference between $\w_k^{(\infty)}$ and $\w_k^\star$ can be upper bounded by 2 times the norm of $\w_k^{(\infty)}$.
    \begin{align*}
        \sum_{k=1}^m\nnorm[2]{\w_k^{(\infty)} - \w_k^\star}^2\leq 2\sum_{k=1}^m\nnorm[2]{\w_k^{(\infty)}}^2 + 2\sum_{k=1}^m\nnorm[2]{\w_k^\star}^2 \leq 4 \sum_{k=1}^m\nnorm[2]{\w_k^{(\infty)}}^2,
    \end{align*}
    where it follows the definition of~\eqref{eq:multiple_relu_m_approx_to_w_star_rep}. By Lemma~\ref{lem:primal_label_same_sign_m}, we have the upper bound for $\nnorm[2]{\w_k^{(\infty)}}^2$ as
    \begin{align*}
        \nnorm[2]{\w_k^{(\infty)}}^2 = \balpha_k^{(\infty)\top}\XX\balpha_k^{(\infty)} \leq \mu_1(\XX)\nnorm[2]{\balpha_k^{(\infty)}}^2 \leq C_g\nnorm[1]{\blambda} \cdot \frac{C_{\alpha}^2ny_{\max}^2}{\nnorm[1]{\blambda}^2} =\frac{C_gC_{\alpha}^2ny_{\max}^2}{\nnorm[1]{\blambda}}.
    \end{align*}
    As a result, we have
    \begin{align*}
        \sum_{k=1}^m\nnorm[2]{\w_k^{(\infty)} - \w_k^\star}^2\leq \frac{4C_gC_{\alpha}^2mny_{\max}^2}{\nnorm[1]{\blambda}}.
    \end{align*}
\end{proof}
\newpage
\section{Simulations} \label{app:simulation}

In this section, we present exploratory visualizations of the evolution of the primal variables at iteration checkpoints in settings that violate the assumptions underlying our theoretical results.

\subsection{Moderate-Dimensional Data and Single ReLU Model}

\begin{figure}[H]
\centering     
\subfigure[Initialization 1 ($t=0$).]{\includegraphics[width=65mm]{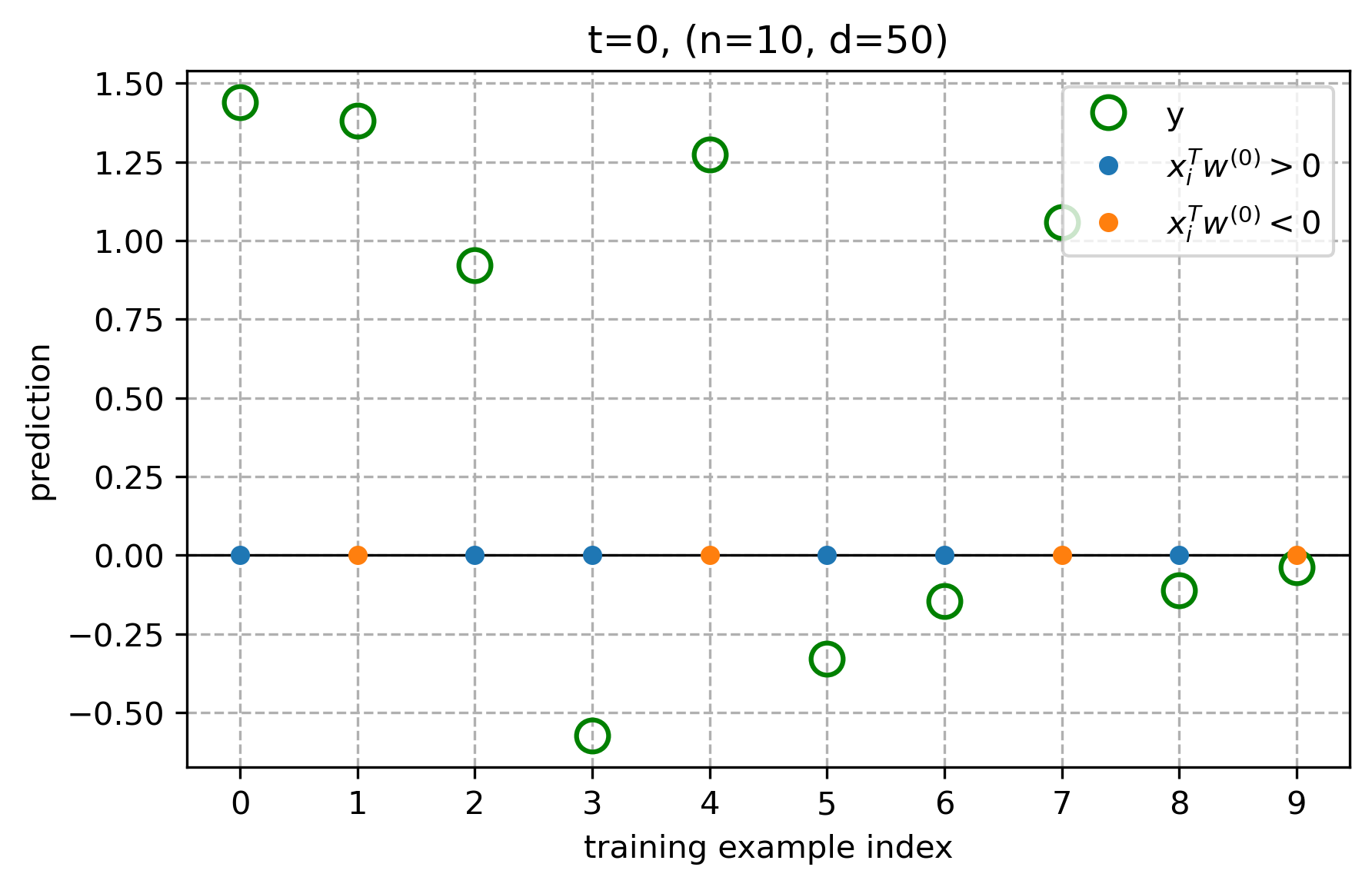}}
\subfigure[Initialization 2 ($t=0$).]{\includegraphics[width=65mm]{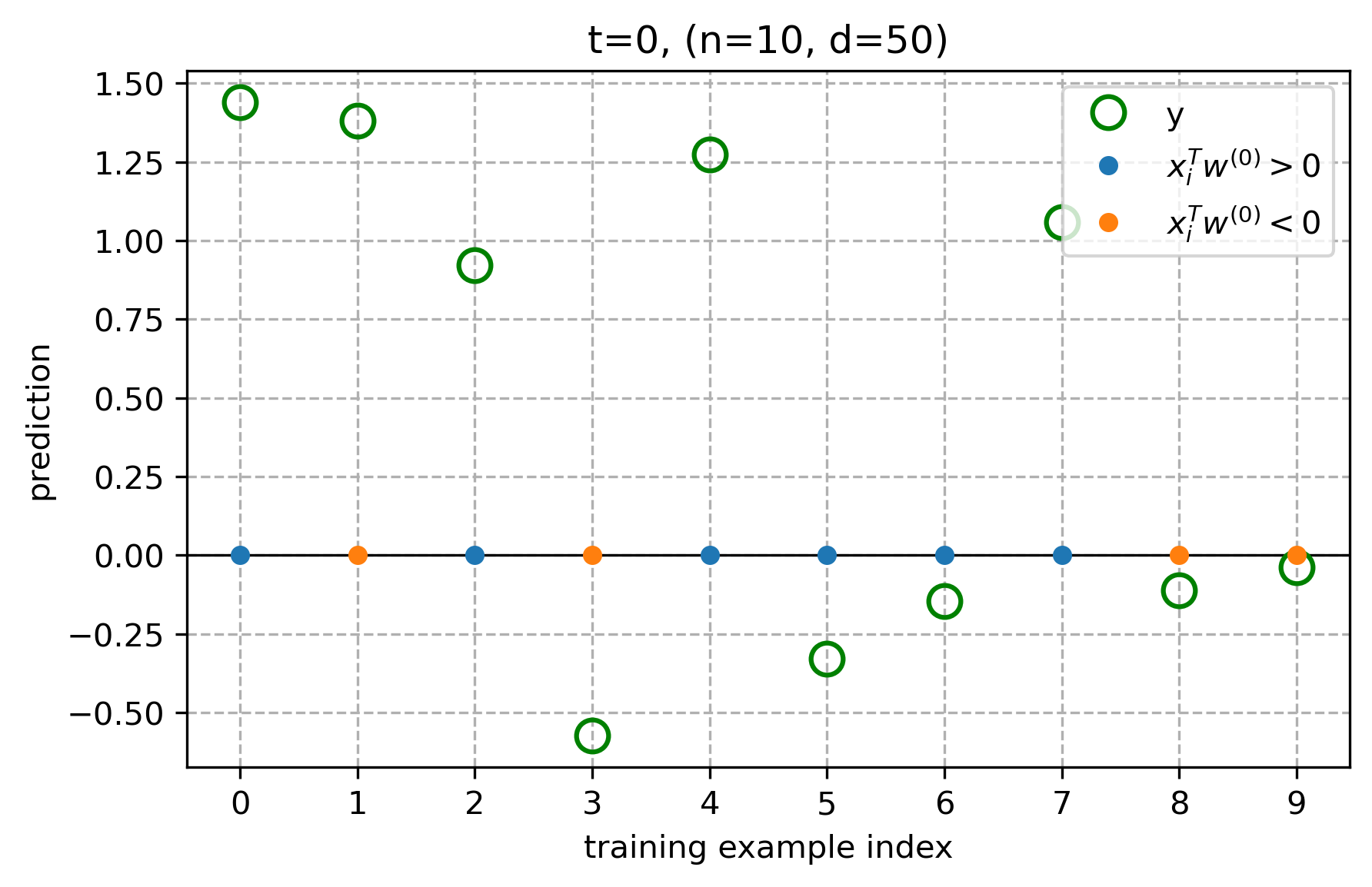}}
\subfigure[Initialization 1 ($t=19$).]{\includegraphics[width=65mm]{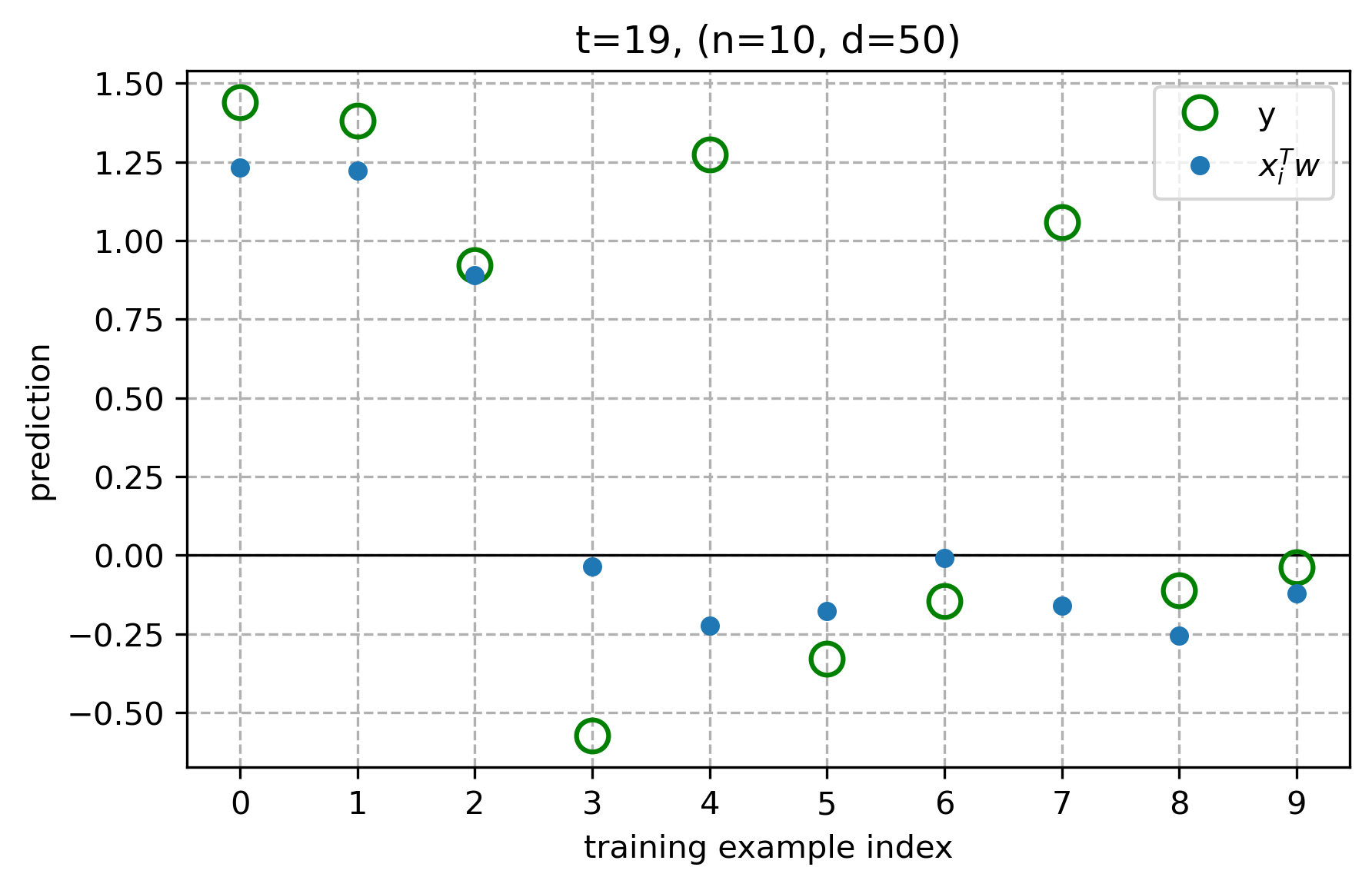}}
\subfigure[Initialization 2 ($t=19$).]{\includegraphics[width=65mm]{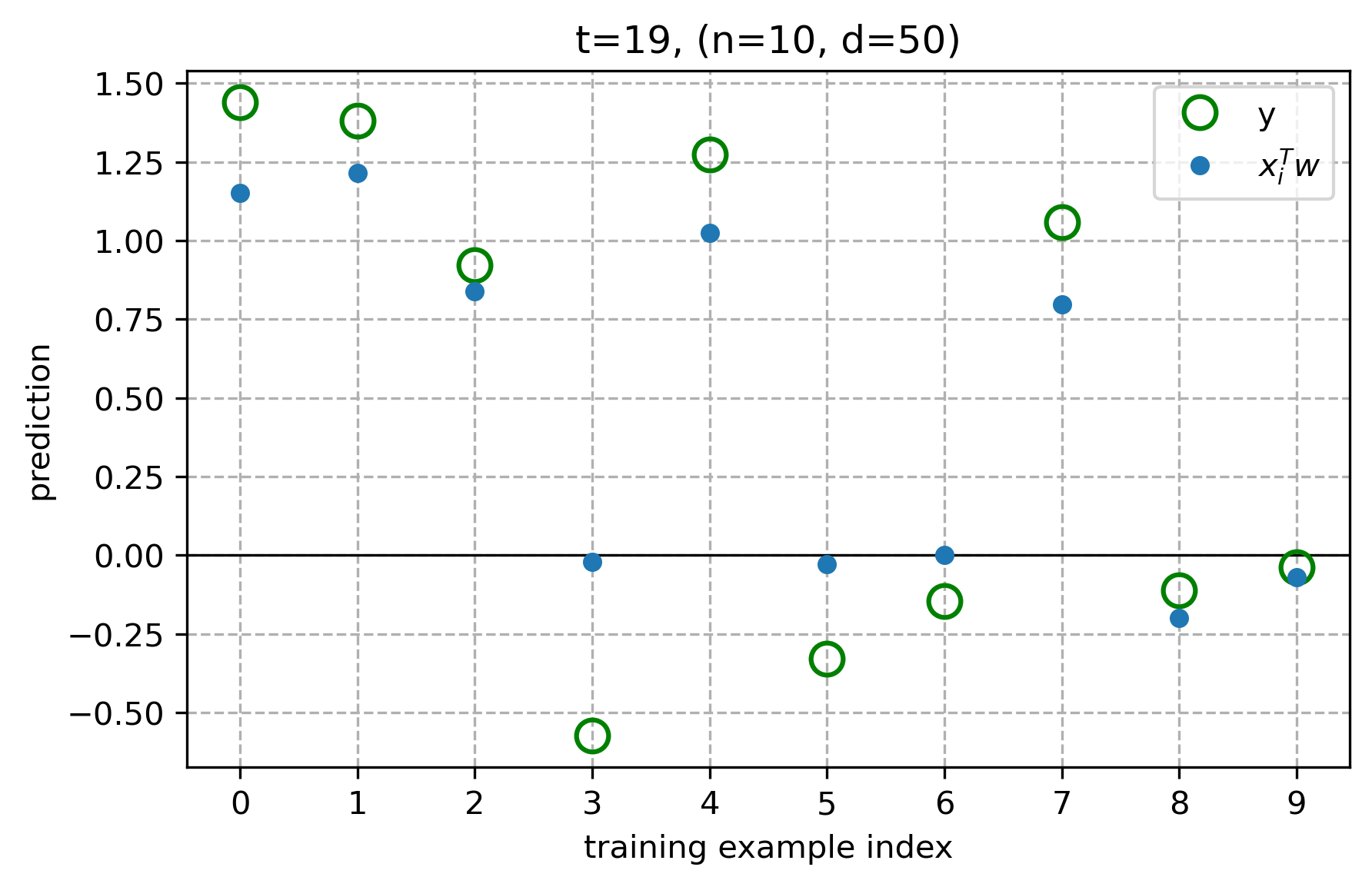}}
\subfigure[Initialization 1 ($t=80$).]{\includegraphics[width=65mm]{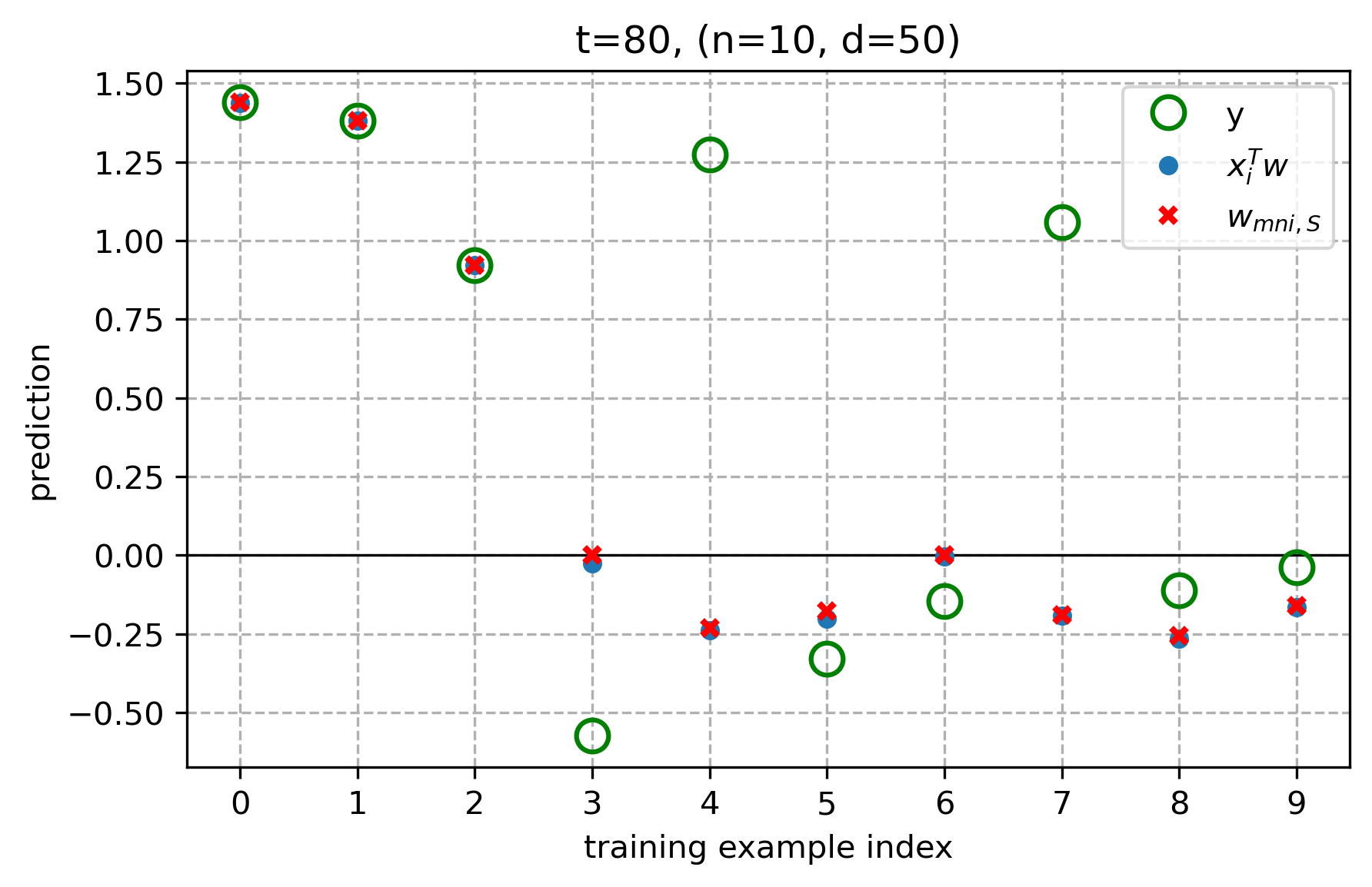}}
\subfigure[Initialization 2 ($t=80$).]{\includegraphics[width=65mm]{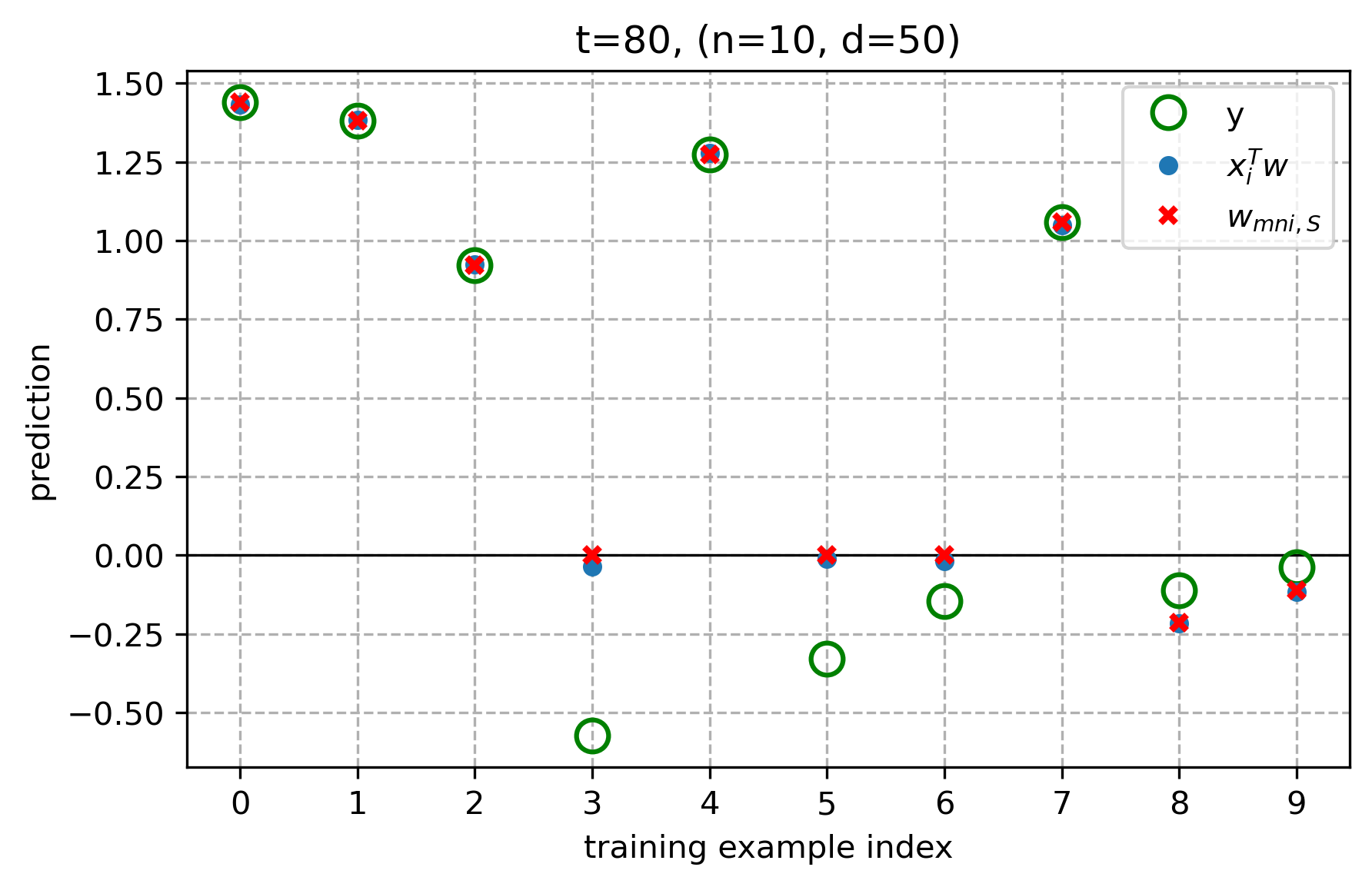}}
\caption{We illustrate the prediction dynamics of gradient descent for a single ReLU model under different random initializations when $d$ is comparable with $n$. In both cases, with sufficiently small step size, the final solution converges to a linear minimum-$\ell_2$-norm interpolator on some subset of the training examples,~i.e. of the form $\w_{\text{linear-MNI},S} = \X_S^\top(\X_S\X_S^\top)^{-1}\tilde{\y}_S$, where $\tilde{y}_{S,i}=\max\{y_i,0\}$. In contrast to the high-dimensional regime, \emph{different initializations lead to different subsets $S$}, indicating that ReLU training implicitly performs an example ``selection'' process, that is initialization-dependent, rather than fitting all positively-labeled samples. The experiment uses $n=10$, $d=50$, $\x\sim\mathcal{N}(\zero,\I)$, $y\sim\mathcal{N}(0,1)$, $\w^{(0)}\sim\mathcal{N}(\zero,2\times10^{-6}\I)$, and $\eta= 10^{-4}$.}\label{fig:approx_mni_s}
\end{figure}

\subsection{Gradient Descent Dynamics of Two ReLU Models}
\vspace{-2em}
\phantom{placeholder}
\begin{figure}[H]
\centering     
\subfigure[Two ReLU model gradient descent dynamic ($t=0$).]{\includegraphics[width=0.68\textwidth]{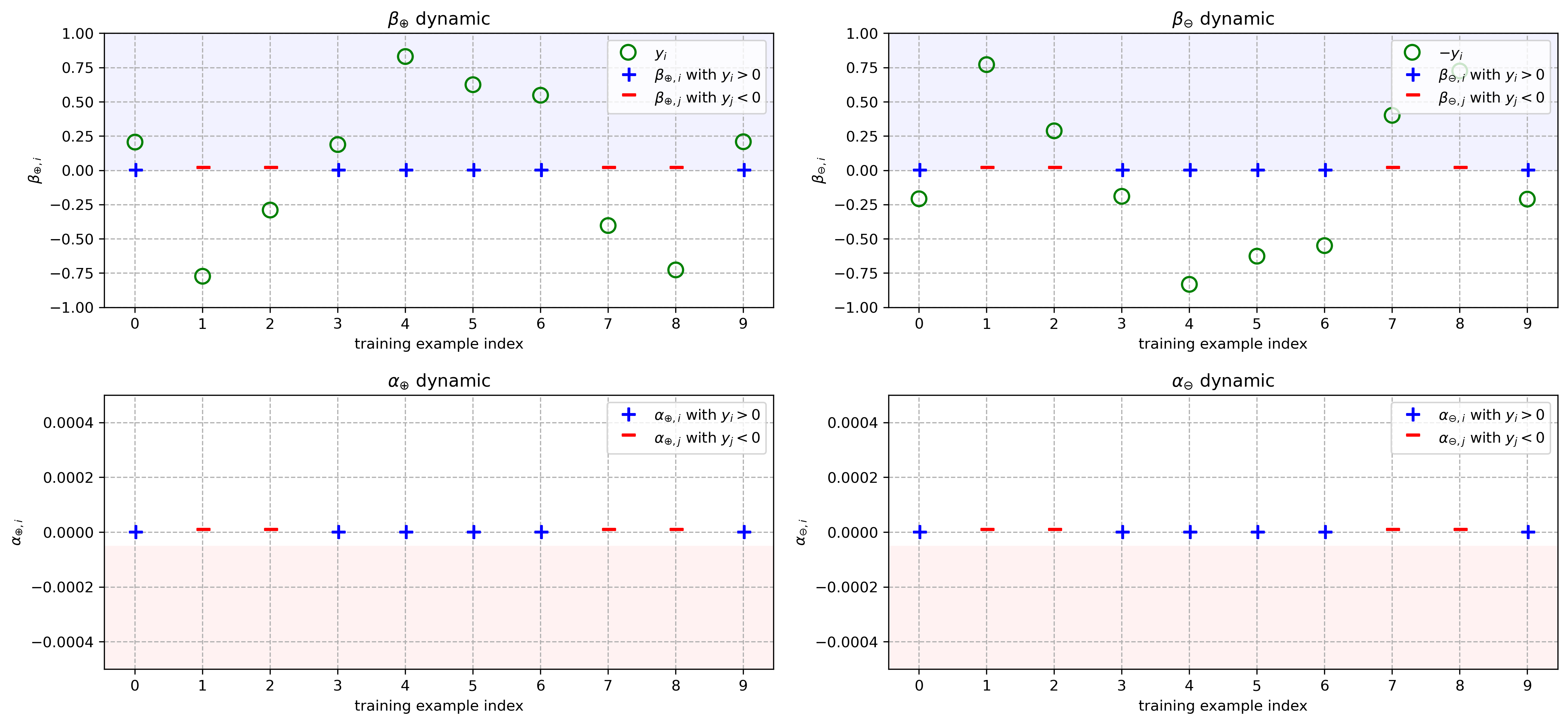}}
\subfigure[Two ReLU model gradient descent dynamic ($t=1$).]{\includegraphics[width=0.68\textwidth]{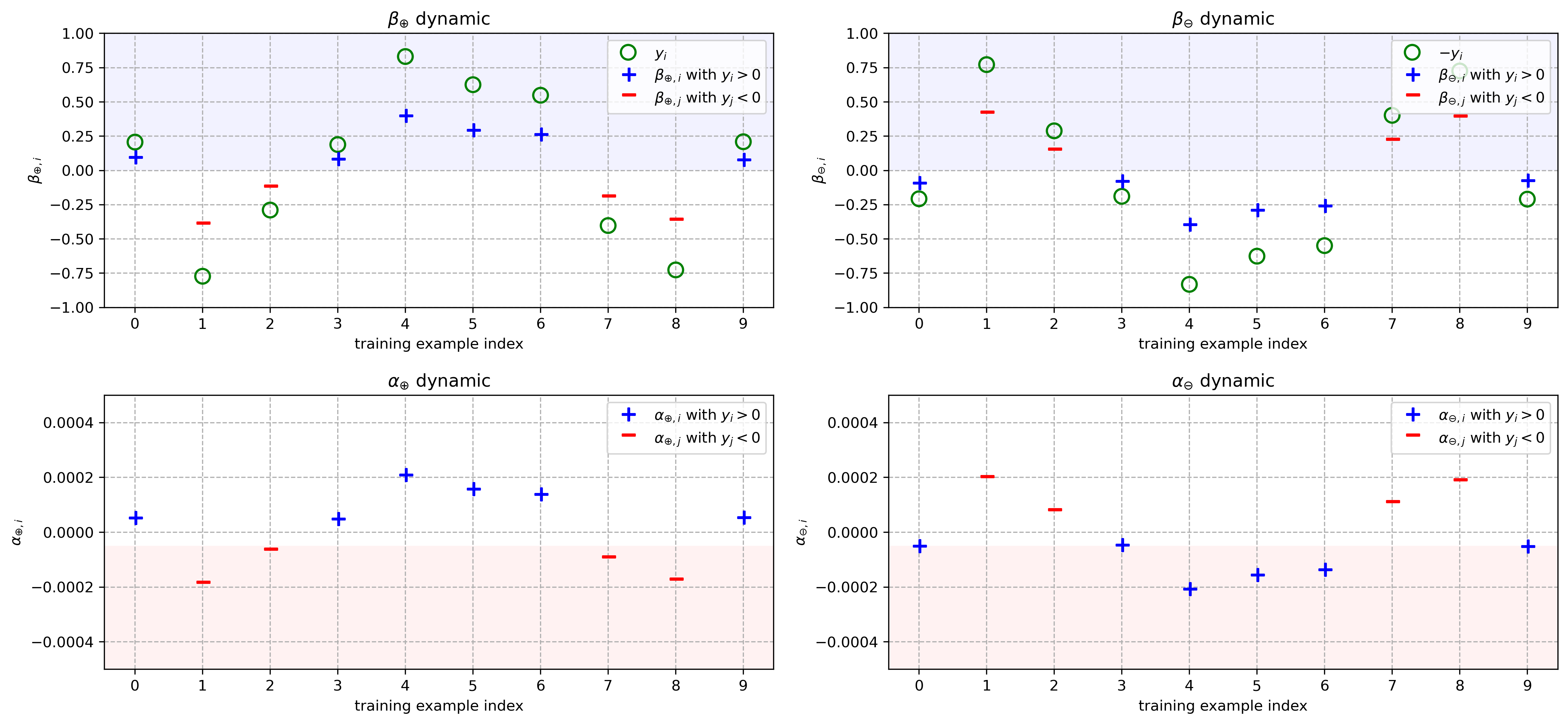}}
\subfigure[Two ReLU model gradient descent dynamic ($t=17$).]
{\includegraphics[width=0.68\textwidth]{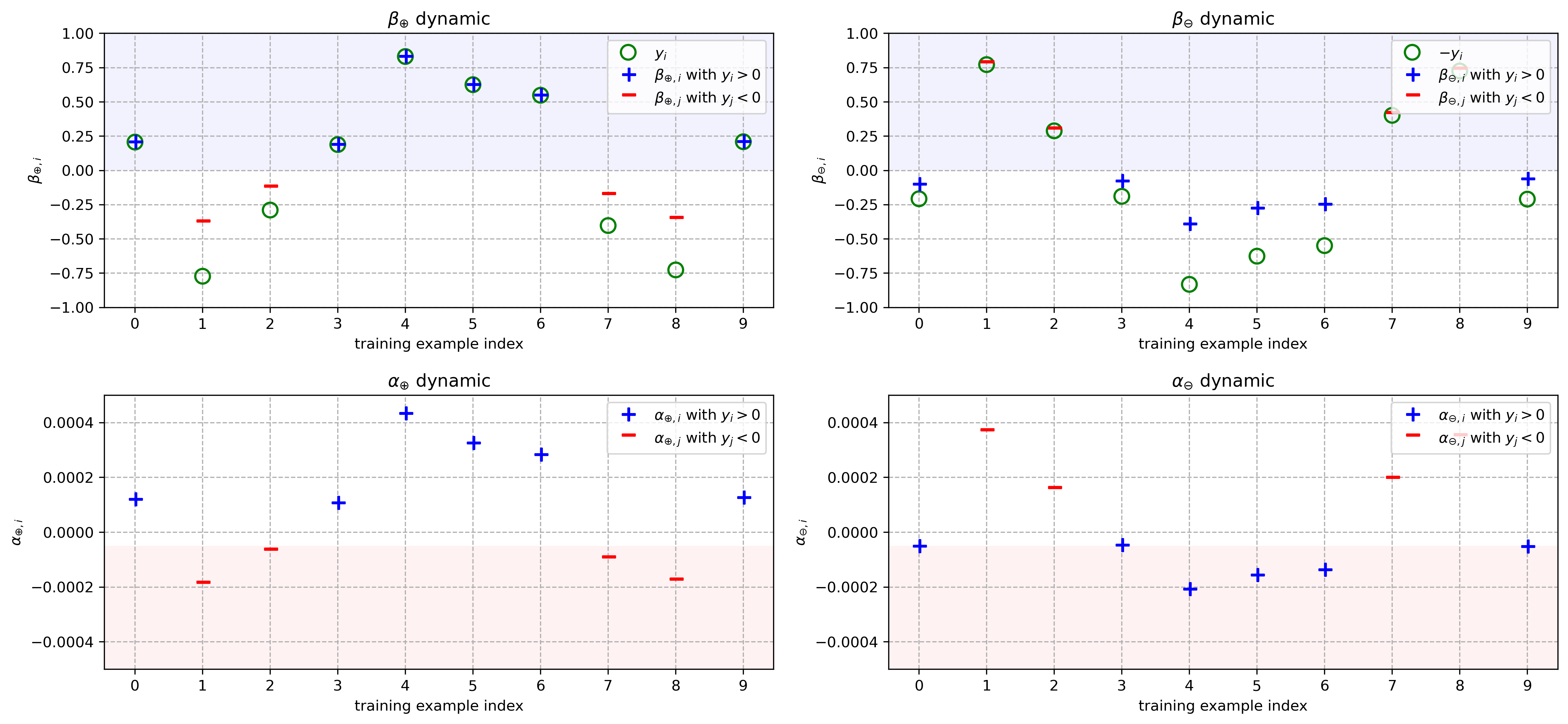}}
\caption{Simulation illustrating Theorem~\ref{thm:multiple_relu_gd_high_dim_implicit_bias}.
In the high-dimensional regime and under our ``all-positive'' initialization, after the first gradient step, examples with positive labels remain active while examples with negative labels become inactive, consistent with Lemma~\ref{lem:primal_label_same_sign_two}. The blue region shows primal variables that remain positive over training, whereas the red region corresponds to dual variables that are sufficiently negative and remain unchanged. As training proceeds, $\w_{\oplus}$ fits all positively labeled examples and $\w_{\ominus}$ fits all negatively labeled examples. The experiment uses $n=10$, $d=2000$, features $\x\sim\mathcal{N}(\zero,\I)$, and labels satisfying $|y|\sim\mathcal{U}(0.1,1)$ with $\mathrm{sign}(y)$ uniformly distributed over $\{\pm 1\}$.}\label{fig:good_init_high_dim}
\end{figure}

\begin{figure}[H]
\centering     
\subfigure[Two ReLU model gradient descent dynamic ($t=0$).]{\includegraphics[width=0.74\textwidth]{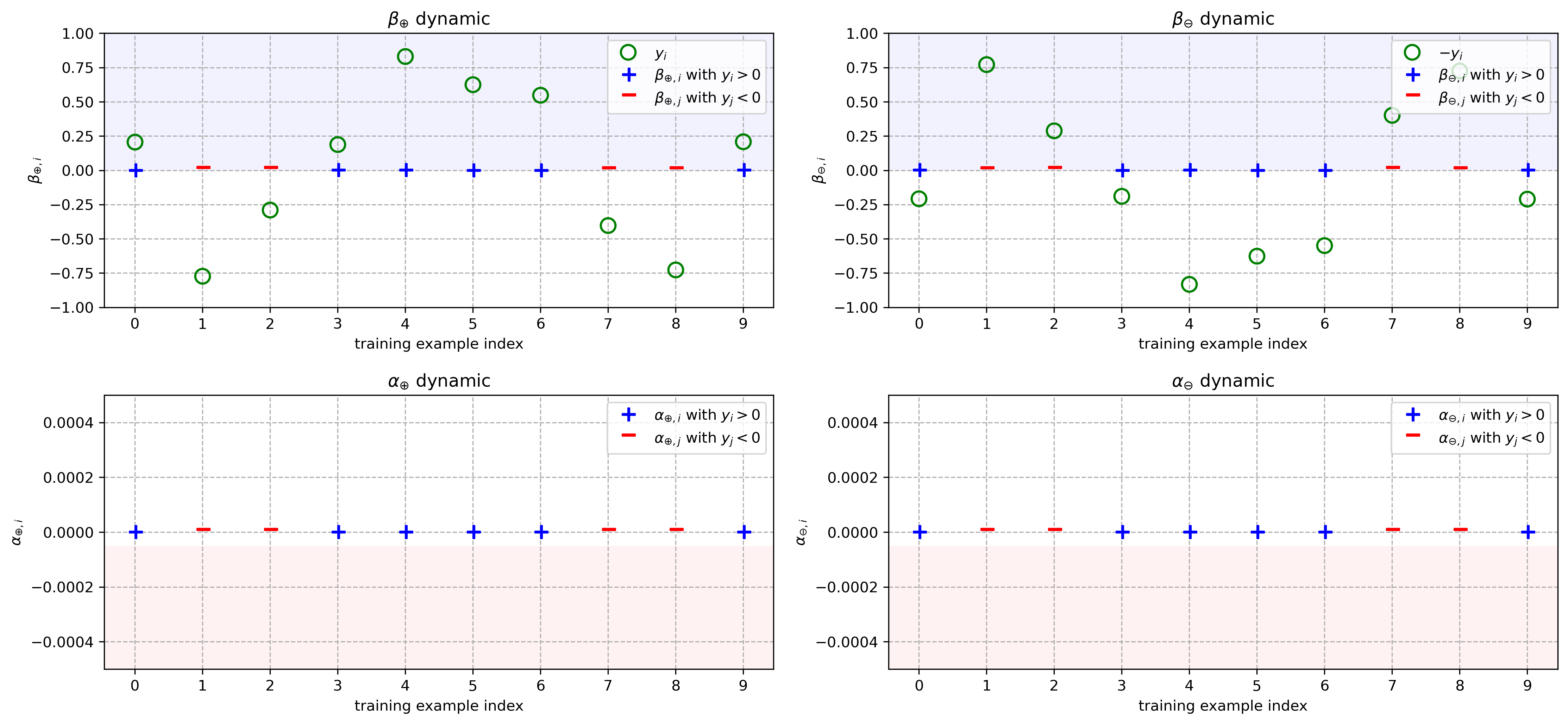}}
\subfigure[Two ReLU model gradient descent dynamic ($t=1$).]{\includegraphics[width=0.74\textwidth]{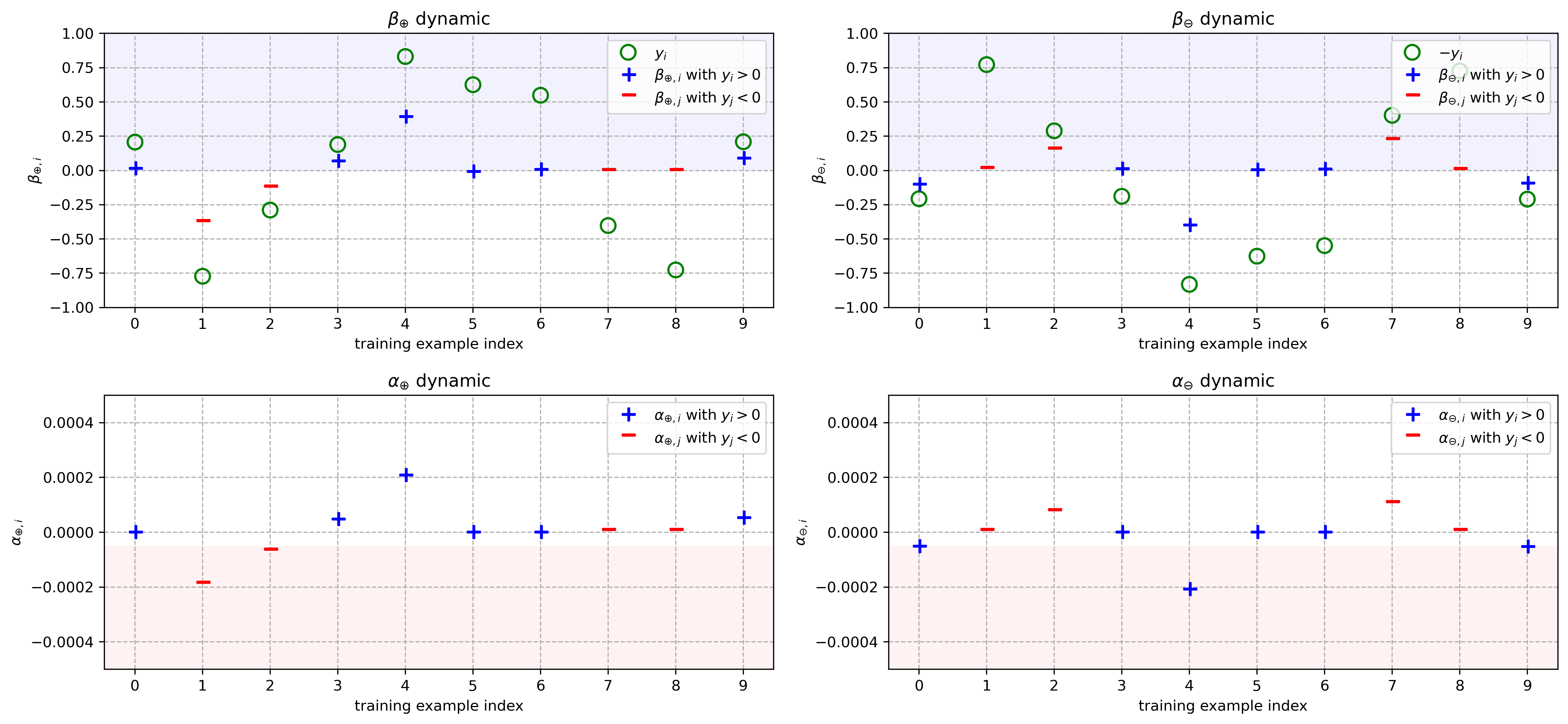}}
\subfigure[Two ReLU model gradient descent dynamic ($t=40$).]
{\includegraphics[width=0.74\textwidth]{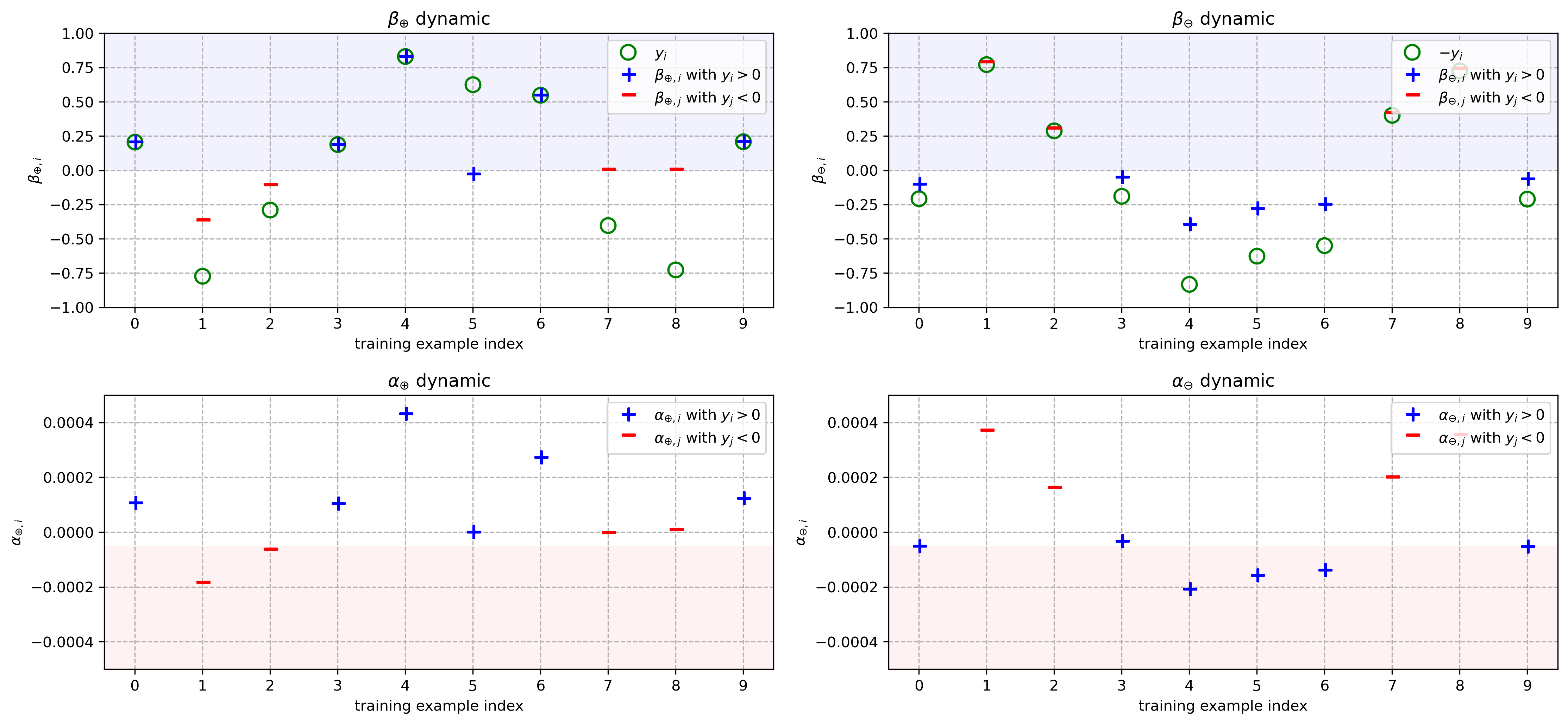}}
\caption{Simulation with random initialization in the high-dimensional regime, which violates our initialization assumption in Theorem~\ref{thm:multiple_relu_gd_high_dim_implicit_bias}.
Under random initialization, the sufficient conditions of Lemma~\ref{lem:primal_label_same_sign_two} are violated at the first gradient step. As a result, positively labeled examples do not all remain in the active (blue) regime (e.g., example no. 5), nor do negatively labeled examples consistently enter the inactive (red) regime (e.g., example no. 7). \emph{Consequently, during training, this model fails to converge to a global minimum.} The experiment uses $n=10$, $d=2000$, features $\x\sim\mathcal{N}(\zero,\I)$, and labels satisfying $|y|\sim\mathcal{U}(0.1,1)$ with $\mathrm{sign}(y)$ uniformly distributed over $\{\pm 1\}$.}\label{fig:bad_init_high_dim}
\end{figure}

\begin{figure}[H]
\centering     
\subfigure[Two ReLU model gradient descent dynamic ($t=0$).]{\includegraphics[width=0.74\textwidth]{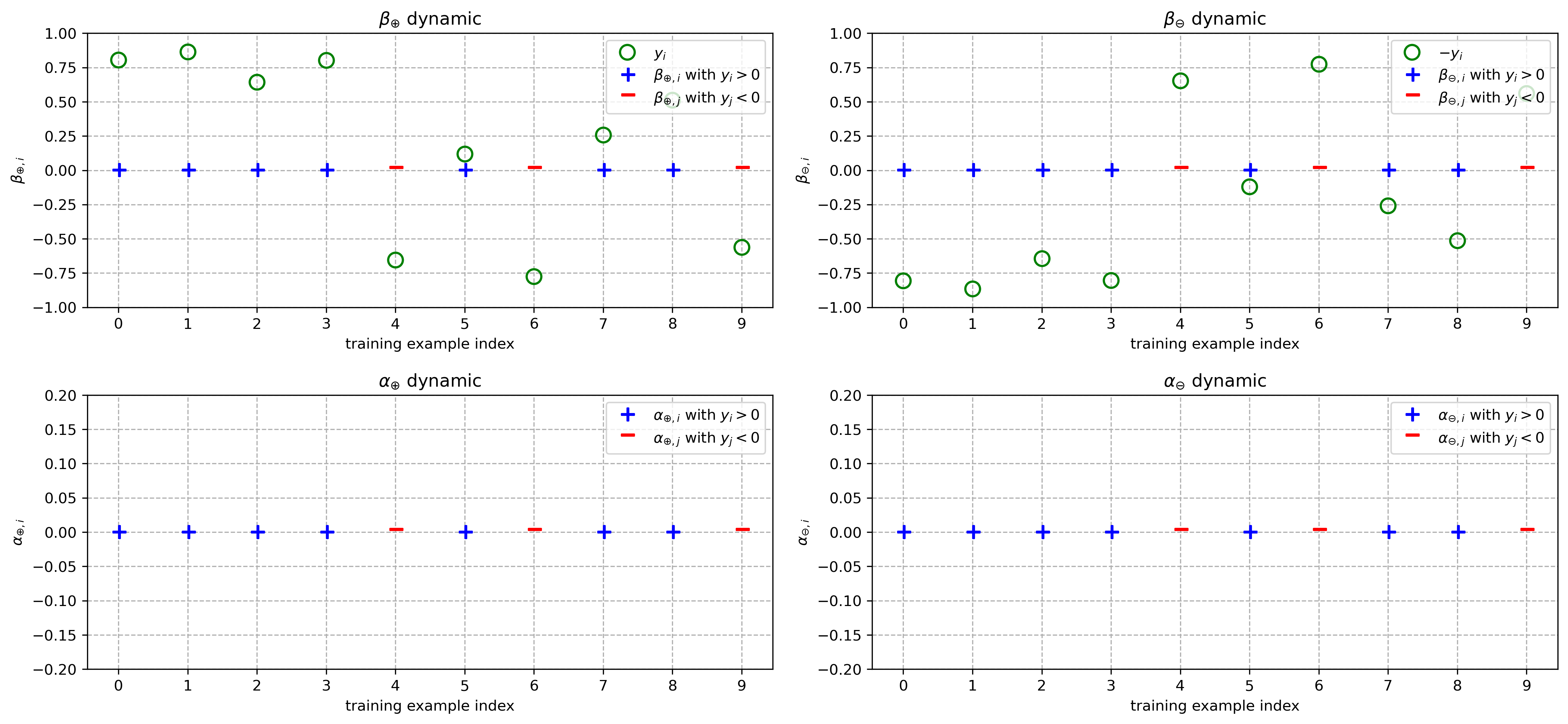}}
\subfigure[Two ReLU model gradient descent dynamic ($t=1$).]{\includegraphics[width=0.74\textwidth]{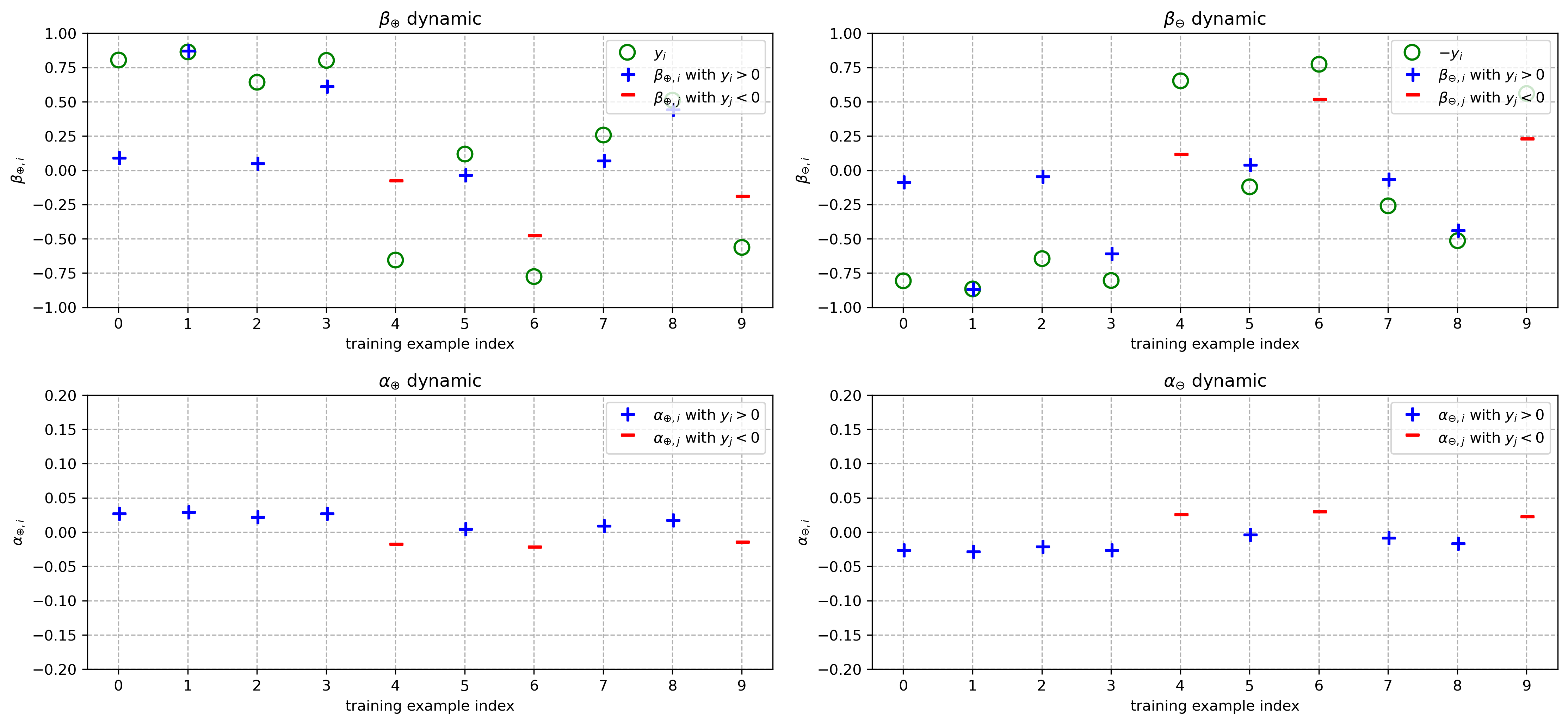}}
\subfigure[Two ReLU model gradient descent dynamic ($t=40$).]
{\includegraphics[width=0.74\textwidth]{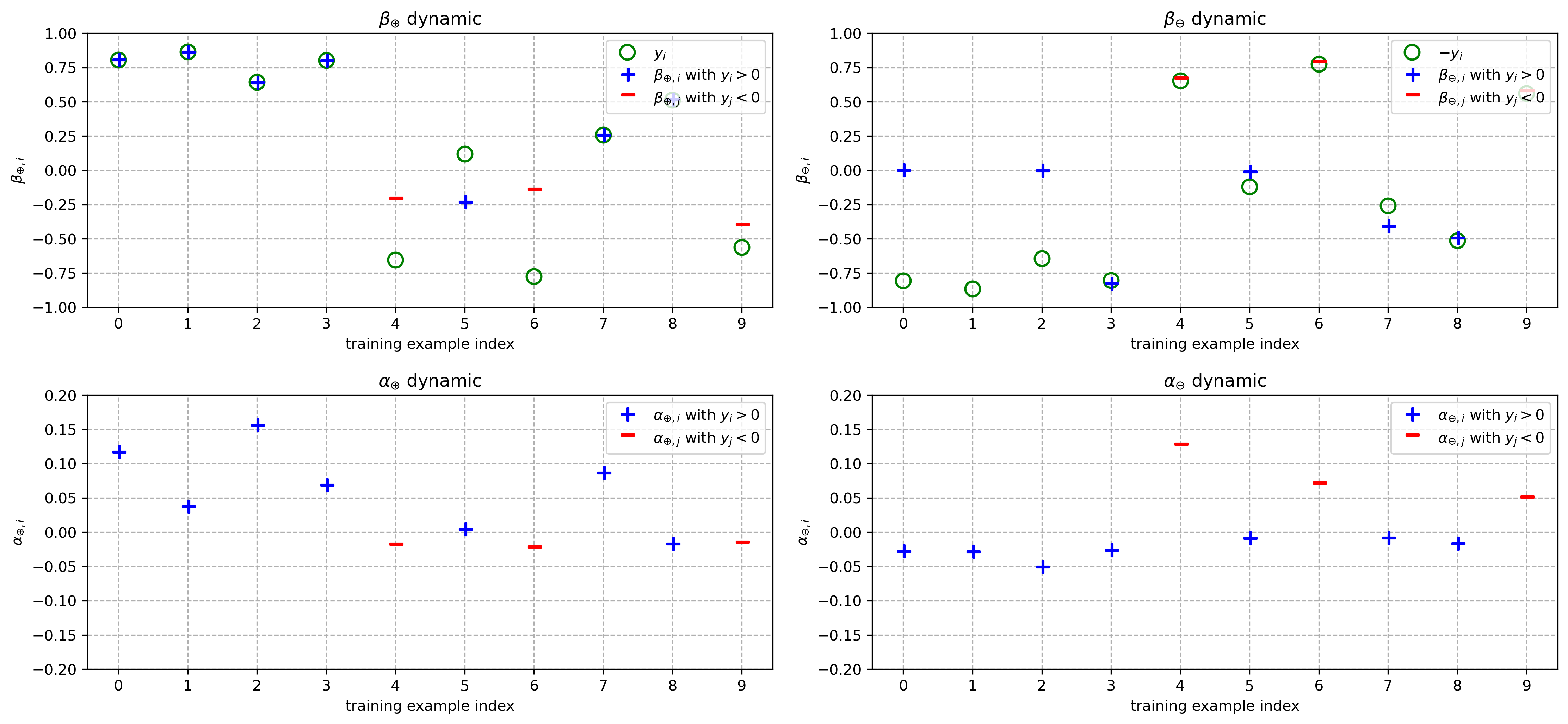}}
\caption{Simulation with all-positive initialization outside the high-dimensional regime.
When the data dimension is not sufficiently large, the feature vectors are no longer approximately orthogonal. As a result, the clear separation into active (blue) and inactive (red) regimes observed in Figures~\ref{fig:good_init_high_dim} and~\ref{fig:bad_init_high_dim} disappears. Consequently, the gradient dynamics become highly coupled across examples and are no longer analytically tractable using our high-dimensional arguments. The experiment uses $n=10$, $d=15$, features $\x\sim\mathcal{N}(\zero,\I)$, and labels satisfying $|y|\sim\mathcal{U}(0.1,1)$ with $\mathrm{sign}(y)$ uniformly distributed over $\{\pm 1\}$.}
\end{figure}\label{fig:good_init_low_dim}

\subsection{Gradient Descent Dynamics of Multiple ReLU Models}
\begin{figure}[H]
\centering     
\subfigure[Multiple ReLU model gradient descent dynamic ($t=0$).]{\includegraphics[width=0.95\textwidth]{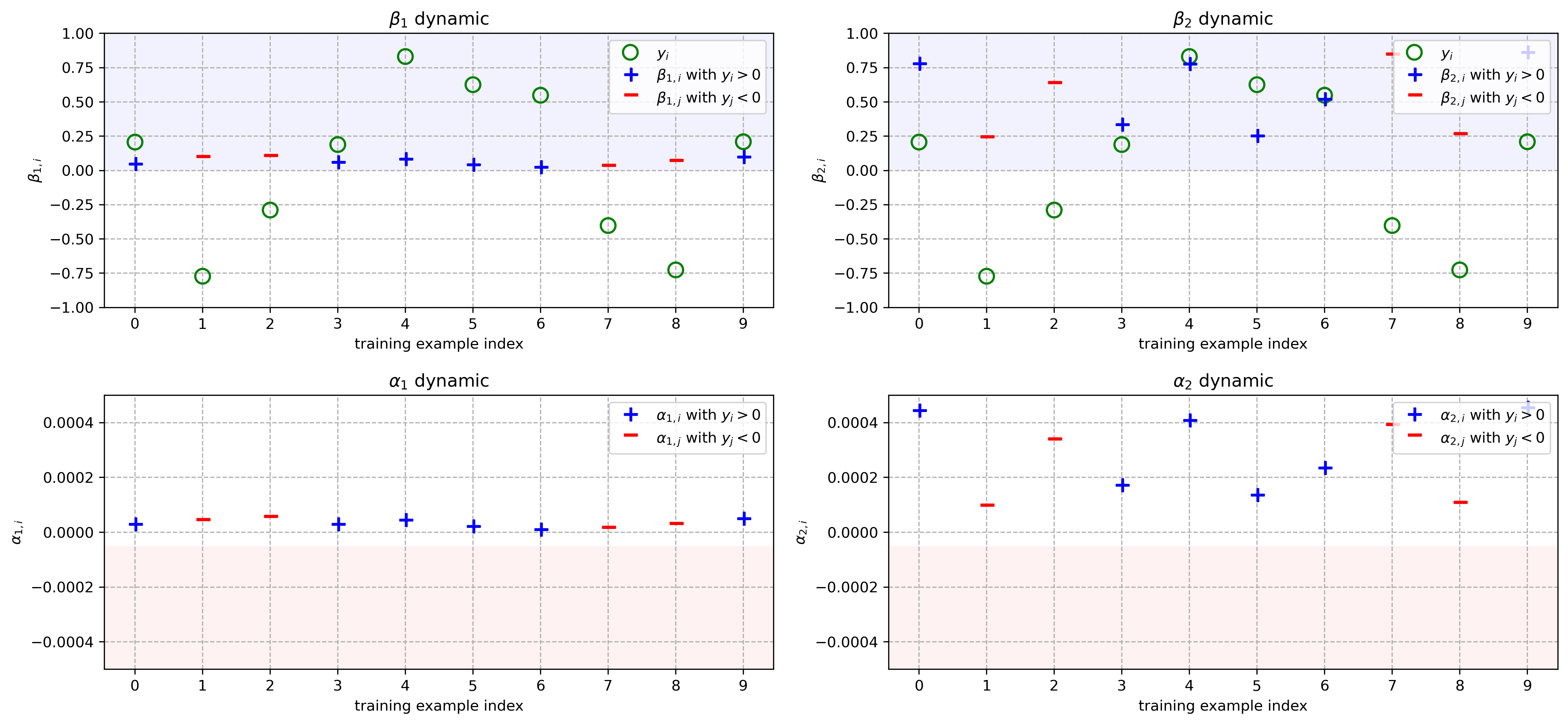}}
\subfigure[Multiple ReLU model gradient descent dynamic ($t=1$).]{\includegraphics[width=0.95\textwidth]{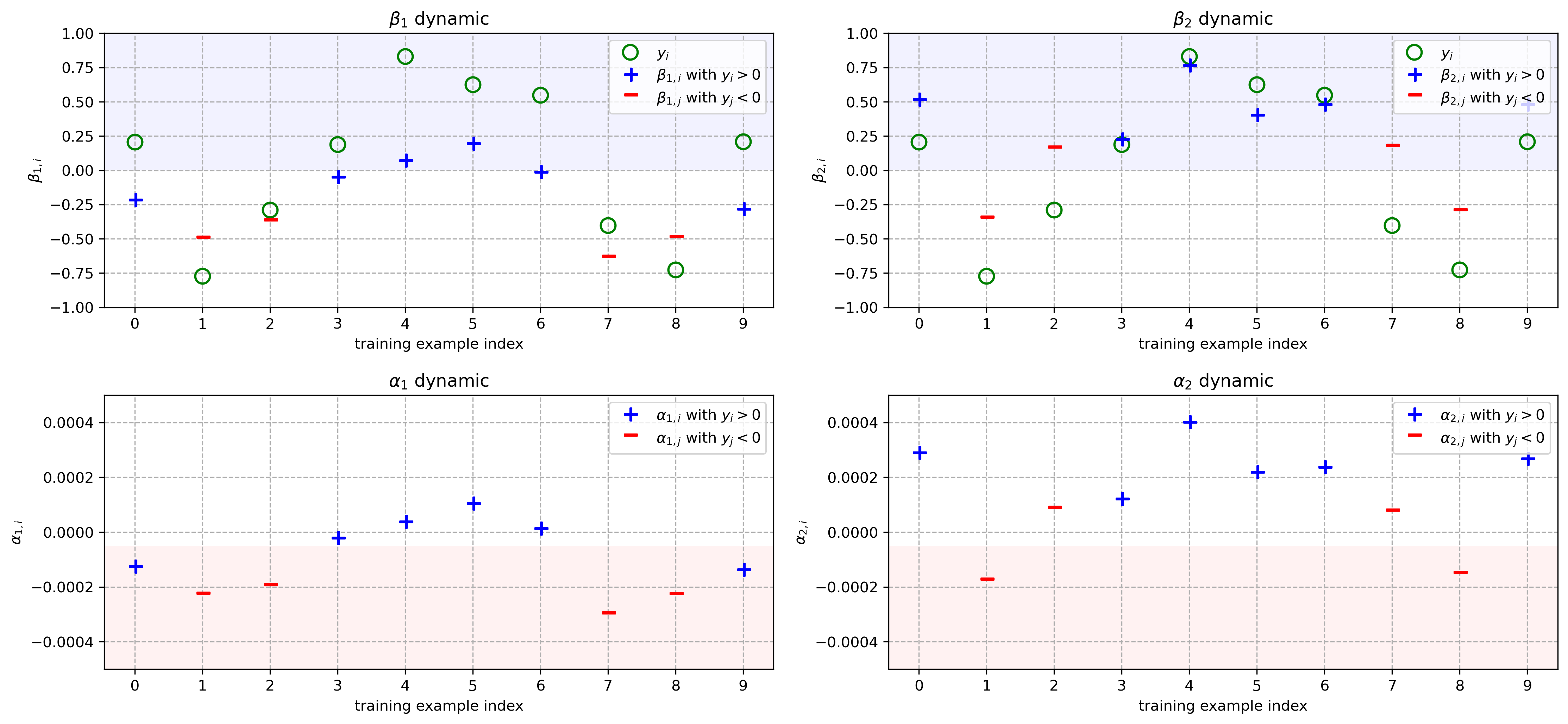}}
\caption{Failure of stable activation patterns in multiple ReLU models.
We illustrate the training dynamics of a multiple ReLU model when multiple neurons share the same sign. In this setting, the sufficient conditions of Lemma~\ref{lem:primal_label_same_sign_m} are violated, and positive primal variables do not necessarily remain in the active (blue) regime throughout training (e.g. training example no. 0). As a result, the activation pattern becomes unstable, and the resulting primal dynamics are no longer tractable. The experiment uses $n=10$, $d=2000$, $m=4$, with neuron signs $s_1=s_2=1$ and $s_3=s_4=-1$, features $\x\sim\mathcal{N}(\zero,\I)$, and labels satisfying $|y|\sim\mathcal{U}(0.1,1)$ with $\mathrm{sign}(y)$ uniformly distributed over $\{\pm 1\}$.}
\end{figure}


\end{document}